\newenvironment{compact}
    {
    \begin{normalsize}
    }
    { 
    \end{normalsize}
}
\title{Sharp Bounds for Federated Averaging (Local SGD) \\ and Continuous Perspective}
\begin{document}
\date{}

\author{%
  Margalit Glasgow\thanks{Equal contribution.} \\
  Stanford University \\
  \texttt{mglasgow@stanford.edu} 
  \and
  Honglin Yuan\footnotemark[1] \\
  Stanford University \\
  \texttt{yuanhl@cs.stanford.edu}
  \and
  Tengyu Ma \\
  Stanford University \\
  \texttt{tengyuma@stanford.edu}
  }
\maketitle

\begin{abstract}
\ifconf
    \vspace{-0.2cm}
\fi
Federated Averaging (\fedavg), also known as Local SGD, is one of the most popular algorithms in Federated Learning (FL). 
Despite its simplicity and popularity, the convergence rate of \fedavg has thus far been undetermined. 
Even under the simplest assumptions (convex, smooth, homogeneous, and bounded covariance), the best known upper and lower bounds do not match, and it is not clear whether the existing analysis captures the capacity of the algorithm. 
In this work, we first resolve this question by providing a lower bound for \fedavg that matches the existing upper bound, which shows the existing \fedavg upper bound analysis is not improvable.
Additionally, we establish a lower bound in a heterogeneous setting that nearly matches the existing upper bound. 
While our lower bounds show the limitations of \fedavg, under an additional assumption of third-order smoothness, 
we prove more optimistic state-of-the-art convergence results
in both convex and non-convex settings. 
Our analysis stems from a notion we call \emph{iterate bias}, which is defined by the deviation of the expectation of the SGD trajectory from the noiseless gradient descent trajectory with the same initialization. We prove novel sharp bounds on this quantity, and show intuitively how to analyze this quantity from a Stochastic Differential Equation (SDE) perspective\ifconf\footnote{The first two authors contributed equally.}\fi.

\end{abstract}

\ifdefined\conf \section{INTRODUCTION} \else \section{Introduction}\fi
Federated Learning (FL) 
is an emerging distributed learning paradigm in which a massive number of clients collaboratively participate in the training process without disclosing their private local data to the public \citep{Konecny.McMahan.ea-NeurIPS15}. Typically, federated learning is orchestrated by a central server who oversees the clients, e.g. mobile devices or a group of organizations. The training process combines local training of a model at the clients with infrequent aggregation of the locally trained models at the central server.

Reflecting the goal of minimizing a loss function aggregated across clients, we consider the distributed optimization problem
$
    \min F(\x) := \frac{1}{M} \sum_{m=1}^M F_m(\x), 
$
where each client $m \in [M]$ holds a local objective $F_m$ realized by its local data distribution $\dist_m$, namely
$
F_m(\x) := \mathbb{E}_{\xi \sim \mathcal{D}_m} f(\x; \xi).
$
Federated Learning is \em heterogeneous \em by design as $\mathcal{D}_m$ can vary across clients. In the special case when $\mathcal{D}_m \equiv \mathcal{D}$ for all clients $m$, the problem is called \em homogeneous\em.

Federated Averaging (\fedavg, \citealt{McMahan.Moore.ea-AISTATS17}), also known as Local SGD (\citealt{Stich-ICLR19}), is one of the most popular algorithms applied in Federated Learning. 
In its simplest form,\footnote{We discuss other extensions of \fedavg in \cref{sec:related-work}.} \fedavg  proceeds in $R$ communication rounds, where at the beginning of each round, a central server sends the current iterate to each of the $M$ clients. Each client then locally takes $K$ steps of SGD, and then returns its final iterate to the central server. The central server averages these iterates to obtain the first iterate of the next round. We state the \fedavg algorithm formally in \cref{alg:fedavg}.

\begin{table*}[ht]
    \centering
    \caption{\textbf{Convergence Rates of \fedavg}. Some lower order terms as $R \rightarrow \infty$ omitted.
$H$: smoothness, $R$: number of rounds, $K$: local iterations per round, $M$: number of clients, $\sigma$: noise, $D: \|\x^{(0, 0)} - \x^{\star}\|.$ The lower and upper bound use a slightly different metric of heterogeneity ($\zeta$ and $\zeta_*$), see  \cref{rem:hetero} for details. 
We bold the terms where our analysis improves upon previous work.
}\label{table:complexity}
\begin{compact}
\begin{tabular}{@{}lll@{}} \toprule
             & Homogeneous (Assumption~\ref{asm:main}) & Heterogeneous (Assumption~\ref{asm:main} and \ref{asm:hetero})\\ \midrule
    Previous Upper Bound &  $\frac{HD^2}{KR} + \frac{ \sigma D}{\sqrt{MKR}} + \frac{H^{\frac{1}{3}} \sigma^{\frac{2}{3}} D^{\frac{4}{3}}} {K^{\frac{1}{3}} R^{\frac{2}{3}}}$   &  $\frac{HD^2}{KR} + \frac{ \sigma D}{\sqrt{MKR}} + \frac{H^{\frac{1}{3}} \sigma^{\frac{2}{3}} D^{\frac{4}{3}}} {K^{\frac{1}{3}} R^{\frac{2}{3}}} + \frac{H^{\frac{1}{3}}\zeta^{\frac{2}{3}}D^{\frac{4}{3}}}{R^{\frac{2}{3}}}$  \vspace{0.15cm}  \\ 
                         & \citep{Khaled.Mishchenko.ea-AISTATS20} & \citep{Khaled.Mishchenko.ea-AISTATS20,Woodworth.Patel.ea-NeurIPS20} \\ \midrule
    \textbf{Our Lower Bound} &    ${\bf{\frac{HD^2}{KR}}} + \frac{ \sigma D}{\sqrt{MKR}} + \frac{H^{\frac{1}{3}} \sigma^{\frac{2}{3}} D^{\frac{4}{3}}} {{\bf{K^{\frac{1}{3}}}} R^{\frac{2}{3}}}$ 
    & $\frac{HD^2}{KR} + \frac{ \sigma D}{\sqrt{MKR}} + \frac{H^{\frac{1}{3}} \sigma^{\frac{2}{3}} D^{\frac{4}{3}}} {{\bf{K^{\frac{1}{3}}}} R^{\frac{2}{3}}} + {\bf{\frac{H^{\frac{1}{3}} \zeta^{\frac{2}{3}}_*D^{\frac{4}{3}}}{R^{\frac{2}{3}}}}}$   \vspace{0.15cm}  \\
                & \cref{lb:homo} & \cref{thm:lb} \\ \midrule
    Previous Lower Bound &  $\frac{ \sigma D}{\sqrt{MKR}} + \frac{H^{\frac{1}{3}} \sigma^{\frac{2}{3}} D^{\frac{4}{3}}} {{\bf{K^{\frac{2}{3}}}} R^{\frac{2}{3}}} $ & $\frac{ \sigma D}{\sqrt{MKR}} +  \frac{H^{\frac{1}{3}} \sigma^{\frac{2}{3}} D^{\frac{4}{3}}} {{\bf{K^{\frac{2}{3}}}} R^{\frac{2}{3}}} + {\bf{\min\left(\frac{HD^2}{R},\frac{H^{\frac{1}{3}} \zeta^{\frac{2}{3}}_*D^{\frac{4}{3}}}{R^{\frac{2}{3}}}\right)}}$   \vspace{0.15cm}  \\
                & \citep{Woodworth.Patel.ea-ICML20} & \citep{Woodworth.Patel.ea-NeurIPS20} \\ 
    \bottomrule
\end{tabular}
\end{compact}
\end{table*}

While the \fedavg algorithm is popular in practice, a thorough theoretical understanding of \fedavg has not been established. 
Even under the simplest setting (convex, smooth, homogeneous and bounded covariance, see \cref{asm:main}), the state-of-the-art upper bounds for \fedavg due to \citet{Khaled.Mishchenko.ea-AISTATS20} and \citet{Woodworth.Patel.ea-ICML20} do not match the state-of-the-art lower bound due to \citet{Woodworth.Patel.ea-ICML20}, see \cref{table:complexity}.
This suggests that at least one side of the analysis is not sharp. 
Therefore a fundamental question remains:

\emph{Does the current convergence analysis of \fedavg fully capture the capacity of the algorithm?}

Our first contribution is to answer this question definitively under the standard smoothness and convexity assumptions. 
We establish a sharp lower bound for \fedavg that matches the existing upper bound (\cref{lb:homo}), showing that the existing \fedavg analysis is \emph{not} improvable.
Moreover, we establish a stronger lower bound in the \emph{heterogeneous} setting, \cref{thm:lb}, which suggests the best known \emph{heterogeneous} upper bound analysis \citep{Woodworth.Patel.ea-NeurIPS20} is also (almost)\footnote{Up to a minor variation of the definition of heterogeneity measure, see \cref{table:complexity}.} not improvable.

\begin{algorithm}[t]
  \caption{\fedavgfull (\fedavg)}
  \label{alg:fedavg}
  \begin{algorithmic}[1]
  \STATE {\textbf{procedure}} \fedavg ($\x^{(0, 0)}, \eta$)
  \FOR {$r=0, \ldots, R-1$}
      \FORALL {$m \in [M]$ {\bf in parallel}}
        \STATE $\x_m^{(r,0)} \gets \x^{(r,0)}$ \COMMENT{broadcast current iterate}
        \FOR {$k = 0, \ldots, K-1$}
          \STATE $\xi^{(r,k)}_m \sim \mathcal{D}_m$
          \STATE $\g^{(r,k)}_m \gets \nabla f(\x^{(r,k)}_m; \xi^{(r,k)}_m)$
          \STATE $\x^{(r, k+1)}_m \gets \x^{(r, k)}_m - \eta \cdot \g_m^{(r,k)}$
          \COMMENT{client update} 
        \ENDFOR
      \ENDFOR
      $\x^{(r+1, 0)} \gets \frac{1}{M} \sum_{m=1}^M \x^{(r,K)}_m$
      \COMMENT{server averaging}
  \ENDFOR
\end{algorithmic}
\end{algorithm}

Our proofs highlight exactly what can go wrong in \fedavg, yielding these slow convergence rates. 
Specifically, our lower bound analysis stems from a notion we call \emph{iterate bias}, which is defined by the deviation of the expectation of the SGD trajectory from the (noiseless) gradient descent trajectory with the same initialization (see \cref{def:bias} for details).
We show that even for convex and smooth objectives, the mean of SGD initialized at the optimum can drift away from the optimum at the rate of $\Theta(\eta^2 k^{\frac{3}{2}})$ after $k$ steps,\footnote{This rate is also sharp according to our matching upper and lower bounds, see \cref{thm:2o:bias:ub,thm:2o:bias:lb} for details.} for sufficiently small learning rate $\eta$.
We depict this phenomenon in \cref{fig:bias}.\ifanon\else\footnote{Code repository see \url{https://github.com/hongliny/Sharp-Bounds-for-FedAvg-and-Continuous-Perspective}.}\fi 
The iterate bias thus quantifies the fundamental difficulty encountered by \fedavg: 

\emph{Even with infinite number of homogeneous clients, \fedavg can drift away from the optimum even if initialized at the optimum. }

Indeed, we show in \cref{sec:bias_to_lb} that the sharp lower bound of SGD iterate bias leads directly to our sharp lower bound of \fedavg convergence rate.
\begin{figure*}[t]
    \centering
    \begin{subfigure}{0.2\textwidth}
        \centering
        \vspace{0.1cm}
        \includegraphics[width=\textwidth]{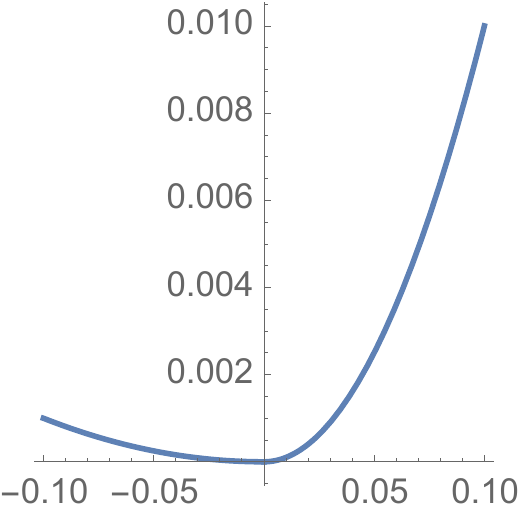}
        \vspace{0.05cm}
        \caption{Function plot}
        \label{fig:piecewise:quadratic}
    \end{subfigure}
    \hfill
    \begin{subfigure}{0.78\textwidth}
        \includegraphics[width=\textwidth]{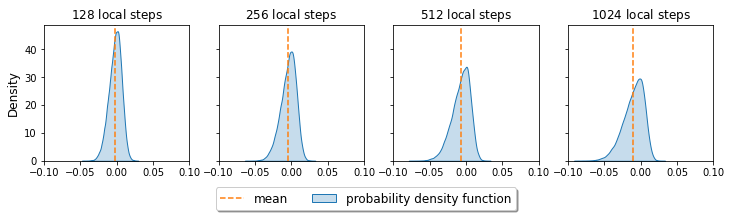}
        \caption{Probability density function after various SGD steps}
    \end{subfigure}
    \caption{\textbf{Illustration of the iterate bias of SGD.}  Consider the objective $F(x) = \begin{cases}x^2 & x \geq 0 \\
        \frac{1}{10} x^2 & x < 0\end{cases}$ as shown in (a), and $f(x; \xi) := \xi x + F(x)$ where $\xi \sim \mathcal{N}(0, 0.01)$.
        We initialize the SGD at optimum $x^{\star}=0$, and run 1024 steps of SGD with step size $10^{-2}$. We repeat this random process for 65536 times, and estimate the density function after 128, 256, 512 and 1024 steps. 
        Observe that the density function and the average gradually move to the left (away from the optimum, where the curvature is smaller). This figure explains the intrinsic difficulty for \fedavg to handle objective with drastic Hessian change.}
    \label{fig:bias}
\end{figure*}

The discouraging lower bound of $\fedavg$ under a standard smoothness assumption does not conform well with its empirical efficiency observed in practice \citep{Lin.Stich.ea-ICLR20}. This motivates us to consider whether additional modeling assumptions could better explain the empirical performance of \fedavg. The aforementioned lower bound is attained by a special piece-wise quadratic function with a sudden curvature change, which is smooth (has bounded second-order derivatives) but has unbounded third-order derivatives. A natural assumption to exclude this corner case is third-order smoothness, which has been considered before in the context of federated learning~\citep{Yuan.Ma-NeurIPS20}, and may be representative of objectives in practice. For instance, loss functions used to learn many generalized linear models, such as logistic regression, often exhibit third-order smoothness \citep{Hastie.Tibshirani.ea-09}.

With this third-order smoothness assumption, we show that the iterate bias reduces to $\Theta(\eta^3 k^2)$, one order higher in $\eta$ than the rate under only second-order smoothness.\footnote{This rate is sharp according to our matching upper and lower bounds, see \cref{thm:3o:bias:ub,thm:3o:bias:lb}.} While the proofs for bounding the iterate bias are quite technical, we show that it is easy to analyze the bias via a continuous approach. More specifically, by studying the stochastic differential equation (SDE) corresponding to the continuous limit of SGD, one can derive the limit of the iterate bias of generic objectives by using the Kolmogorov backward equation of the SDE, see \cref{sec:sde}.

Leveraging this intuition from the bias, we prove state-of-the-art rates for \fedavg under third-order smoothness in \emph{both} convex and non-convex settings (Theorems~\ref{thm:3o_convex} and \ref{thm:non_convex}). 
In non-convex settings, our convergence rate scales with $1/R^{\frac{4}{5}}$, which improves upon the best known rate of $1/R^{\frac{2}{3}}$ \citep{yu2019parallel} if we do not assume third-order smoothness.

\subsection{Related Work}
\label{sec:related-work}
\paragraph{\fedavg Analysis.}
The understanding of local updates algorithm such as \fedavg is one of the most important topics in distributed optimization.
The early analysis of $\fedavg$ preceded the proposal of Federated Learning, typically under the name of Local SGD or parallel SGD \citep{Mcdonald.Mohri.ea-NIPS09,Zinkevich.Weimer.ea-NIPS10,Shamir.Srebro-Allerton14,Rosenblatt.Nadler-16,Jain.Kakade.ea-COLT18,Zhou.Cong-IJCAI18}. 
The primary focus of this literature is the special case of one-shot averaging, in which only one round of averaging (communication) is conducted at the end of the algorithm. 
The first upper bound of $\fedavg$ (with multiple averaging rounds) was established by  \citet{Stich-ICLR19} in the convex homogeneous setting, which imposes uniform gradient bound assumption. 
The result was further improved by \citet{Khaled.Mishchenko.ea-AISTATS20,Woodworth.Patel.ea-ICML20} with improved rates and relaxed assumptions. Note that the result of \citet{Khaled.Mishchenko.ea-AISTATS20} preceded that of \citet{Woodworth.Patel.ea-ICML20}, but the step-size was not properly optimized.
In the convex heterogeneous setting, the first upper bound of $\fedavg$ is due to \citet{Li.Huang.ea-ICLR20}. This result was improved by \citet{Khaled.Mishchenko.ea-AISTATS20,Woodworth.Patel.ea-NeurIPS20}. 
For non-convex objectives, a series of recent works \citep{Zhou.Cong-IJCAI18,Haddadpour.Kamani.ea-ICML19,Wang.Joshi-18,Yu.Jin-ICML19,Yu.Jin.ea-ICML19} has established various upper bounds of \fedavg in homogeneous and heterogeneous settings. 
On the lower bound side, the best known result for convex $\fedavg$ is established by \citet{Woodworth.Patel.ea-ICML20} (for homogeneous) and \citet{Woodworth.Patel.ea-NeurIPS20} (for heterogeneous). To the best of our knowledge, we are unaware of any lower bound for \fedavg in non-convex settings.

The existence and effect of iterate bias has been observed in various forms in the current literature \citep{Dieuleveut.Durmus.ea-20,Charles.Konecny-20,Woodworth.Patel.ea-ICML20}, yet our paper is the first to sharply characterize the rate of the bias, both in the second-order smooth case and third-order smooth case.

\paragraph{Other Extensions of \fedavg.}
Throughout this work, we study the simplest form of $\fedavg$ (\cref{alg:fedavg}) to keep our efforts focused.
There are many other extensions of $\fedavg$ applied in practice. 
For example, instead of letting all the clients to participate in computation, one may randomly draw a subset of clients to participate every round. 
Most of our results (e.g., all of the homogeneous results) can be directly extended to this sub-sampling variant. 
Other variants of \fedavg include letting clients run different number of steps per round, or average the client states non-uniformly.
We refer readers to \citet{Wang.Charles.ea-21} for a more comprehensive survey of these extensions.

One special extension of \fedavg introduces a ``server learning rate'', which we name \textsc{FedAvg-SLR} to distinguish it. Instead of taking the client averaging as the initialization of the next round, \textsc{FedAvg-SLR} extrapolates (or interpolates) between the round initialization and the clients averaging \citep{Charles.Konecny-20,Reddi.Charles.ea-ICLR21}, or formally
$
\x^{(r+1,0)} \gets \x^{(r,0)} + \eta_{s} \cdot \frac{1}{M} \sum_{m=1}^M \left( \x_m^{(r,K)} - \x^{(r,0)} \right).
$
By definition, \textsc{FedAvg-SLR} reduces to the classic \fedavg when $\eta_{s} = 1$. 
Notably, \textsc{FedAvg-SLR} also reduces to mini-batch SGD when the client learning rate goes to 0, and the server learning rate goes to infinity. 
Therefore, \textsc{FedAvg-SLR} can at least attain the best convergence rate of the classic \fedavg and the mini-batch SGD.
However, to the best of our knowledge, it is not known whether \textsc{FedAvg-SLR} can outperform the best of the two, and there are \emph{no} results that characterize \textsc{FedAvg-SLR} beyond the two special regimes.
While our lower bounds do \emph{not} apply to the generic form of \textsc{FedAvg-SLR}, we  anticipate that our techniques (e.g., iterate bias) can be applicable to the study of \textsc{FedAvg-SLR} in the future work.

\paragraph{Other Federated Learning Algorithms.}
Besides the $\fedavg$ framework, there are many other federated optimization algorithms that aims to improve communication efficiency \citep{Yuan.Ma-NeurIPS20,Reddi.Charles.ea-ICLR21} or tackle the heterogeneity in FL \citep{Li.Sahu.ea-MLSys20,Karimireddy.Kale.ea-ICML20}. 
We expect the techniques developed in this work can shed light on the analysis of broader existing federated algorithms and promote the design of more efficient federated algorithms. 

The past half decade has witnessed a booming interest in various aspects of Federated Learning. 
Data heterogeneity is one of the most important patterns in Federated Learning, and is known to cause performance degradation in practice \citep{Hsu.Qi.ea-arXiv19}. Numerous existing works have aimed to understand and mitigate the negative effect of heterogeneity in various ways \citep{Mohri.Sivek.ea-ICML19,Liang.Shen.ea-19,Chen.Chen.ea-NeurIPS20,Deng.Kamani.ea-20,Li.Sahu.ea-20,Reisizadeh.Mokhtari.ea-AISTATS20,Wang.Liu.ea-AAAI19,Pathak.Wainwright-NeurIPS20,Zhang.Hong.ea-20,Yuan.Zaheer.ea-ICML21,Acar.Zhao.ea-ICLR21,Al-Shedivat.Gillenwater.ea-ICLR21,Yuan.Morningstar.ea-21}. 
In practice, the system heterogeneity will also affect the performance of Federated Learning \citep{Smith.Chiang.ea-NIPS17,Diao.Ding.ea-ICLR21}. 
In deep learning context, a recent array of works has studied the alternative approaches of model ensembling beyond averaging in parameter space \citep{Bistritz.Mann.ea-NeurIPS20,He.Annavaram.ea-NeurIPS20,Lin.Kong.ea-NeurIPS20,Chen.Chao-ICLR21,Yoon.Shin.ea-ICLR21}.

This paper mainly considers the classic FL settings in which the same model is learned from and deployed to all the clients. 
There is an alternative setup in FL, known as the \emph{personalized} setting, which aims to learn a different (personalized) model for different clients. 
Numerous recent papers have proposed Federated Learning models and algorithms to accommodate personalization  
\citep{Smith.Chiang.ea-NIPS17,Jiang.Konecny.ea-19,Chen.Luo.ea-19,Fallah.Mokhtari.ea-NeurIPS20,Hanzely.Hanzely.ea-NeurIPS20,London-SpicyFL20,T.Dinh.Tran.ea-NeurIPS20,Hanzely.Richtarik-20,Agarwal.Langford.ea-20,Lin.Yang.ea-20,Deng.Kamani.ea-20,pmlr-v139-acar21a}. 
We anticipate the techniques developed in this work can be applied to personalized FL algorithms, especially the ones that applied local updates approach.

We refer readers to \citep{Kairouz.McMahan.ea-arXiv19,Wang.Charles.ea-21} for a more comprehensive survey on the recent progress of Federated Learning.

\paragraph{Connection to Implicit Bias.} 
It is possible to view the iterate bias as an implicit bias of the \fedavg algorithm, which pushes the iterate towards flatter regions of the objective. This effect is similar to other instances of implicit bias observed for stochastic gradient descent, which has drawn connections between noise in the gradients and flat minima~\citep{hochreiter1997flat,jastrzkebski2017three,blanc2020implicit,damian2021label}. While in many instances, implicit bias has been linked to choosing favorable optima that generalize well \citep{neyshabur2017implicit}, in our setting, the bias affects the convergence rate.

\subsection{Organization and Notation} 
In Section 2, we formally define the iterate bias of SGD, and state sharp bounds on its rate.
In Section 3, we state our lower bounds for \fedavg, and show how the iterate bias can be used to achieve our sharp bounds.
In Section 4, we state our convergence results for \fedavg under third-order smoothness. All proofs are deferred to the appendix.

We use bold lower case character to denote vectors (e.g., $\x$). We use $\|\cdot\|$ to denote the  $\ell_2$-norm of a vector, $[n]$ to denote the set $\{1, \ldots, n\}$. 
Throughout the paper, we use $O, \Omega, \Theta$ notation to hide absolute constants only.

\ifdefined\conf \section{SETUP AND TECHNICAL OVERVIEW: INTUITION FROM ITERATE BIAS}\else \section{Setup and Technical Overview: Intuition From Iterate Bias of SGD}\fi
\label{sec:bias}

The intuition from our lower bound comes from studying the behaviour of \fedavg when the number of clients, $M$, tends to infinity. In this case, the averaged iterate $\x^{(r + 1, 0)}$ is precisely the \em expected \em iterate after $K$ iterations of SGD starting from the last averaged iterate, $\x^{(r, 0)}$. This motivates the following definition.

\begin{definition}[Iterate Bias of SGD]
\label{def:bias}
Let $\{\x_{\sgd}^{(k)}\}_{k=0}^{\infty}$ and $\{\z_{\gd}^{(k)}\}_{k=0}^{\infty}$ be the trajectories of SGD and GD initialized at the same point $\x$, formally
\begin{compact}
\begin{alignat}{3}
    & \x^{(k+1)}_{\sgd} \gets \x_{\sgd}^{(k)} - \eta \nabla f(\x_{\sgd}^{(k)}; \xi^{(k)}), \qquad & \x_{\sgd}^{(0)} = \x;
    \\
    & \z^{(k+1)}_{\gd} \gets \z_{\gd}^{(k)} - \eta \nabla F(\z_{\gd})
    , \qquad  & \z_{\gd}^{(0)} = \x.
\end{alignat}
\end{compact}
The \textbf{iterate bias} (or in short ``bias'') from $\x$ at the $k$-th step is defined as 
\begin{compact}
\begin{equation}
    \expt \x_{\sgd}^{(k)} - \z_{\gd}^{(k)},
\end{equation}
\end{compact}
the difference between the mean of SGD trajectory and the (deterministic) GD trajectory.
\end{definition}

One important special case of Definition~\ref{def:bias} is the iterate bias from a stationary point  $\x^{\star}$. In this case, the gradient descent trajectory $\z_{\gd}^{(k)}$ will stay at the optimum since $\nabla F(\z_{\sgd}^{(k)}) \equiv \nabla F(\x^{\star}) = \mathbf{0}$. 
The iterate bias then reduces to $\expt [\x_{\sgd}^{(k)}] - \x^{\star}$. 
Notably, even for convex smooth objectives $f$, the expected iterate $\expt[\x_{\sgd}^{(k)}]$ may drift away from the optimum $\x^{\star}$, even if initialized at the $\x^{\star}$.
This occurs because of a difference between the gradient of the expectation of an iterate, $\nabla f(\mathbb{E}[\cdot])$, and the expectation of the gradient of the iterate, $\expt [\nabla f(\cdot)]$.

In \cref{fig:bias}, we illustrate this phenomenon via a one-dimensional objective. This figure, and our formal results below, illustrate that for sufficiently small step sizes, the bias increases in $k$. For this reason, doing more than one local step can sometimes be counterproductive (when $k=1$, the bias is always zero).
This phenomenon is key to the poor dependence on $K$ in the convergence rate we prove for \fedavg.

\subsection{The Bias Under Second-Order Smoothness}

In this subsection, we provide sharp bounds on the iterate bias under standard assumptions, formally given below.
\begin{assumption}
    \label{asm:main}
    Assume $f(\x; \xi)$ is second-order differentiable w.r.t. $\x$, and
    \begin{enumerate}[(a)]
        \item Convexity: $f(\x;\xi)$ is convex with respect to $\x$ for any $\xi$.
        \item Smoothness: $f(\x; \xi)$ is $H$-smooth with respect to $\x$. That is, for any $\xi$, for any $\x, \y$, we have $\|\nabla f(\x; \xi) - \nabla f(\y; \xi)\|_2 \leq H\|\x - \y\|_2$.
        \item Bounded covariance: for any $\x$, 
        $\expt_{\xi \sim \mathcal{D}} \|\nabla f(\x, \xi) - \nabla F(\x)\|_2^2 \leq \sigma^2.$
    \end{enumerate}
\end{assumption}

We establish the following upper bound on the bias.\footnote{Throughout this section, we mainly focus on the iterate bias bound in the regime of sufficiently small $\eta$ for simplicity and easy comparison. Our complete theorem in appendix covers the case of general $\eta$ choice.}
\begin{theorem}[Simplified from \cref{thm:2o:bias:ub:complete}]
    \label{thm:2o:bias:ub}
    Under \cref{asm:main}, there exists an absolute constant $\bar{c}$ such that for any initialization $\x$, for any $\eta \leq \frac{1}{H}$, the iterate bias satisfies
    $\left \| 
            \expt \x_{\sgd}^{(k)} - \z_{\gd}^{(k)}
        \right\|_2
        \leq
        \bar{c}
        \cdot
        \eta^2 k^{\frac{3}{2}}  H \sigma.
    $
\end{theorem}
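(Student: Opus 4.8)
The plan is to view the \emph{mean} SGD trajectory as gradient descent driven by a small, explicitly controllable perturbation, bound that perturbation, and let the (non-expansive) GD dynamics do the rest. I would write $\mathbf{b}^{(k)} := \expt\x_{\sgd}^{(k)} - \z_{\gd}^{(k)}$ for the bias and $T(\x) := \x - \eta\nabla F(\x)$ for the gradient-descent map. Since $\xi^{(k)}$ is drawn fresh, $\expt[\nabla f(\x_{\sgd}^{(k)};\xi^{(k)}) \mid \x_{\sgd}^{(k)}] = \nabla F(\x_{\sgd}^{(k)})$, so taking expectations in the SGD recursion gives $\expt\x_{\sgd}^{(k+1)} = \expt\x_{\sgd}^{(k)} - \eta\,\expt\nabla F(\x_{\sgd}^{(k)})$. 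Subtracting the GD recursion and adding and subtracting $\nabla F(\expt\x_{\sgd}^{(k)})$ yields the identity
\begin{equation}
\mathbf{b}^{(k+1)} = \big(T(\expt\x_{\sgd}^{(k)}) - T(\z_{\gd}^{(k)})\big) - \eta\,\mathbf{J}^{(k)}, \qquad \mathbf{J}^{(k)} := \expt\nabla F(\x_{\sgd}^{(k)}) - \nabla F\big(\expt\x_{\sgd}^{(k)}\big),
\end{equation}
so $\mathbf{b}^{(k)}$ is exactly the accumulation of the ``gradient Jensen gaps'' $-\eta\,\mathbf{J}^{(j)}$ propagated through $T$.

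Next I would use convexity: since each $f(\cdot;\xi)$ is convex and $H$-smooth, so is $F$, and for $\eta \le 1/H$ co-coercivity of $\nabla F$ makes $T$ non-expansive, $\|T(\x) - T(\y)\| \le \|\x - \y\|$. With the triangle inequality this gives $\|\mathbf{b}^{(k+1)}\| \le \|\mathbf{b}^{(k)}\| + \eta\|\mathbf{J}^{(k)}\|$, and since $\mathbf{b}^{(0)} = \mathbf{0}$, telescoping yields
\begin{equation}
\big\|\expt\x_{\sgd}^{(k)} - \z_{\gd}^{(k)}\big\| \;\le\; \eta\sum_{j=0}^{k-1}\|\mathbf{J}^{(j)}\| .
\end{equation}

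It then remains to bound $\|\mathbf{J}^{(j)}\|$, for which the key quantity is how far SGD spreads. By $H$-smoothness of $F$ and Jensen, $\|\mathbf{J}^{(j)}\| = \|\expt[\nabla F(\x_{\sgd}^{(j)}) - \nabla F(\expt\x_{\sgd}^{(j)})]\| \le H\,\expt\|\x_{\sgd}^{(j)} - \expt\x_{\sgd}^{(j)}\| \le H\sqrt{\operatorname{Var}(\x_{\sgd}^{(j)})}$, and since the mean minimizes mean-squared deviation, $\operatorname{Var}(\x_{\sgd}^{(j)}) \le \expt\|\x_{\sgd}^{(j)} - \z_{\gd}^{(j)}\|^2$. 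To control the latter I would set $\mathbf{e}^{(k)} := \x_{\sgd}^{(k)} - \z_{\gd}^{(k)}$ and $\mathbf{n}^{(k)} := \nabla f(\x_{\sgd}^{(k)};\xi^{(k)}) - \nabla F(\x_{\sgd}^{(k)})$, so that $\mathbf{e}^{(k+1)} = (T(\x_{\sgd}^{(k)}) - T(\z_{\gd}^{(k)})) - \eta\mathbf{n}^{(k)}$ with $\expt[\mathbf{n}^{(k)}\mid\mathcal{F}_k] = \mathbf{0}$ and $\expt[\|\mathbf{n}^{(k)}\|^2\mid\mathcal{F}_k] \le \sigma^2$; conditioning on $\mathcal{F}_k$, the cross term vanishes and non-expansiveness of $T$ gives $\expt[\|\mathbf{e}^{(k+1)}\|^2\mid\mathcal{F}_k] \le \|\mathbf{e}^{(k)}\|^2 + \eta^2\sigma^2$, hence $\expt\|\mathbf{e}^{(k)}\|^2 \le k\eta^2\sigma^2$. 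Thus $\|\mathbf{J}^{(j)}\| \le H\eta\sigma\sqrt{j}$, and plugging into the telescoped bound with $\sum_{j=0}^{k-1}\sqrt{j} \le \tfrac{2}{3}k^{3/2}$ gives $\|\expt\x_{\sgd}^{(k)} - \z_{\gd}^{(k)}\| \le \tfrac{2}{3}\eta^2 k^{3/2}H\sigma$, i.e. the claim with $\bar c = \tfrac{2}{3}$.

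The argument is short precisely because $\eta \le 1/H$ turns $T$ into a non-expansive map, so errors never amplify and the only content is the identity that the mean trajectory is perturbed GD together with the crude bound $\expt\|\mathbf{e}^{(k)}\|^2 \le k\eta^2\sigma^2$. The real obstacles show up in the full statement (\cref{thm:2o:bias:ub:complete}): for general $\eta$ the map $T$ is no longer non-expansive, so one must track the GD contraction factor with a Lyapunov-type potential and carry extra lower-order terms, and one must check the constants are uniform in the initialization $\x$. This route also yields only the upper bound — the matching $\Omega(\eta^2 k^{3/2}H\sigma)$ rate (\cref{thm:2o:bias:lb}) and the sharp leading constant come from the separate piecewise-quadratic construction, while the SDE viewpoint of \cref{sec:sde} gives an independent, more transparent derivation of the $\eta^2 k^{3/2}$ scaling through the Kolmogorov backward equation of the SGD diffusion.
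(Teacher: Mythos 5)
Your proof is correct, and it follows the same high-level strategy as the paper (a one-step recursion for the mean bias, fed by a second-moment bound on the spread of the SGD iterates), but your decomposition of the one-step error is genuinely cleaner and buys something concrete. The paper's \cref{lem:bias:2o:ub:1} bounds $\|\expt\x_{\sgd}^{(k+1)}-\z_{\gd}^{(k+1)}\|$ by the triangle inequality applied to the full gradient difference $\expt\nabla F(\x_{\sgd}^{(k)})-\nabla F(\z_{\gd}^{(k)})$, which produces the amplification factor $(1+\eta H)$ per step and hence the $\frac{(1+\eta H)^k-1}{\eta H}$ prefactor; closing the argument then requires a case split on $\eta H k\lessgtr 1$ and the auxiliary bound $\eta k^{1/2}\sigma$ of \cref{lem:bias:2o:ub:2}. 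You instead group the terms as $\bigl(T(\expt\x_{\sgd}^{(k)})-T(\z_{\gd}^{(k)})\bigr)-\eta\mathbf{J}^{(k)}$, use co-coercivity to make $T$ non-expansive for $\eta\le 1/H$ (so the accumulated bias never amplifies), and isolate the Jensen gap $\mathbf{J}^{(k)}$ as the sole source of bias — which is also the conceptually correct attribution, matching the paper's informal discussion of $\nabla F(\expt[\cdot])$ versus $\expt[\nabla F(\cdot)]$. Your bound $\expt\|\x_{\sgd}^{(j)}-\z_{\gd}^{(j)}\|^2\le j\eta^2\sigma^2$ is exactly the paper's \cref{lem:bias:2o:ub:2} computation, and using it to dominate $\operatorname{Var}(\x_{\sgd}^{(j)})$ via the variational characterization of the mean is a nice touch that replaces the paper's appeal to "standard convex stochastic analysis" for $\expt\|\x_{\sgd}^{(j)}-\expt\x_{\sgd}^{(j)}\|^2\le 2\eta^2 j\sigma^2$. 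The net effect is a single uniform argument over the whole range $\eta\le 1/H$ with constant $\bar c=2/3$ rather than $4$, and it in fact also recovers the $\min$ with $\eta k^{1/2}\sigma$ in \cref{thm:2o:bias:ub:complete}. One small correction to your closing remarks: the complete theorem is likewise stated only for $\eta\le 1/H$, so no additional work for "general $\eta$" is actually required there.
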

In fact, we show in the following theorem that this upper bound of iterate bias is sharp.
\begin{theorem}[Simplified from \cref{thm:2o:bias:lb:complete}]
    \label{thm:2o:bias:lb}
    There exists an absolute constant $\underline{c}$ such that for any $H, \sigma$,  there exists an objective $f(\x; \xi)$ and distribution $\xi \sim \dist$ satisfying \cref{asm:main} such that for any integer $K$, for any $\eta \leq \frac{1}{2KH}$, and integer $k \in [2, K]$,  the iterate bias from the optimum $\x^{\star}$ of $F$ is lower bounded as
    $
        \left \| 
            \expt \x_{\sgd}^{(k)} - \z_{\gd}^{(k)}
        \right\|_2
        \geq
        \underline{c} \cdot \eta^2 k^{\frac{3}{2}} H \sigma.
    $
\end{theorem}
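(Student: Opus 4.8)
The plan is to exhibit a one-dimensional piecewise-quadratic objective with a sudden curvature change at the optimum, of exactly the type depicted in Figure~\ref{fig:bias}, and track the mean of the SGD iterate by hand. Concretely, I would take $F(x) = \frac{H}{2}x^2$ for $x \geq 0$ and $F(x) = \frac{\alpha H}{2}x^2$ for $x<0$ for a suitable constant $\alpha \in (0,1)$ (say $\alpha = 1/2$), set $f(x;\xi) = F(x) + \sigma\xi x$ with $\xi$ a Rademacher-type noise (or $\mathcal{N}(0,1)$, rescaled so the covariance bound in Assumption~\ref{asm:main}(c) holds with equality), and check convexity and $H$-smoothness are immediate. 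The optimum is $x^\star = 0$, so $\z_{\gd}^{(k)} \equiv 0$, and the bias is just $\expt x_{\sgd}^{(k)}$. One step of SGD from $x$ gives $x - \eta F'(x) - \eta\sigma\xi$; since the "effective curvature" seen at a point depends on its sign, the key quantity is the evolution of $p_k := \Pr[x_{\sgd}^{(k)} < 0]$ and of the conditional means on each side.

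The core computation I would carry out is the following. Write $m_k := \expt x_{\sgd}^{(k)}$. Decompose $\expt[F'(x_{\sgd}^{(k)})] = H\,\expt[x_{\sgd}^{(k)}\mathbbm{1}\{x_{\sgd}^{(k)}\geq 0\}] + \alpha H\,\expt[x_{\sgd}^{(k)}\mathbbm{1}\{x_{\sgd}^{(k)}<0\}] = H m_k - (1-\alpha)H\,\expt[x_{\sgd}^{(k)}\mathbbm{1}\{x_{\sgd}^{(k)}<0\}]$. The recursion is therefore
\begin{equation}
    m_{k+1} = (1-\eta H)\,m_k + (1-\alpha)\eta H\cdot \expt\!\left[x_{\sgd}^{(k)}\,\mathbbm{1}\{x_{\sgd}^{(k)}<0\}\right].
\end{equation}
The "source term" $\expt[x_{\sgd}^{(k)}\mathbbm{1}\{x_{\sgd}^{(k)}<0\}]$ is the negative of the expected negative part; for small $\eta$ and $k$ starting at $x^\star=0$, the iterate is approximately a centered random walk with step size $\eta\sigma$ perturbed by an $O(\eta H)$ contraction, so its standard deviation after $k$ steps is $\Theta(\eta\sigma\sqrt{k})$ and $\expt[(x_{\sgd}^{(k)})^-] = \Theta(\eta\sigma\sqrt{k})$. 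Plugging this lower bound on the source term into the recursion and unrolling: since $\eta \leq 1/(2KH)$ ensures $(1-\eta H)^{k} \geq 1/2$ for all $k\leq K$ and also keeps each contraction factor close to $1$, we get $m_{k+1} \lesssim$ and $\gtrsim$ a sum $\sum_{j=0}^{k}(1-\eta H)^{k-j}(1-\alpha)\eta H \cdot \Theta(\eta\sigma\sqrt{j}) = \Theta(\eta H)\cdot\eta\sigma\cdot\Theta(k^{3/2})$, which is the claimed $\Theta(\eta^2 k^{3/2}H\sigma)$ (the sign works out so the bias points into the flatter region, $m_k<0$). I would organize this as: (i) a lemma that $\Var(x_{\sgd}^{(k)})$ and hence $\expt[(x_{\sgd}^{(k)})^-]$ is $\Theta(\eta\sigma\sqrt{k})$ under $\eta\leq 1/(2KH)$, proved by comparing to the pure random walk and controlling the multiplicative drift; (ii) the unrolling of the $m_k$ recursion above.

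The main obstacle is step (i): lower-bounding $\expt[(x_{\sgd}^{(k)})^-]$ uniformly, i.e. showing the iterate genuinely spreads out to both sides at the random-walk rate and is not prematurely squashed by the curvature or skewed so far to one side that the negative part collapses. This requires a careful two-sided variance/anti-concentration estimate: an upper bound on $|m_k|$ itself (bootstrapping — we are proving $|m_k|=O(\eta^2 k^{3/2} H\sigma)$, which is $o(\eta\sigma\sqrt k)$ precisely because $\eta H k \ll 1$ in this regime, so the mean shift is negligible next to the spread), together with a lower bound on the variance obtained by noting each SGD step adds an independent $\eta\sigma\xi$ term while the state-dependent map $x\mapsto x-\eta F'(x)$ is a contraction by a factor in $[1-\eta H, 1-\alpha\eta H]$, hence $\Var(x_{\sgd}^{(k+1)}) \geq (1-\eta H)^2\Var(x_{\sgd}^{(k)}) + \eta^2\sigma^2$, giving $\Var(x_{\sgd}^{(k)}) \geq \eta^2\sigma^2\sum_{j<k}(1-\eta H)^{2j} \geq \frac{1}{2}\eta^2\sigma^2 k$ for $k\leq K$. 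Combined with a matching upper bound on the variance and a lower bound on $\Pr[x_{\sgd}^{(k)}<0]$ (symmetry of the noise plus the smallness of $|m_k|$), Chebyshev/anti-concentration yields $\expt[(x_{\sgd}^{(k)})^-] \geq \underline c\,\eta\sigma\sqrt k$. Everything else is the routine unrolling already sketched, and the constants can be tracked explicitly since the construction is one-dimensional.
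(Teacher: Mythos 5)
Your construction is the same as the paper's (the piecewise quadratic $\psi$ with linear Gaussian noise, analyzed from the optimum), but your analysis takes a genuinely different route. The paper never writes a recursion for the mean: it sandwiches the piecewise-quadratic SGD iterate between SGD on the two pure quadratics $\frac{L}{4}x^2$ and $\frac{L}{2}x^2$ via first-order stochastic dominance (\cref{mc_dominance}), splits $\mathbb{E}[x^{(k)}]$ into $-\int_{-\infty}^0\Pr[x^{(k)}\le c]\,dc+\int_0^\infty\Pr[x^{(k)}\ge c]\,dc$, and evaluates both integrals exactly because the comparison iterates are Gaussian with closed-form variances $\sigma_y^2>\sigma_z^2$; the bias is then read off as $\sigma_z-\sigma_y=-\Theta(\eta^2Lk^{3/2}\sigma)$ (\cref{sigma_diff}). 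Your approach instead unrolls $m_{k+1}=(1-\eta H)m_k-(1-\alpha)\eta H\,\mathbb{E}[(x^{(k)})^-]$ and lower-bounds the source term by the spread of the iterate. What each buys: the paper's dominance argument converts everything into explicit Gaussian calculus (at the cost of the somewhat delicate Claims~\ref{claim:bias1}--\ref{sigma_diff}, and it extends to random initializations, which the \fedavg lower bound needs); your recursion isolates the mechanism of the bias more transparently and would generalize to non-Gaussian noise, but pushes all the difficulty into one estimate.

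That estimate --- $\mathbb{E}[(x^{(k)})^-]\ge c\,\eta\sigma\sqrt{k}$ --- is where your sketch is too thin as written. Chebyshev bounds tails from above, and a lower bound on $\Pr[x^{(k)}<0]$ alone does not lower-bound $\mathbb{E}[(x^{(k)})^-]$: you need the mass below zero to sit $\Omega(\eta\sigma\sqrt{k})$ below zero. Also, your appeal to "$|m_k|=o(\eta\sigma\sqrt{k})$" is not available in the stated regime: the a priori bound is $|m_k|\le 4\eta^2k^{3/2}H\sigma\le 2\eta\sigma\sqrt{k}$ when $\eta Hk\le\frac12$, i.e.\ comparable to, not negligible against, the spread. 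Fortunately neither is needed. Your own recursion gives $m_k\le 0$ by induction (the source term is nonpositive), and for $m_k\le0$ one has $(x^{(k)})^-\ge(x^{(k)}-m_k)^-$ pointwise, so $\mathbb{E}[(x^{(k)})^-]\ge\frac12\mathbb{E}|x^{(k)}-m_k|$; then Hölder in the form $\mathbb{E}|Y|\ge(\mathbb{E}Y^2)^{3/2}/(\mathbb{E}Y^4)^{1/2}$, your variance lower bound $\mathrm{Var}(x^{(k)})\ge\frac12\eta^2\sigma^2k$ (which is correct: the update map is monotone and $(1-\eta H)$-expansive from below, and the noise is independent), and a fourth-moment upper bound $\mathbb{E}[(x^{(k)}-m_k)^4]\le C\eta^4\sigma^4k^2$ (the iterate is an $\eta\sigma$-Lipschitz function of each Gaussian increment) close the gap. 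With that replacement for the Chebyshev step, the rest of your outline goes through.
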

\cref{thm:2o:bias:lb} shows that the SGD trajectory can indeed drift away (in expectation) from the optimum $\x^{\star}$ despite being initialized at $\x^{\star}$.
Our lower bound improves over the best known lower bound $\Omega(\eta^2 k H \sigma)$
due to \citet{Woodworth.Patel.ea-ICML20}. 
The lower bound is attained by running SGD with Gaussian noise on the piecewise quadratic function $f(x) := \begin{cases}\frac{1}{2} H x^2 & x \geq 0, \\
     \frac{1}{4}Hx^2 & x < 0.\end{cases}$, first analyzed in \citet{Woodworth.Patel.ea-ICML20}.

Recall that the bias originates from the difference between $\nabla f(\mathbb{E}[\x_{\sgd}^{(k)} ])$ and $\mathbb{E}[\nabla f(\x_{\sgd}^{(k)})]$. 
This piecewise quadratic function has an unbounded third order derivative at $0$, which causes this difference to be large whenever the distribution of $\x_{\sgd}^{(k)}$ spans both sides of $0$. This worst case construction motivates our further study of the bias under a third-order derivative bound.

\subsection{The Bias Under Third-Order Smoothness}
We formally state our third-order smoothness condition in the following assumption.
\begin{assumption}\label{asm:third_order}
Assume $f(\x;\xi)$ is third-order differentiable w.r.t. $\x$ for any $\xi$, and
\begin{enumerate}[(a)]
    \item $f(\x; \xi)$ is $Q$-3rd-order-smooth, i.e. for any $\xi$, for any $\x, \y$, $
        \| \nabla^2 f(\x; \xi) - \nabla^2 f(\y; \xi) \|_2 \leq Q \| \x - \y\|_2$.
    \item $\nabla f(\x, \xi)$ has $\sigma^4$-bounded 4th order central moment, i.e. for all $\x$, $\mathbb{E}_{\xi}\left[\left\|\nabla f(\x, \xi)- \nabla F(\x)]\right\|^4\right] \leq \sigma^4$.
 \end{enumerate}
\end{assumption}

We show that under this additional assumption, the iterate bias reduces to $\bigo(\eta^3 k^2 Q \sigma^2)$, which scales on the order of $\eta^3$ (rather than $\eta^2$) as $\eta$ goes to 0.
\begin{theorem}[Simplified from \cref{thm:3o:bias:ub:complete}]
\label{thm:3o:bias:ub}
    Under \cref{asm:main,asm:third_order}, there exists an absolute constant $\bar{c}$ such that for any initialization $\x$, for any $\eta \leq \frac{1}{2H}$, the iterate bias satisfies 
    $
        \left \| 
            \expt \x_{\sgd}^{(k)} - \z_{\gd}^{(k)}
        \right\|_2
        \leq
        \bar{c}
        \cdot
        \eta^3 k^2 Q \sigma^2.
    $
\end{theorem}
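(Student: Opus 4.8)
The plan is to track the one-step growth of the bias vector $\mathbf{b}^{(k)} := \expt\x_{\sgd}^{(k)} - \z_{\gd}^{(k)}$ and show that each step adds at most $O(\eta^3 k Q\sigma^2)$ to its norm. Write $\bar\x^{(k)} := \expt\x_{\sgd}^{(k)}$ for the mean SGD iterate. Since $\xi^{(k)}$ is independent of $\x_{\sgd}^{(k)}$, taking expectations in the SGD recursion gives $\bar\x^{(k+1)} = \bar\x^{(k)} - \eta\,\expt[\nabla F(\x_{\sgd}^{(k)})]$. Subtracting the GD recursion and splitting
\[
\expt[\nabla F(\x_{\sgd}^{(k)})] - \nabla F(\z_{\gd}^{(k)}) = \bigl(\nabla F(\bar\x^{(k)}) - \nabla F(\z_{\gd}^{(k)})\bigr) + \bigl(\expt[\nabla F(\x_{\sgd}^{(k)})] - \nabla F(\bar\x^{(k)})\bigr),
\]
I would absorb the first difference into a gradient step: by co-coercivity of the gradient of the convex $H$-smooth function $F$, the map $\y \mapsto \y - \eta\nabla F(\y)$ is non-expansive for $\eta \le 2/H$ (in particular for $\eta \le 1/(2H)$), so $\|\mathbf{b}^{(k+1)}\| \le \|\mathbf{b}^{(k)}\| + \eta\,\bigl\|\expt[\nabla F(\x_{\sgd}^{(k)})] - \nabla F(\bar\x^{(k)})\bigr\|$. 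Everything reduces to bounding this ``bias-generation'' term.

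This is where \cref{asm:third_order} enters. Taylor-expanding $\nabla F$ about $\bar\x^{(k)}$ and using that $\nabla^2 F$ is $Q$-Lipschitz (which follows from \cref{asm:third_order}(a) by averaging over $\xi$), we get $\nabla F(\x_{\sgd}^{(k)}) = \nabla F(\bar\x^{(k)}) + \nabla^2 F(\bar\x^{(k)})(\x_{\sgd}^{(k)} - \bar\x^{(k)}) + \mathbf{R}^{(k)}$ with $\|\mathbf{R}^{(k)}\| \le \tfrac{Q}{2}\|\x_{\sgd}^{(k)} - \bar\x^{(k)}\|^2$. Taking expectations, the first-order term vanishes because $\expt[\x_{\sgd}^{(k)} - \bar\x^{(k)}] = \mathbf{0}$, leaving $\bigl\|\expt[\nabla F(\x_{\sgd}^{(k)})] - \nabla F(\bar\x^{(k)})\bigr\| \le \tfrac{Q}{2}\,\expt\|\x_{\sgd}^{(k)} - \bar\x^{(k)}\|^2$. (In the second-order setting one has only the first-order remainder bound $\|\nabla F(\x) - \nabla F(\bar\x) - \nabla^2 F(\bar\x)(\x - \bar\x)\| \le 2H\|\x - \bar\x\|$; the mean still cancels the linear term, but the remainder being only first order forces use of $\expt\|\x_{\sgd}^{(k)}-\bar\x^{(k)}\| = O(\eta\sqrt{k}\,\sigma)$ rather than its square, producing the $\eta^2 k^{3/2} H\sigma$ rate of \cref{thm:2o:bias:ub}. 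The extra power of the deviation is exactly what third-order smoothness buys.)

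It remains to control $V^{(k)} := \expt\|\x_{\sgd}^{(k)} - \bar\x^{(k)}\|^2$. Since $\bar\x^{(k)}$ minimizes $\mathbf{p} \mapsto \expt\|\x_{\sgd}^{(k)} - \mathbf{p}\|^2$, we have $V^{(k)} \le W^{(k)} := \expt\|\x_{\sgd}^{(k)} - \z_{\gd}^{(k)}\|^2$, and $W^{(k)}$ obeys a clean recursion: writing $\x_{\sgd}^{(k+1)} - \z_{\gd}^{(k+1)} = \bigl[(\x_{\sgd}^{(k)} - \eta\nabla F(\x_{\sgd}^{(k)})) - (\z_{\gd}^{(k)} - \eta\nabla F(\z_{\gd}^{(k)}))\bigr] - \eta\,\mathbf{n}^{(k)}$ with $\mathbf{n}^{(k)} := \nabla f(\x_{\sgd}^{(k)};\xi^{(k)}) - \nabla F(\x_{\sgd}^{(k)})$, conditioning on the past kills the cross term ($\expt[\mathbf{n}^{(k)} \mid \mathcal{F}_k] = \mathbf{0}$), non-expansiveness bounds the bracket by $\|\x_{\sgd}^{(k)} - \z_{\gd}^{(k)}\|$, and \cref{asm:main}(c) gives $\expt\|\mathbf{n}^{(k)}\|^2 \le \sigma^2$, so $W^{(k+1)} \le W^{(k)} + \eta^2\sigma^2$ and hence $W^{(k)} \le k\eta^2\sigma^2$. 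Substituting back, $\|\mathbf{b}^{(k+1)}\| \le \|\mathbf{b}^{(k)}\| + \tfrac{\eta Q}{2}\,k\eta^2\sigma^2$; summing over the first $k$ steps with $\mathbf{b}^{(0)} = \mathbf{0}$ gives $\|\mathbf{b}^{(k)}\| \le \tfrac{\eta^3 Q\sigma^2}{2}\sum_{j<k} j \le \tfrac14\,\eta^3 k^2 Q\sigma^2$, i.e.\ the claim with $\bar c = 1/4$.

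The delicate point is not any single estimate but making the bookkeeping produce a per-step bias generation that is genuinely cubic in $\eta$: this needs (i) the \emph{exact} cancellation of the first-order Taylor term, which only happens because we expand about the \emph{mean} iterate $\bar\x^{(k)}$ rather than about $\z_{\gd}^{(k)}$, and (ii) a variance bound that does not itself grow geometrically, i.e.\ a non-expansive — not merely $(1 + O(\eta H))$-Lipschitz — gradient map, which is exactly why $\eta \le 1/(2H)$ suffices. For the complete statement (arbitrary $\eta$) the non-expansiveness step must be replaced by an argument that tolerates the $\eta^2\|\nabla F\|^2$ term and tracks the drift of $\bar\x^{(k)}$; there the fourth-moment hypothesis \cref{asm:third_order}(b) becomes the natural tool, but in the stated small-step regime only the covariance bound \cref{asm:main}(c) is needed.
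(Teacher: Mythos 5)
Your proof is correct, and it takes a genuinely different route from the paper's. The paper argues ``backward'': it studies the value function $\u^{(k)}(\x) = \expt[\x_{\sgd}^{(k)} \mid \x_{\sgd}^{(0)}=\x]$, proves the uniform second-derivative bound $\sup_{\x}\|\Diff^2 \u^{(k)}(\x)\|\le \eta k Q$ by differentiating the semigroup recursion twice (this is where $\nabla^3 f$ enters), Taylor-expands $\u^{(k)}$ around the noiseless step $\x-\eta\nabla F(\x)$ so that the first-order term cancels because $\expt_\xi \nabla f(\x;\xi)=\nabla F(\x)$, and telescopes along the hybrid sequence $\u^{(k-j)}(\z_{\gd}^{(j)})$; the per-step increment is (one-step noise $\eta^2\sigma^2$) $\times$ (curvature of the value function $\eta k Q$). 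You argue ``forward'': you Taylor-expand the gradient field $\nabla F$ around the deterministic mean iterate $\expt\x_{\sgd}^{(k)}$, so the first-order term cancels because the deviation from the mean has zero expectation, and the per-step increment is ($Q/2$) $\times$ (accumulated variance $k\eta^2\sigma^2$), with the variance controlled by the same non-expansiveness-plus-martingale recursion the paper uses in \cref{lem:bias:2o:ub:2}. Both bookkeepings sum to the same $\tfrac14\eta^3k^2Q\sigma^2$. Your version is more elementary — it never needs to control $\Diff^2\u^{(k)}$ uniformly in $\x$, only the Lipschitzness of $\nabla^2 F$ (which follows from \cref{asm:third_order}(a) by Jensen) and the scalar variance bound — while the paper's version is the discrete shadow of the Kolmogorov-backward-equation derivation in \cref{sec:sde} and is the form that their matching lower bound (\cref{thm:3o:bias:lb:complete}) reuses, since there one must track the sign of $u_k''$ rather than just its magnitude. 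One small correction to your closing aside: the complete statement \cref{thm:3o:bias:ub:complete} is also restricted to $\eta\le 1/H$, which your non-expansiveness step (valid up to $\eta\le 2/H$) already covers, so no modification is actually needed there; the extra terms in the complete bound come from taking the minimum with \cref{thm:2o:bias:ub:complete}, not from enlarging the step-size range.
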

\cref{thm:3o:bias:ub} also reveals the dependency on the third-order smoothness $Q$. In the extreme case where $Q = 0$ ($f$ is quadratic), the iterate bias will disappear. It is worth noting that since \cref{asm:main} is still required in \cref{thm:3o:bias:ub}, the original upper bound $\bigo(\eta^2 k^{\frac{3}{2}} H \sigma)$ from \cref{thm:2o:bias:ub} still applies, and one can formulate the upper bound as the minimum of the two. 

The following lower bound shows that the upper bound in \cref{thm:3o:bias:ub} is sharp.
\begin{theorem}[Simplified from \cref{thm:3o:bias:lb:complete}]
    \label{thm:3o:bias:lb}
      There exists an absolute constant $\underline{c}$ such that for any $H, \sigma, K$, for any sufficiently small $Q$ (polynomially dependent on $H, \sigma, K$), there exists an objective $f(\x; \xi)$ and distribution $\xi \sim \dist$ satisfying \cref{asm:main,asm:third_order} such that for any $\eta \leq \frac{1}{2HK}$ and integer $k \in [2, K]$, the iterate bias from the optimum $\x^{\star}$ is lower bounded as
    $
        \left \| 
           \expt \x_{\sgd}^{(k)} - \z_{\gd}^{(k)}
        \right\|_2
        \geq
        \underline{c} \cdot \eta^3 k^2 Q \sigma^2.
    $
\end{theorem}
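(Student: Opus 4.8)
The plan is to mirror the one-dimensional "noise $+$ deterministic drift" construction behind \cref{thm:2o:bias:lb}, but to replace the piecewise-quadratic objective by a genuinely \emph{cubic} one, since a nonzero but bounded third derivative is precisely the ingredient that survives \cref{asm:third_order}. Concretely, I would take $f(x;\xi) := F(x) + \xi x$, where $\xi$ is bounded, mean-zero, with $\expt[\xi^2] = \sigma^2$ (e.g.\ $\xi = \pm\sigma$ each with probability $1/2$, so that \cref{asm:main}(c) and \cref{asm:third_order}(b) hold with equality), and $F$ is built so that $F(x) = \tfrac12 a x^2 + \tfrac16 Q x^3$ on a central interval $I := [-2\sigma/H,\,2\sigma/H]$, with $F'''$ tapered monotonically from $Q$ to $0$ outside $I$. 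Taking $a := 3Q\sigma/H$ and $Q$ below a threshold polynomial in the problem parameters (it suffices that $Q \lesssim H^2/\sigma$) makes $0 \le F'' \le H$ and $|F'''|\le Q$ globally, so \cref{asm:main,asm:third_order} hold; moreover $\x^{\star} = 0$ is the optimum and the GD trajectory from $\x^{\star}$ stays at $\x^{\star}$, so the bias equals $m_k := \expt[\x_{\sgd}^{(k)}]$.

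The second step is to note that SGD initialized at $0$ provably never leaves $I$: since $F'(0)=0$ and $|F'(x)| = O(Q\sigma^2/H^2)$ on $I$ while $|\xi| = \sigma$, each step moves $x$ by at most $\approx 2\eta\sigma$, so after $k \le K$ steps $|\x_{\sgd}^{(k)}| \le 2\eta\sigma K \le \sigma/H$, strictly inside $I$, using $\eta \le 1/(2HK)$. Hence $\nabla f(\x_{\sgd}^{(k)};\xi) = a\x_{\sgd}^{(k)} + \tfrac12 Q (\x_{\sgd}^{(k)})^2 + \xi$ \emph{exactly}, and I can write down exact recursions for the first two moments $m_k$ and $v_k := \expt[(\x_{\sgd}^{(k)})^2]$. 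Taking expectations yields $m_{k+1} = (1-\eta a)m_k - \tfrac12\eta Q\, v_k$ with $m_0 = v_0 = 0$, hence $m_k = -\tfrac12\eta Q \sum_{j=0}^{k-1}(1-\eta a)^{k-1-j} v_j$; all summands have the same sign, so $|m_k|$ is bounded below by a lower bound on the $v_j$ together with $(1-\eta a)^{k-1-j} \ge (1-\eta a)^K$, which is an absolute constant since $\eta a K \lesssim Q\sigma/H^2 \lesssim 1$.

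The third step is the bound $v_k \ge \tfrac12\eta^2\sigma^2 k$. Expanding the squared update and using independence of $\xi^{(k)}$ from $\x_{\sgd}^{(k)}$, $v_{k+1} = v_k - 2\eta\expt[\x_{\sgd}^{(k)} F'(\x_{\sgd}^{(k)})] + \eta^2\expt[(F'(\x_{\sgd}^{(k)})+\xi^{(k)})^2] \ge v_k\bigl(1 - 2\eta(a + Q\eta\sigma k)\bigr) + \eta^2\sigma^2$, where I used $\expt[xF'(x)] = a v_k + \tfrac12 Q\expt[x^3]$, the crude bound $|\expt[x^3]| \le (\max|\x_{\sgd}^{(k)}|)\, v_k \le 2\eta\sigma k\, v_k$, and dropped the nonnegative term $\eta^2\expt[F'(\x_{\sgd}^{(k)})^2]$. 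Since $a + Q\eta\sigma k \lesssim Q\sigma/H$, each multiplicative factor is $\ge 1 - O(Q\sigma\eta/H)$ and raised to at most the $K$-th power stays $\ge 1/2$ once $Q$ is small; unrolling gives $v_k \ge \tfrac12\eta^2\sigma^2 k$. Plugging this into the formula for $m_k$ gives $|m_k| \ge c\,\eta^3 Q\sigma^2\sum_{j=0}^{k-1} j \ge \underline c\,\eta^3 k^2 Q\sigma^2$ for $k\ge 2$, which is the claim.

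The main obstacle is not any individual estimate but the bookkeeping: the objective $F$ must be fixed once and for all yet work for \emph{every} $\eta \le 1/(2HK)$, which forces $I$ and $a$ to be calibrated to the worst case $\eta = 1/(2HK)$; and one must check simultaneously (i) global convexity, $H$-smoothness, and $Q$-third-order smoothness of the tapered $F$, (ii) that SGD provably stays in $I$ for \emph{all} realizations — this is why bounded (rather than Gaussian) noise is convenient, avoiding any tail argument, and (iii) that the lower-order corrections in the $m_k$ and $v_k$ recursions (the $\eta a m_k$ drift, the third-moment term in $\expt[xF'(x)]$, and the geometric decay factors) are all dominated once $Q$ is below a threshold polynomial in $H,\sigma$ (hence in $H,\sigma,K$). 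A minor but real point is matching the constant at the endpoint $k=2$: since $m_1 = 0$ and $m_2 = -\tfrac12\eta^3 Q\sigma^2$, the absolute constant $\underline c$ must be taken small (e.g.\ $\underline c = 1/48$ works uniformly).
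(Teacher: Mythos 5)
Your proposal is correct, but it takes a genuinely different route from the paper. The paper's construction is $f(x;\xi)=\tfrac38Hx^2+\tfrac{H^3}{64Q^2}\varphi(\tfrac{4Q}{H}x)+\xi x$ with $\varphi=\int\log\cosh$ and uniform noise, and its argument runs through the conditional-expectation functions $u_k(x)=\expt[x_{\sgd}^{(k)}\mid x_{\sgd}^{(0)}=x]$: it proves by induction that $u_k'\ge(1-\eta H)^k$ and that $u_k''\le-\Omega(\eta kQ)$ on a central interval, then Taylor-expands $u_k$ around $x-\eta F'(x)$ so that the noise variance multiplies $u_k''$ and contributes $-\Omega(\eta^3kQ\sigma^2)$ per step; this is the discrete counterpart of the Kolmogorov-backward-equation heuristic of Section 2.3. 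You instead confine the iterates (via bounded noise and $\eta\le\tfrac{1}{2HK}$) to a region where the gradient is exactly quadratic, and close exact recursions on the first two moments $m_k,v_k$ of the iterates themselves: $m_{k+1}=(1-\eta a)m_k-\tfrac12\eta Qv_k$ together with $v_k\gtrsim\eta^2\sigma^2k$ immediately yields $|m_k|\gtrsim\eta^3k^2Q\sigma^2$. Your route is more elementary (no control of derivatives of $u_k$ is needed, and the mechanism $m_{k+1}-m_k\approx-\tfrac12\eta Q\,v_k$ is transparent), but it buys this simplicity by leaning on the confinement argument, which is exactly what the restricted step-size range $\eta\le\tfrac{1}{2HK}$ of the simplified statement permits; the paper's complete version (\cref{thm:3o:bias:lb:complete}) covers the larger range $\eta\le\tfrac{1}{2H}$ and arbitrary initializations in a central interval, for which the almost-sure confinement to an exact-cubic region fails and the $u_k$-based analysis is needed. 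The remaining items you flag (tapering $F'''$ outside the central interval while preserving $0\le F''\le H$ and $|F'''|\le Q$, and checking that all corrections are absorbed for $Q\lesssim H^2/\sigma$) are genuinely routine and work out as you describe, so I see no gap.
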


\subsection{Revealing Iterate Bias Via Continuous Perspective}
\label{sec:sde}
While the proofs of the results above are quite technical, the intuition for these bounds is much easier to see in a continuous view of SGD. As an example, we demonstrate how the $\Theta(\eta^3 k^2 Q \sigma^2)$ term shows up in \cref{thm:3o:bias:ub,thm:3o:bias:lb}. 

Consider a one-dimensional instance of SGD with Gaussian noise, where $f(x; \xi) = F(x) - \xi x$, and $\xi \sim \mathcal{N}(0, \sigma^2)$.
The SGD then follows
\begin{compact}
    \begin{equation}
        x^{(k+1)}_{\sgd} = x^{(k)}_{\sgd} - \eta \nabla F(x^{(k)}_{\sgd}) + \eta \xi^{(k)}, ~~ \text{where } \xi^{(k)} \sim \mathcal{N}(0, \sigma^2).
        \label{eq:sde:sgd}
    \end{equation}
\end{compact}
The continuous limit of \eqref{eq:sde:sgd} corresponds to the following SDE, with the scaling $t = \eta k$:
\begin{compact}
\begin{equation}
    \diff X(t) = - F'(X(t)) \diff t + \sqrt{\eta} \sigma \diff B_t,
    \label{eq:sde}
\end{equation}
\end{compact}
where $B_t$ denotes the Brownian motion (also known as the Wiener process).\footnote{To justify the relation of \cref{eq:sde:sgd} and \cref{eq:sde}, note that \cref{eq:sde:sgd} can be viewed as a numerical discretization (Euler-Maruyama discretization \citep{Kloeden.Platen-92}) of the SDE \eqref{eq:sde} with time step-size $\eta$.}

To get a handle of the iterate bias, our goal is to study $\expt [X(t) | X(0) = x]$, the expectation of the SDE solution $X(t)$ initialized at $x$. We view this quantity as a multivariate function $u(t, x)$ of $t$ and $x$, with the objective to Taylor expand $u(t, x)$ around $u(0, x)$ in $t$:
\begin{compact}
 \begin{equation}
     u(t, x) = u(0, x) + u_t(0,x) t + \frac{1}{2}u_{tt}(0,x) t^2 + o(t^2).
 \end{equation}
\end{compact}
For brevity, we use subscript notation to denote partial derivatives, e.g, $u_x$ denotes $\frac{\partial u(t, x)}{\partial x}$.
The relationship of $u(t,x)$ and the SDE \eqref{eq:sde} is established by the Kolmogorov backward equation as follows. 
\begin{claim}[Kolmogorov backward equation \citep{Oksendal-03}]
    Let $u(t,x) = \expt [X(t) | X(0) = x]$, then $u(t,x)$ satisfies the following  partial differential equation:
    \begin{equation}
        u_t = - F_x u_x + \eta \sigma^2 u_{xx},\quad \text{with $u(0,x) = x$.}
        \label{eq:pde:1d}
    \end{equation}
\end{claim}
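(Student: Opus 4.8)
The statement is a textbook instance of the one-dimensional Kolmogorov backward equation \citep{Oksendal-03} applied to the linear observable $\phi(x)=x$; the plan is to supply a short self-contained verification argument rather than merely cite it. Write $\mathcal{L}$ for the infinitesimal generator of the diffusion \eqref{eq:sde} --- whose action is the right-hand side of \eqref{eq:pde:1d} --- so that the target equation reads $u_t=\mathcal{L}u$ with $u(0,\cdot)=\mathrm{id}$, and the task is to show the function $u(t,x)=\expt[X(t)\mid X(0)=x]$ solves it.

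First I would reduce the claim to the existence of one sufficiently well-behaved classical solution of the PDE. Let $v(t,x)$ be a $C^{1,2}$ solution of $v_t=\mathcal{L}v$ with $v(0,\cdot)=\mathrm{id}$; existence together with the quantitative control needed below --- $v$ of at most linear growth in $x$ and $v_x$ bounded, both locally uniformly in $t$ --- follows from standard linear parabolic theory, using that $F_x$ is globally Lipschitz (this is exactly where \cref{asm:main}(b) enters) and that the initial datum is linear. In the additive-Gaussian-noise setting considered here one can in fact exhibit $v$ semi-explicitly through a heat-kernel / Feynman--Kac representation.

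Next I would apply It\^o's formula to the process $s\mapsto v(t-s,X(s))$ for $s\in[0,t]$, where $X$ solves \eqref{eq:sde} with $X(0)=x$. Its $\diff s$-drift is $\bigl(-v_t+\mathcal{L}v\bigr)(t-s,X(s))$, which vanishes identically by the PDE, so $s\mapsto v(t-s,X(s))$ is a local martingale with differential $\sqrt{\eta}\,\sigma\,v_x(t-s,X(s))\,\diff B_s$. The linear-growth bounds on $v$ and $v_x$, together with the standard moment estimate $\expt\bigl[\sup_{0\le s\le t}\lvert X(s)\rvert^2\bigr]<\infty$ (again a consequence of the Lipschitz drift), upgrade this to a genuine martingale. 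Equating its expectation at the two endpoints $s=0$ and $s=t$ gives $v(t,x)=\expt\bigl[v(0,X(t))\mid X(0)=x\bigr]=\expt[X(t)\mid X(0)=x]=u(t,x)$, so $u\equiv v$ solves \eqref{eq:pde:1d}; the initial condition $u(0,x)=\expt[X(0)\mid X(0)=x]=x$ is immediate.

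The main obstacle is the analytic input of the first step: producing a PDE solution regular enough, with enough growth control, to license the \emph{true} (not merely local) martingale property used in the second step. In the present special case this is mild, since $v$ can be written explicitly, but in general it is the one place that genuinely invokes results beyond elementary stochastic calculus. An alternative that avoids constructing $v$ altogether is to derive \eqref{eq:pde:1d} directly from the time-homogeneous Markov property: write $u(t+h,x)=\expt\bigl[u(t,X(h))\mid X(0)=x\bigr]$, Taylor-expand $u(t,\cdot)$ around $x$ to second order, insert $\expt[X(h)-x]=-F_x(x)h+o(h)$ and $\expt[(X(h)-x)^2]=\Theta(\eta\sigma^2 h)$ with a uniformly controlled cubic remainder, then divide by $h$ and let $h\to 0^{+}$. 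This route simply trades the PDE-existence step for an a priori argument that $u\in C^{1,2}$ with controlled remainders --- the same difficulty in different clothing.
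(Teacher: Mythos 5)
The paper offers no proof of this claim at all: it is stated as a citation to \citet{Oksendal-03} and used as a black box in the proof sketch of \cref{lem:pde:utt}. Your proposal therefore does something genuinely different by supplying a self-contained verification, and the route you take --- produce a classical solution $v$ of the backward PDE with linear growth and bounded $v_x$, apply It\^o's formula to $s \mapsto v(t-s, X(s))$ so the drift cancels by the PDE, upgrade the resulting local martingale to a true martingale via the growth bounds and the second-moment estimate on $X$, and equate expectations at the endpoints --- is exactly the standard textbook proof, correctly executed, including the honest acknowledgement that the real work is the parabolic existence/regularity input (or, in your alternative Markov-property route, the a priori $C^{1,2}$ regularity of $u$). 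One substantive caveat: you declare the generator $\mathcal{L}$ of \eqref{eq:sde} to be "the right-hand side of \eqref{eq:pde:1d}," but for $\diff X = -F'(X)\,\diff t + \sqrt{\eta}\,\sigma\,\diff B_t$ the generator is $-F_x\partial_x + \tfrac{1}{2}\eta\sigma^2\partial_{xx}$, carrying a factor $\tfrac{1}{2}$ on the diffusion term that the displayed equation \eqref{eq:pde:1d} omits. A faithful run of your argument therefore proves $u_t = -F_xu_x + \tfrac{1}{2}\eta\sigma^2u_{xx}$ rather than the equation as printed; this points to a (harmless, since the section only draws order-of-magnitude conclusions, and the corrected constant actually matches the $\tfrac{1}{4}\eta^3k^2Q\sigma^2$ bound of \cref{thm:3o:bias:ub:complete} more closely) constant error in the paper's statement, which you should either flag explicitly or absorb by restating the claim with the correct generator before proving it.
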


Using this claim, we can compute the first two derivatives of $u(t, x)$ in $t$, as follows:
\begin{lemma}
    \label{lem:pde:utt}
    Suppose $u(t,x)$ satisfies the PDE \eqref{eq:pde:1d}, then $
        u_t(0,x) = -F_x,  u_{tt}(0,x) = F_x F_{xx} - \eta \sigma^2 F_{xxx}$.
\end{lemma}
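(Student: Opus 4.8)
The plan is to differentiate the Kolmogorov backward equation \eqref{eq:pde:1d} with respect to $t$, evaluate at $t=0$, and use the initial condition $u(0,x)=x$ to read off the spatial derivatives we need. First I would compute $u_t(0,x)$ directly: since $u(0,x)=x$, we have $u_x(0,x)=1$ and $u_{xx}(0,x)=0$, so plugging into \eqref{eq:pde:1d} gives $u_t(0,x) = -F_x \cdot 1 + \eta\sigma^2 \cdot 0 = -F_x$, which is the first claimed identity.

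Next I would obtain $u_{tt}$ by differentiating \eqref{eq:pde:1d} in $t$. Writing the PDE as $u_t = -F_x u_x + \eta\sigma^2 u_{xx}$ and applying $\partial_t$ (noting $F_x$, $F_{xx}$, etc.\ depend only on $x$, not $t$), we get $u_{tt} = -F_x u_{xt} + \eta\sigma^2 u_{xxt}$. Now I need $u_{xt}(0,x)$ and $u_{xxt}(0,x)$, which I get by instead differentiating \eqref{eq:pde:1d} in $x$ (and in $x$ twice) and then setting $t=0$. Differentiating once in $x$: $u_{xt} = -F_{xx}u_x - F_x u_{xx} + \eta\sigma^2 u_{xxx}$; at $t=0$ this is $-F_{xx}\cdot 1 - F_x\cdot 0 + \eta\sigma^2\cdot 0 = -F_{xx}$. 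Differentiating twice in $x$: $u_{xxt} = -F_{xxx}u_x - 2F_{xx}u_{xx} - F_x u_{xxx} + \eta\sigma^2 u_{xxxx}$; at $t=0$, using $u_x=1$, $u_{xx}=u_{xxx}=u_{xxxx}=0$, this equals $-F_{xxx}$. Substituting back, $u_{tt}(0,x) = -F_x\cdot(-F_{xx}) + \eta\sigma^2\cdot(-F_{xxx}) = F_x F_{xx} - \eta\sigma^2 F_{xxx}$, as claimed.

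The only subtlety — and the step I would be most careful about — is the legitimacy of interchanging the $t$ and $x$ derivatives and of evaluating the spatial derivatives of $u$ at $t=0$ from the initial condition $u(0,\cdot)=\mathrm{id}$. This requires enough smoothness and regularity of $u$ (and suitable growth/boundedness of $F$ and its derivatives) for the mixed partials to exist and be continuous up to $t=0$; under the standing smoothness assumptions on $F$ this is standard (e.g.\ via parabolic regularity theory for \eqref{eq:pde:1d}), so I would state it as a regularity hypothesis and cite the relevant PDE/SDE background rather than re-derive it. Everything else is the short chain of substitutions above.
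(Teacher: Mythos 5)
Your proof is correct and follows essentially the same route as the paper's: evaluate the backward equation at $t=0$ using $u(0,x)=x$ for the first identity, then differentiate the PDE in $t$ and compute $u_{xt}(0,x)=-F_{xx}$ and $u_{xxt}(0,x)=-F_{xxx}$ by differentiating the PDE in $x$ and using the vanishing spatial derivatives of the initial condition. Your explicit expansion of $u_{xxt}$ and the regularity caveat about interchanging mixed partials are slightly more detailed than the paper's sketch, but the argument is the same.
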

\begin{proof}[Proof sketch of Lemma~\ref{lem:pde:utt}]
    The first equation follows from equation~\eqref{eq:pde:1d} and the fact that $u_x(0, x) \equiv 1$ and $u_{xx}(0, x) \equiv 0 $ since $u(0,x) = x$. 
    To see the second equation, we take $\partial_t$ on both sides of \eqref{eq:pde:1d}, which gives
    \begin{equation}
        u_{tt} = - F_x u_{xt} + \eta \sigma^2 u_{xxt}.
        \label{eq:proof:lem:pde:utt}
    \end{equation}
    Since  $u_{xt} = u_{tx} = (u_t)_x$, one has (by \cref{eq:pde:1d})
    \begin{compact}
    \begin{equation}
        u_{xt} = (- F_x u_x + \eta \sigma^2 u_{xx})_x = - F_{xx} u_x + - F_x u_{xx} + \eta \sigma^2 u_{xxx}.
    \end{equation}
    \end{compact}
    For $t = 0$ we have $u_{xt}(0, x) = -F_{xx}$ since $u_{xx}(0,x) \equiv u_{xxx}(0,x) \equiv 0$. 
    Taking another $\partial_x$ yields $u_{xxt}(0, x) = -F_{xxx}$. Plugging back to \cref{eq:proof:lem:pde:utt} yields the second equation of the lemma \ref{lem:pde:utt}.
\end{proof}
With Lemma~\ref{lem:pde:utt} we can expand $u(t,x)$ around $(0,x)$:
\begin{compact}
\begin{equation}
    u(t,x) = x - F_x t + \frac{1}{2} \left( F_x F_{xx} - \eta \sigma^2 F_{xxx} \right) t^2 + o(t^2).
\end{equation}
\end{compact}
Ignoring higher order terms in $t$, the term $-\frac{1}{2}\eta \sigma^2 F_{xxx}$ reflects the difference between the noiseless GD trajectory from $x$ and $\mathbb{E}[X(t)| X(0) = x]$, that is, the iterate bias. 
Converting back to the discrete trajectory (\cref{eq:sde:sgd}) via the scaling $t = \eta k$, we obtain
\begin{compact}
\begin{equation}
    \mathbb{E}[x_{\sgd}^{(k)}] - z_{\gd}^{(k)} \approx -\frac{1}{2}\eta^3k^2\sigma^2F_{xxx}(x).
\end{equation}
\end{compact}
When the third derivative of $F$ is bounded by $Q$, this recovers the upper bound of $O(\eta^3k^2 Q \sigma^2)$ in Theorem~\ref{thm:3o:bias:ub}. The lower bound of Theorem~\ref{thm:3o:bias:lb} follows by choosing a function with third derivative $Q$ at $x^{\star}$. 

While it is possible to derive these results via more-involved discrete approaches, we believe the SDE approach may be promising for understanding more general objectives and algorithms. For instance, for multi-dimensional objectives, one can apply the same techniques to derive the \emph{direction} of the iterate bias via a multi-dimensional SDE, which is difficult to derive in the discrete setting.

\ifdefined\conf \section{LOWER BOUND RESULTS}\else \section{Lower Bound Results}\fi
\label{sec:lb}
In this section, we present our lower bounds for \fedavg in both convex homogeneous and heterogeneous settings, and discuss its implications. We then show how use the lower bound on the bias of SGD from Section~\ref{sec:bias} to establish a lower bound on the convergence of \fedavg.

Our main result for the homogeneous setting is the following theorem.
\begin{theorem}[Lower bound for homogeneous \fedavg (see \cref{full_lb})]\label{lb:homo}
For any $K \geq 2$, $R$, $M$, $\sigma$, and $D$, there exists $f(\x; \xi)$ and distribution $\xi \sim \mathcal{D}$ satisfying Assumption~\ref{asm:main} with optimum $\x^{\star}$, such that 
for some initialization $\x^{(0, 0)}$ with $\|\x^{(0, 0)} - \x^{\star}\|_2 < D$, the final iterate of \fedavg with any step size satisfies:
\ifconf
\begin{compact}
\begin{align}
  & \mathbb{E}\left[F(\x^{(R, 0)})\right] - F(\x^{\star}) \geq 
  \\
  & \Omega\left(
  \frac{HD^2}{KR} + \frac{\sigma D}{\sqrt{MKR}} 
  + 
  \min\left\{\frac{\sigma D}{\sqrt{KR}}, \frac{H^\frac{1}{3} \sigma^{\frac{2}{3}} D^{\frac{4}{3}}}{K^{\frac{1}{3}} R^{\frac{2}{3}}} \right\}\right).
\end{align}
\end{compact}
\else
\begin{equation}
  \mathbb{E}\left[F(\x^{(R, 0)})\right] - F(\x^{\star}) \geq 
  \Omega\left(
  \frac{HD^2}{KR} + \frac{\sigma D}{\sqrt{MKR}} 
  + 
  \min\left\{\frac{\sigma D}{\sqrt{KR}}, \frac{H^\frac{1}{3} \sigma^{\frac{2}{3}} D^{\frac{4}{3}}}{K^{\frac{1}{3}} R^{\frac{2}{3}}} \right\}\right).
\end{equation}
\fi
\end{theorem}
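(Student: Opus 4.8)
The plan is to build one hard instance as a direct sum, $f(\x;\xi)=f_1(x_1;\xi_1)+f_2(x_2;\xi_2)+f_3(x_3;\xi_3)$ over $\x=(x_1,x_2,x_3)$, splitting the distance budget $D^2=D_1^2+D_2^2+D_3^2$ with each $D_i=\Theta(D)$. Since \fedavg on a direct sum runs independently per block under the shared step size $\eta$, and suboptimalities add, it suffices to match each term of the bound on a dedicated block and to check that, for \emph{every} $\eta$, one block dominates the right-hand side. The term $\frac{\sigma D}{\sqrt{MKR}}$ is the classical statistical floor: $f_2$ is the standard near-flat instance ($f_2(x;\xi)=F_2(x)+\langle\xi,x\rangle$, $\Theta(\sigma^2)$-covariance noise, $F_2$ a tiny quadratic on a radius-$\Theta(D)$ ball), on which any algorithm issuing $MKR$ stochastic-gradient calls incurs $\Omega(\sigma D/\sqrt{MKR})$. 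The term $\frac{HD^2}{KR}$ is a (non-accelerated) gradient-descent lower bound: $f_1$ is a deterministic Huber-type $H$-smooth function --- quadratic near its minimizer, affine with a small slope far out --- so that \fedavg started at distance $\Theta(D)$ cannot traverse the affine region within $KR$ steps unless $\eta$ is large, incurring $\Omega(HD^2/(KR))$ in that case; when $\eta$ is large the third term dominates the bound and no loss occurs.

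The crux is the third block, which is where the iterate bias of Section~\ref{sec:bias} enters. I would use a one-dimensional ``Huberized piecewise quadratic'': $F_3$ equal to $\tfrac12 H_3x^2$ on $x\ge 0$, $\tfrac14 H_3x^2$ on $-\gamma\le x<0$, and its affine continuation on $x<-\gamma$, together with $f_3(x;\xi)=F_3(x)+\xi x$, $\xi\sim\mathcal N(0,\sigma_3^2)$, initialized at $-D_3$; here $H_3\le H$, $\sigma_3\le\sigma$, $D_3=\Theta(D)$, and $\gamma$ are chosen at the end. The curvature jump at $0$ is exactly the construction behind Theorem~\ref{thm:2o:bias:lb}, so in the infinite-client limit the expected $K$-step map $\phi$ coincides with the $K$-step gradient-descent map up to a shift of $-\Theta(\eta^2K^{3/2}H_3\sigma_3)$ toward the flat side whenever the distribution of the $K$-step SGD iterate straddles $0$, while inside the affine region $\phi$ is exactly affine (no bias) and advances the iterate toward $0$ by $\eta K\cdot\tfrac12 H_3\gamma$ per round. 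From this I would extract, for $\eta\le\frac{1}{2H_3K}$, two competing lower bounds on the round-$R$ suboptimality: (i) if $\eta$ is small enough that the iterate never leaves the affine region in $R$ rounds, the error is $\Omega(H_3\gamma D_3)$; (ii) otherwise the iterate enters $[-\gamma,0]$, where the bias and the gradient pull balance at a fixed point at distance $\Theta(\eta\sqrt K\,\sigma_3)$ from $0$, approached monotonically from outside, forcing the error to be $\Omega$ of $F_3$ evaluated there. Minimizing the resulting bound over $\eta$ and then choosing parameters by regime --- when $\sigma$ is moderate relative to $HD\sqrt{K/R}$, taking $H_3=H$ and balancing $\gamma\asymp(D\sigma^2/(H^2KR^2))^{1/3}$ yields $\Omega(\tfrac{H^{1/3}\sigma^{2/3}D^{4/3}}{K^{1/3}R^{2/3}})$; when $\sigma$ is small, fixing $\gamma$ at its smallest admissible value $\Theta(D/R)$ and balancing over $H_3$ yields $\Omega(\tfrac{\sigma D}{\sqrt{KR}})$ --- recovers the $\min$ in the third term. (For the leftover parameter ranges one either falls back on the term-$2$ block or uses a short family of such instances at geometrically spaced scales $\gamma$.)

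I expect three obstacles, all in block~$3$. First, promoting the per-round bias estimate of Theorem~\ref{thm:2o:bias:lb} to a statement about the iterate at round $R$: this needs that $\phi$ has no overshoot on the quadratic piece, that the iterate approaches the fixed point monotonically, and that near the fixed point the $K$-step SGD law genuinely straddles $0$ --- which is exactly where the distance $\Theta(\eta\sqrt K\sigma_3)$ being one standard deviation of the SGD walk is used. Second, the finite-$M$ case: there $x^{(r+1,0)}=\phi(x^{(r,0)})+(\text{mean-zero averaging noise})$ with $\phi$ nonlinear, so I would go through Jensen, $\mathbb E[F_3(x^{(R,0)})]\ge F_3(\mathbb E[x^{(R,0)}])$, together with a monotonicity/concavity property of $\phi$ on $(-\infty,0]$ to keep $\mathbb E[x^{(R,0)}]$ pushed onto the flat side; conceptually the point is that the bias is a \emph{deterministic} drift common to all clients, so averaging over $M$ does not attenuate it --- this is why the bound persists as $M\to\infty$. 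Third, covering \emph{all} step sizes: for $\eta$ above $\frac{1}{2H_3K}$ an auxiliary noisy-quadratic block contributes $\Omega(\eta\sigma^2)$, and for $\eta>2/H$ the quadratic pieces make SGD diverge, and a case check shows some block always dominates. The regime bookkeeping that glues these pieces together is the bulk of the work, but it all rests on the sharp bias bound of Theorem~\ref{thm:2o:bias:lb}.
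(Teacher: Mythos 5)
Your plan follows essentially the same route as the paper's proof of Theorem~\ref{full_lb}: a three-coordinate direct sum in which the Gaussian-noise piecewise quadratic behind Theorem~\ref{thm:2o:bias:lb} supplies the third term via a per-round drift estimate (the paper's Lemmas~\ref{full_lem:expected_step} and~\ref{lem:steplb_1}), a weakly curved coordinate penalizes step sizes that are too small, a strongly convex coordinate forces $\eta \lesssim 1/H$, and the $\sigma D/\sqrt{MKR}$ term is the generic statistical floor for $MKR$ stochastic gradients. The only packaging differences are that the paper initializes the bias coordinate at the optimum and delegates all small-$\eta$ regimes (including the $HD^2/(KR)$ term) to a single tiny quadratic $\mu x^2$ rather than a Huberized tail, and it controls the random round-initialization via first-order stochastic dominance against SGD on two bracketing quadratics rather than via concavity of the expected $K$-step map.
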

This lower bound matches the best known upper bound given by the theorem 2 of \citep{Woodworth.Patel.ea-ICML20}.

We extend our results to \fedavg in the heterogeneous setting. Recall that in this setting, we allow each client $m$ to draw $\xi$ from its own distribution $\mathcal{D}_m$. We prove our results under the following assumption on heterogeneity of the gradient at the optimum.
\begin{assumption}[Bounded gradient heterogeneity at optimum]\label{asm:hetero}
    $
            \frac{1}{M} \sum_{m=1}^M \|\nabla F_m(\x^{\star})\|_2^2 \leq \zeta_*^2.
    $
\end{assumption}
\begin{remark}\label{rem:hetero}
While the right measure of heterogeneity is a subject of significant debate in the FL community, the most popular are either a bound on gradient heterogeneity at $\x^{\star}$ (Assumption~\ref{asm:hetero}), or a stronger assumption of uniform gradient heterogeneity: for any $\x$, $\frac{1}{M} \sum_{m=1}^M \|\nabla F_m(\x) - \nabla F(\x)\|_2^2 \leq \zeta^2.$ The best-known lower bound, due to \cite{Woodworth.Patel.ea-NeurIPS20}, considers the weaker Assumption~\ref{asm:hetero}. We remark however that the strongest upper bounds use the stronger uniform assumption (e.g., \citep{, Woodworth.Patel.ea-ICML20} \footnote{While \citet{Khaled.Mishchenko.ea-AISTATS20} studies a relaxed assumption (optimum-heterogeneity like Assumption~\ref{asm:hetero} instead of uniform-heterogeneity), these results only hold with a much smaller step-size range $\eta \lesssim \frac{1}{KH}$ (in our notation, c.f. Theorem 3, 4 and 5 in their work), instead of $\eta \lesssim \frac{1}{H}$ as in the uniform setting. Under this restricted step-size range, one cannot recover the same upper bounds as in uniform-heterogeneity by optimizing $\eta$.}).
\end{remark}

\begin{theorem}[Lower bound for heterogeneous \fedavg (see \cref{full_lb})]\label{thm:lb}
For any $K \geq 2, R, M$, $H$, $D$, $\sigma$, and $\zeta_*$, there exist $f(\x; \xi)$ and distributions $\{\mathcal{D}_m\}$, each satisfying Assumption~\ref{asm:main}, and together satisfying Assumption~\ref{asm:hetero}, such that for some initialization $\x^{(0, 0)}$ with $\|\x^{(0, 0)} - \x^{\star}\|_2 < D$, the final iterate of \fedavg with any step size satisfies:
\ifconf
\begin{compact}
\begin{align}
    & \mathbb{E}\left[F(\x^{(R, 0)})\right] - F(\x^{\star}) \geq
    \Omega\left(\frac{HD^2}{KR} + \frac{\sigma D}{\sqrt{MKR}} 
    \right.
    \\
    & \left. + \min\left\{\frac{\sigma D}{\sqrt{KR}},  \frac{H^\frac{1}{3} \sigma^{\frac{2}{3}} D^{\frac{4}{3}}}{K^{\frac{1}{3}} R^{\frac{2}{3}}} \right\}
    + \min\left\{\frac{\zeta_*^2}{H}, \frac{H^{\frac{1}{3}}\zeta_*^{\frac{2}{3}}D^{\frac{4}{3}}}{R^{\frac{2}{3}}}\right\}\right)
\end{align}
\end{compact}
\else
\begin{align}
    \mathbb{E}\left[F(\x^{(R, 0)})\right] - F(\x^{\star}) \geq
    \Omega\left(\frac{HD^2}{KR} + \frac{\sigma D}{\sqrt{MKR}} 
    + \min\left\{\frac{\sigma D}{\sqrt{KR}},  \frac{H^\frac{1}{3} \sigma^{\frac{2}{3}} D^{\frac{4}{3}}}{K^{\frac{1}{3}} R^{\frac{2}{3}}} \right\}
    + \min\left\{\frac{\zeta_*^2}{H}, \frac{H^{\frac{1}{3}}\zeta_*^{\frac{2}{3}}D^{\frac{4}{3}}}{R^{\frac{2}{3}}}\right\}\right)
\end{align}
\fi
\end{theorem}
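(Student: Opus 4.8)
The plan is a direct-sum (orthogonal coordinates) construction. I build the hard instance as $f(\x;\xi)=\sum_{j=1}^{4}f^{(j)}(\x^{(j)};\xi^{(j)})$ on four mutually orthogonal coordinate groups with independent noise components $\xi^{(j)}$, one group per term of the bound. Since each client update in \cref{alg:fedavg} acts on the gradient coordinatewise and server averaging is linear, the whole \fedavg trajectory decouples across the four groups, so $F(\x^{(R,0)})-F(\x^\star)=\sum_{j}\Delta^{(j)}$ where $\Delta^{(j)}$ is the group-$j$ suboptimality. It therefore suffices to construct, for each $j$, a block instance forcing the corresponding term using a constant fraction of the budgets $D^2$, $\sigma^2$, $\zeta_*^2$; convexity and $H$-smoothness need not be split, since the Hessian is block-diagonal and an $H$-smooth convex block on each group yields an $H$-smooth convex total objective. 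The four per-block lower bounds then add up to the claimed sum, and every client's gradient vanishes at the optima of the homogeneous groups, so only group $4$ contributes to \cref{asm:hetero}.

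Groups $1$--$3$ are homogeneous (take all $M$ clients identical there), and I reuse verbatim the instance and analysis behind \cref{lb:homo} / \cref{full_lb}, which already produce the terms $\frac{HD^2}{KR}$, $\frac{\sigma D}{\sqrt{MKR}}$, and $\min\{\frac{\sigma D}{\sqrt{KR}},\frac{H^{1/3}\sigma^{2/3}D^{4/3}}{K^{1/3}R^{2/3}}\}$. In particular group $3$ is the piecewise-quadratic objective of \cref{thm:2o:bias:lb} run with fresh Gaussian noise, whose sharp per-round iterate bias $\Omega(\eta^2K^{3/2}H\sigma)$ (\cref{thm:2o:bias:lb} at $k=K$) --- accumulated against \fedavg's progress over the $R$ rounds as in \cref{sec:bias_to_lb}, and replicated at a geometric sequence of scales so the bound survives an arbitrary step size $\eta$ --- yields that term.

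Group $4$ is the new ingredient: it re-runs the iterate-bias mechanism with the \emph{client index} supplying a never-resampled gradient perturbation. Let $F^{(4)}$ be the one-dimensional piecewise quadratic of \cref{thm:2o:bias:lb}, whose curvature jumps at its optimum $0$, set $f^{(4)}(x;\xi)=F^{(4)}(x)+\xi x$, and let $\mathcal{D}_m$ be the point mass at $c_m$ with the $c_m=\pm\zeta_*$ split evenly over the $M$ clients. Then each $f^{(4)}(\cdot;\xi)$ is convex and $H$-smooth with zero covariance, $F_m^{(4)}(x)=F^{(4)}(x)+c_m x$, and $\tfrac1M\sum_m c_m=0$ so $F^{(4)}=\tfrac1M\sum_m F_m^{(4)}$ keeps optimum $0$, while $\tfrac1M\sum_m\|\nabla F_m^{(4)}(0)\|^2=\zeta_*^2$ gives \cref{asm:hetero}. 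Within one round begun near $0$, client $m$ runs noiseless GD on $x\mapsto F^{(4)}(x)+c_m x$, i.e.\ undergoes a \emph{constant} drift $-\eta c_m$ at every local step, so the clients spread out by $\Theta(\eta\zeta_* k)$ at local step $k$ --- linear in $k$, versus the $\Theta(\eta\sigma\sqrt k)$ random-walk spread of group $3$; the curvature jump then turns the gap between $\nabla F^{(4)}$ at the server average and the clients' average gradient into a per-round server bias of order $\Theta(\eta^2K^2H\zeta_*)$, exactly the group-$3$ per-round bias under $\sigma\mapsto\zeta_*\sqrt K$. Feeding this into the accumulate-over-$R$-rounds argument of \cref{sec:bias_to_lb} (again replicated across scales, with a simple sub-instance covering large step sizes) yields the floor $\min\{\zeta_*^2/H,\ H^{1/3}\zeta_*^{2/3}D^{4/3}/R^{2/3}\}$ --- the group-$3$ floor under $\sigma\mapsto\zeta_*\sqrt K$, with the branch $\frac{\sigma D}{\sqrt{KR}}$ replaced by a deterministic-overshoot term $\zeta_*^2/H$ arising in the large-step-size regime.

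The main obstacle is group $4$: designing client objectives that are simultaneously individually well behaved, aggregate to a function with a controlled optimum and the prescribed heterogeneity, and make the \fedavg round map reliably biased by $\Theta(\eta^2K^2H\zeta_*)$ in a way that accumulates monotonically rather than cancelling across rounds --- and then carrying out the delicate bias-versus-contraction bookkeeping over $R$ rounds uniformly in the step size via the multi-scale device. Groups $1$--$3$ and the orthogonal-decoupling step are routine given \cref{lb:homo}; the secondary technical check is that the sharp constants behind \cref{thm:2o:bias:lb} survive the constant-drift modification, so that heterogeneity-as-noise reproduces a matching \emph{lower} bound on the bias, not merely an upper bound.
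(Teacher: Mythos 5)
Your architecture — orthogonal coordinate blocks, one per term, with a weakly convex quadratic block to force the step size into a window and an oracle argument for the $\sigma D/\sqrt{MKR}$ term — is exactly the paper's (\cref{sec:pf_lb}), and your groups 1--3 are explicitly the paper's homogeneous construction, so the only genuinely new content is your heterogeneous block. There you diverge from the paper in an interesting way: the paper's $f^{(3)}$ (Lemma~\ref{full_lemma:hetero}) gives the two client populations \emph{different globally quadratic} objectives ($Hx^2$ vs.\ $\tfrac{H}{2}x^2$) with opposite linear perturbations $\mp\zeta_* x$, so each client's round map is exactly affine and the averaged iterate obeys a closed-form recursion $x^{(r+1,0)}=a\,x^{(r,0)}+b\zeta_*$ with $b\le -\Omega(\min(\eta^2K^2H,1/H))\zeta_*$; your version gives all clients the \emph{same} kinked piecewise quadratic with opposite linear drifts, realizing the curvature asymmetry through the kink rather than through client-dependent Hessians. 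Your per-round bias calculation is correct — the two deterministic trajectories from a point near $0$ land at $\approx -\tfrac{2\zeta_*}{L}(1-(1-\eta L/2)^K)$ and $+\tfrac{\zeta_*}{L}(1-(1-\eta L)^K)$, whose average is $-\Theta(\eta^2K^2H\zeta_*)$ — and the saturation level $\Theta(\min(1,\eta HK)\zeta_*/H)$ matches the paper's. What your route costs is that the recursion is only piecewise affine: once $x^{(r,0)}\lesssim -\eta\zeta_*K$ the positive-drifting client no longer crosses the kink, both clients live in the low-curvature branch, the $\pm\zeta_*$ drifts cancel exactly, and the bias vanishes; so the ``accumulates monotonically'' step requires tracking which rounds straddle the kink, whereas the paper's linear recursion telescopes unconditionally. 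What it buys is conceptual unification: heterogeneity literally plays the role of never-resampled noise in the same iterate-bias mechanism as \cref{thm:2o:bias:lb}, and since the dynamics are deterministic the ``sharp lower bound on the bias'' you worry about is actually easier than in the stochastic case (no Gaussian tail estimates). One minor redundancy: your ``geometric sequence of scales'' to survive arbitrary $\eta$ is unnecessary once you include the paper's $f^{(2)}(x)=\mu x^2$ block, which already pins $\eta\gtrsim 1/(\mu KR)$ on pain of losing the $\min(\cdot)$ terms outright; the multi-scale replication would also need the $D$ budget split across scales, which the single-scale argument avoids.
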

Theorem~\ref{thm:lb} is nearly tight, up to a difference in the definitions of heterogeneity (See Remark~\ref{rem:hetero}). We compare our result to existing lower bounds and upper bounds in Table~\ref{table:complexity}. 

\subsection{Interpretation of \cref{thm:lb}} 
To better understand the convergence rates in the Theorems above, first observe that the first two terms in both rates, $\frac{HD^2}{KR} + \frac{\sigma D}{\sqrt{MKR}}$ is familiar from the standard SGD convergence rate. The term $\frac{HD^2}{KR}$ corresponds to the deterministic convergence, which appears even when there is no noise. The term $\frac{\sigma D}{\sqrt{MKR}}$ is a standard statistical noise term that applies to any algorithm which accesses $MKR$ total stochastic gradients.

The third term in both theorems, $\frac{H^{\frac{1}{3}} \sigma^{\frac{2}{3}} D^{\frac{4}{3}}} {K^{\frac{1}{3}} R^{\frac{2}{3}}}$ depends on the variance of the noise, and arises due to the iterate bias of SGD. This term appears even in the homogeneous setting where all clients access the same distribution. Our main contribution is proving the appearance of this term in the lower bound. The previous best lower bound, due to \cite{Woodworth.Patel.ea-ICML20}, achieved in comparison the term $\frac{H^{\frac{1}{3}} \sigma^{\frac{2}{3}} D^{\frac{4}{3}}} {\mathbf{K^{\frac{2}{3}}} R^{\frac{2}{3}}}$, which is a factor of $K^{\frac{1}{3}}$ weaker. We expand on how we achieve this term in subsection~\ref{sec:bias_to_lb}.

The last term of \cref{thm:lb}, $\frac{H^{\frac{1}{3}} \zeta_*^{\frac{2}{3}} D^{\frac{4}{3}}} {R^{\frac{2}{3}}}$ is due to another bias that scales with the heterogeneity of the data among the clients. In comparison, the best known lower bound on the dependence on the heterogeneity is $\min\left(\frac{HD^2}{R}, \frac{H^{\frac{1}{3}} \zeta_*^{\frac{2}{3}} D^{\frac{4}{3}}} {R^{\frac{2}{3}}}\right)$. Note that as $R$ becomes large, the minimum is achieved by $\frac{HD^2}{R}$, yielding a significantly weaker lower bound which doesn't depend at all on the heterogeneity.

Our lower bound shows that under only and assumption of second order smoothness and convexity (Assumptions~\ref{asm:main}), \fedavg may achieve a rate as slow as ${K^{-\frac{1}{3}}R^{-\frac{2}{3}}}$. Prior work has pointed out that this rate can be beat by alternative algorithms that use the same (or less) communication and gradient computation. One such algorithm is \em minibatch SGD \em, which replaces the $K$ iterations of local SGD at each client with a single iteration. This results in the same outcome as $R$ iterations of SGD with minibatch size $M$. A second such algorithm, \em single-machine SGD \em ignores all but one client, and results the same outcome as $KR$ iterations of SGD. Under Assumption~\ref{asm:main}, the best of these two algorithms (minibatch SGD and single-machine SGD) achieves a rate of
\begin{compact}
    \begin{equation}
        \frac{HD^2}{KR} + \frac{\sigma D}{\sqrt{MKR}} + \min\left(\frac{HD^2}{R},  \frac{\sigma D}{\sqrt{KR}}\right).
    \end{equation}
\end{compact}
It turns out that this rate always dominates the the sharp rate we have shown for \fedavg.
Further, when $\sigma$ and $K$ are large, this rate is dominated by $\frac{H D^2}{R}$, while the rate of \fedavg is dominated by $\frac{H^{\frac{1}{3}}\sigma^{\frac{2}{3}}D^{\frac{4}{3}}}{K^{\frac{1}{3}}R^{\frac{2}{3}}}$. In this regime, the rate of this ``naive'' algorithm may improve on the rate of \fedavg by a factor of $\left(\frac{R\sigma^2H^2}{K}\right)^{1/3}$. 

\subsection{Constructing Lower Bound from Iterate Bias}\label{sec:bias_to_lb}
In this subsection, we theoretically establish the relationship between the iterate bias (Definition~\ref{def:bias}) and the lower bound on the function error of \fedavg.

Recall that in \cref{thm:2o:bias:lb}, we proved a lower bound on bias from the optimum $x^{\star}$, which came from analyzing SGD with Gaussian noise on the the piecewise quadratic function, which we abbreviate ``$\psi(x)$'':
\begin{compact}
\begin{equation}\label{def:pqf}
    f(x; \xi) = \psi(x) + x \xi, \quad 
    \psi(x) = 
    \begin{cases}
        \frac{1}{2} H x^2 & x \geq 0, \\
        \frac{1}{4} H x^2 & x < 0.
    \end{cases}
\end{equation}
\end{compact}
where $\xi \sim \mathcal{N}(0, 1)$. 

To construct our lower bound, we show when we run \fedavg on the function above, this same bias, $\eta^2k^{\frac{3}{2}}H\sigma$, persists more generally from any $x$ which is not too far from the optimum $x^{\star} = 0$. Loosely speaking, we can achieve this same bias whenever a constant fraction of the mass of the iterate $x_{\sgd}^{(k)}$ lies on each side of $x^{\star}$. Since the variance of $x_{\sgd}^{(k)}$ is on the order of $\eta^2 k \sigma^2$, we can prove that the bias from $x$ will continue at the rate given in \cref{thm:2o:bias:lb} from any $x$ with $|x| \leq \Theta(\eta \sqrt{k} \sigma)$. In fact, we can extend this observation to the case when the initial iterate $x_{\sgd}^{(0)}$ is a random variable, and its expectation is bounded, yielding the following lemma:

\begin{lemma}[Simplified from \cref{full_lem:expected_step}]\label{lemma:expected_step}
Let $f(x, \xi)$ be as in \ref{def:pqf}. If $\eta \leq \frac{1}{2kH}$,\footnote{For simplicity, in this section we focus on the regime where $\eta \leq \frac{1}{2kH}$, though our proofs in the Appendix we consider any $\eta \leq {O}(\frac{1}{H})$.} then there exist constants $c_1$ and $c_2 > 0$ such that for any random variable $x$ with $\mathbb{E}[x]^2 \leq c_1k\eta^2\sigma^2$ and $\mathbb{E}[x] \leq 0$, we have
\begin{compact}
\begin{equation}
\mathbb{E}_{x}\mathbb{E}_{\sgd}[x_{\sgd}^{(k)} | x_{\sgd}^{(0)}  = x] \leq \mathbb{E}_{x}[z_{\gd}^{(k)}| z_{\gd}^{(0)}  = x] - c_2\eta^2k^{\frac{3}{2}}H\sigma.
\end{equation}
\end{compact}
\end{lemma}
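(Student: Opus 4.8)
# Proof Proposal for Lemma~\ref{lemma:expected_step}

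The plan is to reduce the statement about a random initialization $x$ to the deterministic case already captured by \cref{thm:2o:bias:lb}, using the structure of the piecewise-quadratic function $\psi$. The key observation is that for the one-dimensional function in \eqref{def:pqf}, conditioned on $x_{\sgd}^{(0)} = x$, the quantity $\mathbb{E}_{\sgd}[x_{\sgd}^{(k)}]$ depends on $x$ in a controlled way, and the ``bias drift'' $-c_2 \eta^2 k^{3/2} H\sigma$ is picked up whenever a constant fraction of the probability mass of $x_{\sgd}^{(k)}$ sits on the $x<0$ side (where curvature is smaller). I would first establish, for a \emph{deterministic} starting point $x$ with $x \le 0$ and $|x| \le c_1' \eta\sqrt{k}\sigma$, the pointwise bound
\begin{equation}
\mathbb{E}_{\sgd}[x_{\sgd}^{(k)} \mid x_{\sgd}^{(0)}=x] \le z_{\gd}^{(k)}(x) - c_2' \eta^2 k^{3/2} H\sigma,
\end{equation}
which is essentially \cref{thm:2o:bias:lb} but with the initialization relaxed from $x^\star=0$ to a nearby point; this should follow from the same computation, since the variance of $x_{\sgd}^{(j)}$ is $\Theta(\eta^2 j \sigma^2)$ for $j \le k$ (when $\eta \le \frac{1}{2kH}$ the contraction is negligible over $k$ steps), so starting within $O(\eta\sqrt{k}\sigma)$ of $0$ does not change the fact that a constant fraction of the mass crosses to the low-curvature side at each relevant step.

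Second, I would handle the case where $x$ is a random variable. Here I would split the expectation over $x$ according to whether $|x| \le c_1'\eta\sqrt{k}\sigma$ (the ``good'' event $E$) or not. On $E \cap \{x \le 0\}$, the pointwise bound above applies. The hypotheses $\mathbb{E}[x] \le 0$ and $\mathbb{E}[x]^2 \le c_1 k \eta^2 \sigma^2$ are used via a second-moment / Chebyshev-type argument: since $\mathbb{E}[x] \le 0$ and $\mathbb{E}[x]^2$ is small, a constant fraction of the mass of $x$ lies in the good region with $x \le 0$ (after possibly also using that the GD map $z \mapsto z_{\gd}^{(k)}(z)$ is monotone and $1$-Lipschitz, so averaging over $x$ commutes nicely with the deterministic trajectory up to controlled error). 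On the complementary event, I would use crude bounds: both $\mathbb{E}_{\sgd}[x_{\sgd}^{(k)}\mid x]$ and $z_{\gd}^{(k)}(x)$ are Lipschitz in $x$ and the total bias can only be off by a term dominated by $\eta^2 k^{3/2} H \sigma$ provided the constant $c_1$ is chosen small enough relative to $c_1'$.

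Third, I would combine: writing $\mathbb{E}_x \mathbb{E}_{\sgd}[x_{\sgd}^{(k)}\mid x] - \mathbb{E}_x[z_{\gd}^{(k)}(x)] = \mathbb{E}_x\big(\mathbb{E}_{\sgd}[x_{\sgd}^{(k)}\mid x] - z_{\gd}^{(k)}(x)\big)$, the good event contributes $\le -c_2' \eta^2 k^{3/2}H\sigma \cdot \Pr[E \cap \{x\le 0\}] \le -c_2''\eta^2 k^{3/2}H\sigma$, while the bad event contributes at most a smaller-order perturbation. Choosing $c_2$ slightly smaller than $c_2''$ absorbs the error and gives the claim.

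The main obstacle I anticipate is the second step: controlling $\Pr[E \cap \{x \le 0\}]$ from below using only $\mathbb{E}[x] \le 0$ and a bound on $\mathbb{E}[x]^2$ (note the hypothesis bounds $\mathbb{E}[x]^2$, i.e.\ the square of the mean, not the second moment $\mathbb{E}[x^2]$ — so without further structure $x$ could have huge variance and almost no mass near $0$). Resolving this likely requires exploiting more than the stated simplified hypotheses — e.g.\ in the full \cref{full_lem:expected_step} one presumably also controls $\mathbb{E}[x^2]$, or one uses that in the actual \fedavg recursion the relevant $x$ (a server average of $M\to\infty$ client iterates, hence concentrated) is itself close to deterministic. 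I would therefore route the argument through whatever additional moment control on $x$ is available in the appendix version, and treat the pointwise deterministic bound (step one) plus the monotone/Lipschitz coupling of the GD map as the reusable core.
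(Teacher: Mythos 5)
Your step one (a pointwise drift bound for a deterministic initialization within $O(\eta\sqrt{k}\sigma)$ of the optimum) is indeed how the paper starts: it is Lemma~\ref{lemma:masterlb}, proved by sandwiching the piecewise-quadratic SGD chain between SGD on the two quadratics $\frac{L}{4}x^2$ and $\frac{L}{2}x^2$ via stochastic dominance (Lemma~\ref{mc_dominance}) and comparing the resulting Gaussian laws. The gap is exactly where you flagged it, but your proposed resolutions do not work. First, there is no additional moment control to "route through": the appendix version, Lemma~\ref{full_lem:expected_step}, assumes only the two-sided bound $-\sqrt{c_1}\,\sigma_y/\alpha_y^k \le \mathbb{E}[x^{(0)}] \le 0$ on the \emph{mean}, nothing about $\mathbb{E}[x^2]$; and the lemma is applied inductively to the law of the \fedavg round iterate, for which one does not have a variance bound either. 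Second, "crude Lipschitz bounds" on the bad event cannot close the argument: if all the mass of $x$ sits outside the good region, the good event contributes zero drift, and Lipschitzness of $x \mapsto \mathbb{E}[x^{(k)}_{\sgd}\,|\,x]$ gives no sign information, so nothing forces the conclusion.

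The missing idea is that the drift in the far-mass case comes from a \emph{different mechanism} than the iterate bias: the asymmetry of the contraction rates. Stochastic dominance gives, for \emph{every} deterministic $x^{(0)}$, both $\mathbb{E}[x^{(k)}\,|\,x^{(0)}] \le (1-\eta L)^k x^{(0)}$ and $\mathbb{E}[x^{(k)}\,|\,x^{(0)}] \le (1-\eta L/2)^k x^{(0)}$ (Lemma~\ref{lemma:lblarge}). Relative to the benchmark $(1-\eta L/2)^k\,\mathbb{E}[x^{(0)}]$, mass at $x^{(0)} < -B$ therefore contributes nothing positive, while mass at $x^{(0)} > B$ contributes an extra $-\big((1-\eta L/2)^k-(1-\eta L)^k\big)x^{(0)} \lesssim -\min(\eta L k,1)\cdot B$, which is again of order $\eta^2 k^{3/2} L\sigma$. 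The paper's proof of Lemma~\ref{full_lem:expected_step} splits on whether $\Pr[|x^{(0)}|\le B]\ge \tfrac12$: in the first case at least half the mass collects the Lemma~\ref{lemma:masterlb} bias, and in the second case the constraint $\mathbb{E}[x^{(0)}]\ge -B$ is used to extract enough mass at $x^{(0)}>B$ so that the differential-contraction term supplies the drift. So the lemma is provable from the stated hypotheses, but only by exploiting that the bad region is itself drift-producing (because of the curvature gap), not by showing the bad region is negligible. Your proposal, as written, does not contain this ingredient and therefore does not yield the stated bound.
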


Directly applying Lemma~\ref{lemma:expected_step}, we can show that the expectation of the \fedavg iterate $x^{(r, 0)}$ moves in the negative direction each round:
\begin{compact}
\begin{equation}\label{fed_avg_bias}
\mathbb{E}[x^{(r + 1, 0)} | x^{(r, 0)}] \leq (1 - \eta H/2)^K \mathbb{E}[x^{(r, 0)}]- c_2\eta^2K^{3/2}H\sigma,
\end{equation}
\end{compact}
so long as $\eta \leq \frac{1}{2KH}$ and $-\sqrt{c_1K}\eta\sigma \leq \mathbb{E}[x^{(r, 0)}] \leq 0$.

Of course, when $\mathbb{E}[x^{(r, 0)}]$ becomes too negative, the force of the gradient in the positive direction exceeds the negative bias. Once this occurs, we are in the \em mixing \em regime. One can check from \cref{fed_avg_bias} that this occurs roughly when $\mathbb{E}[x^{(r, 0)}] \approx -\eta K^{1/2}\sigma$. 
Combining these observations, we obtain the following lemma, stated to include the more general case when $\eta > \frac{1}{2kH}$.

\begin{lemma}%
\label{shortlem:steplb} Let $f(x, \xi)$ be as in \ref{def:pqf}. There exists a universal constant $c$ such that for $\eta \leq \frac{1}{6H}$, if $x^{(0, 0)} = 0$, then 
$\mathbb{E}[F(x^{(R, 0)})] \geq \frac{c^2}{4}\eta\sigma^2\min\left(R(\eta H K)^{3}, 1, \eta H K\right).$
\end{lemma}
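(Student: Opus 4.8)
The plan is to track the scalar quantity $a_r := \mathbb{E}[x^{(r,0)}]$ and show it is driven to a negative value of order $-\min(\ldots)$, then convert this into a function-value lower bound using the quadratic lower envelope $F(x) = \psi(x) \geq \tfrac14 H x^2$ together with Jensen applied in the right regime. First I would establish the one-round recursion \eqref{fed_avg_bias} from Lemma~\ref{lemma:expected_step}: as long as $-\sqrt{c_1 K}\,\eta\sigma \le a_r \le 0$ and $\eta \le \tfrac{1}{2KH}$, we have $a_{r+1} \le (1-\eta H/2)^K a_r - c_2 \eta^2 K^{3/2} H\sigma$. Iterating this while $a_r$ stays in the valid window, and using $(1-\eta H/2)^K \le 1$, the drift term accumulates linearly: after $r$ rounds $a_r \le -r\,c_2\eta^2 K^{3/2}H\sigma$ (taking $x^{(0,0)}=0$ so $a_0=0$). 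This linear growth continues until either (i) we exhaust all $R$ rounds, giving $|a_R| \gtrsim R\eta^2 K^{3/2}H\sigma$, or (ii) $a_r$ exits the window at $|a_r| \approx \sqrt{K}\eta\sigma$, after which the negative bias is balanced by the contractive gradient force and $|a_r|$ stays $\Omega(\sqrt{K}\eta\sigma)$ (a fixed-point / absorbing-barrier argument: once $a_r \le -c\sqrt{K}\eta\sigma$ for small enough $c$, the recursion keeps $a_{r+1}$ at least that negative, since the contraction loses at most an $O(\eta H K \cdot \sqrt K \eta\sigma)$ amount which is dominated by the bias when $\eta H K$ is small, and one handles the $\eta H K = \Omega(1)$ case by noting $|a|$ can't shrink below a constant multiple of the bias-over-contraction-rate balance point $\eta K^{3/2}\sigma/(\eta H K)\cdot \eta H \sim$ the relevant scale). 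The interplay of these two stopping conditions is exactly what produces the $\min$ of the three terms $R(\eta HK)^3$, $1$, and $\eta HK$ after squaring and multiplying by $\eta\sigma^2$.

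Concretely, I would split into cases according to which term in the $\min$ is active. In the small-$\eta$, few-rounds regime we never leave the window, so $|a_R| \gtrsim R\eta^2 K^{3/2}H\sigma$; using $F(x)\ge \tfrac14 H x^2$ and Jensen's inequality $\mathbb{E}[F(x^{(R,0)})] \ge \tfrac14 H (\mathbb{E}[x^{(R,0)}])^2 = \tfrac14 H a_R^2 \gtrsim H \cdot R^2\eta^4 K^3 H^2\sigma^2 = \eta\sigma^2 \cdot R^2(\eta HK)^3 \cdot \eta H$. Hmm — I need to be a bit careful to land exactly on $\eta\sigma^2 \cdot R(\eta H K)^3$; the bookkeeping should use that the window condition $|a_R| \le \sqrt{c_1 K}\eta\sigma$ is what caps us, and when that cap binds at round $r^* \approx \sqrt K/( c_2 \eta HK^{3/2}) = 1/(c_2 \eta H K)$ rounds, the subsequent rounds keep $|a|$ at scale $\sqrt K \eta\sigma$, giving $\mathbb{E}[F]\gtrsim H\cdot K\eta^2\sigma^2 = \eta\sigma^2\cdot \eta HK$; and if $r^* > R$ we instead get $\mathbb{E}[F]\gtrsim H(R c_2\eta^2 K^{3/2}H\sigma)^2 = \eta\sigma^2 R^2(\eta HK)^3$, which is at most $\eta\sigma^2 R(\eta HK)^3$ only when... — so actually the $R$ (not $R^2$) in the statement suggests the intended bound is the weaker/cleaner $\min(R(\eta HK)^3, 1, \eta HK)$ and one simply uses $R(\eta HK)^3 \le R^2 (\eta HK)^3$ trivially when $R \ge 1$, or more likely the constant $c$ absorbs the discrepancy and one proves $\ge \tfrac{c^2}{4}\eta\sigma^2 R(\eta HK)^3$ directly via $|a_R|\gtrsim \sqrt{R}\cdot \eta^2 K^{3/2}H\sigma$ in a variance-style argument rather than $R\cdot(\ldots)$ — I would reconcile this by carrying the recursion with the correct constants and, if needed, bounding $\mathbb{E}[F(x^{(R,0)})]$ below not by $H\,\mathbb{E}[x]^2$ but by $\mathbb{E}[\psi(x)] \ge \tfrac14 H\,\mathbb{E}[x^2] \ge \tfrac14 H\,\mathrm{Var}(x)$ and tracking the variance growth, which accumulates as a sum and naturally gives the linear-in-$R$ factor.

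For the remaining regime $\eta \in (\tfrac{1}{2KH}, \tfrac{1}{6H}]$ — not covered by \eqref{fed_avg_bias} — I would appeal to the general (non-simplified) version of Lemma~\ref{lemma:expected_step} referenced as \cref{full_lem:expected_step}, which the excerpt promises covers all $\eta \le O(1/H)$; the per-round bias bound there will scale like $\eta\sigma\cdot(\text{something bounded})$ rather than $\eta^2 K^{3/2}H\sigma$, and the gradient contraction over $K$ steps is now $\Theta(1)$, so a single round already pushes $a_r$ to the equilibrium scale $\Theta(\eta\sigma)$ (up to $\min$ with $\sqrt K\eta\sigma$), yielding $\mathbb{E}[F]\gtrsim \eta\sigma^2\cdot\min(1,\eta HK)$ and absorbing the third branch of the $\min$. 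Finally I would collect the three cases and note that the stated $\min\left(R(\eta HK)^3, 1, \eta HK\right)$ is, term by term, a lower bound on whichever case is active, so the universal constant $c$ can be taken as the minimum of the constants produced. The main obstacle, I expect, is the absorbing-barrier step — showing rigorously that once $a_r$ reaches the equilibrium band it cannot escape back toward $0$ — because that requires a two-sided control: Lemma~\ref{lemma:expected_step} gives the bias push only when $\mathbb{E}[x] \le 0$ and $|\mathbb{E}[x]|$ is not too large, so near the lower edge of the window one must separately verify the contraction doesn't overshoot, which likely needs a slightly sharper per-round estimate than the simplified lemma statement provides, i.e. the full appendix version.
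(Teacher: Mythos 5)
Your plan follows the paper's proof essentially step for step: a per-round expectation recursion from the full version of Lemma~\ref{lemma:expected_step}, an induction over rounds with an absorbing-barrier argument (which the paper carries out via stochastic-dominance comparison to SGD on the bounding quadratics, giving exactly the two-sided control you flag as the main obstacle), and finally $F(x)\geq\tfrac14 Hx^2$ plus Jensen to convert the drift into a function-value bound. Your $R$-versus-$R^2$ puzzle resolves just as you guessed: the paper actually obtains $\min\left(R^2(\eta HK)^3,1,\eta HK\right)$ and the stated bound with $R$ is simply the trivial weakening $R\le R^2$ (your intermediate computation has a spurious extra factor of $\eta H$; the algebra works out cleanly).
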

With this bound on the function error of the step function, proving our lower bound for the homogeneous case follows by considering the function on $\mathbb{R}^3$ used in the lower bound construction of \cite{Woodworth.Patel.ea-ICML20}. We state this function fully in the appendix.

Our lower bound for the heterogeneous setting is similar, but involves the following additional ingredient:

\begin{lemma}%
\label{lem:hetero_short}
Consider \fedavg with $M$ clients with 
\begin{compact}
$$f^{(3)}(x; (\xi_1, \xi_2)) = \begin{cases}Hx^2 - x\xi_2 & \xi_1 = 1\\ \frac{H}{2}x^2 - x\xi_2& \xi_1 = 2\\ \end{cases},$$
\end{compact}
and for all the odd $m \in [M]$, we have $(\xi_1, \xi_2) = (1, \zeta_*)$ always, while for all the even $m \in [M]$ we have $(\xi_1, \xi_2) = (2, -\zeta_*)$. There exists a universal constant $c_h$ such that for $\eta \leq \frac{1}{H}$, if $x^{(0, 0)} \leq 0$, then \fedavg with $R$ rounds and $K$ steps per round results in 
\begin{compact}
$$x^{(R, 0)} \leq -\frac{c_h}{H} \min(1, \eta H K, (\eta H K)^2R)\zeta_*.$$ 
\end{compact}
\end{lemma}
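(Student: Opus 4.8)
The plan is to track the one-dimensional quantity $\mathbb{E}[x^{(r,0)}]$ and show it drifts in the negative direction every round by at least $\Omega(\eta^2 H K^2 \zeta_*)$ until it saturates at the scale $-\Theta(\zeta_*/H)$. The key structural observation is that the objective is a \emph{quadratic} on each side of the origin: for $x\ge 0$ the averaged objective is $F(x) = \frac{3H}{4}x^2$ (average of $Hx^2$ and $\frac{H}{2}x^2$), and the per-client stochastic gradients are $2Hx - \zeta_*$ on odd clients and $Hx + \zeta_*$ on even clients; the $\pm\zeta_*$ terms cancel in the full average $\nabla F$ but create a genuine heterogeneity bias across local trajectories. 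Because each $f^{(3)}_m$ is a deterministic quadratic (the randomness $(\xi_1,\xi_2)$ is fixed per client), each client's $K$-step local trajectory is an affine function of $x^{(r,0)}$, so I can compute $x_m^{(r,K)}$ exactly via the geometric-series formula for linear recursions, and then average over $m$.

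First I would analyze one round starting from $x^{(r,0)} = x \le 0$. On the negative side the local contraction factors are $(1-\eta\cdot 2H)$ and $(1-\eta H)$ (from the two curvatures $2Hx^2/2$-type terms — I'd read off the exact curvature from the case split $Hx^2$ vs $\frac{H}{2}x^2$), and the local drift from the $\mp\zeta_*$ terms accumulates over $K$ steps. Averaging the odd-client and even-client closed forms, the $\zeta_*$-linear terms do \emph{not} cancel after averaging the full $K$-step maps (they would cancel after one step, but the different curvatures make the clients contract the injected $\zeta_*$ at different rates), leaving a net negative drift. A Taylor expansion in $\eta H$ of the resulting expression, valid since $\eta \le 1/H$, gives a drift of order $-\eta^2 H K^2 \zeta_*$ for small $\eta H K$, and of order $-K\eta\zeta_*$ (i.e. $-\zeta_*/H$ once $\eta H K \gtrsim 1$) in the saturated regime — matching the three cases $(\eta H K)^2$, $\eta H K$, $1$ in the statement. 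I would also need to check the contraction of the $x$-dependent part keeps $x^{(r+1,0)} \le 0$ and doesn't blow up, which holds since $\eta \le 1/H$ keeps all factors in $(0,1)$ on the relevant side; a minor subtlety is what happens if the trajectory crosses the origin within a round, but since we only claim an upper bound $x^{(R,0)} \le \cdots$ and the positive side has \emph{larger} curvature (hence stronger pull toward $0$ and also toward negative), I can argue monotonicity or simply bound conservatively.

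Then I would iterate the one-round recursion $x^{(r+1,0)} \le \rho^K x^{(r,0)} - c'\eta^2 H K^2\zeta_*$ (with $\rho^K \le 1$ the worst-case contraction) across $r = 0,\dots,R-1$ starting from $x^{(0,0)}\le 0$. Summing the geometric series in $\rho^K$ gives $x^{(R,0)} \le -c'\eta^2HK^2\zeta_* \cdot \frac{1-\rho^{KR}}{1-\rho^K}$; bounding this below (in magnitude) by $\min\big(R \cdot c'\eta^2HK^2\zeta_*,\ \frac{c'\eta^2HK^2\zeta_*}{1-\rho^K}\big)$ and using $1-\rho^K \asymp \min(1,\eta H K)$ yields exactly the claimed $-\frac{c_h}{H}\min(1,\eta HK,(\eta HK)^2 R)\zeta_*$ after collecting constants.

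The main obstacle I anticipate is the exact bookkeeping of the averaged $K$-step affine maps — showing the $\zeta_*$-linear drift terms from odd and even clients do not cancel and extracting the correct leading $(\eta H K)^2$ scaling (rather than a spurious $(\eta H K)^1$ or cancellation) requires carefully keeping the \emph{second}-order term in the $\eta H$-expansion of $\sum_{j=0}^{K-1}(1-2\eta H)^j - \sum_{j=0}^{K-1}(1-\eta H)^j$, which is where the $K^2$ comes from. Everything else is a routine linear-recursion and geometric-series computation.
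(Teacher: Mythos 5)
Your proposal is correct and follows essentially the same route as the paper's proof: exact affine closed forms for each client's $K$-step map, averaging to get a one-round recursion $x^{(r+1,0)} = a\,x^{(r,0)} + b\,\zeta_*$, extracting the non-cancelling second-order term in the $\eta H$-expansion of the difference of the two geometric sums to show $b \lesssim -\eta^2 H K^2$ (resp.\ $-1/H$ in the saturated regime), and then summing the geometric series over rounds with $1-a \asymp \min(1,\eta H K)$ to produce the three-way minimum. One clarification: the case split in $f^{(3)}$ is on the client label $\xi_1$, not on $\mathrm{sign}(x)$, so each client's objective is a single global quadratic and the ``crossing the origin'' subtlety you raise is vacuous — your affine-map computation is exact with no piecewise bookkeeping needed.
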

The functions studied in this lemma appear in the heterogeneous lower bound construction in \cite{Woodworth.Patel.ea-NeurIPS20}, but the analysis we give in this lemma is much tighter than theirs. 

\ifdefined\conf \section{FEDAVG UPPER BOUNDS  UNDER THIRD-ORDER SMOOTHNESS}\else\section{Upper Bounds for \fedavg Under Third-Order Smoothness}\fi
\label{sec:third}

In light of the limitations of \fedavg discussed in Section~\ref{sec:lb}, it is natural to ask if there are additional assumptions under which \fedavg may perform better. Several classes of additional assumptions have been suggested for studying the performance of \fedavg. Perhaps the most common, and the one supported from our intuition on the bias, is an assumption of third-order smoothness, stated formally in Assumption~\ref{asm:third_order}. Previously it has been shown that under such an assumption, \fedavg may converge faster. We present several state-of-the-art bounds for \fedavg under Assumption~\ref{asm:third_order}, including for the non-convex case.

\begin{theorem}[Upper bound for \fedavg under $3$rd order smoothness (see \cref{full_thm:ub:3o})]\label{thm:3o_convex}

Suppose $f(\x, \xi)$ satisfies Assumptions~\ref{asm:main} and Assumptions~\ref{asm:third_order}. Then for some step size, \fedavg satisfies
\begin{compact}
\begin{equation}
    \mathbb{E}\left[\left\|\nabla f(\hat{\x})\right\|^2\right]
    \leq
    \bigo 
    \left(
        \frac{HB}{KR}
        +
        \frac{\sigma \sqrt{BH}}{\sqrt{MKR}}
        +
        \frac{B^{\frac{4}{5}}\sigma^{\frac{4}{5}}Q^{\frac{2}{5}}}{K^{\frac{2}{5}}R^{\frac{4}{5}}}
    \right)
\end{equation}
\end{compact}
where $\hat{\x} := \frac{1}{M}\sum_m{\x^{(r, k)}_m}$  for a  random choice of $k \in [K]$, and $r \in [R]$, and $B := F(\x^{(0, 0)}) - \inf_{\x} F(\x)$.
\end{theorem}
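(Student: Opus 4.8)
The plan is to analyze the ``virtual'' averaged sequence $\bar{\x}^{(r,k)} := \frac{1}{M}\sum_m \x_m^{(r,k)}$, and track the potential $\Phi^{(r,k)} := \mathbb{E}[F(\bar{\x}^{(r,k)})]$, showing it decreases by enough in each step up to error terms controlled by the client drift and the iterate bias. First I would write the one-step descent inequality: by $H$-smoothness of $F$,
\begin{compact}
\begin{equation*}
\mathbb{E}\!\left[F(\bar{\x}^{(r,k+1)})\right]
\leq \mathbb{E}\!\left[F(\bar{\x}^{(r,k)})\right]
- \eta\,\mathbb{E}\!\left[\left\langle \nabla F(\bar{\x}^{(r,k)}),\, \tfrac{1}{M}\textstyle\sum_m \nabla F_m(\x_m^{(r,k)})\right\rangle\right]
+ \tfrac{H\eta^2}{2}\,\mathbb{E}\!\left[\left\|\tfrac{1}{M}\textstyle\sum_m \g_m^{(r,k)}\right\|^2\right].
\end{equation*}
\end{compact}
The stochastic noise term contributes $\tfrac{H\eta^2\sigma^2}{2M}$ (using bounded covariance and independence across the $M$ clients), which after summing over $KR$ steps and optimizing $\eta$ yields the $\sigma\sqrt{BH}/\sqrt{MKR}$ term. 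The cross term is where the work lies: I would add and subtract $\nabla F(\x_m^{(r,k)})$ and then decompose into (i) a ``good'' term $-\tfrac{\eta}{M}\sum_m\|\nabla F(\x_m^{(r,k)})\|^2$-type quantity that drives the convergence, (ii) a client-drift error governed by $\mathbb{E}\|\bar{\x}^{(r,k)} - \x_m^{(r,k)}\|^2$, handled by standard smoothness ($\|\nabla F(\bar\x)-\nabla F(\x_m)\|\le H\|\bar\x-\x_m\|$), and (iii) the crucial bias term coming from $\mathbb{E}[\tfrac1M\sum_m\nabla F_m(\x_m^{(r,k)})] \ne \nabla F(\mathbb{E}[\bar\x^{(r,k)}])$, which is exactly the gradient of the iterate bias from Section~\ref{sec:bias}.

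The key step is bounding the client drift and the bias simultaneously. For the drift, since all clients start each round at $\x^{(r,0)}$ and run $K$ steps, a standard recursion gives $\mathbb{E}\|\x_m^{(r,k)}-\x^{(r,0)}\|^2 = O(\eta^2 k^2 \mathbb{E}\|\nabla F(\x^{(r,0)})\|^2 + \eta^2 k\sigma^2)$ for $\eta \lesssim 1/(KH)$; but we want the range $\eta\lesssim 1/H$, so I would instead couple the client trajectory to the GD trajectory $\z_{\gd}^{(r,k)}$ from the same initialization and invoke Theorem~\ref{thm:3o:bias:ub}: the \emph{mean} displacement $\|\mathbb{E}[\x_m^{(r,k)}] - \z_{\gd}^{(r,k)}\| \le \bar c\,\eta^3 k^2 Q\sigma^2$, while the fluctuation around the mean is $O(\eta^2 k\sigma^2)$ in squared expectation. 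Averaging over $m$, the fluctuations are \emph{not} reduced (clients share no noise), but in the cross term what enters is $\langle \nabla F(\bar\x^{(r,k)}), \tfrac1M\sum_m(\nabla F_m(\x_m^{(r,k)}) - \nabla F(\bar\x^{(r,k)}))\rangle$, and after taking expectation the mean-zero fluctuation part interacts with curvature: here the third-order smoothness $Q$ is what converts the $O(\eta^2k\sigma^2)$ variance into a bias of size $O(\eta^3 k^2 Q\sigma^2)$ in the effective gradient, exactly mirroring the SDE computation $-\tfrac12\eta\sigma^2 F_{xxx}$ in Section~\ref{sec:sde}. This is the main obstacle: carefully controlling the second-order Taylor remainder of $\nabla F_m$ around $\bar\x^{(r,k)}$ so that the leading error is the $Q\sigma^2$ bias rather than the cruder $H\sigma$ bias, which requires expanding to one higher order and using $Q$-Lipschitzness of the Hessians together with the fourth-moment bound from Assumption~\ref{asm:third_order}(b).

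Assembling the pieces: summing the one-step inequality over $k=0,\dots,K-1$ and $r=0,\dots,R-1$ and telescoping gives, for a uniformly random index $(r,k)$,
\begin{compact}
\begin{equation*}
\mathbb{E}\!\left[\|\nabla f(\hat\x)\|^2\right]
\;\lesssim\; \frac{B}{\eta KR} + H\eta\sigma^2/M + \big(\text{drift + bias errors}\big),
\end{equation*}
\end{compact}
where the bias/drift errors, after using $\eta \lesssim 1/H$ and the bounds above, are of order $\eta^2 K Q^2\sigma^4/H + \eta H\cdot(\text{lower order})$ — more precisely the dominant new term is $O(\eta^4 K^3 Q^2 \sigma^4 \cdot H^{?})$; the exponents must be tracked so that balancing it against $B/(\eta KR)$ produces $B^{4/5}\sigma^{4/5}Q^{2/5}/(K^{2/5}R^{4/5})$. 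Concretely, one sets $\eta \asymp \min\{1/H,\ (B/(KR))^{?}/(\cdots)\}$ so the deterministic term $HB/(KR)$, the statistical term $\sigma\sqrt{BH/(MKR)}$, and the bias term each dominate in their respective regimes, and the stated three-term bound follows by the standard ``min over regimes'' step-size argument (as in \citealt{Khaled.Mishchenko.ea-AISTATS20}). The convexity in Assumption~\ref{asm:main} is used only to ensure $\z_{\gd}^{(r,k)}$ stays well-behaved (contraction toward lower function value) so the coupling argument for the drift is valid over the full round of $K$ steps; the final bound itself is a stationarity guarantee and does not otherwise need convexity.
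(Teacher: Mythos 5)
Your overall architecture matches the paper's: a descent lemma for the averaged iterate $\bar\x^{(r,k)}$ under $H$-smoothness, a noise term $H\eta^2\sigma^2/M$, an error term controlled via a second-order Taylor expansion of the gradients using $Q$ and fourth moments of the client displacement, followed by telescoping and balancing the step size. However, there are two genuine gaps. First, the mechanism by which the error becomes $O(Q\cdot\|\x_m-\bar\x\|^2)$ rather than $O(H\cdot\|\x_m-\bar\x\|)$ is not the iterate-bias/SDE/coupling machinery you invoke (the paper uses none of that for the upper bound); it is the elementary identity $\frac{1}{M}\sum_m(\x_m^{(r,k)}-\bar\x^{(r,k)})=0$, which makes the first-order (Hessian) term in the Taylor expansion of $\frac{1}{M}\sum_m\nabla F(\x_m^{(r,k)})$ around $\bar\x^{(r,k)}$ vanish \emph{identically}, leaving only the $Q$-controlled remainder $\frac{Q^2}{4}\bigl(\frac{1}{M}\sum_m\|\x_m^{(r,k)}-\bar\x^{(r,k)}\|^2\bigr)^2$ (\cref{lem:general_step}). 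Your decomposition via ``standard smoothness'' on the individual deviations would give back the $H\sigma$-scale error and hence only the $R^{-2/3}$ rate, and your alternative route via the mean/fluctuation split and \cref{thm:3o:bias:ub} does not obviously produce a usable bound on the \emph{squared norm} of the averaged gradient error, since the fluctuations do not average out across clients.

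Second, you leave the decisive quantitative step unfinished (the exponents carry question marks), and your tentative error term $\eta^4K^3Q^2\sigma^4$ is wrong: the paper needs the per-iterate error $O(Q^2\eta^4\sigma^4K^2)$, which balanced against $B/(\eta KR)$ yields exactly $B^{4/5}\sigma^{4/5}Q^{2/5}K^{-2/5}R^{-4/5}$. The $K^2$ (rather than $K^4$) comes from the fourth-moment bound $\mathbb{E}\|\x_m^{(r,k)}-\mathbb{E}\x_m^{(r,k)}\|^4\le 200k^2\eta^4\sigma^4$ (\cref{lem:highervariance}, Proposition D.6 of \citealt{Yuan.Ma-NeurIPS20}), and this is precisely where convexity is used. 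Your closing claim that convexity is needed only to keep the GD reference trajectory well behaved is therefore incorrect: without convexity the displacement fourth moment scales as $k^4$ (\cref{lem:ncvx_variance}) and the $K^{-2/5}$ factor in the bound is lost, which is exactly the difference between \cref{thm:3o_convex} and the non-convex \cref{thm:non_convex}.
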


In the non-convex setting, akin to some other work in FL literature~\citep{yu2019parallel,Reddi.Charles.ea-ICLR21}, we require an assumption bounding moments of the stochastic gradients. Note that this is stronger that Assumption~\ref{asm:main} which bounds the \em variance \em of the stochastic gradients. We remark that several other works impose weaker assumptions, though the algorithms they consider are different, or their results are weaker.~\citep{Stich-ICLR19, Koloskova.Loizou.ea-ICML20, Wang.Joshi-18}.
\begin{assumption}[Bounded gradients]\label{asm:bounded_grad}
For any $\x$, we have $\mathbb{E}_{\xi}\left[\|\nabla f(\x, \xi)\|^4\right] \leq G^4.$
\end{assumption}
\begin{theorem}[Upper bound for \fedavg with non-Convex objectives under third-order smoothness, see \cref{full_thm:3o_non_convex}]\label{thm:non_convex}
Suppose $F(\x)$ is $H$-smooth and $f(\x, \xi)$ satisfies Assumptions~\ref{asm:third_order} and \ref{asm:bounded_grad}. Then for some step size, we have
\begin{compact}
\begin{equation}
  \mathbb{E}\left[\left\|\nabla f(\hat{\x})\right\|^2\right]\leq
  \bigo \left(
  \frac{HB}{KR}
  + 
  \frac{G\sqrt{BH}}{\sqrt{MKR}}
  +
  \frac{B^{\frac{4}{5}}G^{\frac{4}{5}}Q^{\frac{2}{5}}}{R^{\frac{4}{5}}}
  \right),
\end{equation}
\end{compact}
where $\hat{\x} := \frac{1}{M}\sum_m{\x_m^{(r, k)}}$ for a  random choice of $k \in [K]$, and $r \in [R]$, and $B := F(\x^{(0, 0)}) - \inf_{\x} F(\x)$.
\end{theorem}
\begin{remark}
In \cref{full_thm:3o_non_convex}, we weaken Assumption~\ref{asm:bounded_grad} to a uniform bound on $\|\nabla F(\x)\|$.%
\end{remark}

This theorem shows that the convergence rate of \fedavg improves substantially under third order smoothness. In comparison, the best known rate for \fedavg with non-convex objectives (under second-order smoothness alone) is $\frac{HB}{KR} 
  + \frac{G \sqrt{BH}}{\sqrt{MKR}} 
  +  \frac{B^{\frac{2}{3}}G^{\frac{2}{3}}H^{\frac{2}{3}}}{R^{\frac{2}{3}}}$,\footnote{There are other extensions of \fedavg that can outperform this rate, e.g., \fedavg with server learning rate discussed in \cref{sec:related-work},
  since it includes mini-batch SGD as a special case.} due to \cite{yu2019parallel}.\footnote{This rate is not explicitly given in their paper, but can be proved from their work by setting the step size appropriately. For completeness, we prove this rate in the Appendix \ref{sec:proof:fedavg:2o:ub}.} Observe that we improve the dependence from $R^{\frac{2}{3}}$ in the third term to $R^{\frac{4}{5}}$.

\ifdefined\conf \section{ CONCLUSION}\else\section{Conclusion}\fi
In this work we provided sharp lower bounds for homogeneous and heterogeneous $\fedavg$ that matches the existing upper bound. 
By solving this open problem, we highlight the obstacles to $\fedavg$, and show how a third-order smoothness assumption can lead to faster convergence. We expect the proposed techniques can shed light on the analysis of other federated algorithms and aid design of more efficient federated algorithms.  %
\section*{Acknowledgements}
We would like to thank Aaron Sidford for helpful discussions. 
MG acknowledges the support of NSF award DGE-1656518. 
HY is partially supported by the TOTAL Innovation Scholars program. 
TM acknowledges the support of Google Faculty Award, NSF IIS 2045685, the Sloan Fellowship, and JD.com. 
We would like to thank the anonymous reviewers for their suggestions and comments.
\bibliography{refs}

\begin{appendices}
\listofappendices
\ifdefined\conf\section{\large FORMAL THEOREMS AND PROOFS ON THE BOUNDS OF ITERATE BIAS}\else \section{Formal Theorems and Proofs on the Bounds of Iterate Bias}\fi

In this section, we list and prove the complete theorems on the lower and upper bounds of iterate bias discussed in \cref{sec:bias}.

\subsection{Formal Theorems Statement}
\begin{theorem}[Upper bound of iterate bias under second-order smoothness, complete version of \cref{thm:2o:bias:ub}]
    \label{thm:2o:bias:ub:complete}
    Assume $F(\x) := \expt_{\xi} f(\x; \xi)$ satisfies \cref{asm:main}.
    Let $\{\x_\sgd^{(k)}\}_{k=0}^{\infty}$ be the trajectory of SGD initialized at $\x_{\sgd}^{(0)} = \x$,  and $\{\z_{\gd}^{(k)}\}_{k=0}^{\infty}$ be the trajectory of GD initialized at $\z_{\gd}^{(0)} = \x$, namely
    namely
    \begin{equation}
        \x^{(k+1)}_{\sgd} := \x_{\sgd}^{(k)} - \eta \nabla f(\x_{\sgd}^{(k)}; \xi^{(k)}), \quad 
        \z^{(k+1)}_{\gd} := \z_{\gd}^{(k)} - \eta \nabla F(\z_{\gd}^{(k)}), \quad \text{for } k = 0, 1,\ldots
    \end{equation}
    Then for any $\eta \leq \frac{1}{H}$, the following inequality holds
    \begin{equation}
        \left \| 
            \expt \x_{\sgd}^{(k)} - \z_{\gd}^{(k)}
        \right\|_2
        \leq
        \min \{4 \eta^2  k^{\frac{3}{2}} H \sigma, \eta k^{\frac{1}{2}} \sigma \}.
        \label{eq:thm:bias:2o:ub}
    \end{equation}
\end{theorem}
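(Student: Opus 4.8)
The plan is to track two quantities along the SGD trajectory: the bias $b^{(k)} := \mathbb{E}[\x_{\sgd}^{(k)}] - \z_{\gd}^{(k)}$ and the iterate variance $V^{(k)} := \mathbb{E}\big\|\x_{\sgd}^{(k)} - \mathbb{E}[\x_{\sgd}^{(k)}]\big\|_2^2$ (both $0$ at $k=0$). The backbone is that under \cref{asm:main} with $\eta \le \tfrac1H$ the noiseless step map $g(\x) := \x - \eta\nabla F(\x)$ is $1$-Lipschitz: by the Baillon--Haddad cocoercivity inequality $\langle \nabla F(\x)-\nabla F(\y),\,\x-\y\rangle \ge \tfrac1H\|\nabla F(\x)-\nabla F(\y)\|_2^2$ and $\eta\le\tfrac2H$ one has $\|g(\x)-g(\y)\|_2\le\|\x-\y\|_2$. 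Writing $\epsilon^{(k)} := \nabla f(\x_{\sgd}^{(k)};\xi^{(k)}) - \nabla F(\x_{\sgd}^{(k)})$, the SGD recursion reads $\x_{\sgd}^{(k+1)} = g(\x_{\sgd}^{(k)}) - \eta\epsilon^{(k)}$ with $\mathbb{E}[\epsilon^{(k)}\mid\x_{\sgd}^{(k)}]=0$ and $\mathbb{E}[\|\epsilon^{(k)}\|_2^2\mid\x_{\sgd}^{(k)}]\le\sigma^2$.

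First I would bound the variance. Taking expectations gives $\mathbb{E}[\x_{\sgd}^{(k+1)}]=\mathbb{E}[g(\x_{\sgd}^{(k)})]$; subtracting from the recursion and expanding the square, the cross term vanishes by the tower property, so $V^{(k+1)} = \mathbb{E}\|g(\x_{\sgd}^{(k)})-\mathbb{E}[g(\x_{\sgd}^{(k)})]\|_2^2 + \eta^2\mathbb{E}\|\epsilon^{(k)}\|_2^2$. Since the mean minimizes mean-squared deviation and $g$ is $1$-Lipschitz, $\mathbb{E}\|g(\x_{\sgd}^{(k)})-\mathbb{E}[g(\x_{\sgd}^{(k)})]\|_2^2 \le \mathbb{E}\|g(\x_{\sgd}^{(k)})-g(\mathbb{E}\x_{\sgd}^{(k)})\|_2^2 \le V^{(k)}$, hence $V^{(k+1)}\le V^{(k)}+\eta^2\sigma^2$ and $V^{(k)}\le \eta^2 k\sigma^2$.

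Next the bias. From $\mathbb{E}[\x_{\sgd}^{(k+1)}] = \mathbb{E}[\x_{\sgd}^{(k)}] - \eta\mathbb{E}[\nabla F(\x_{\sgd}^{(k)})]$ and the GD update I would split $b^{(k+1)} = \big(g(\mathbb{E}\x_{\sgd}^{(k)}) - g(\z_{\gd}^{(k)})\big) - \eta\big(\mathbb{E}[\nabla F(\x_{\sgd}^{(k)})] - \nabla F(\mathbb{E}\x_{\sgd}^{(k)})\big)$. The first bracket has norm $\le\|b^{(k)}\|_2$ by $1$-Lipschitzness of $g$; the second is the gap between the expected gradient and the gradient at the expectation, bounded via Jensen and $H$-smoothness by $\mathbb{E}\|\nabla F(\x_{\sgd}^{(k)}) - \nabla F(\mathbb{E}\x_{\sgd}^{(k)})\|_2 \le H\,\mathbb{E}\|\x_{\sgd}^{(k)}-\mathbb{E}\x_{\sgd}^{(k)}\|_2 \le H\sqrt{V^{(k)}} \le H\eta\sqrt{k}\,\sigma$. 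Thus $\|b^{(k+1)}\|_2 \le \|b^{(k)}\|_2 + H\eta^2\sqrt{k}\,\sigma$, and summing yields $\|b^{(k)}\|_2 \le H\eta^2\sigma\sum_{j=0}^{k-1}\sqrt{j} \le \tfrac23 H\eta^2 k^{3/2}\sigma \le 4\eta^2 k^{3/2}H\sigma$. For the other term in the minimum I would instead compare SGD and GD directly: $\delta^{(k)} := \x_{\sgd}^{(k)} - \z_{\gd}^{(k)}$ satisfies $\delta^{(k+1)} = g(\x_{\sgd}^{(k)}) - g(\z_{\gd}^{(k)}) - \eta\epsilon^{(k)}$; the cross term again vanishes ($\z_{\gd}^{(k)}$ deterministic, $\epsilon^{(k)}$ conditionally mean zero), so $\mathbb{E}\|\delta^{(k+1)}\|_2^2 \le \mathbb{E}\|\delta^{(k)}\|_2^2 + \eta^2\sigma^2$, giving $\mathbb{E}\|\delta^{(k)}\|_2^2\le\eta^2k\sigma^2$ and $\|b^{(k)}\|_2 = \|\mathbb{E}\delta^{(k)}\|_2 \le \sqrt{\mathbb{E}\|\delta^{(k)}\|_2^2} \le \eta\sqrt{k}\,\sigma$. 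Taking the minimum of the two bounds gives \eqref{eq:thm:bias:2o:ub}.

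The main obstacle --- or rather the point the whole sharpness hinges on --- is the bias decomposition: the direct SGD-vs-GD coupling only yields $\eta\sqrt{k}\,\sigma$, so obtaining the small-$\eta$ improvement $\eta^2 k^{3/2}H\sigma$ requires peeling off the non-expansive deterministic part and recognizing that the residual, the ``expected gradient vs. gradient of the expectation'' term, is genuinely higher order because it is governed by the standard deviation $\sqrt{V^{(k)}}=O(\eta\sqrt{k}\,\sigma)$ rather than by the spread of the iterate itself. The rest is purely the martingale bookkeeping that kills every cross term, plus the elementary sum $\sum_{j<k}\sqrt j \le \tfrac23 k^{3/2}$.
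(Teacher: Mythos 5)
Your proposal is correct, and both halves of the minimum are handled soundly: the $\eta k^{1/2}\sigma$ bound via direct coupling of the SGD and GD trajectories is identical to the paper's Lemma~\ref{lem:bias:2o:ub:2}, and the $\eta^{2}k^{3/2}H\sigma$ bound follows the same overall strategy as Lemma~\ref{lem:bias:2o:ub:1} (a one-step recursion for the bias in which the per-step increment is controlled by $\eta H$ times the standard deviation $O(\eta\sqrt{k}\sigma)$ of the iterate). The one genuine difference is in how you set up that recursion. The paper compares $\mathbb{E}[\nabla F(\x_{\sgd}^{(k)})]$ directly to $\nabla F(\z_{\gd}^{(k)})$, which produces a term proportional to the bias itself and hence a recursion with multiplicative factor $(1+\eta H)$; the resulting $(1+\eta H)^{k}$ growth forces a case split on $\eta \lessgtr \tfrac{1}{Hk}$ to recover the stated constant. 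You instead insert $\nabla F(\mathbb{E}[\x_{\sgd}^{(k)}])$ as the reference point and absorb the deterministic difference $g(\mathbb{E}[\x_{\sgd}^{(k)}])-g(\z_{\gd}^{(k)})$ into the non-expansive map $g(\x)=\x-\eta\nabla F(\x)$ (valid by cocoercivity for $\eta\le 2/H$), so the recursion has coefficient $1$ on the bias and telescopes directly to $\tfrac{2}{3}\eta^{2}k^{3/2}H\sigma$ with no case analysis and a slightly better constant. Your self-contained variance bound $V^{(k)}\le\eta^{2}k\sigma^{2}$ (the paper cites this as standard with constant $2$) is also correct, using that the mean minimizes mean-squared deviation together with the $1$-Lipschitzness of $g$. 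In short: same skeleton, but your decomposition in the key step is cleaner and strictly tightens the paper's argument.
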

The proof of \cref{thm:2o:bias:ub:complete} is provided in \cref{sec:proof:2o:bias:ub}.

\begin{theorem}[Lower bound of iterate bias under second-order smoothness, complete version of \cref{thm:2o:bias:lb}]
    \label{thm:2o:bias:lb:complete}
For any $H, \sigma, K$, there exists a function $f(\x; \xi)$ and a distribution $\dist$ satisfying \cref{asm:main} such that for any $\eta \leq \frac{1}{2H}$, for any $k \leq K$ the following iterate bias inequality holds for SGD and GD initialized at the optimum
    \begin{equation}
        \left\| \expt [\x_{\sgd}^{(k)}] - \z_{\gd}^{(k)} \right\|  \geq
        0.002\min\left\{ \eta^2  k^{\frac{3}{2}} H \sigma,  \eta^{\frac{1}{2}}H^{-\frac{1}{2}} \sigma \right\}.
    \end{equation}
\end{theorem}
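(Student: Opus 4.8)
The plan is to instantiate the one-dimensional piecewise quadratic construction from \eqref{def:pqf}, namely $f(x;\xi) = \psi(x) + x\xi$ with $\psi$ having curvature $H$ on $x \ge 0$ and $H/2$ on $x < 0$ (the precise smaller constant does not matter), and $\xi \sim \mathcal{N}(0,\sigma^2)$. This function is convex, $H$-smooth, and has noise variance exactly $\sigma^2$, so \cref{asm:main} holds. Since $F(x) = \mathbb{E}_\xi f(x;\xi) = \psi(x)$ has optimum $x^\star = 0$, the GD trajectory satisfies $z_{\gd}^{(k)} \equiv 0$, and the bias reduces to $\mathbb{E}[x_{\sgd}^{(k)}]$ which I will show is $\le -\underline{c}\,\eta^2 k^{3/2} H\sigma$ (negative, drifting toward the flatter side) for $\eta$ in the stated range, and then separately handle the saturated regime $\eta^2 k^{3/2} H \sigma \gtrsim \eta^{1/2} H^{-1/2}\sigma$.

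The key steps, in order: (1) Write one SGD step as $x^{(k+1)} = x^{(k)} - \eta \psi'(x^{(k)}) + \eta\xi^{(k)} = (1 - \eta H \mathbf{1}[x^{(k)}\ge 0] - \tfrac{1}{2}\eta H \mathbf{1}[x^{(k)}<0])\,x^{(k)} + \eta\xi^{(k)}$. Taking expectations, $\mathbb{E}[x^{(k+1)}] = \mathbb{E}[x^{(k)}] - \eta H\,\mathbb{E}[x^{(k)}\mathbf{1}[x^{(k)}\ge 0]] - \tfrac{1}{2}\eta H\,\mathbb{E}[x^{(k)}\mathbf{1}[x^{(k)}<0]]$; rearranging, the per-step drift is $-\tfrac{1}{2}\eta H\big(\mathbb{E}[x^{(k)}\mathbf{1}[x^{(k)}\ge 0]] - \mathbb{E}[-x^{(k)}\mathbf{1}[x^{(k)}<0]]\big) \cdot 2$ — i.e., it is controlled by the asymmetry $\mathbb{E}|x^{(k)}|\,\mathbf{1}[x^{(k)}\ge 0]$ vs. the negative part, plus the bias already accumulated. (2) Show the variance $\mathrm{Var}(x^{(k)})$, equivalently $\mathbb{E}[(x^{(k)})^2]$, grows like $\Theta(\eta^2 k \sigma^2)$: an upper bound follows since each step contracts the deterministic part by a factor $\le 1$ and injects variance $\eta^2\sigma^2$, so $\mathbb{E}[(x^{(k)})^2] \le \eta^2 k \sigma^2$ after accounting for the small mean; a matching lower bound $\gtrsim \eta^2 k \sigma^2$ holds because the contraction factor is at most $(1-\eta H)$ and in our regime $\eta \le 1/(2H)$ keeps it bounded away from annihilating the injected noise, and the mean stays small (bootstrapped from the target bound). (3) Since $\mathbb{E}[x^{(k)}]$ is small relative to the standard deviation $\sqrt{\mathbb{E}[(x^{(k)})^2]} \asymp \eta\sqrt{k}\,\sigma$, a constant fraction of the mass of $x^{(k)}$ lies on the positive side and a constant fraction on the negative side; hence $\mathbb{E}[x^{(k)}\mathbf{1}[x^{(k)}\ge 0]] \ge c\,\eta\sqrt{k}\,\sigma$ for an absolute constant $c$ (using, e.g., a second-moment / Paley–Zygmund type argument, or comparing to the Gaussian component directly). (4) Feed this into the recursion: $\mathbb{E}[x^{(k+1)}] \le (1 - \tfrac{1}{2}\eta H)\mathbb{E}[x^{(k)}] - \tfrac{1}{2}\eta H \cdot c\,\eta\sqrt{k}\,\sigma$ (roughly — one must be careful to split the drift into a contraction on the current mean plus a fresh negative push of size $\eta^2 H \sqrt{k}\,\sigma$). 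Unrolling over $k$ steps and using $\sum_{j\le k}\sqrt{j} \asymp k^{3/2}$ gives $\mathbb{E}[x^{(k)}] \le -\underline{c}\,\eta^2 k^{3/2} H\sigma$, provided the contraction $(1-\tfrac12\eta H)^{k}$ has not kicked in, which is guaranteed by $\eta \le 1/(2HK) \le 1/(2Hk)$ so that $(1-\tfrac12\eta H)^k \ge$ const. (5) Finally, in the regime where $\eta^2 k^{3/2} H\sigma$ would exceed $\eta^{1/2}H^{-1/2}\sigma$, truncate: take $k' = \Theta((\eta H)^{-2/3})$ so that at step $k'$ the bias is $\Theta(\eta^{1/2}H^{-1/2}\sigma)$, and argue (via a one-sided drift/monotonicity or a coupling argument, since the positive-side curvature keeps pushing the mean back) that $|\mathbb{E}[x^{(k)}]|$ cannot shrink below a constant multiple of this once reached, which yields the $\min$.

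The main obstacle is step (3)–(4): controlling the two-sided mass of $x^{(k)}$ rigorously and uniformly in $k$. The distribution of $x^{(k)}$ is not Gaussian (the state-dependent contraction factor makes it a nonlinear autoregression), so I cannot simply quote Gaussian tail bounds; I expect to need a careful induction that simultaneously maintains (a) an upper bound $|\mathbb{E}[x^{(k)}]| \le C_1 \eta^2 k^{3/2} H\sigma$, (b) a two-sided second-moment bound $c_2 \eta^2 k\sigma^2 \le \mathbb{E}[(x^{(k)})^2] \le C_2\eta^2 k\sigma^2$, and (c) enough control on higher moments (or an anti-concentration estimate) to conclude the positive part is $\gtrsim \eta\sqrt{k}\sigma$ — these three must be propagated together, with the constants chosen so the induction closes. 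A clean way to get the anti-concentration is to note $x^{(k)}$ is a sum where the last injected noise $\eta\xi^{(k-1)}$ is an independent $\mathcal{N}(0,\eta^2\sigma^2)$ term of the right scale, so $x^{(k)}$ conditioned on everything else is Gaussian with that variance, giving a lower bound on $\mathbb{P}[x^{(k)} \ge t]$ and $\mathbb{P}[x^{(k)} < -t]$ for $t \lesssim \eta\sigma$; combined with the second-moment bound this is enough, and it sidesteps having to understand the full law of $x^{(k)}$.
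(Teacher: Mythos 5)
Your route is genuinely different from the paper's. The paper never analyzes the nonlinear recursion for $\mathbb{E}[x^{(k)}]$ directly: it sandwiches the piecewise-quadratic SGD iterate between SGD on the two pure quadratics $\frac{L}{4}x^2+\xi x$ and $\frac{L}{2}x^2+\xi x$ via first-order stochastic dominance of Markov chains (Lemma~\ref{mc_dominance}), writes $\mathbb{E}[x^{(k)}]\le -\int_{-\infty}^0\Pr[y^{(k)}\le c]\,dc+\int_0^\infty\Pr[z^{(k)}\ge c]\,dc$, and then evaluates both integrals in closed form since $y^{(k)},z^{(k)}$ are exactly Gaussian (Lemma~\ref{closed}); the bias comes out as the explicit difference $\sigma_y-\sigma_z$ of the two standard deviations (Claim~\ref{sigma_diff}). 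Your approach — an exact drift recursion for the mean fed by a moment lower bound on $\mathbb{E}|x^{(k)}|$ — is more elementary, avoids the Gaussian closed forms, and generalizes more easily beyond Gaussian noise; the paper's approach buys sharp explicit constants and, via Lemma~\ref{lemma:masterlb}, a statement valid for a range of (possibly random) initializations, which is what the \fedavg lower bound actually consumes. You can also simplify your own plan: the exact recursion is $\mathbb{E}[x^{(k+1)}]=(1-\tfrac{3}{4}\eta H)\,\mathbb{E}[x^{(k)}]-\tfrac{1}{4}\eta H\,\mathbb{E}|x^{(k)}|$, so you only need a lower bound on $\mathbb{E}|x^{(k)}|$, not on the positive part separately, and that follows from $\mathbb{E}|X|\ge(\mathbb{E}X^2)^{3/2}/(\mathbb{E}X^4)^{1/2}$ together with $\mathbb{E}[(x^{(k)})^2]\gtrsim\eta^2\sigma^2\min\{k,1/(\eta H)\}$ and a matching fourth-moment upper bound — no anti-concentration or two-sided mass control is needed, and the induction you worried about closes trivially.

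Three concrete fixes are required. First, your displayed per-step drift $-\eta H\bigl(\mathbb{E}[x^{(k)}\mathbf{1}[x^{(k)}\ge 0]]-\mathbb{E}[-x^{(k)}\mathbf{1}[x^{(k)}<0]]\bigr)$ equals $-\eta H\,\mathbb{E}[x^{(k)}]$, i.e.\ a pure contraction with no bias; the correct decomposition of $-\eta H P_k+\tfrac{1}{2}\eta H N_k$ is the one above, and the whole argument rests on the $-\tfrac{1}{4}\eta H\,\mathbb{E}|x^{(k)}|$ term. Second, the crossover between the two terms of the $\min$ is at $k'=\Theta(1/(\eta H))$, not $\Theta((\eta H)^{-2/3})$. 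Third, "monotonicity keeps the bias from shrinking" is not a valid mechanism in the saturated regime: the recursion gives $\mathbb{E}[x^{(k+1)}]\le(1-\tfrac{1}{2}\eta H)\mathbb{E}[x^{(k)}]$ when $\mathbb{E}[x^{(k)}]\le0$, so the accumulated bias \emph{is} contracted each step and would decay without replenishment. The correct argument is to keep unrolling the same recursion for $k\ge1/(\eta H)$ using the stationary bound $\mathbb{E}|x^{(j)}|\gtrsim\eta^{1/2}H^{-1/2}\sigma$; the last $\Theta(1/(\eta H))$ terms of the resulting sum contribute $\tfrac{1}{4}\eta H\cdot\Theta(1/(\eta H))\cdot\eta^{1/2}H^{-1/2}\sigma=\Theta(\eta^{1/2}H^{-1/2}\sigma)$, which is exactly the second term of the $\min$. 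With these repairs the proposal goes through.
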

\cref{thm:2o:bias:lb:complete} is proved in Section~\ref{sec:pf_exp_drift} as a special case of Lemma~\ref{lemma:masterlb}, by taking $x^{(0)} = 0$ to be the optimum.

\begin{theorem}[Upper bound of iterate bias under third-order smoothness, complete version of \cref{thm:3o:bias:ub}]
    \label{thm:3o:bias:ub:complete}
    Assume $F(\x) := \expt_{\xi} f(\x; \xi)$ satisfies \cref{asm:main,asm:third_order}.
    Let $\{\x_\sgd^{(k)}\}_{k=0}^{\infty}$ be the trajectory of SGD initialized at $\x_\sgd^{(0)} = \x$,  and $\{\z_{\mathtt{GD}}^{(k)}\}_{k=0}^{\infty}$ be the trajectory of GD initialized at $\z_{\gd}^{(0)} = \x$, namely
    namely
    \begin{equation}
        \x^{(k+1)}_{\sgd} := \x_{\sgd}^{(k)} - \eta \nabla f(\x_{\sgd}^{(k)}; \xi^{(k)}), \quad 
        \z^{(k+1)}_{\gd} := \z_{\gd}^{(k)} - \eta \nabla F(\z_{\gd}^{(k)}), \quad \text{for } k = 0, 1,\ldots
    \end{equation}
    Then for any $\eta \leq \frac{1}{H}$, the following inequality holds
    \begin{equation}
        \left \| 
            \expt \x_{\sgd}^{(k)} - \z_{\gd}^{(k)}
        \right\|_2
        \leq
        \min \left\{
        \frac{1}{4} \eta^3 k^2 Q \sigma^2
        , 4 \eta^2 k^{\frac{3}{2}} H \sigma, \eta k^{\frac{1}{2}} \sigma \right\}.
        \label{eq:thm:3o:bias:ub:complete}
    \end{equation}
\end{theorem}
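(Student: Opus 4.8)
The plan is to control the bias vector $b^{(k)} := \expt[\x_{\sgd}^{(k)}] - \z_{\gd}^{(k)}$ through a one-step recursion that telescopes. Writing $\bar{\x}^{(k)} := \expt[\x_{\sgd}^{(k)}]$ and taking expectations in the SGD update gives $\bar{\x}^{(k+1)} = \bar{\x}^{(k)} - \eta\,\expt[\nabla F(\x_{\sgd}^{(k)})]$, so subtracting the GD update and adding and subtracting $\nabla F(\bar{\x}^{(k)})$ yields
\begin{equation}
    b^{(k+1)} = \big(b^{(k)} - \eta(\nabla F(\bar{\x}^{(k)}) - \nabla F(\z_{\gd}^{(k)}))\big) - \eta\, e^{(k)}, \qquad e^{(k)} := \expt[\nabla F(\x_{\sgd}^{(k)})] - \nabla F(\bar{\x}^{(k)}).
\end{equation}
The parenthesized quantity is the gradient-step map $T(\x) := \x - \eta\nabla F(\x)$ evaluated at $\bar{\x}^{(k)}$ and $\z_{\gd}^{(k)}$; since $F$ is convex and $H$-smooth (\cref{asm:main}) and $\eta \le 1/H$, co-coercivity of $\nabla F$ makes $T$ non-expansive, hence $\|b^{(k+1)}\| \le \|b^{(k)}\| + \eta\|e^{(k)}\|$. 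As $b^{(0)} = 0$, this telescopes to $\|b^{(k)}\| \le \eta\sum_{j=0}^{k-1}\|e^{(j)}\|$, and the whole proof reduces to bounding the curvature-mismatch term $e^{(j)}$.

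The key new ingredient is to bound $\|e^{(j)}\|$ via third-order smoothness. Part (a) of \cref{asm:third_order} passes to $F$ (by Jensen, $\nabla^2 F$ is $Q$-Lipschitz), so a first-order Taylor expansion of $\nabla F$ about $\bar{\x}^{(j)}$ has remainder at most $\tfrac{Q}{2}\|\x - \bar{\x}^{(j)}\|^2$. Taking the expectation over $\x = \x_{\sgd}^{(j)}$, the linear term $\nabla^2 F(\bar{\x}^{(j)})\,\expt[\x_{\sgd}^{(j)} - \bar{\x}^{(j)}]$ vanishes because $\bar{\x}^{(j)}$ is the mean, leaving $\|e^{(j)}\| \le \tfrac{Q}{2}\,\expt\|\x_{\sgd}^{(j)} - \bar{\x}^{(j)}\|^2$ — that is, $\tfrac{Q}{2}$ times the trace of the covariance of the $j$-th SGD iterate. (This is precisely where third-order smoothness earns the extra power of $\eta$: with only $H$-smoothness one cannot bound $e^{(j)}$ pointwise by the variance.)

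Next I would establish the standard variance estimate $\expt\|\x_{\sgd}^{(j)} - \bar{\x}^{(j)}\|^2 \le \eta^2 j \sigma^2$. Since the mean $\bar{\x}^{(j)}$ minimizes $\y \mapsto \expt\|\x_{\sgd}^{(j)} - \y\|^2$ among deterministic $\y$, it suffices to bound $\expt\|\x_{\sgd}^{(j)} - \z_{\gd}^{(j)}\|^2$. Setting $\delta^{(j)} := \x_{\sgd}^{(j)} - \z_{\gd}^{(j)}$ and expanding the recursion, the stochastic gradient decomposes into $\nabla F(\x_{\sgd}^{(j)})$ plus a conditionally mean-zero martingale difference of conditional second moment $\le \sigma^2$ (part (c) of \cref{asm:main}); the cross term vanishes under conditional expectation and the deterministic part is non-expansive as before, so $\expt\|\delta^{(j+1)}\|^2 \le \expt\|\delta^{(j)}\|^2 + \eta^2\sigma^2$, which telescopes from $\delta^{(0)}=0$ to $\eta^2 j\sigma^2$. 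Chaining the three bounds gives $\|e^{(j)}\| \le \tfrac{Q}{2}\eta^2 j\sigma^2$ and hence $\|b^{(k)}\| \le \eta\sum_{j=0}^{k-1}\tfrac{Q}{2}\eta^2 j\sigma^2 \le \tfrac14\eta^3 k^2 Q\sigma^2$; combining with the bound $\min\{4\eta^2 k^{3/2}H\sigma,\,\eta k^{1/2}\sigma\}$ already provided by \cref{thm:2o:bias:ub:complete} (which needs only \cref{asm:main}) yields \eqref{eq:thm:3o:bias:ub:complete}.

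The argument is short once $e^{(k)}$ is isolated, so I do not expect a serious obstacle; the points needing care are (i) checking that $T$ is non-expansive uniformly over all the (random) pairs of points that occur, so that the bias recursion carries no amplification factor — this genuinely uses convexity, not merely smoothness — and (ii) tracking the constants through the two telescoping sums to land exactly at $\tfrac14$. One should also justify the interchange $\nabla^2 F = \expt_\xi \nabla^2 f$ used to transfer $Q$-smoothness from $f$ to $F$, which follows from the differentiability hypotheses.
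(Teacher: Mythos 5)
Your proof is correct and reaches the stated constant, but it takes a genuinely different route from the paper's. The paper works with the semigroup $\u^{(k)}(\x) := \expt[\x_{\sgd}^{(k)} \mid \x_{\sgd}^{(0)}=\x]$ and telescopes along the hybrid chain $\u^{(k)}(\z_{\gd}^{(0)}), \u^{(k-1)}(\z_{\gd}^{(1)}), \ldots, \u^{(0)}(\z_{\gd}^{(k)})$, swapping one expected-SGD step for one GD step at a time; the per-swap error $\tfrac12\eta^3(k-j-1)Q\sigma^2$ comes from a Taylor expansion of $\u^{(k-j-1)}$ in which the relevant smoothness constant is $\sup_{\x}\|\Diff^2\u^{(k-j-1)}(\x)\|\le \eta (k-j-1) Q$, itself established by a recursion through $\nabla^3 f(\cdot;\xi)$, while the noise enters only through the single-step deviation $\expt_\xi\|\eta\nabla f(\x;\xi)-\eta\nabla F(\x)\|^2\le\eta^2\sigma^2$. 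You instead run a forward recursion on the bias of the mean iterate: the $Q$ enters through the Jensen gap $\|\expt[\nabla F(\x_{\sgd}^{(j)})]-\nabla F(\expt[\x_{\sgd}^{(j)}])\|\le\tfrac{Q}{2}\expt\|\x_{\sgd}^{(j)}-\bar{\x}^{(j)}\|^2$, and the accumulation over $k$ steps is carried by the iterate variance $\eta^2 j\sigma^2$ rather than by the growth of $\Diff^2\u^{(j)}$. Each per-step error is $O(\eta^3 jQ\sigma^2)$ in both arguments, so both sums land at $\tfrac14\eta^3k^2Q\sigma^2$. Your version is lighter and more directly exposes the $\nabla F(\expt[\cdot])$ versus $\expt[\nabla F(\cdot)]$ mechanism the paper emphasizes in Section 2; it also only needs $Q$-Lipschitzness of the population Hessian $\nabla^2 F$ (which follows from Assumption~\ref{asm:third_order}(a) by exchanging $\expt_\xi$ with differentiation, as you note) rather than of each $\nabla^2 f(\cdot;\xi)$. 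The paper's heavier semigroup machinery is not wasted, though: the derivative bounds on $\u^{(k)}$ are reused in the matching lower bound. Two small remarks: your worry (i) is moot for the bias recursion itself, since $T$ is only ever applied to the two deterministic points $\bar{\x}^{(k)}$ and $\z_{\gd}^{(k)}$ (convexity is genuinely needed, but only pointwise in the variance lemma); and your variance bound via comparison to $\z_{\gd}^{(j)}$ is exactly the computation in Lemma~\ref{lem:bias:2o:ub:2}, so no new work is required there.
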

The proof of \cref{thm:3o:bias:ub:complete} is provided in \cref{sec:proof:bias:3o:ub:complete}.

\begin{theorem}[Lower bound of iterate bias under third-order smoothness, complete version of \cref{thm:3o:bias:lb}]
    \label{thm:3o:bias:lb:complete}
    For any $H, \sigma, K$, for any $Q \leq \frac{H^2}{12 K \sigma}$, there exists a function $f(\x; \xi)$ and a distribution $\dist$ satisfying \cref{asm:main,asm:third_order} such that for any $\eta \leq \frac{1}{2H}$, for any $k < K$, the following iterate bias inequality holds for SGD and GD initialized at the optimum
    \begin{equation}
        \left\| \expt [\x_{\sgd}^{(k)}] - \z_{\gd}^{(k)} \right\|  \geq
        0.005 \eta^3 \sigma^2 Q  \min \left\{ \frac{k-1}{ \eta H}, k(k-1) \right\}.
        \label{eq:3o:bias:lb:complete}
    \end{equation}
\end{theorem}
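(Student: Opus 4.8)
The plan is to build a one-dimensional hard instance, guided by the SDE heuristic of Section~\ref{sec:sde}: I want an objective whose third derivative is (in absolute value) a constant fraction of $Q$ near the optimum, but whose curvature there is small enough that the SGD variance does not saturate prematurely. Concretely I would take $f(x;\xi)=F(x)+x\xi$, with $\xi$ a mean-zero Gaussian rescaled so that $\mathbb{E}[\xi^2]\le\sigma^2$ and $\mathbb{E}[\xi^4]\le\sigma^4$, and $F$ the convex function determined by $F''(x)=0$ for $x\le 0$, $F''(x)=Qx$ for $0\le x\le H/Q$, and $F''(x)=H$ for $x\ge H/Q$. Then $F$ is convex, $H$-smooth and $Q$-third-order smooth ($F''$ being $Q$-Lipschitz), with $F'''\equiv Q\cdot\mathbbm{1}\{0<x<H/Q\}$ and $F'(x)=0$ for $x\le 0$ while $F'(x)=\tfrac Q2 x^2\ge 0$ on $[0,H/Q]$; so the minimizers of $F$ form the ray $(-\infty,0]$ and we may take $x^\star=0$. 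Hence $\z_{\gd}^{(k)}\equiv 0$, the bias equals $-m_k$ where $m_k:=\mathbb{E}[\x_{\sgd}^{(k)}]$, and since $\mathbb{E}[F'(\cdot)]\ge 0$ we have $m_{k+1}=m_k-\eta\,\mathbb{E}[F'(\x_{\sgd}^{(k)})]\le m_k$, so $m_k\le m_0=0$ for all $k$ — in particular $F'(m_k)=F''(m_k)=0$. The hypothesis $Q\le H^2/(12K\sigma)$ is used precisely to make the window $[0,H/Q]$ wide compared with the spread $O(\eta\sqrt k\,\sigma)$ of the iterate over the first $K$ steps, so that along the trajectory $F'''$ acts like the constant $Q$ on $\{x>0\}$ and the truncation at $x=H/Q$ contributes only a negligible error.

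The argument then rests on three estimates. (i) A variance lower bound $\mathrm{Var}(\x_{\sgd}^{(k)})\gtrsim\eta^2\sigma^2\min\{k,1/(\eta H)\}$: the gradient map $g(x)=x-\eta F'(x)$ is increasing with $g'\in[1-\eta H,1]$, hence $(g(X)-g(Y))^2\ge(1-\eta H)^2(X-Y)^2$ for i.i.d.\ copies and $\mathrm{Var}(g(X))\ge(1-\eta H)^2\mathrm{Var}(X)$, so $\mathrm{Var}(\x_{\sgd}^{(k+1)})\ge(1-\eta H)^2\mathrm{Var}(\x_{\sgd}^{(k)})+\Omega(\eta^2\sigma^2)$, which unrolls to the claim. (ii) A drift inequality: since $F'(x)=\tfrac Q2 x^2$ on $[0,H/Q]$ and $F'\ge 0$, $\mathbb{E}[F'(\x_{\sgd}^{(k)})]\ge\tfrac Q2\,\mathbb{E}[\max(\x_{\sgd}^{(k)},0)^2]-\tfrac Q2\,\mathbb{E}[(\x_{\sgd}^{(k)})^2\mathbbm{1}\{\x_{\sgd}^{(k)}>H/Q\}]$, so $m_{k+1}\le m_k-\tfrac{\eta Q}{2}\,\mathbb{E}[\max(\x_{\sgd}^{(k)},0)^2]+(\text{error})$. (iii) A positive-tail lower bound $\mathbb{E}[\max(\x_{\sgd}^{(k)},0)^2]\gtrsim\eta^2\sigma^2\min\{k,1/(\eta H)\}$: the noise keeps refilling the right tail faster than the restoring force (weak, since $F''(x)=Qx\le H$) can drain it. Combining (ii) and (iii) gives $m_{k+1}\le m_k-\Omega(\eta^3\sigma^2 Q)\min\{k,1/(\eta H)\}$, and unrolling from $m_0=0$, $-m_k\gtrsim\eta^3\sigma^2 Q\sum_{j<k}\min\{j,1/(\eta H)\}\gtrsim\eta^3\sigma^2 Q\,(k-1)\min\{k,1/(\eta H)\}$, which is $\eta^3\sigma^2 Q\min\{k(k-1),(k-1)/(\eta H)\}$ up to an absolute constant; carrying the constants through gives the stated $0.005$.

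The main obstacle is estimate (iii). As long as $-m_k$ is at most of order $\sqrt{\mathrm{Var}(\x_{\sgd}^{(k)})}$ it is routine — a constant fraction of the mass of $\x_{\sgd}^{(k)}$ then lies in $[0,O(\sqrt{\mathrm{Var}})]$, so $\mathbb{E}[\max(\x_{\sgd}^{(k)},0)^2]\gtrsim\mathrm{Var}(\x_{\sgd}^{(k)})$ — but once the accumulated bias drives $m_k$ well below $-\sqrt{\mathrm{Var}}$ (which can occur when $K$ is large) the right tail is thin and this comparison breaks. The fix is a self-consistent continuation: on the flat half-line $\{x\le 0\}$ the iterate diffuses freely, so $\mathrm{Var}(\x_{\sgd}^{(k)})\approx\eta^2\sigma^2 k$ keeps growing and $m_k^2/\mathrm{Var}(\x_{\sgd}^{(k)})$ cannot blow up; one shows the invariant $-m_k\ge c\,\eta\sigma\sqrt k$ is self-maintaining for a small absolute constant $c$ once $\eta^2\sigma Q\,k^{3/2}\gtrsim1$, and $\eta\sigma\sqrt k$ still exceeds $\eta^3\sigma^2 Q(k-1)/(\eta H)$ because $\eta\sigma Q\sqrt k\le H$ under the hypothesis on $Q$. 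The remaining bookkeeping — the $\{x>H/Q\}$ error in (ii) and the non-Gaussianity that $\x_{\sgd}^{(k)}$ accumulates — is controlled by moment and tail estimates that once more use $Q\le H^2/(12K\sigma)$.
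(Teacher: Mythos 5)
Your construction (flat on $x\le 0$, curvature ramping up as $Qx$ on $[0,H/Q]$) and your strategy (track the drift of the mean $m_k=\mathbb{E}[x_{\sgd}^{(k)}]$ via the mass of the right tail) are genuinely different from the paper's. The paper uses a uniformly strongly convex instance ($F''\in[\tfrac12 H,H]$, $F'''\in[0,Q]$ built from $\varphi(x)=\int_0^x\log\cosh$) and never controls the distribution of $x_{\sgd}^{(k)}$ at all: it works with the conditional-mean map $u_k(x)=\mathbb{E}[x_{\sgd}^{(k)}\mid x_{\sgd}^{(0)}=x]$, proves $u_k'\ge(1-\eta H)^k$ and a recursion $u_{k+1}''\le(1-\eta H)^2\sup u_k''-\eta(1-\eta H)Q/2$ forcing $u_k''\le-\Omega(\eta Qk)$ on a window, and then extracts the per-step bias $u_{k+1}(x)-u_k(x-\eta F'(x))\le\tfrac16\eta^2\sigma^2\sup u_k''$ from a second-order Taylor expansion in the noise. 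The $\min\{k(k-1),(k-1)/(\eta H)\}$ falls out of summing the geometric factors $(1-\eta H)^{2j+1}$. This route needs no anti-concentration, which is exactly where your proposal runs into trouble.

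The genuine gap is step (iii) together with its ``self-consistent continuation.'' For $k$ beyond $k_*\approx(\eta^2\sigma Q)^{-2/3}$ (which is $\le K$ whenever $K$ is a large constant), the accumulated bias $-m_k$ exceeds the standard deviation $\eta\sigma\sqrt k$, the right tail thins, and the lower bound $\mathbb{E}[\max(x_{\sgd}^{(k)},0)^2]\gtrsim\eta^2\sigma^2 k$ fails; you propose to fall back on the invariant $-m_k\ge c\,\eta\sigma\sqrt k$. But as sketched this invariant is circular: since $m_k$ is nonincreasing, maintaining $-m_k\ge c\eta\sigma\sqrt k$ against the growth of $\sqrt k$ requires the per-step drift $\eta\mathbb{E}[F'(x_{\sgd}^{(k)})]$ to remain $\gtrsim\eta\sigma/\sqrt k$, which requires a constant fraction of mass above $0$, which requires an \emph{upper} bound $-m_k\le C\eta\sigma\sqrt k$ of the same order plus an anti-concentration statement for the law of $x_{\sgd}^{(k)}$ — a law that is non-Gaussian precisely because it has been asymmetrically damped on the right. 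A two-sided sandwich on $m_k$ and quantitative tail control for this process are substantial missing pieces, not bookkeeping; this is essentially the same difficulty the paper's FedAvg lower bound (Lemma~\ref{lemma:masterlb}) has to resolve with exactly-solvable stochastic-dominance comparisons, and your potential does not admit such comparisons on the positive side because the curvature there varies with position. Two smaller issues: your $F$ is not third-order differentiable at $0$ and $H/Q$ as Assumption~\ref{asm:third_order} literally requires (fixable by mollifying), and the variance lower bound (i) is not actually the quantity you need — the right-tail second moment is.
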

The proof of \cref{thm:3o:bias:lb:complete} is provided in \cref{sec:proof:bias:3o:lb:complete}.

\subsection{Proof of \cref{thm:2o:bias:ub:complete}: Upper Bound of Iterate Bias Under 2nd-Order Smoothness}
\label{sec:proof:2o:bias:ub}
The proof of \cref{thm:2o:bias:ub:complete} is based on the following two lemmas: \cref{lem:bias:2o:ub:1,lem:bias:2o:ub:2}.
\begin{lemma}
    \label{lem:bias:2o:ub:1}
    Under the same settings of \cref{thm:2o:bias:ub:complete}, for any $\eta \leq \frac{1}{H}$, the following inequality holds 
    \begin{equation}
        \left \| 
            \expt \x_{\sgd}^{(k)} - \z_{\gd}^{(k)}
        \right\|_2
        \leq
        \frac{(1 + \eta H)^k - 1}{\eta H} \cdot 2\eta^2 H k^{\frac{1}{2}} \sigma.
    \end{equation}
\end{lemma}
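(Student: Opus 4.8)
The plan is to track the mean SGD iterate $\bar{\x}^{(k)} := \expt \x_{\sgd}^{(k)}$ and the bias vector $\mathbf{b}^{(k)} := \bar{\x}^{(k)} - \z_{\gd}^{(k)}$ (so $\mathbf{b}^{(0)}=\mathbf{0}$), derive a geometric one‑step recursion $\|\mathbf{b}^{(k+1)}\| \le (1+\eta H)\|\mathbf{b}^{(k)}\| + O(\eta^2 H \sqrt{k}\,\sigma)$, and unroll it. The driving term $\eta^2 H\sqrt{k}\,\sigma$ will come from the \emph{fluctuation} of the SGD iterate about its own mean, so the first ingredient is a bound on $\expt\|\x_{\sgd}^{(k)} - \bar{\x}^{(k)}\|^2$.

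\textbf{Step 1 (variance of the SGD iterate).} I would show $\expt\|\x_{\sgd}^{(k)} - \bar{\x}^{(k)}\|^2 \le \eta^2 k \sigma^2$, hence $\expt\|\x_{\sgd}^{(k)} - \bar{\x}^{(k)}\| \le \eta\sqrt{k}\,\sigma$ by Jensen. Writing $\mathcal{F}_k$ for the $\sigma$‑algebra of $\xi^{(0)},\dots,\xi^{(k-1)}$, decompose the centered update as
\begin{equation}
\x_{\sgd}^{(k+1)} - \bar{\x}^{(k+1)} = \big(\x_{\sgd}^{(k)} - \bar{\x}^{(k)}\big) - \eta\big(\nabla F(\x_{\sgd}^{(k)}) - \expt\nabla F(\x_{\sgd}^{(k)})\big) - \eta\,\mathbf{n}^{(k)},
\end{equation}
where $\mathbf{n}^{(k)} := \nabla f(\x_{\sgd}^{(k)};\xi^{(k)}) - \nabla F(\x_{\sgd}^{(k)})$ obeys $\expt[\mathbf{n}^{(k)}\mid\mathcal{F}_k]=\mathbf{0}$ and $\expt[\|\mathbf{n}^{(k)}\|^2\mid\mathcal{F}_k]\le\sigma^2$ by Assumption~\ref{asm:main}(c). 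Since the other two terms are $\mathcal{F}_k$‑measurable, the $\mathbf{n}^{(k)}$ term is $L^2$‑orthogonal to them, contributing an additive $\eta^2\sigma^2$ after squaring and taking expectations; the remaining quadratic is $\expt\|Y - \eta W\|^2$ with $Y := \x_{\sgd}^{(k)} - \bar{\x}^{(k)}$, $W := \nabla F(\x_{\sgd}^{(k)}) - \expt\nabla F(\x_{\sgd}^{(k)})$, both mean zero. Here $\expt\langle Y, W\rangle = \expt\langle \x_{\sgd}^{(k)} - \bar{\x}^{(k)}, \nabla F(\x_{\sgd}^{(k)}) - \nabla F(\bar{\x}^{(k)})\rangle \ge \tfrac1H\,\expt\|\nabla F(\x_{\sgd}^{(k)}) - \nabla F(\bar{\x}^{(k)})\|^2$ by convexity plus co‑coercivity, while $\expt\|W\|^2 \le \expt\|\nabla F(\x_{\sgd}^{(k)}) - \nabla F(\bar{\x}^{(k)})\|^2$; with $\eta \le 1/H$ this makes $-2\eta\expt\langle Y,W\rangle + \eta^2\expt\|W\|^2 \le 0$. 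Thus $\expt\|\x_{\sgd}^{(k+1)} - \bar{\x}^{(k+1)}\|^2 \le \expt\|\x_{\sgd}^{(k)} - \bar{\x}^{(k)}\|^2 + \eta^2\sigma^2$, and the claim follows by induction.

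\textbf{Step 2 (bias recursion and unrolling).} Taking expectations in the SGD update and using $\expt[\nabla f(\x_{\sgd}^{(k)};\xi^{(k)})] = \expt\nabla F(\x_{\sgd}^{(k)})$, I would write
\begin{equation}
\mathbf{b}^{(k+1)} = \big(\bar{\x}^{(k)} - \eta\nabla F(\bar{\x}^{(k)})\big) - \big(\z_{\gd}^{(k)} - \eta\nabla F(\z_{\gd}^{(k)})\big) - \eta\big(\expt\nabla F(\x_{\sgd}^{(k)}) - \nabla F(\bar{\x}^{(k)})\big),
\end{equation}
bound the first difference by $(1+\eta H)\|\mathbf{b}^{(k)}\|$ using only $H$‑Lipschitzness of $\nabla F$ (one could instead invoke non‑expansiveness of $I-\eta\nabla F$, but the $(1+\eta H)$ bound suffices and is what is combined with Lemma~\ref{lem:bias:2o:ub:2}), and bound the last term by $\eta H\,\expt\|\x_{\sgd}^{(k)} - \bar{\x}^{(k)}\| \le \eta^2 H\sqrt{k}\,\sigma$ via Jensen, smoothness, and Step 1. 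Starting from $\mathbf{b}^{(0)}=\mathbf{0}$, replacing $\sqrt{j}$ by $\sqrt{k}$, and summing the geometric series $\sum_{j=0}^{k-1}(1+\eta H)^j = \frac{(1+\eta H)^k-1}{\eta H}$ gives $\|\mathbf{b}^{(k)}\| \le \eta^2 H\sqrt{k}\,\sigma\cdot\frac{(1+\eta H)^k-1}{\eta H}$; the extra constant $2$ in the statement absorbs the slack in Jensen and the moment bound.

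\textbf{Main obstacle.} The only non‑routine point is Step 1: unlike a textbook SGD variance bound, the centered iterate is not driven purely by fresh conditionally‑mean‑zero noise — the term $\nabla F(\x_{\sgd}^{(k)}) - \expt\nabla F(\x_{\sgd}^{(k)})$ encodes how the deterministic drift already depends on the accumulated fluctuation, and it is precisely convexity plus co‑coercivity (together with $\eta \le 1/H$) that prevents this term from amplifying the variance geometrically. Everything downstream is triangle‑inequality and geometric‑series bookkeeping.
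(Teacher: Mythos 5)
Your proposal is correct and follows essentially the same route as the paper: the same one-step recursion $\|\mathbf{b}^{(k+1)}\|\le(1+\eta H)\|\mathbf{b}^{(k)}\|+O(\eta^2 H\sqrt{k}\,\sigma)$ driven by the $\expt\|\x_{\sgd}^{(k)}-\expt\x_{\sgd}^{(k)}\|^2\le O(\eta^2 k\sigma^2)$ variance bound, followed by the geometric-series unrolling. The only difference is cosmetic: the paper cites the iterate-variance bound to standard convex stochastic analysis, whereas you prove it in-house via the co-coercivity argument (which is essentially the same calculation the paper uses to prove its companion Lemma~\ref{lem:bias:2o:ub:2}).
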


\begin{proof}[Proof of \cref{lem:bias:2o:ub:1}]
    By definition of $\x_{\sgd}^{(k+1)}$ and $\z_{\gd}^{(k+1)}$ we obtain
    \begin{align}
     \left\| \expt \x_{\sgd}^{(k+1)} - \z_{\gd}^{(k+1)} \right\|_2
    & = \left\| \left( \expt \x_{\sgd}^{(k)} - \z_{\gd}^{(k)} \right) 
    - 
    \eta \left( \expt \nabla F (\x_{\sgd}^{(k)}) - \nabla F (\z_{\gd}^{(k)}) \right) \right\|_2
    \\
    &   \leq
    \left\| \expt \x_{\sgd}^{(k)} - \z_{\gd}^{(k)}  \right\|_2
    + 
    \eta \left\| \expt \nabla F (\x_{\sgd}^{(k)}) - \nabla F (\z_{\gd}^{(k)}) \right\|_2.
    \end{align}
    Now we seek an upper bound for $\left\| \expt \nabla F (\x_{\sgd}^{(k)}) - \nabla F (\z_{\gd}^{(k)}) \right\|_2$. Observe that
    \begin{align}
        & \left\| \expt \nabla F (\x_{\sgd}^{(k)}) - \nabla F (\z_{\gd}^{(k)}) \right\|_2 
        \\
    \leq & \expt \left\| \nabla F (\x_{\sgd}^{(k)}) - \nabla F (\z_{\gd}^{(k)}) \right\|_2
        \tag{Jensen's inequality}
        \\
    \leq & \eta H  \expt \left\| \x_{\sgd}^{(k)} - \z_{\gd}^{(k)} \right\|_2
        \tag{by $H$-smoothness of $F$}
        \\
    \leq & \eta H  \left( \left\| \expt \x_{\sgd}^{(k)} - \z_{\gd}^{(k)} \right\|_2 + \expt \left\| \x_{\sgd}^{(k)} - \expt \x_{\sgd}^{(k)} \right\|_2 \right)
        \tag{by triangle inequality}
        \\
    \leq & \eta H  \left( \left\| \expt \x_{\sgd}^{(k)} - \z_{\gd}^{(k)} \right\|_2 + \sqrt{ \expt \left\| \x_{\sgd}^{(k)} - \expt \x_{\sgd}^{(k)} \right\|_2^2}\right).
        \tag{by Holder's inequality}
    \end{align}
    By standard convex stochastic analysis (e.g. \citep{Khaled.Mishchenko.ea-AISTATS20}) one can show that  $\expt \left\| \x_{\sgd}^{(k)} - \expt \x_{\sgd}^{(k)} \right\|_2^2 \leq 2 \eta^2 k \sigma^2$. 
    Consequently
    \begin{equation}
        \left\| \expt \x_{\sgd}^{(k+1)} - \z_{\gd}^{(k+1)} \right\|_2
        \leq
        (1 + \eta H) \left(  \left\| \expt \x_{\sgd}^{(k)} - \z_{\gd}^{(k)} \right\|_2 \right) + 2 \eta^2 H k^{\frac{1}{2}} \sigma.
        \label{eq:proof:thm:bias:2o:ub:1}
    \end{equation}
    Telescoping \cref{eq:proof:thm:bias:2o:ub:1} completes the proof.
\end{proof}

\begin{lemma}
    \label{lem:bias:2o:ub:2}
    Under the same settings of \cref{thm:2o:bias:ub:complete}, or any $\eta \leq \frac{1}{H}$, the following inequality holds
    \begin{equation}
        \left \| 
            \expt \x_{\sgd}^{(k)} - \z_{\gd}^{(k)}
        \right\|_2
        \leq
        \eta k^{\frac{1}{2}} \sigma.
    \end{equation}
\end{lemma}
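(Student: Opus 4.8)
The plan is to bound the \emph{second moment} of the discrepancy $\Delta^{(k)} := \x_{\sgd}^{(k)} - \z_{\gd}^{(k)}$ directly, and then deduce the claimed bound on the bias by applying Jensen's inequality twice: $\| \expt \x_{\sgd}^{(k)} - \z_{\gd}^{(k)} \| = \| \expt \Delta^{(k)} \| \leq \expt \| \Delta^{(k)} \| \leq \sqrt{\expt \| \Delta^{(k)} \|^2}$. Since the two trajectories share the same (deterministic) initialization, $\Delta^{(0)} = \mathbf{0}$, so it suffices to prove the clean recursion $\expt \| \Delta^{(k+1)} \|^2 \leq \expt \| \Delta^{(k)} \|^2 + \eta^2 \sigma^2$, which telescopes to $\expt \| \Delta^{(k)} \|^2 \leq \eta^2 k \sigma^2$.

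To establish the recursion, I would first subtract the GD update from the SGD update to get $\Delta^{(k+1)} = \Delta^{(k)} - \eta\big(\nabla f(\x_{\sgd}^{(k)}; \xi^{(k)}) - \nabla F(\z_{\gd}^{(k)})\big)$, and then split off the mean-zero fluctuation $\zeta^{(k)} := \nabla f(\x_{\sgd}^{(k)}; \xi^{(k)}) - \nabla F(\x_{\sgd}^{(k)})$, which satisfies $\expt[\zeta^{(k)} \mid \mathcal{F}_k] = \mathbf{0}$ and $\expt[\|\zeta^{(k)}\|^2 \mid \mathcal{F}_k] \leq \sigma^2$ by the bounded-covariance part of \cref{asm:main}. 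This rewrites the update as $\Delta^{(k+1)} = \big(T(\x_{\sgd}^{(k)}) - T(\z_{\gd}^{(k)})\big) - \eta \zeta^{(k)}$, where $T(\x) := \x - \eta \nabla F(\x)$ is the gradient-descent operator. Conditioning on $\mathcal{F}_k$ and expanding the square, the cross term vanishes because $T(\x_{\sgd}^{(k)}) - T(\z_{\gd}^{(k)})$ is $\mathcal{F}_k$-measurable, leaving $\expt[\|\Delta^{(k+1)}\|^2 \mid \mathcal{F}_k] \leq \|T(\x_{\sgd}^{(k)}) - T(\z_{\gd}^{(k)})\|^2 + \eta^2 \sigma^2$. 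The single nontrivial ingredient is that $T$ is a non-expansion: since $F$ is convex and $H$-smooth, $\nabla F$ is $(1/H)$-co-coercive, so $\|T(\x) - T(\y)\|^2 \leq \|\x - \y\|^2 - \eta(\tfrac{2}{H} - \eta)\|\nabla F(\x) - \nabla F(\y)\|^2 \leq \|\x - \y\|^2$ for any $\eta \leq 2/H$, which holds since $\eta \leq 1/H$. Plugging this in and taking full expectations gives the desired recursion.

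There is no serious obstacle here; the only point requiring care is the use of co-coercivity of $\nabla F$ (equivalently the Baillon–Haddad theorem), which genuinely relies on convexity — supplied by \cref{asm:main} — and on the step-size restriction $\eta \leq 2/H$, comfortably implied by the hypothesis $\eta \leq 1/H$. This lemma is the large-$\eta$ counterpart of \cref{lem:bias:2o:ub:1}: the latter gives the sharp $\Theta(\eta^2 k^{3/2} H \sigma)$ behavior while $(1+\eta H)^k - 1 \approx \eta H k$, and the present $\eta k^{1/2} \sigma$ bound takes over once $\eta H k \gtrsim 1$, so that taking the minimum of the two yields the bound stated in \cref{thm:2o:bias:ub:complete}.
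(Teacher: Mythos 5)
Your proposal is correct and matches the paper's proof essentially step for step: both bound $\expt\|\x_{\sgd}^{(k)}-\z_{\gd}^{(k)}\|_2^2$ by splitting off the mean-zero gradient noise (contributing $\eta^2\sigma^2$ per step), use convexity plus $H$-smoothness (co-coercivity) to show the gradient-descent map is a non-expansion for $\eta\le 2/H$, telescope, and finish with Jensen's inequality. No gaps; your write-up is just slightly more explicit about the conditioning that kills the cross term.
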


\begin{proof}[Proof of \cref{lem:bias:2o:ub:2}]
     By definition of $\x_{\sgd}^{(k+1)}$ and $\z_{\gd}^{(k+1)}$ we obtain
    \begin{align}
        & \expt \left\| \x_{\sgd}^{(k+1)} - \z_{\gd}^{(k+1)} \right\|_2^2
        =
        \expt \left\| \left(\x_{\sgd}^{(k)} - \z_{\gd}^{(k)} \right)
        - 
        \eta \left( \nabla f(\x_{\sgd}^{(k)}; \xi^{(k)}) - \nabla F(\z_{\gd}^{(k)}) \right)
        \right\|_2^2
        \\
    \leq & \expt \left\| \left(\x_{\sgd}^{(k)} - \z_{\gd}^{(k)} \right)
        - 
        \eta \left( \nabla F(\x_{\sgd}^{(k)}) - \nabla F(\z_{\gd}^{(k)}) \right)
        \right\|_2^2 + \eta^2 \sigma^2. \tag{by independence and $\sigma^2$-bounded covariance}
    \end{align}
    Note that 
    \begin{align}
        &  \left\| \left(\x_{\sgd}^{(k)} - \z_{\gd}^{(k)} \right)
        - 
        \eta \left( \nabla F(\x_{\sgd}^{(k)}) - \nabla F(\z_{\gd}^{(k)}) \right)
        \right\|_2^2
        \\
    = & \left\| \x_{\sgd}^{(k)} - \z_{\gd}^{(k)}
        \right\|_2^2
        -
        2 \eta \left \langle \nabla F(\x_{\sgd}^{(k)}) - \nabla F(\z_{\gd}^{(k)}), \x_{\sgd}^{(k)} - \z_{\gd}^{(k)}  \right \rangle
        +
        \eta^2 \left\| \nabla F(\x_{\sgd}^{(k)}) - \nabla F(\z_{\gd}^{(k)})  \right\|_2^2 
        \\
    \leq & \left\| \x_{\sgd}^{(k)} - \z_{\gd}^{(k)}
        \right\|_2^2
        -
        \left( \frac{2 \eta}{H} - \eta^2 \right) \left\| \nabla F(\x_{\sgd}^{(k)}) - \nabla F(\z_{\gd}^{(k)})  \right\|_2^2 
        \tag{by convexity and $H$-smoothness}
        \\
    \leq & \left\| \x_{\sgd}^{(k)} - \z_{\gd}^{(k)}
        \right\|_2^2 \tag{since $\eta \leq \frac{2}{H}$}.
    \end{align}
    Therefore
    \begin{equation}
         \expt \left\| \x_{\sgd}^{(k+1)} - \z_{\gd}^{(k+1)} \right\|_2^2
         \leq
         \expt \left\| \x_{\sgd}^{(k)} - \z_{\gd}^{(k)} \right\|_2^2 + \eta^2 \sigma^2.
    \end{equation}
    Telescoping yields
    \begin{equation}
        \expt \|\x_{\sgd}^{(k)} - \z_{\gd}^{(k)}\|_2^2 \leq \eta^2 k \sigma^2.
    \end{equation}
    and thus by Jensen's inequality and Holder's inequality
    \begin{equation}
        \left \|\expt  \x_{\sgd}^{(k)} - \z_{\gd}^{(k)} \right\|_2 \leq 
       \expt  \left\| \x_{\sgd}^{(k)} - \z_{\gd}^{(k)} \right\|_2
       \leq 
       \sqrt{ \expt  \left\| \x_{\sgd}^{(k)} - \z_{\gd}^{(k)} \right\|_2^2}
       \leq
       \eta k^{\frac{1}{2}} \sigma.
    \end{equation}
\end{proof}

With \cref{lem:bias:2o:ub:1,lem:bias:2o:ub:2} at hands we are ready to prove \cref{thm:2o:bias:ub:complete}.
\begin{proof}[Proof of \cref{thm:2o:bias:ub:complete}]
    We consider the case of $\eta \leq \frac{1}{Hk}$ and $\eta > \frac{1}{Hk}$ separately. In either case we have $ \left \| 
            \expt \x_{\sgd}^{(k)} - \z_{\gd}^{(k)}
        \right\|_2
        \leq
        \eta k^{\frac{1}{2}} \sigma$ by \cref{lem:bias:2o:ub:2}.
    
    If $\eta \leq \frac{1}{Hk}$, by \cref{lem:bias:2o:ub:1}, we have
    \begin{equation}
        \left \| 
            \expt \x_{\sgd}^{(k)} - \z_{\gd}^{(k)}
        \right\|_2
        \leq
        \frac{(1 + \eta H)^k - 1}{\eta H} 2 \eta^2 H k^{\frac{1}{2}} \sigma
        \leq
        \frac{e^{\eta HK} - 1}{\eta H} 2 \eta^2 H k^{\frac{1}{2}} \sigma
        \leq
        4 \eta^2 Hk^{\frac{3}{2}} \sigma,
    \end{equation}
    where the last inequality is due to $e^{\eta Hk} - 1 \leq 2 \eta Hk$ since $\eta Hk \leq 1$. Therefore \cref{eq:thm:bias:2o:ub} is satisfied.
    
    If $\eta > \frac{1}{Hk}$, then $\eta k^{\frac{1}{2}} \sigma < \eta^2 Hk^{\frac{3}{2}} \sigma$. Hence \cref{eq:thm:bias:2o:ub} is also satisfied.
\end{proof}

\subsection{Proof of \cref{thm:3o:bias:ub:complete}: Upper Bound of Iterate Bias Under 3rd-order Smoothness}
\label{sec:proof:bias:3o:ub:complete}
The proof of \cref{thm:3o:bias:ub:complete} is based on the following lemma.
\begin{lemma}
\label{lem:bias:3o:ub}
Consider the same settings of \cref{thm:3o:bias:ub:complete}. For any $k$, define vector-valued function 
\begin{equation}
    \u^{(k)}(\x) = \expt \left[ \x_{\sgd}^{(k)} \mid \x^{(0)} = \x \right].
\end{equation}
Then the following results hold.
\begin{enumerate}[(a)]
    \item For any $k$, $\u^{(k+1)}(\x) = \expt_{\xi} \left[ \u^{(k)}(\x - \eta \nabla f(\x; \xi) ) \right]$.
    \item For any $k$,  $\Diff \u^{(k+1)}(\x) = \expt_{\xi} \left[ 
  \Diff  \u^{(k)} (\x - \eta \nabla f(\x; \xi)) 
    \left( \I - \eta \nabla^2 f(\x; \xi) \right) 
    \right]$. Here $\Diff$ denotes the Jacobian operator.
    \item For any $k$,  $\sup_{\x} \|\Diff \u^{(k)}(\x)\| \leq 1$.
    \item For any $k$,  $\sup_{\x} \|\Diff^2 \u^{(k)}(\x)\| \leq \eta k Q$.
    \item For any $k$,  $\left\| \u^{(k+1)}(\x) - \u^{(k)} (\x - \eta \nabla F(\x)) \right\|_2 \leq \frac{1}{2} \eta^3 k Q \sigma^2$.
\end{enumerate}
\end{lemma}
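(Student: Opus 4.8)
The plan is to dispatch (a) and (b) by direct manipulation of the SGD recursion, then bootstrap (c) and (d) by induction on $k$ using the recursion from (b), and finally deduce (e) from a second-order Taylor expansion of $\u^{(k)}$ together with the derivative bounds in (c)–(d). Throughout I write $\mathbf{g}(\x,\xi) := \x - \eta\nabla f(\x;\xi)$, so that $\x_{\sgd}^{(1)} = \mathbf{g}(\x,\xi^{(0)})$, and note $\Diff_{\x}\mathbf{g}(\x,\xi) = \I - \eta\nabla^2 f(\x;\xi)$.

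For (a), I would condition on the first stochastic gradient and use the Markov property of the SGD chain: the expectation of $\x_{\sgd}^{(k+1)}$ started at $\x$ equals $\expt_{\xi^{(0)}}\!\big[\expt[\x_{\sgd}^{(k+1)}\mid \x_{\sgd}^{(1)}]\big] = \expt_{\xi}\big[\u^{(k)}(\mathbf{g}(\x,\xi))\big]$. Part (b) follows by differentiating the identity in (a) in $\x$ and applying the chain rule to the composition $\u^{(k)}\circ\mathbf{g}(\cdot,\xi)$. The interchange of $\Diff$ and $\expt_\xi$ is justified because, under Assumptions~\ref{asm:main} and~\ref{asm:third_order}, $\mathbf{g}(\cdot,\xi)$ is $C^2$ with derivatives bounded uniformly in $\xi$, so (inductively, alongside (c) and (d)) $\u^{(k)}$ is $C^2$ with derivatives bounded uniformly in $\xi$, permitting differentiation under the integral. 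For (c), I induct on $k$: $\u^{(0)}(\x)=\x$ gives $\Diff\u^{(0)}=\I$, and from (b), submultiplicativity of the operator norm gives $\|\Diff\u^{(k+1)}(\x)\| \le \expt_\xi\big[\|\Diff\u^{(k)}(\mathbf{g}(\x,\xi))\|\,\|\I-\eta\nabla^2 f(\x;\xi)\|\big]$; the first factor is $\le 1$ by the inductive hypothesis, and the second is $\le 1$ because convexity and $H$-smoothness give $\mathbf{0}\preceq\nabla^2 f(\x;\xi)\preceq H\I$, so for $\eta\le 1/H$ we have $\mathbf{0}\preceq\I-\eta\nabla^2 f(\x;\xi)\preceq\I$.

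Part (d) is the main technical step. I would differentiate the matrix-valued recursion of (b) once more in $\x$, applying the product rule to $\Diff\u^{(k)}(\mathbf{g}(\x,\xi))\cdot(\I-\eta\nabla^2 f(\x;\xi))$. This produces two contributions: one in which $\Diff\u^{(k)}$ is differentiated, controlled by $\|\Diff^2\u^{(k)}\|\cdot\|\Diff\mathbf{g}\|\cdot\|\I-\eta\nabla^2 f\|\le\|\Diff^2\u^{(k)}\|$ (using $\|\Diff\mathbf{g}(\x,\xi)\|=\|\I-\eta\nabla^2 f(\x;\xi)\|\le 1$ as in (c)); and one in which $\I-\eta\nabla^2 f(\x;\xi)$ is differentiated, giving $-\eta\,\Diff\u^{(k)}(\mathbf{g}(\x,\xi))\cdot\Diff^3 f(\x;\xi)$, of norm $\le\eta\|\Diff\u^{(k)}\|\,\|\Diff^3 f(\x;\xi)\|\le\eta Q$ by (c) and the $Q$-third-order-smoothness (which is exactly $\|\Diff^3 f(\x;\xi)\|\le Q$). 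With $\Diff^2\u^{(0)}=0$ as base case, the induction yields $\sup_\x\|\Diff^2\u^{(k+1)}(\x)\|\le\sup_\x\|\Diff^2\u^{(k)}(\x)\|+\eta Q\le\eta(k+1)Q$. The care needed here is purely in the tensor bookkeeping — treating $\Diff^2\u^{(k)}$ as a bounded bilinear map and checking that the operator-norm bounds survive the contractions — and I expect this to be the main obstacle, though it is routine once the notation is fixed.

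Finally, for (e), observe from (a) that $\u^{(k+1)}(\x)-\u^{(k)}(\x-\eta\nabla F(\x))=\expt_\xi\big[\u^{(k)}(\mathbf{g}(\x,\xi))\big]-\u^{(k)}(\bar{\mathbf{g}}(\x))$, where $\bar{\mathbf{g}}(\x):=\x-\eta\nabla F(\x)=\expt_\xi[\mathbf{g}(\x,\xi)]$; this is precisely the Jensen gap of $\u^{(k)}$ at the random point $\mathbf{g}(\x,\xi)$. Taylor-expanding $\u^{(k)}$ to second order around $\bar{\mathbf{g}}(\x)$, the first-order term vanishes in expectation since $\expt_\xi[\mathbf{g}(\x,\xi)-\bar{\mathbf{g}}(\x)]=\mathbf{0}$, and the remainder is bounded by $\tfrac12\sup_\x\|\Diff^2\u^{(k)}(\x)\|\cdot\expt_\xi\|\mathbf{g}(\x,\xi)-\bar{\mathbf{g}}(\x)\|^2$. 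By (d) the first factor is $\le\eta k Q$, and since $\mathbf{g}(\x,\xi)-\bar{\mathbf{g}}(\x)=-\eta(\nabla f(\x;\xi)-\nabla F(\x))$, the bounded-covariance assumption gives $\expt_\xi\|\mathbf{g}(\x,\xi)-\bar{\mathbf{g}}(\x)\|^2\le\eta^2\sigma^2$. Multiplying out yields $\|\u^{(k+1)}(\x)-\u^{(k)}(\x-\eta\nabla F(\x))\|\le\tfrac12\eta^3 k Q\sigma^2$, which is (e).
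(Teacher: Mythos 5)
Your proposal is correct and follows essentially the same route as the paper's proof: condition on the first step for (a), differentiate the recursion for (b)--(d) with the contraction $\mathbf{0}\preceq\I-\eta\nabla^2 f\preceq\I$, and bound the Jensen gap in (e) by a second-order Taylor expansion using (d) and the bounded covariance. The only addition is your (welcome) remark justifying differentiation under the expectation, which the paper leaves implicit.
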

\begin{proof}[Proof of \cref{lem:bias:3o:ub}]
\begin{enumerate}[(a)]
    \item Holds by time-homogeneity of the SGD sequence as 
    \begin{align}
        \u^{(k+1)}(\x) & = \expt \left[\x_{\sgd}^{(k+1)} \middle| \x^{(0)}_{\sgd} = \x \right]
        =
        \expt_{\xi} \expt \left[\x_{\sgd}^{(k+1)} \middle| \x^{(1)}_{\sgd} = \x - \eta \nabla f(\x; \xi) \right]
        \\
        & =
         \expt_{\xi} \expt \left[\x_{\sgd}^{(k)} \middle| \x^{(0)}_{\sgd} = \x - \eta \nabla f(\x; \xi) \right]
         = 
         \expt_{\xi} \left[ \u^{(k)}(\x - \eta \nabla f(\x; \xi) ) \right].
    \end{align}
    \item Holds by taking derivative on both sides of (a). Indeed, for any $i \in [d]$, one has
        \begin{equation}
            \nabla u_i^{(k+1)}(\x)^\top = \expt_{\xi} \left[
            \nabla u_i^{(k)} (\x - \eta \nabla f(\x; \xi))^\top \left( \I - \eta \nabla^2 f(\x; \xi) \right) 
            \right],
        \end{equation}
        where $u_i^{(k)}$ denotes the $i$-th coordinate of the vector-valued function $\u^{(k)}$.
    \item By (b) one has 
    \begin{equation}
        \left\|\Diff \u^{(k+1)}(\x)\right\| \leq \expt_{\xi} \left[ 
        \left\|\Diff  \u^{(k)} (\x - \eta \nabla f(\x; \xi)) \right\| 
        \left\| \I - \eta \nabla^2 f(\x; \xi) \right\|
        \right].
    \end{equation}
    Since $f(\x; \xi)$ is convex and $H$-smooth w.r.t. $\x$, and $\eta \leq \frac{1}{H}$, one has $\sup_{\x, \xi} \left\| \I - \eta \nabla^2 f(\x; \xi) \right\| \leq 1$. Therefore
    \begin{equation}
        \sup_{\x} \left\|\Diff \u^{(k+1)}(\x)\right\|
        \leq
        \sup_{\x} \left\|\Diff \u^{(k)}(\x)\right\|.
    \end{equation}
    By definition of $\u^{(0)}(\x) = \Diff \u^{(0)}(\x) = \I$. Telescoping the above inequality yields (c).
    \item Taking twice derivatives w.r.t. $\x$ on both sides of (a) gives (for any $i$)
    \begin{small}
    \begin{equation}
    \nabla^2 u_i^{(k+1)}(\x) 
    =
    \expt_{\xi} \left[ (\I - \eta \nabla^2 f(\x; \xi)) \nabla^2 u_i^{(k)} (\x - \eta \nabla f(\x; \xi))
        (\I - \eta \nabla^2 f(\x; \xi)) 
        - \eta \nabla^3 f (\x; \xi) [\nabla u_i^{(k)}(\x - \eta \nabla f(\x; \xi))] \right]
    \end{equation}
    \end{small}
    Therefore
    \begin{equation}
        \sup_{\x}\| \Diff^2 \u^{(k+1)} (\x)\|_2 \leq  \sup_{\x}\| \Diff^2 \u^{(k)}(\x) \|_2 
        \sup_{\x, \xi} \| \I- \eta \nabla^2 f(\x; \xi) \|^2_2
        + \eta \cdot \left(  \sup_{\x, \xi}\|\nabla^3 f(\x; \xi)\|_2  \right)
        \cdot
        \left( \sup_{\x} \|\Diff \u^{(k)} (\x)\|_2 \right).
    \end{equation}
    Since $f(\x; \xi)$ is convex and $H$-smooth w.r.t. $\x$ and $\eta \leq \frac{1}{H}$, one has $\sup_{\x, \xi} \left\| \I - \eta \nabla^2 f(\x; \xi) \right\| \leq 1$. Also by (c), we arrive at
    \begin{equation}
        \sup_{\x} \| \Diff^2 \u^{(k+1)} (\x)\|_2
        \leq
        \sup_{\x} \| \Diff^2 \u^{(k)} (\x)\|_2
        +
        \eta Q
    \end{equation}
    Telescoping from $0$ to $k$ yields (d).
    \item By (a)
    \begin{align}
        & \left\| \u^{(k+1)}(\x) - \u^{(k)} (\x - \eta \nabla F(\x)) \right\|_2
        = \left\| \expt_{\xi} \left[ \u^{(k)}(\x - \eta \nabla f(\x; \xi) ) \right] - \u^{(k)} (\x - \eta \nabla F(\x)) \right\|_2 
        \tag{by (a)}
        \\
        = & \left\| \expt_{\xi} \left[ \u^{(k)}(\x - \eta \nabla f(\x; \xi) ) - \u^{(k)} (\x - \eta \nabla F(\x)) 
        - \Diff \u^{(k)} (\x - \eta \nabla F(\x)) \left(\eta \nabla f(\x; \xi) - \eta \nabla F(\x)  \right) \right] 
        \right\|_2 
        \tag{Since $\expt_{\xi} \nabla f(\x; \xi) = \nabla F(\x)$}
        \\
        \leq & \expt_{\xi} \left\| \u^{(k)}(\x - \eta \nabla f(\x; \xi) ) - \u^{(k)} (\x - \eta \nabla F(\x)) 
        - \Diff \u^{(k)} (\x - \eta \nabla F(\x)) \left(\eta \nabla f(\x; \xi) - \eta \nabla F(\x)  \right) 
        \right\|_2 
        \tag{By Jensen's inequality}
        \\
        \leq & \frac{1}{2} \sup_{\x} \|\Diff^2 \u^{(k)}(\x) \|_2 
        \expt_{\xi} \| \eta \nabla F(\x) - \eta \nabla f(\x; \xi) \|_2^2
        \tag{By Taylor's expansion}
        \\
        \leq & \frac{1}{2} \eta k Q \eta^2 \cdot \sigma^2
        =
        \frac{1}{2} \eta^3 k Q \sigma^2.
    \end{align}
\end{enumerate}
\end{proof}

We are now ready to finish the proof of \cref{thm:3o:bias:ub:complete}.
\begin{proof}[Proof of \cref{thm:3o:bias:ub:complete}]
    By \cref{lem:bias:3o:ub}(e), for any $j \in \{0, 1, \ldots, k\}$
    \begin{equation}
        \left\|\u^{(k-j)}(\z_{\gd}^{(j)}) - \u^{(k-j-1)} (\z_{\gd}^{(j+1)}) \right\|_2 \leq \frac{1}{2} \eta^3 (k-j - 1) Q \sigma^2
    \end{equation}
    Consequently
    \begin{equation}
        \left\| \expt \x_{\sgd}^{(k)} - \z_\gd^{(k)} \right\|
        =
        \left\| \u^{(k)}(\z_{\gd}^{(0)}) - \u^{(0)} (\z_\gd^{(k)}) \right\|
        \leq
        \sum_{j=0}^{k-1} \left\|\u^{(k-j)}(\z_{\gd}^{(j)}) - \u^{(k-j-1)} (\z_{\gd}^{(j+1)}) \right\|_2
        \leq
        \frac{1}{4} \eta^3 k^2 Q \sigma^2.
    \end{equation}
\end{proof}

\subsection{Proof of \cref{thm:3o:bias:lb:complete}: Lower Bound of Iterate Bias Under 3rd-order Smoothness}
\label{sec:proof:bias:3o:lb:complete}
Before we state the proof of \cref{thm:3o:bias:lb:complete}, let us first describe the following helper function used to construct the lower bound instance. Define
\begin{equation}
    \varphi(x) = \int_0^x \log (\cosh (x)) \diff x.
    \label{eq:3o:phi}
\end{equation}
In the following lemma, we show that this $\varphi(x)$ satisfies the following properties
\begin{lemma}
\label{lem:bias:3o:lb:helper}
The following properties hold for the $\varphi(x)$ defined in  \cref{eq:3o:phi}.
\begin{enumerate}[(a)]
    \item $\varphi'(x) = \log (\cosh (x))$. Therefore $\varphi'(x) \leq |x|$. In particular $\varphi(0) = 0$.
    \item $\varphi''(x) = \tanh (x)$. In particular $\varphi''(0) = 0$, $\lim_{x \to +\infty} \varphi''(x) = 1$, $\lim_{x \to -\infty} \varphi''(x) = -1$, and $\varphi''(x) \in [-1, 1]$ for any $x \in \reals$.
    \item $\varphi'''(x) = \sech^2 (x)$. In particular $\varphi'''(0) = 1$, $\lim_{x \to +\infty} \varphi'''(x) = 0$, $\lim_{x \to -\infty} \varphi'''(x) = 0$, and $\varphi'''(x) \in [0, 1]$ for any $x \in \reals$. 
    Also $\varphi'''(x) \geq \frac{1}{2}$ for any $x \in [-\frac{1}{2}, +\frac{1}{2}]$
    \item $\varphi''''(x) = -2 \sech^2(x) \tanh(x)$. In particular $\varphi''''(x) \in (-1, 1)$ for any $x \in \reals$.
\end{enumerate}
    
\end{lemma}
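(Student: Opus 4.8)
The plan is to verify the four items by direct differentiation of $\varphi(x) = \int_0^x \log(\cosh(s))\,\diff s$, supplemented by a few elementary inequalities. For (a), the fundamental theorem of calculus gives $\varphi'(x) = \log(\cosh(x))$, and $\varphi(0) = 0$ because the integral over $[0,0]$ vanishes; the bound $\varphi'(x) \le |x|$ follows from $\cosh(x) = \tfrac12(e^x + e^{-x}) \le e^{|x|}$, so $\log(\cosh(x)) \le |x|$. For (b), differentiating once more yields $\varphi''(x) = \sinh(x)/\cosh(x) = \tanh(x)$; the value $\varphi''(0) = 0$, the limits $\pm 1$ as $x \to \pm\infty$, and the range $[-1,1]$ are standard properties of $\tanh$.

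For (c), differentiating $\tanh$ gives $\varphi'''(x) = \sech^2(x) = 1 - \tanh^2(x)$, from which $\varphi'''(0) = 1$, the limits $0$ as $x \to \pm\infty$, and $\varphi'''(x) \in [0,1]$ are immediate. The only claim requiring a short argument is $\varphi'''(x) \ge \tfrac12$ on $[-\tfrac12,\tfrac12]$: since $\cosh$ is even and increasing on $[0,\infty)$, on this interval $\sech^2(x) = 1/\cosh^2(x) \ge 1/\cosh^2(\tfrac12)$, and $\cosh(\tfrac12) = \tfrac12(e^{1/2} + e^{-1/2}) < \sqrt 2$ (equivalently $e + e^{-1} < 6$, a one-line check), so $\sech^2(x) > \tfrac12$. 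For (d), the chain rule gives $\varphi''''(x) = -2\sech^2(x)\tanh(x)$. To bound this, set $t = \tanh(x) \in (-1,1)$, so $\sech^2(x) = 1 - t^2$ and $\varphi''''(x) = -2(t - t^3)$; maximizing $|t - t^3|$ over $t \in [-1,1]$ (interior critical points $t = \pm 1/\sqrt 3$) gives $|t - t^3| \le \tfrac{2}{3\sqrt 3} < \tfrac14$, hence $|\varphi''''(x)| < \tfrac12 < 1$, so $\varphi''''(x) \in (-1,1)$.

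There is no real obstacle here: this is a bookkeeping lemma whose sole role is to record the derivative formulas and uniform bounds for the building-block function $\varphi$ used in the third-order lower-bound construction (\cref{thm:3o:bias:lb:complete}). The two places worth a moment's care are the elementary estimate $\cosh(\tfrac12) < \sqrt2$ used in (c), and confirming in (d) that $|t - t^3|$ attains its maximum on $[-1,1]$ at the interior critical points $\pm 1/\sqrt 3$ rather than at the endpoints.
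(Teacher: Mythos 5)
Your proposal is correct and takes the same route as the paper, which simply asserts that all four items "follow by standard trigonometry analysis"; your write-up supplies the details that the paper omits. One arithmetic slip in (d): the maximum of $|t - t^3|$ over $t \in [-1,1]$ is $\frac{2}{3\sqrt{3}} \approx 0.385$, which is \emph{not} less than $\tfrac14$, so the intermediate claims "$|t-t^3| < \tfrac14$" and "$|\varphi''''(x)| < \tfrac12$" are false; the stated conclusion nonetheless survives, since $|\varphi''''(x)| \le \frac{4}{3\sqrt{3}} \approx 0.77 < 1$, which is all the lemma requires.
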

\begin{proof}[Proof of \cref{lem:bias:3o:lb:helper}]
    All results follow by standard trigonometry analysis.
\end{proof}

Next we establish the following lemma
\begin{lemma}
    \label{lem:bias:3o:lb:instance}
    Consider 
    \begin{equation}
        f(x; \xi) = \frac{3}{8}Hx^2 + \frac{H^3}{64Q^2} \varphi \left( \frac{4Q}{H} x \right) +  \xi, \qquad
        F(x) := \expt_{\xi \sim \mathcal{U}[-\sigma, \sigma]} f(x; \xi).
        \label{def:bias:3o:lb:instance}
    \end{equation}
    where $\varphi$ is defined in \cref{eq:3o:phi}. 
    Then
    \begin{enumerate}[(a)]
        \item $f''(x; \xi) = F''(x) = \frac{3}{4}H + \frac{1}{4}H \varphi'' \left( \frac{4Q}{H} x \right)$. Therefore $F''(x)  \in [\frac{1}{2}H, H]$ for any $x \in \reals$.
        \item $f'''(x; \xi) = F'''(x) = Q \varphi'''(\frac{4Q}{H} x )$. Therefore $F'''(x) \in [0,Q]$ for any $x \in \reals$. In particular $F'''(0) = Q$, and $F'''(x) \geq \frac{1}{2}Q$ for any $x \in [-\frac{H}{8Q}, +\frac{H}{8Q}]$.
        \item $f(x; \xi)$ satisfies \cref{asm:main,asm:third_order}.
    \end{enumerate}
\end{lemma}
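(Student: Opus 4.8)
The plan is to verify \cref{lem:bias:3o:lb:instance} by direct differentiation; the scaling constant $\frac{H^3}{64Q^2}$ and the dilation $x\mapsto\frac{4Q}{H}x$ in \eqref{def:bias:3o:lb:instance} are engineered precisely so that $f(\cdot;\xi)$ inherits the derivative bounds of $\varphi$ from \cref{lem:bias:3o:lb:helper} at the right scale. First I would compute $f',f'',f'''$ by the chain rule. Since the stochastic part of $f$ (the term $x\xi$) is affine in $x$, it contributes nothing to $f''$ or $f'''$; and each differentiation of $\varphi(\frac{4Q}{H}x)$ pulls down a factor $\frac{4Q}{H}$, so the $\varphi$-term contributes $\frac{H^3}{64Q^2}\cdot\frac{16Q^2}{H^2}\,\varphi''(\frac{4Q}{H}x)=\frac{H}{4}\varphi''(\frac{4Q}{H}x)$ to $f''$ and $\frac{H^3}{64Q^2}\cdot\frac{64Q^3}{H^3}\,\varphi'''(\frac{4Q}{H}x)=Q\,\varphi'''(\frac{4Q}{H}x)$ to $f'''$, while the quadratic term contributes $\frac34 H$ to $f''$ and $0$ to $f'''$. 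This yields the formulas in (a) and (b); since neither $f''$ nor $f'''$ depends on $\xi$, taking $\expt_\xi$ changes nothing, so $F''=f''$ and $F'''=f'''$.

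For the ranges I would simply cite \cref{lem:bias:3o:lb:helper}: part (b) gives $\varphi''\in[-1,1]$, hence $F''(x)=\frac34 H+\frac14 H\varphi''(\frac{4Q}{H}x)\in[\frac12 H,\,H]$; part (c) gives $\varphi'''\in[0,1]$ with $\varphi'''(0)=1$, hence $F'''(x)=Q\varphi'''(\frac{4Q}{H}x)\in[0,Q]$ and $F'''(0)=Q$; and for $|x|\le\frac{H}{8Q}$ one has $|\frac{4Q}{H}x|\le\frac12$, so the bound $\varphi'''\ge\frac12$ on $[-\frac12,\frac12]$ from \cref{lem:bias:3o:lb:helper}(c) gives $F'''(x)\ge\frac{Q}{2}$.

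For part (c): convexity and $H$-smoothness are immediate from $f''(x;\xi)\in[\frac12 H,H]$ (in one dimension, Lipschitzness of $f'(\cdot;\xi)$ is exactly $f''\le H$, and convexity is $f''\ge0$), and second/third-order differentiability holds since $\varphi$ (built from $\log\cosh$) is $C^\infty$. The $Q$-third-order-smoothness condition reads $|f''(x;\xi)-f''(y;\xi)|\le Q|x-y|$, i.e. $|f'''|\le Q$, which is part (b). For the moment conditions, the stochastic gradient error is $\nabla f(x;\xi)-\nabla F(x)=\xi$, and a one-line computation for $\xi\sim\mathcal{U}[-\sigma,\sigma]$ gives $\expt\xi^2=\frac{\sigma^2}{3}\le\sigma^2$ and $\expt\xi^4=\frac{\sigma^4}{5}\le\sigma^4$, verifying \cref{asm:main}(c) and \cref{asm:third_order}(b) respectively.

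I do not expect a genuine obstacle here — this is bookkeeping. The only points that need care are (i) tracking the normalization so that the $\varphi'''$ coefficient comes out to exactly $Q$ (and the $\varphi''$ coefficient to $\frac{H}{4}$, so that the curvature window is $[\frac12 H, H]$), and (ii) noting that (a)--(c) hold for all $H,Q,\sigma>0$ with no smallness assumption on $Q$ — the restriction $Q\le\frac{H^2}{12K\sigma}$ in \cref{thm:3o:bias:lb:complete} is not needed for this helper lemma and only enters later, to guarantee the SGD iterate stays in the window $[-\frac{H}{8Q},\frac{H}{8Q}]$ where $F'''\ge\frac{Q}{2}$.
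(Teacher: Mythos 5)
Your proposal is correct and follows exactly the paper's (very terse) argument: parts (a) and (b) are direct chain-rule computations combined with the derivative bounds on $\varphi$ from \cref{lem:bias:3o:lb:helper}, and part (c) follows from (a), (b), and the moment bounds of $\mathcal{U}[-\sigma,\sigma]$; your explicit coefficient tracking and the computations $\expt\xi^2=\sigma^2/3$, $\expt\xi^4=\sigma^4/5$ just fill in the bookkeeping the paper omits. You also correctly read the noise term as $x\xi$ (the displayed formula in \eqref{def:bias:3o:lb:instance} writes ``$+\xi$'', but the intended construction, as used throughout the surrounding lemmas, has gradient noise $\xi$), and your observation that no smallness condition on $Q$ is needed here is accurate.
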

\begin{proof}[Proof of \cref{lem:bias:3o:lb:instance}]
    (a,b) follow from \cref{lem:bias:3o:lb:helper}. (c) follows by (a, b) and the fact that the variance of $\mathcal{U}[-\sigma, +\sigma] \leq \sigma^2$.
\end{proof}

The following lemma studies the SGD trajectory on $f$ defined in \cref{def:bias:3o:lb:instance}.
\begin{lemma}
\label{lem:bias:3o:lb:enum}
Let $\{\x_{\sgd}^{(k)}\}_{k=0}^{\infty}$ be the SGD trajectory on the function $f$ defined in \cref{def:bias:3o:lb:instance}, with learning rate $\eta$, that is 
\begin{equation}
    x_{\sgd}^{(k+1)} \gets x_{\sgd}^{(k)} - \eta \cdot f'(x_{\sgd}^{(k)}; \xi^{(k)}), \qquad \xi^{(k)} \sim \mathcal{U}[-\sigma, +\sigma].
\end{equation}
Define
\begin{equation}
    u_k(x) := \expt [x_{\sgd}^{(k)} | x_{\sgd}^{(0)} = x].
\end{equation}
Then the following results hold
\begin{enumerate}[(a)] 
    \item  $u_{k+1}(x) = \expt_{\xi} \left[ u_k (x - \eta f'(x; \xi)) \right]$
    \item $u_{k+1}'(x) = \expt_{\xi} \left[(1 - \eta F''(x)) \cdot u_k'(x - \eta f'(x; \xi))  \right]$.
    \item $u_{k+1}''(x) = \expt_{\xi} \left[ (1 - \eta F''(x))^2 u_k''(x - \eta f'(x; \xi)) - \eta F'''(x) u_k'(x - \eta f'(x;\xi))\right]$. 
    \item For any $k$, $\inf_{x} \{u_k'(x) \} \geq (1- \eta H)^k$  holds.
    \item For any $k$, $\sup_{x} \{u_k''(x)\} \leq 0$.
    \item For any $x \in \reals$ and $k$, it is the case that $u''_{k+1}(x) \leq (1 - \eta H)^2 \expt_{\xi}[u_k''(x - \eta f'(x; \xi))] - \eta (1 - \eta H) F'''(x)$.
\end{enumerate}
\end{lemma}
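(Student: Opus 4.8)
The plan is to establish (a)--(f) in sequence, reusing the ``propagate the first SGD step through $u_k$'' strategy of \cref{lem:bias:3o:ub}, but specialized to the scalar instance of \cref{def:bias:3o:lb:instance}, whose decisive feature (from \cref{lem:bias:3o:lb:instance}(a)) is that the curvature of the stochastic objective is deterministic: $f''(x;\xi)=F''(x)$ for every $\xi$. Item (a) is just the Markov/time-homogeneity property of SGD: conditioning on the first update $x_{\sgd}^{(1)}=x-\eta f'(x;\xi^{(0)})$ and using that the conditional law of $x_{\sgd}^{(k+1)}$ given $x_{\sgd}^{(1)}=y$ matches that of $x_{\sgd}^{(k)}$ given $x_{\sgd}^{(0)}=y$, one gets $u_{k+1}(x)=\expt_\xi[u_k(x-\eta f'(x;\xi))]$, exactly as in \cref{lem:bias:3o:ub}(a).

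For (b) and (c) I would differentiate the identity in (a) once and twice in $x$, exchanging $\tfrac{d}{dx}$ with $\expt_\xi$. The chain rule produces the factor $1-\eta f''(x;\xi)$, which by \cref{lem:bias:3o:lb:instance}(a) is the deterministic $1-\eta F''(x)$ and can be pulled out of the expectation; this gives (b), and differentiating (b) once more and again substituting $f''=F''$ and $f'''=F'''$ gives (c). The interchange of derivative and expectation is legitimate because $f(\cdot;\xi)$ has first three derivatives bounded uniformly in $\xi$ (via \cref{lem:bias:3o:lb:helper} and \cref{lem:bias:3o:lb:instance}) and each $u_k$ has controlled first and second derivatives --- the latter being exactly what (d) and (e) supply, so these are really proved jointly with (b)--(c).

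Items (d) and (e) are inductions on $k$ driven by (b), (c) and the curvature bounds $F''(x)\in[\tfrac12 H,H]$, $F'''(x)\in[0,Q]$ of \cref{lem:bias:3o:lb:instance}. For (d): the base case is $u_0'(x)\equiv 1=(1-\eta H)^0$; since $\eta\le\tfrac1{2H}$ we have $0\le 1-\eta H\le 1-\eta F''(x)$, and the inductive hypothesis gives $u_k'\ge(1-\eta H)^k\ge 0$ pointwise, so (b) yields $u_{k+1}'(x)\ge(1-\eta H)\cdot(1-\eta H)^k$. For (e): the base case is $u_0''(x)\equiv 0$; in (c) the first term is $\le 0$ because $(1-\eta F''(x))^2\ge 0$ and $u_k''\le 0$ by hypothesis, and the second term $-\eta F'''(x)\,\expt_\xi[u_k'(x-\eta f'(x;\xi))]$ is $\le 0$ because $F'''\ge 0$ and $u_k'\ge 0$ by (d); hence $u_{k+1}''\le 0$. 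Finally (f) comes from inserting these sign facts into (c): since $u_k''\le 0$ and $(1-\eta H)^2\le(1-\eta F''(x))^2$ (using $0\le 1-\eta F''(x)\le 1$), the first term is at most $(1-\eta H)^2\,\expt_\xi[u_k''(x-\eta f'(x;\xi))]$; and since $-\eta F'''(x)\le 0$ while $\expt_\xi[u_k'(x-\eta f'(x;\xi))]\ge\inf_y u_k'(y)\ge(1-\eta H)^k$ by (d), the second term is at most $-\eta(1-\eta H)^{k}F'''(x)$, which is the asserted bound (the power of $1-\eta H$ is absorbed in the step-size range and constants that the later telescoping in \cref{thm:3o:bias:lb:complete} tolerates).

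The main obstacle I expect is not any single estimate but the bookkeeping: the quantities $u_k'$, $u_k''$ and $1-\eta F''(x)$ have mixed but definite signs ($u_k'$ positive and shrinking geometrically, $u_k''$ nonpositive, $1-\eta F''\in[0,1]$), and one must be careful that each coefficient bound is pushed in the direction that keeps the resulting inequality pointed the right way when multiplied against a sign-definite factor --- e.g.\ replacing $(1-\eta F''(x))^2$ by the \emph{smaller} $(1-\eta H)^2$ is correct only because it multiplies the nonpositive $u_k''$. Coupled with the need to legitimize differentiating under $\expt_\xi$ (which forces one to maintain the derivative bounds on $u_k$ inductively rather than citing them a posteriori), this sign/regularity bookkeeping is the part that requires care; the trigonometric facts about $\varphi$ themselves are routine.
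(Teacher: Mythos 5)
Your proof follows essentially the same route as the paper's: (a)--(c) by propagating the first SGD step through $u_k$ and differentiating under the expectation (using that $f''(x;\xi)=F''(x)$ is deterministic so the chain-rule factor pulls out), and (d)--(f) by induction on the sign and magnitude of $u_k'$ and $u_k''$ using $F''\in[\tfrac12 H,H]$ and $F'''\in[0,Q]$. Your remark about (f) is apt and in fact more careful than the paper's one-line ``holds by (c--e)'': the argument really yields $-\eta(1-\eta H)^{k}F'''(x)$ rather than the stated $-\eta(1-\eta H)F'''(x)$, and for $k\ge 2$ the stated form is the \emph{stronger} claim, which does not follow from $\inf_x u_k'\ge(1-\eta H)^k$; as you note, the weaker form changes only the exponents in the telescoped sums of the subsequent lemmas and leaves the final $\Omega(\eta^3 k^2 Q\sigma^2)$ bound intact up to constants.
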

\begin{proof}[Proof of \cref{lem:bias:3o:lb:enum}]
\begin{enumerate}[(a)]
    \item Proved in \cref{lem:bias:3o:ub}(a).
    \item Proved in \cref{lem:bias:3o:ub}(b).
    \item Holds by taking derivative with respect to $x$ on both sides of (b).
    \item Since $F''(x) \in [\frac{1}{2}H, H]$, by (b), we have
    \begin{equation}
        \inf_{x} \{u_{k+1}'(x)\} \leq (1 - \eta H) \inf_x \{ u_k'(x)\}.
    \end{equation}
    By definition of $u_0$ we have $u_0(x) \equiv x$ and thus $u_0'(x) \equiv 1$. Telescoping the above inequality gives (d).
    \item 
    We prove by induction. For $k = 0$ we have $u_0''(x) \equiv 0$ which clearly satisfies (e). Now assume (e) holds for the case of $k$, and we study the case of $k+1$.

    Since $F''(x) \in [\frac{1}{2}H, H]$ and $F'''(x) \geq 0$, by (c) and (d), we have
    \begin{equation}
        \sup_{x} \{ u_{k+1}''(x) \}
        \leq
        (1 - \eta H)^2 \sup_{x} \{ u_{k}''(x) \}
        -
        \eta  \inf_{x} \{F'''(x)\} (1 - \eta H)^k
        \leq 0,
    \end{equation}
    completing the induction.
    
    \item Holds by (c-e).
    \end{enumerate}
\end{proof}

We further have the following lemma.
\begin{lemma}
    \label{lem:bias:3o:lb:enum:2}
    Under the same setting of \cref{lem:bias:3o:lb:enum}, the following results hold.
    \begin{enumerate}[(a)]
    \item For any $x \in [-\frac{H}{8Q}, \frac{H}{8Q}]$ and $k$, 
    \begin{equation}
         u''_{k+1}(x) \leq (1 - \eta H)^2 \sup_{z \in [x - \eta\sigma, x+ \eta \sigma]} \{ u_k''(z) \} - \eta (1 - \eta H) \frac{Q}{2}.
    \end{equation}
    \item Assuming $Q \leq \frac{H}{24 \eta K \sigma}$, then for any $k < K$, the following inequality holds
        \begin{equation}
         \sup_{x \in [-\frac{H}{12Q}, +\frac{H}{12Q}]} u''_{k}(x) \leq - \sum_{j=0}^{k-1} (1-\eta H)^{2j
         +1} \cdot \frac{\eta Q}{2}.
        \end{equation}
    \item Assuming $\eta \leq \frac{1}{2H}$ and $Q \leq \frac{H}{24 \eta K \sigma}$, then for any $k < K$, for any $x \in [-\frac{H}{24Q}, +\frac{H}{24Q}]$, one has
        \begin{equation}
            u_{k+1}(x) \leq  u_k(x - \eta F'(x)) -  \frac{1}{12} \eta^3 \sigma^2 Q \sum_{j=0}^{k-1} (1 - \eta H)^{2j+1}.
        \end{equation}
    \end{enumerate}
\end{lemma}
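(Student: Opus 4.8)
The three parts are nested, so I would prove them in the order (a) $\to$ (b) $\to$ (c).

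\textbf{Part (a).} This is a specialization of \cref{lem:bias:3o:lb:enum}(f). For $x$ in the interval $[-\tfrac{H}{8Q},\tfrac{H}{8Q}]$, \cref{lem:bias:3o:lb:instance}(b) gives $F'''(x)\ge \tfrac{Q}{2}$, so the last term of (f) is at most $-\eta(1-\eta H)\tfrac{Q}{2}$. For the first term, write $x-\eta f'(x;\xi)=x-\eta F'(x)-\eta\xi$ with $\xi\in[-\sigma,\sigma]$; since the gradient step $y\mapsto y-\eta F'(y)$ is nondecreasing (its derivative is $1-\eta F''\ge 1-\eta H\ge 0$) and fixes the minimizer $0$ while $F'\ge 0$ on the positive axis, the point $x-\eta f'(x;\xi)$ never leaves $0$ and is no farther from $0$ than $x$, so it lies in the $\pm\eta\sigma$ neighborhood appearing in the statement. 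Bounding $\expt_\xi u_k''(\cdot)$ by its supremum over that neighborhood yields (a).

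\textbf{Part (b).} I would induct on $k$. The one subtlety is that (a) controls $u_{k+1}''$ on $[-\tfrac{H}{8Q},\tfrac{H}{8Q}]$ in terms of the supremum of $u_k''$ over a $\pm\eta\sigma$-enlarged interval, so the bound must be carried along a \emph{shrinking} family of symmetric intervals: set $I_j:=\big[-(\tfrac{H}{12Q}+(K-1-j)\eta\sigma),\ \tfrac{H}{12Q}+(K-1-j)\eta\sigma\big]$ and prove $\sup_{x\in I_j}u_j''(x)\le-\sum_{i=0}^{j-1}(1-\eta H)^{2i+1}\tfrac{\eta Q}{2}$ by induction on $j$. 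The hypothesis $Q\le\tfrac{H}{24\eta K\sigma}$ is exactly what makes $I_0\subseteq[-\tfrac{H}{8Q},\tfrac{H}{8Q}]$ (since $\tfrac{H}{12Q}+(K-1)\eta\sigma\le\tfrac{H}{12Q}+\tfrac{H}{24Q}=\tfrac{H}{8Q}$), so (a) is applicable at every step, and it also guarantees that the $\pm\eta\sigma$-enlargement of $I_{j+1}$ is contained in $I_j$. The base case is $u_0(x)=x$, hence $u_0''\equiv 0$; in the inductive step, plugging the hypothesis into (a) and reindexing the sum, $(1-\eta H)^2\sum_{i=0}^{j-1}(1-\eta H)^{2i+1}+(1-\eta H)=\sum_{i=0}^{j}(1-\eta H)^{2i+1}$, closes the induction. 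Since $[-\tfrac{H}{12Q},\tfrac{H}{12Q}]\subseteq I_k$ for every $k<K$, this gives (b).

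\textbf{Part (c).} Use \cref{lem:bias:3o:lb:enum}(a): $u_{k+1}(x)=\expt_\xi[u_k(x-\eta F'(x)-\eta\xi)]$. Taylor-expand $u_k$ around $x-\eta F'(x)$ with Lagrange remainder,
\[
u_k(x-\eta F'(x)-\eta\xi)=u_k(x-\eta F'(x))-\eta\xi\,u_k'(x-\eta F'(x))+\tfrac12\eta^2\xi^2 u_k''(\theta_\xi),
\]
where $\theta_\xi$ lies between $x-\eta F'(x)-\eta\xi$ and $x-\eta F'(x)$. Taking $\expt_\xi$ and using $\expt\xi=0$ kills the linear term, leaving $u_{k+1}(x)=u_k(x-\eta F'(x))+\tfrac12\eta^2\expt_\xi[\xi^2u_k''(\theta_\xi)]$. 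For $x\in[-\tfrac{H}{24Q},\tfrac{H}{24Q}]$, the contraction property of $y\mapsto y-\eta F'(y)$ together with $\eta\sigma\le\tfrac{H}{24Q}$ (from the hypothesis on $Q$) puts $\theta_\xi$ inside $[-\tfrac{H}{12Q},\tfrac{H}{12Q}]$, so part (b) gives $u_k''(\theta_\xi)\le-\sum_{j=0}^{k-1}(1-\eta H)^{2j+1}\tfrac{\eta Q}{2}\le 0$; because $\xi^2\ge 0$, this bound passes through the expectation, and with $\expt_{\xi\sim\mathcal U[-\sigma,\sigma]}[\xi^2]=\sigma^2/3$ the remainder term is at most $-\tfrac12\eta^2\cdot\tfrac{\sigma^2}{3}\cdot\sum_{j=0}^{k-1}(1-\eta H)^{2j+1}\tfrac{\eta Q}{2}=-\tfrac1{12}\eta^3\sigma^2Q\sum_{j=0}^{k-1}(1-\eta H)^{2j+1}$, which is (c).

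\textbf{Main obstacle.} The real work is the interval bookkeeping in (a)--(b): each application of (a) "spreads" the region of control by $\eta\sigma$, so the induction must be run on a nested family of intervals, and one must check that the outermost one still fits inside $[-\tfrac{H}{8Q},\tfrac{H}{8Q}]$ — this is precisely what forces the constant $\tfrac1{24}$ in the condition $Q\le\tfrac{H}{24\eta K\sigma}$. Secondary care is needed to confirm $u_k\in C^2$ (so the Taylor step in (c) and the derivative recursions of \cref{lem:bias:3o:lb:enum} are legitimate) and to track exactly where $\eta\le\tfrac1{2H}$ is invoked rather than the weaker $\eta\le\tfrac1H$ used earlier.
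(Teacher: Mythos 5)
Your proof follows the paper's argument essentially verbatim: part (a) is the specialization of \cref{lem:bias:3o:lb:enum}(f) using $F'''\ge Q/2$ on $[-\tfrac{H}{8Q},\tfrac{H}{8Q}]$; part (b) is the same telescoping over a nested family of symmetric intervals that spread by $\eta\sigma$ per step, with $Q\le\tfrac{H}{24\eta K\sigma}$ keeping the outermost interval inside $[-\tfrac{H}{8Q},\tfrac{H}{8Q}]$; and part (c) is the same second-order Taylor expansion in which the linear term vanishes by $\expt\xi=0$, $\expt[\xi^2]=\sigma^2/3$ produces the factor $\tfrac{1}{12}$, and part (b) is applied on the $\eta\sigma$-neighborhood of $x-\eta F'(x)$. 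One small caveat: your claim in (a) that $x-\eta f'(x;\xi)$ lies in $[x-\eta\sigma,x+\eta\sigma]$ is not literally true pointwise (these points are centered at $x-\eta F'(x)$, not at $x$), but the same imprecision appears in the paper's own statement of (a), and both you and the paper only ever invoke (a) through suprema over symmetric intervals around $0$, where the monotone contraction $y\mapsto y-\eta F'(y)$ makes the $\eta\sigma$ enlargement legitimate.
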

\begin{proof}[Proof of \cref{lem:bias:3o:lb:enum:2}]
    \begin{enumerate}[(a)]
        \item   Holds by (f) and the fact that $|f'(x;\xi) - F'(x)| \leq \eta \sigma$ and $\inf_{x \in [-\frac{H}{8Q}, \frac{H}{8Q}]} F'''(x) \geq \frac{Q}{2}$.
        \item Since $ \frac{H}{12Q} + \eta \sigma K \leq \frac{H}{8Q}$ (due to the assumption that $Q \leq \frac{H}{24 \eta K \sigma}$), we can repeatedly apply (a) for $K$ times. Therefore
        \begin{align}
                & \sup_{x \in [-\frac{H}{12Q}, +\frac{H}{12Q}]} \{u''_{k}(x) \} 
                \\
            \leq &
            (1 - \eta H)^2 \sup_{x \in [-\frac{H}{12Q} - \eta \sigma, \frac{H}{12Q} + \eta \sigma]}
             \{u''_{k-1}(x) \} - \eta (1-\eta H)\frac{Q}{2}
             \\
             \leq & (1 - \eta H)^{2k} \sup_{x \in [-\frac{H}{12Q} - \eta k \sigma, \frac{H}{12Q} + \eta k \sigma]}
             \{u''_{0}(x) \} - \eta \sum_{j=0}^{k-1} (1-\eta H)^{2j} (1-\eta H)\frac{Q}{2}.
        \end{align}
        Plugging in $u_0''(x) \equiv 0$ gives (b).
        \item By \cref{lem:bias:3o:lb:enum}(a),
        \begin{align}
            & u_{k+1}(x) - u_k(x - \eta F'(x)) = \expt_{\xi} \left[ u_k (x - \eta f'(x; \xi)) - u_k(x - \eta F'(x))\right] 
            \\
        \leq & \expt_{\xi} \left[ - \eta \cdot u_k'(x - \eta F'(x)) \cdot (f'(x; \xi) - F'(x))
        + \frac{1}{2} \sup_{z \in [x - \eta F'(x) - \eta \sigma, x - \eta F'(x) + \eta \sigma]} u''(z) \cdot \eta^2 (f'(x; \xi) - F'(x))^2 \right] 
        \\
        \leq & \frac{1}{6} \eta^2 \sigma^2 \sup_{z \in [x - \eta F'(x) - \eta \sigma, x - \eta F'(x) + \eta \sigma]} u''(z)
        \end{align}
    \end{enumerate}
    Since $x \in [-\frac{H}{24Q}, \frac{H}{24Q}]$, we know that $x - \eta F'(x) \in [-\frac{H}{24Q}, \frac{H}{24Q}]$ by construction of $F$. Since $Q \leq \frac{H}{24 \eta K \sigma}$ we know that 
    $[x - \eta F'(x) - \eta \sigma, x - \eta F'(x) + \eta \sigma] \subset [-\frac{H}{12Q}, \frac{H}{12Q}]$. Therefore (b) is applicable, which suggets
    \begin{equation}
        u_{k+1}(x) - u_k(x - \eta F'(x))  \leq - \frac{1}{12} \eta^3 \sigma^2 Q \sum_{j=0}^{k-1} (1 - \eta H)^{2j+1}.
    \end{equation}
\end{proof}

We are ready to finish the proof of \cref{thm:3o:bias:lb:complete} now.
\begin{proof}[Proof of \cref{thm:3o:bias:lb:complete}]
    For $k=1$ the bound trivially holds. From now on assume $k \geq 2$.

    Consider the one-dimensional instance $f$ defined in \cref{def:bias:3o:lb:instance}. The optimum of $F = \expt_{\xi} f(x; \xi)$ is clearly 0.
    We will actually show a stronger result that \cref{eq:3o:bias:lb:complete} holds for any $x \in [-\frac{H}{24Q}, +\frac{H}{24Q}]$, in addition to 0.

    Since $\eta \leq \frac{1}{2H}$, for any $x \in [-\frac{H}{24Q}, +\frac{H}{24Q}]$, one has $x - \eta F'(x) \in  [-\frac{H}{24Q}, +\frac{H}{24Q}]$. Therefore one can repeatedly apply \cref{lem:bias:3o:lb:enum:2}(c), which yields
    \begin{equation}
        \expt [x_{\sgd}^{(k)}] - z_{\gd}^{(k)} \leq - \frac{1}{12} \eta^3 \sigma^2 Q \sum_{j=0}^{k-1} \sum_{i=0}^{j-1} (1 - \eta H)^{2i+1}.
    \end{equation}
    If $k \leq \frac{1}{\eta H}$ then
    \begin{equation}
        \sum_{j=1}^{k-1} \sum_{i=0}^{j-1} (1 - \eta H)^{2i+1}
        \geq
        k (k-1) (1 - \eta H)^{2k-3}
        \geq
        k (k-1) \left(1 - \frac{1}{k} \right)^{2k-3}
        \geq
        \frac{1}{e^2} k (k-1)
        \geq 
        \frac{k(k-1)}{16}.
    \end{equation}
    If $k > \frac{1}{\eta H}$ then
    \begin{equation}
        \sum_{j=1}^{k-1} \sum_{i=0}^{j-1} (1 - \eta H)^{2i+1} 
        =
        \frac{(1 - \eta H) ((1- \eta H)^{2k} + \eta H (2- \eta H) k - 1)}{\eta^2 H^2 (2 - \eta H)^2}
        \geq
        \frac{ \frac{3}{2} \eta H k - 1 }{8 \eta^2 H^2}
        \geq
        \frac{\eta H k}{16 \eta^2 H^2}
        \geq
        \frac{k - 1}{16 \eta H},
    \end{equation}
    where in the second from the last inequality we used the asssumption that $\eta H \leq \frac{1}{2}$.
    In either case we have
    \begin{equation}
        \sum_{j=1}^{k-1} \sum_{i=0}^{j-1} (1 - \eta H)^{2i+1} 
        \geq 
        \min \left\{ \frac{k - 1}{16 \eta H}, \frac{1}{16} k(k-1) \right\},
    \end{equation}
    and hence
    \begin{equation}
        \expt [x_{\sgd}^{(k)}] - z_{\gd}^{(k)} \leq
        - 0.005 \eta^3 \sigma^2 Q (k-1) \min \left\{ \frac{1}{ \eta H}, k \right\}.
    \end{equation}

\end{proof}
\ifdefined\conf\section{\large PROOF OF THEOREMS~\ref{lb:homo} AND \ref{thm:lb}: LOWER BOUNDS OF \fedavg UNDER 2ND-ORDER SMOOTHNESS}\else\section{Proof of \cref{lb:homo,thm:lb}: Lower Bounds of \fedavg Under 2nd-Order Smoothness}\fi
\label{lb:proofs}

The main objective of this section is to prove the following Theorem, which implies both \cref{lb:homo,thm:lb}.
\begin{theorem}
\label{full_lb}
For any $K \geq 2, R, M$, $H$, $D$, $\sigma$, and $\zeta_*$, there exist $f(\x; \xi)$ and distributions $\{\mathcal{D}_m\}$, each satisfying Assumption~\ref{asm:main}, and together satisfying Assumption~\ref{asm:hetero}, such that for some initialization $\x^{(0, 0)}$ with $\|\x^{(0, 0)} - \x^{\star}\|_2 < D$, the final iterate of \fedavg with \textbf{any} step size satisfies:
\begin{equation}
\mathbb{E}\left[F(\x^{(R, 0)})\right] - F(\x^{\star}) \geq
    \Omega\left(\frac{HD^2}{KR} + \frac{\sigma D}{\sqrt{MKR}}  + \min\left\{\frac{\sigma D}{\sqrt{KR}},  \frac{H^\frac{1}{3} \sigma^{\frac{2}{3}} D^{\frac{4}{3}}}{K^{\frac{1}{3}} R^{\frac{2}{3}}} \right\}
    + \min\left\{\frac{\zeta_*^2}{H}, \frac{H^{\frac{1}{3}}\zeta_*^{\frac{2}{3}}D^{\frac{4}{3}}}{R^{\frac{2}{3}}}\right\}\right).
\end{equation}
In particular, if $\zeta_* = 0$, then it is possible to choose all distributions $\mathcal{D}_m$ to be the same distribution $\mathcal{D}$.
\end{theorem}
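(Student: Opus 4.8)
The plan is to use a direct-sum construction: a single function on a handful of orthogonal coordinate blocks, one block per term in the bound. The key elementary fact is that \fedavg decouples across orthogonal blocks — both the client SGD step in \cref{alg:fedavg} and the server averaging act coordinate-wise — so if $f(\x;\xi) = \sum_j f_j(\x_j;\xi_j)$ with independent noise sources and each client's noise distribution a product over blocks, then the \fedavg iterate restricted to block $j$ is exactly the \fedavg iterate for $f_j$ alone run with the same step size $\eta$, and $\mathbb{E}[F(\x^{(R,0)})] - F(\x^\star) = \sum_j \big(\mathbb{E}[F_j(\x_j^{(R,0)})] - \min F_j\big)$. It therefore suffices to assemble blocks whose individual lower bounds, summed, give the four claimed terms, while jointly respecting \cref{asm:main} (globally $H$-smooth, $\sigma^2$-bounded covariance), \cref{asm:hetero} (globally $\zeta_*$-heterogeneous at the optimum), and a total distance to optimum below $D$.

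For the first two terms, $\frac{HD^2}{KR}$ and $\frac{\sigma D}{\sqrt{MKR}}$, we reuse the corresponding coordinates of the $\mathbb{R}^3$ construction of \citet{Woodworth.Patel.ea-ICML20}: the first is a worst-case instance for constant-step-size gradient descent, on which \fedavg (homogeneous, noiseless, averaging identical trajectories) is exactly $KR$ steps of GD and hence pays $\Omega(HD^2/(KR))$ for every $\eta$; the second is an information-theoretic hard instance on which any algorithm observing the $MKR$ stochastic gradients \fedavg queries must pay $\Omega(\sigma D/\sqrt{MKR})$ — both uniformly in $\eta$. The third, bias-driven term is where we improve: in place of Woodworth et al.'s bias coordinate we put a suitably domain/curvature-rescaled copy of the piecewise-quadratic $\psi$ from \eqref{def:pqf} with Gaussian gradient noise, and invoke \cref{shortlem:steplb}, which gives $\mathbb{E}[F_3(\x_3^{(R,0)})] - \min F_3 \gtrsim \eta\sigma^2 \min\{R(\eta HK)^3,\,1,\,\eta HK\}$ in the rescaled parameters. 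It is exactly the $(\eta HK)^3$ inside this minimum — the discrete shadow of the sharp $\eta^2 k^{3/2}$ iterate bias of \cref{thm:2o:bias:lb} — that yields the $K^{-1/3}$ dependence rather than the $K^{-2/3}$ of \citet{Woodworth.Patel.ea-ICML20}. For the heterogeneous statement we append a fourth block built from the two-group objective $f^{(3)}$ of \cref{lem:hetero_short}, with clients split evenly so that \cref{asm:hetero} holds with parameter $\Theta(\zeta_*)$; \cref{lem:hetero_short} then gives $|\x_4^{(R,0)}| \gtrsim \frac1H\min\{1,\eta HK,(\eta HK)^2R\}\zeta_*$, hence $\mathbb{E}[F_4(\x_4^{(R,0)})] - \min F_4 \gtrsim \frac{\zeta_*^2}{H}\min\{1,\eta HK,(\eta HK)^2R\}^2$ after rescaling.

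What remains — and what I expect to be the crux — is the scaling bookkeeping that makes a \emph{single} function hard for \emph{every} step size at once. Taken in isolation, the block-$3$ and block-$4$ bounds are increasing in $\eta$ and vanish as $\eta\to0$, so they contribute nothing for tiny step sizes; but the block-$1$ bound $\Omega(HD^2/(KR))$ holds for all $\eta$ and dominates there, while for large $\eta$ block $1$ diverges. One chooses the domain/curvature rescalings of blocks $3$ and $4$ — roughly, balancing each gadget's ``mixing plateau'' (its $\eta\sigma^2$ resp.\ $\zeta_*^2/H$ cap) against the $HD^2$ scale so that the gadget's crossover step size lands where its bound equals the claimed term — and distributes the distance budget as $\Theta(D)$ per block with $\sum_j\|\x_j^{(0,0)}-\x_j^\star\|^2 < D^2$, so that for every $\eta>0$ the sum of the four block bounds is $\Omega$ of the full four-term expression: the two ``$\min$'' structures in the statement are precisely the two regimes (small-$\eta$ capped by the $HD^2$-scale of the construction vs.\ intermediate-$\eta$ giving the genuine bias resp.\ heterogeneity rate) of blocks $3$ and $4$. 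Verifying that each rescaled gadget still satisfies \cref{asm:main} with the global constants, and block $4$ satisfies \cref{asm:hetero} with the global $\zeta_*$, is routine but must be tracked carefully. Finally, setting $\zeta_* = 0$ deletes block $4$ and lets all $\mathcal{D}_m$ coincide with a common $\mathcal{D}$, recovering \cref{lb:homo}.
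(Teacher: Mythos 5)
Your proposal follows essentially the same route as the paper's proof: a coordinate-wise direct sum combining the piecewise-quadratic bias gadget (analyzed via \cref{lem:steplb_1}), a weakly convex quadratic $\mu x^2$ that penalizes small $\eta$, a strongly curved coordinate forcing $\eta \lesssim 1/H$, the two-group heterogeneous gadget of \cref{lem:hetero_short}, and the generic $\sigma D/\sqrt{MKR}$ statistical bound appended via a max. The regime-balancing over $\eta$ that you flag as the remaining crux is exactly the case analysis the paper carries out in \cref{sec:pf_lb}.
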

\begin{remark}
Note that $\zeta_* = 0$ does not necessarily imply that the distributions are homogeneous. Hence for this theorem to imply  \cref{lb:homo}, we ensure that in the case when $\zeta_* = 0$, all of the distributions $\mathcal{D}_m$ are equal.
\end{remark}

In \cref{sec:overview}, we provide a high level overview of the proof techniques of this theorem in the homogeneous case. The main technical lemmas from this overview are proved in Sections~\ref{sec:pf_exp_drift} and \ref{sec:pf_steplb_1}. In \cref{sec:hetero}, we provide the additional technical lemmas for the heterogeneous case. Finally, we finish the proof of Theorem~\ref{full_lb} in \cref{sec:pf_lb}.

\subsection{Proof Overview of Theorem~\ref{full_lb} in Homogeneous Case}\label{sec:overview}
To prove the lower bound in Theorem~\ref{full_lb} for the homogenous case, we construct the following function.
\begin{equation}\label{lb:f}
    f(\x; \xi) = f_1(x_1; \xi) + f_2(x_2) + f_3(x_3),
\end{equation}
where
\begin{equation}
    f_1(x; \xi) = \frac{L}{2}\psi(x) + \xi x, \quad 
    f_2(x) = \mu x^2, \quad
    f_3(x) = Hx^2,
\end{equation}
and 
where $$\psi(x) := \begin{cases}
    \frac{1}{2}x^2 & x < 0 \\
    x^2 & x \geq 0
    \end{cases},$$ $\xi \sim \mathcal{N}(0, \sigma^2),$ $L = H/6$, 
    and $\mu$ is some function of $K, R, D, H$, and $\sigma$.
    
We will analyze the convergence of \fedavg starting at $\x^{(0, 0)} = \left(0, D/2, D/2\right)$ such that the initial distance to optimum $\|\x^{(0, 0)} - \x^{\star}\|_2 < D$. Observe that the only noise is Gaussian noise in the gradient of the first coordinate.
    
The sole objective of including $f_3$ is to ensure that we can limit our analysis to cases with small step size, $\eta$. Indeed, by standard arguments, if $\eta \geq \frac{1}{H}$, then the third coordinate of $\x$ would diverge.

The role of the function $f_2$ is to provide a direction (the $x_2$-axis) in which $f$ is only slightly convex. Indeed, this term requires that $\eta$ is sufficiently large for convergence, which we formalize in Lemma~\ref{f2}. 

The novelty in our analysis stems from our sharp analysis of the bias $\mathbb{E}[x^{(R, 0)}_1]$ from running SGD on the piecewise quadratic function, pictured in Figure~\ref{fig:step}.

\begin{figure}
    \centering
    \begin{tabular}{ccc}
             \includegraphics[width=5cm]{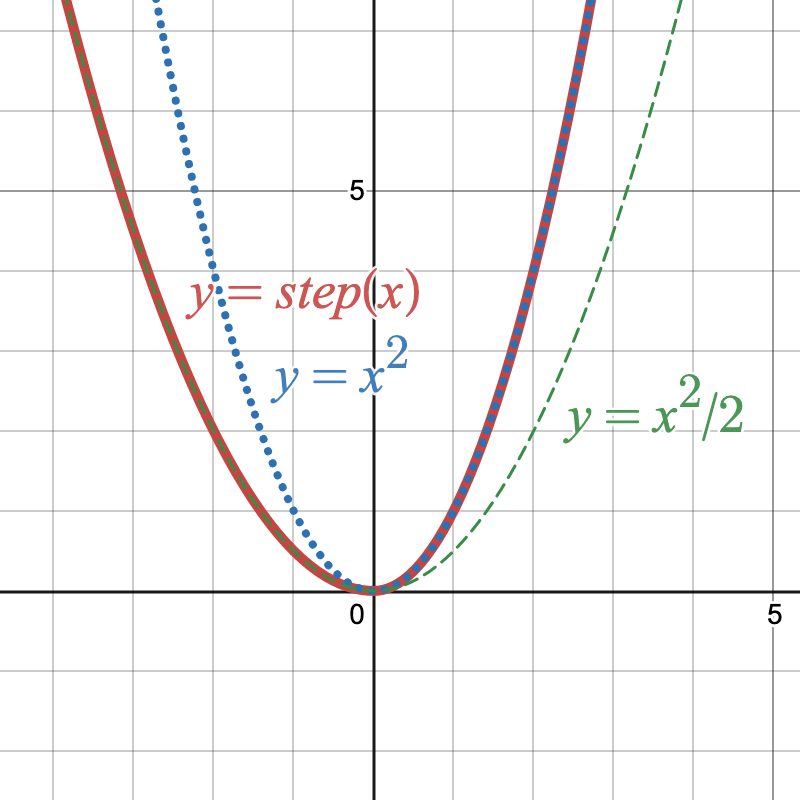} &     \includegraphics[width=5cm]{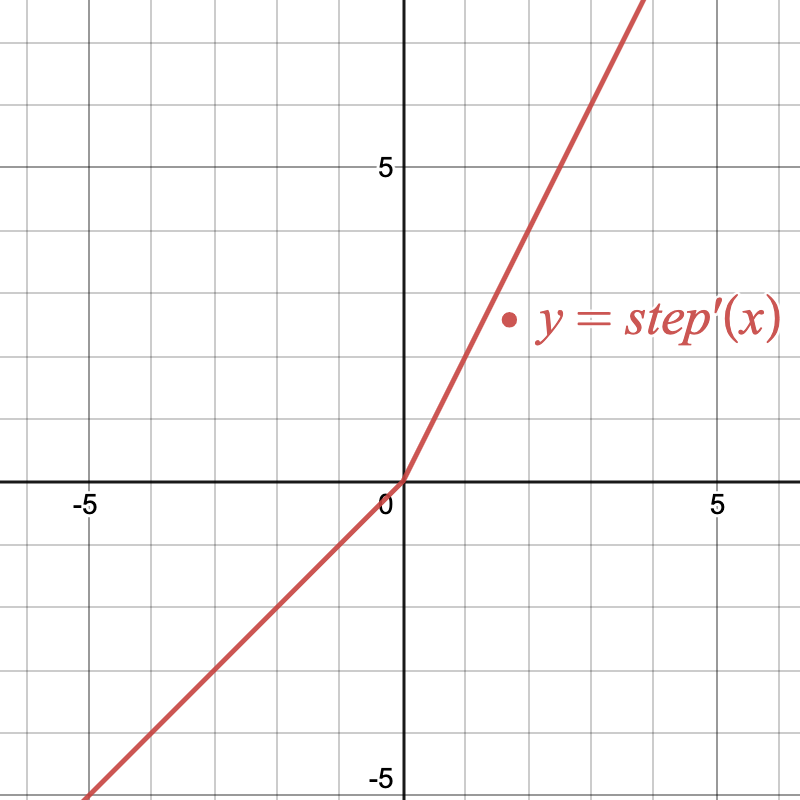} &     \includegraphics[width=5cm]{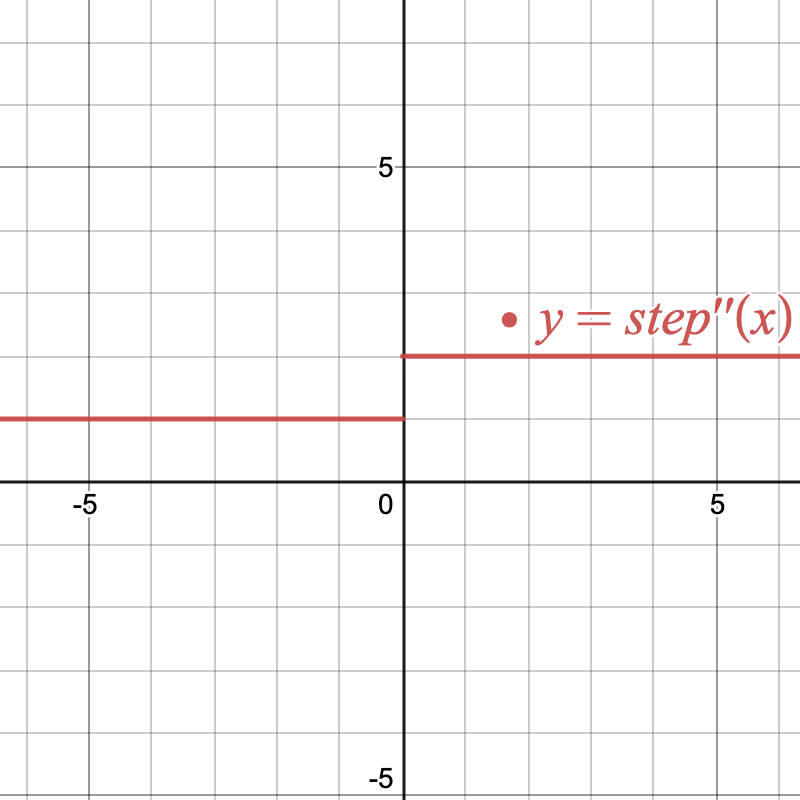} \\
        (a) & (b) & (c)  
    \end{tabular}
    \caption{The piecewise quadratic function and its first two derivatives.}
    \label{fig:step}
\end{figure}

Our main technique is comparing the iterates $x^{(0)}, x^{(1)}, \cdots$ from running SGD on the piecewise quadratic function $f_1(x; \xi)$ to the iterates $\{y^{(k)}\}$ and $\{z^{(k)}\}$ obtained from running SGD on the quadratic functions 
\begin{equation}
    f_{\ell}(x; \xi) := \frac{L}{4}x^2 + \xi x,
    \quad
    \text{and}
    \quad
    f_{u}(x; \xi) := \frac{L}{2}x^2 + \xi x,
\end{equation}
respectively. We will show in Lemma~\ref{mc_dominance} that if $x^{(0)} = y^{(0)} = z^{(0)}$, then the iterate $x^{(k)}$ is first-order stochastically dominated by both $y^{(k)}$ and $z^{(k)}$ (see Definition~\ref{def:stoc_dom} for the formal definition of first-order stochastic dominance). Fortunately, the iterates $y^{(k)}$ and $z^{(k)}$ are easy to analyze. A straightforward calculation in Lemma~\ref{closed} yields the closed form solutions
\begin{equation}
    y^{(k)} \sim \alpha_y^ky^{(0)} + \mathcal{N}(0, \sigma_y^2),
\quad
\text{and}
\quad
    z^{(k)} \sim \alpha_z^kz^{(0)} + \mathcal{N}(0, \sigma_z^2),
\end{equation}
where 
\begin{equation}\label{var_defs}
\alpha_y := 1 - \eta L/2 , \quad 
\alpha_z := 1 - \eta L, \quad
\sigma_y^2 := \frac{\eta^2 \sigma^2 (1 - \alpha_y^k)}{1 - \alpha_y}, \quad
\sigma_z^2 := \frac{\eta^2 \sigma^2 (1 - \alpha_z^k)}{1 - \alpha_z}.
\end{equation}

We can then bound the expectation of $x^{(k)}$ in the following way:
\begin{equation}
    \mathbb{E}[x^{(k)}] = -\int_{c = -\infty}^0{\Pr[x^{(k)} \leq c]} +  \int_{c = 0}^\infty{\Pr[x^{(k)} \geq c]} \leq -\int_{c = -\infty}^0{\Pr[y^{(k)} \leq c]} +  \int_{c = 0}^\infty{\Pr[z^{(k)} \geq c]}.
\end{equation}
This decomposition means that the higher variance of $y^{(k)}$ to contributes to the negative term, while the relatively lower variance of $z^{(k)}$ contributes to the positive term. 

To give some intuition, consider the case where $x^{(0)} = y^{(0)} = z^{(0)} = 0$, and $\eta L k \ll 1$. Here we have $y^{(k)} \sim \mathcal{N}(0, \sigma_y^2)$ and $z^{(k)} \sim \mathcal{N}(0, \sigma_z^2)$.
Plugging in the cdf of a Gaussian, we obtain 
\begin{equation}
\begin{split}
-\int_{c = -\infty}^0{\Pr[y^{(k)} \leq c]} +  \int_{c = 0}^\infty{\Pr[z^{(k)} \geq c]}  = -\frac{\sigma_y}{\sqrt{2\pi}} + \frac{\sigma_z}{\sqrt{2\pi}}.
\end{split}
\end{equation}

Using the fact that $\eta L k \ll 1$, we can approximate 
$$\sigma_y^2 \approx \frac{\eta^2\sigma^2(\eta L k/2 + (\eta L k)^2/8)}{\eta L/2} = \eta^2\sigma^2k(1 - \eta L k/4),$$ and  $$\sigma_z^2 \approx \frac{\eta^2\sigma^2(\eta L k/2 + (\eta L k)^2/2)}{\eta L} = \eta^2\sigma^2k(1 - \eta L k/2),$$ such that 
\begin{equation}
    \mathbb{E}[x^{(k)}] \leq -\frac{\sigma_y}{\sqrt{2\pi}} + \frac{\sigma_z}{\sqrt{2\pi}} \approx \frac{\eta \sigma \sqrt{k}}{\sqrt{2\pi}}\left(\frac{\eta L k}{8}\right).
\end{equation}
When $x^{(0)}$ is non-zero but sufficiently small, we can prove that this same negative iterate bias occurs in the expectation $\mathbb{E}[x^{(k)}] - x^{(0)}$. With slightly more effort, we can show that so long as the \em expectation \em   $\mathbb{E}[x^{(0)}]$ is sufficiently small, there is a negative drift in $\mathbb{E}[x^{(k)}] - \mathbb{E}[x^{(0)}]$. 

We formalize these observations in the following lemma. Note that this lemma also captures the case when $\eta L k \geq 1$, where $\sigma_y - \sigma_z = \Theta\left(\sigma \eta^{1/2}L^{-1/2}\right)$.

\begin{lemma}\label{full_lem:expected_step}
There exist universal constants $c_1$ and $c_2$ such that the following holds. Suppose we run SGD with step size $\eta$ on the function $f(x; \xi) = \frac{L}{2}\psi(x) + \xi x$ for $\xi \sim \mathcal{N}(0, \sigma^2)$ with step size $\eta \leq \frac{1}{6L}$, starting at a possibly random iterate $x^{(0)}$. If
$$-\sqrt{c_1}\frac{\sigma_y}{\alpha_y^k} \leq \mathbb{E}[x^{(0)}] \leq 0,$$ then for any $k$,
\begin{equation}
    \mathbb{E}[x^{(k)}] \leq \left(1 - \eta L/2\right)^k\mathbb{E}[x^{(0)}] - \frac{1}{2} c_2\sigma \eta^{1/2} L^{-1/2}\min(1, \eta L k)^{3/2},
\end{equation}
where $\sigma_y$ and $\alpha_y$ are defined in \cref{var_defs}.
In particular, we can choose $c_1 = 0.0005$ and $c_2 = 0.002$.
\end{lemma}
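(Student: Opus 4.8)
The plan is to reduce the statement to the case of a \emph{deterministic} initialization by a concavity argument, and then to an explicit Gaussian calculation using the first-order stochastic dominance comparisons (Lemmas~\ref{mc_dominance} and \ref{closed}).

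\textbf{Reduction to deterministic $x^{(0)}$.} Set $u_k(a) := \mathbb{E}[x^{(k)}\mid x^{(0)}=a]$, which obeys $u_{k+1}(a) = \mathbb{E}_{\xi}[u_k(h(a)-\eta\xi)]$ with $h(a) := a - \tfrac{\eta L}{2}\psi'(a)$. Since $\psi'$ has slope $1$ on $\{a<0\}$ and $2$ on $\{a\ge 0\}$, $h$ has slope $1-\tfrac{\eta L}{2}$ and $1-\eta L$ on those two pieces; as $\eta\le\tfrac1{6L}$ both slopes are positive (so $h$ is increasing), and the slope \emph{decreases} at the kink (so $h$ is concave). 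An easy induction (base case $u_0(a)=a$) then shows that each $u_k$ is concave and nondecreasing, since a concave nondecreasing function composed with a concave function is concave, and taking the expectation over the additive noise preserves concavity. By Jensen's inequality, $\mathbb{E}[x^{(k)}] = \mathbb{E}[u_k(x^{(0)})] \le u_k(\mathbb{E}[x^{(0)}])$, so it suffices to prove the claim with $x^{(0)}\equiv a_0 := \mathbb{E}[x^{(0)}] \in [-\sqrt{c_1}\,\sigma_y/\alpha_y^k,\,0]$ deterministic.

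\textbf{Gaussian sandwich.} Running from the deterministic point $a_0$, couple the SGD run on $\psi$ with the runs on the quadratic surrogates $f_\ell, f_u$ from the same point; Lemma~\ref{mc_dominance} gives that $x^{(k)}$ is first-order stochastically dominated by both $Y\sim\mathcal N(\alpha_y^k a_0,\sigma_y^2)$ and $Z\sim\mathcal N(\alpha_z^k a_0,\sigma_z^2)$, whose laws come from Lemma~\ref{closed}. Using $\mathbb E[x^{(k)}] = \int_0^\infty\Pr[x^{(k)}\ge c]\,dc - \int_{-\infty}^0\Pr[x^{(k)}\le c]\,dc$ and dominating the first integral by $Z$ and the second by $Y$,
\[
\mathbb E[x^{(k)}] \;\le\; \mathbb E[Z_+] - \mathbb E[Y_-] \;=\; \alpha_y^k a_0 + \big(\mathbb E[Z_+] - \mathbb E[Y_+]\big),
\]
using $\mathbb E[Y_-]=\mathbb E[Y_+]-\mathbb E[Y]$. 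With $\mathbb E[\mathcal N(\mu,s^2)_+] = s\rho(\mu/s)$, $\rho(t)=\phi(t)+t\Phi(t)$ (so $\rho(0)=1/\sqrt{2\pi}$ and $0\le\rho'=\Phi\le1$), it remains to show $\mathbb E[Z_+]-\mathbb E[Y_+] \le -\tfrac12 c_2\,\sigma\eta^{1/2}L^{-1/2}\min(1,\eta Lk)^{3/2}$. At $a_0=0$ this difference equals $(\sigma_z-\sigma_y)/\sqrt{2\pi}$, so the heart of the estimate is a lower bound on $\sigma_y-\sigma_z = (\sigma_y^2-\sigma_z^2)/(\sigma_y+\sigma_z)$. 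From \eqref{var_defs} with $1-\alpha_y=\tfrac{\eta L}2$, $1-\alpha_z=\eta L$ one gets $\sigma_y^2-\sigma_z^2 = \tfrac{\eta\sigma^2}L(1+\alpha_z^k-2\alpha_y^k)$ and $\sigma_y+\sigma_z\le 2\sigma_y \le 2\sigma\sqrt{2\eta/L}\,\min(1,\eta Lk)^{1/2}$. Since $\alpha_y=\tfrac{1+\alpha_z}2$, convexity of $t\mapsto t^k$ on $[0,1]$ yields $1+\alpha_z^k-2\alpha_y^k\ge 0$, and a quantitative second-difference estimate (Taylor expanding $(1-\eta L)^k$ and $(1-\tfrac{\eta L}2)^k$, whose leading term is $\tfrac14 k(k-1)(\eta L)^2$) gives $1+\alpha_z^k-2\alpha_y^k \gtrsim \min(1,(\eta Lk)^2)$. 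Hence $\sigma_y-\sigma_z \gtrsim \sigma\eta^{1/2}L^{-1/2}\min(1,\eta Lk)^{3/2}$, equivalently $\sigma_y-\sigma_z\gtrsim\sigma_y\min(1,\eta Lk)$; this fixes an admissible $c_2$.

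\textbf{Passing to $a_0<0$ (the main obstacle).} The remaining work is to show that moving $a_0$ off $0$, into $[-\sqrt{c_1}\sigma_y/\alpha_y^k,0]$, costs at most half the bound — which is exactly what the factor $\tfrac12$ in the statement anticipates. Differentiating, $\tfrac{d}{da_0}(\mathbb E[Z_+]-\mathbb E[Y_+]) = \alpha_z^k\Phi(\alpha_z^k a_0/\sigma_z) - \alpha_y^k\Phi(\alpha_y^k a_0/\sigma_y)$. Substituting $s=\alpha_y^k t$, the increment relative to $a_0=0$ is $\int_0^{\alpha_y^k a_0}\big[(\alpha_z/\alpha_y)^k\Phi\big((\alpha_z/\alpha_y)^k\tfrac{\sigma_y}{\sigma_z}\tfrac{s}{\sigma_y}\big) - \Phi\big(\tfrac{s}{\sigma_y}\big)\big]\,ds$ over $|s|\le\sqrt{c_1}\sigma_y$; using $1-(\alpha_z/\alpha_y)^k\le\min(1,\eta Lk)$, $|1-(\alpha_z/\alpha_y)^k\sigma_y/\sigma_z|=O(\min(1,\eta Lk))$, and the $1/\sqrt{2\pi}$-Lipschitzness of $\Phi$, the integrand is $O(\min(1,\eta Lk))$, so the increment is $O(\sqrt{c_1})\,\sigma_y\min(1,\eta Lk)$. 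Since the value at $a_0=0$ is $-\Theta(\sigma_y\min(1,\eta Lk))$, taking $c_1$ a sufficiently small absolute constant makes the increment at most half of it in magnitude, which yields the claimed bound. The delicate part is bookkeeping these estimates uniformly across the regimes $\eta Lk\ll1$ and $\eta Lk\gtrsim1$ (keeping the powers of $\alpha_y^k$ and of $\min(1,\eta Lk)$ aligned) so that only a factor of $2$ is lost; no idea beyond the comparison inequalities and the elementary variance estimate is needed.
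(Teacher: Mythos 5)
Your proposal is correct in outline, and it reaches the result by a genuinely different route in two places. The core engine is the same as the paper's: couple the piecewise-quadratic SGD to the two quadratic surrogates $f_\ell, f_u$, invoke the stochastic-dominance Lemma~\ref{mc_dominance} and the closed form of Lemma~\ref{closed}, split $\mathbb{E}[x^{(k)}]$ into a positive tail controlled by $Z$ and a negative tail controlled by $Y$, and drive everything off the variance gap $\sigma_y - \sigma_z \gtrsim \sigma_y \min(1,\eta L k)$ (your route to that gap, via $\sigma_y^2-\sigma_z^2 = \tfrac{\eta\sigma^2}{L}(1+\alpha_z^k-2\alpha_y^k)$ and convexity of $t\mapsto t^k$, is a cleaner organization of the paper's Claim~\ref{sigma_diff}). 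Where you genuinely diverge: (i) the paper handles a random $x^{(0)}$ by splitting on whether $\Pr[|x^{(0)}|\le B]\ge \tfrac12$ and, in the diffuse case, extracting drift from the mismatch between the contraction factors $(1-\eta L)^k$ and $(1-\eta L/2)^k$ on the mass above $B$ (their Lemma~\ref{lemma:lblarge} plus the two-case computation); you instead observe that $a\mapsto \mathbb{E}[x^{(k)}\mid x^{(0)}=a]$ is concave and nondecreasing (the one-step map $h$ is increasing and concave because its slope drops at the kink, and concavity is preserved by composition with a nondecreasing concave function and by averaging over additive noise), so Jensen reduces everything to a deterministic start at $\mathbb{E}[x^{(0)}]$ — this is shorter, exploits the structure of the construction more directly, and entirely eliminates the paper's case analysis; (ii) for the shift off the origin you integrate $\tfrac{d}{da_0}(\mathbb{E}[Z_+]-\mathbb{E}[Y_+])$ and bound the integrand by Lipschitzness of $\Phi$, where the paper instead compares the explicit tail/mean expressions of the two Gaussians directly (their Claims~\ref{claim:bias1} and~\ref{claim:bias2}); the two are comparable in effort and both show the perturbation costs only $O(\sqrt{c_1})$ times the main term. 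Two small caveats, neither fatal: your quantitative bound $1+\alpha_z^k-2\alpha_y^k\gtrsim\min(1,(\eta Lk)^2)$ vanishes at $k=1$, so like the paper's own proof the argument implicitly requires $k\ge 2$ (the lemma is only ever applied with $K\ge2$); and you establish existence of admissible universal constants rather than verifying the particular values $c_1=0.0005$, $c_2=0.002$, which would require carrying explicit constants through the two regimes $\eta Lk\le \tfrac12$ and $\eta Lk\ge\tfrac12$ as the paper does.
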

The proof of \cref{full_lem:expected_step} is relegated to \cref{sec:pf_exp_drift}.

Using Lemma~\ref{full_lem:expected_step} inductively, we can show that the bias accumulates over many rounds of \fedavg. Loosely speaking, the bias grows linearly with the number of rounds $R$ until the force of the gradient exceeds the drift from the difference $\sigma_y -\sigma_z$. 

\begin{lemma}\label{lem:steplb_1}
Suppose we run \fedavg for $R$ rounds with $K$ local steps and step size $\eta$ on the 1-dimensional function $f(x; \xi) = \frac{L}{2}\psi(x) + \xi x$ for $\xi \sim \mathcal{N}(0, \sigma^2)$. There exists a universal constant $c$ such that for $\eta \leq \frac{1}{6L}$, if $x^{(0, 0)} = 0$, then 
$$\mathbb{E}[x^{(R, 0)}] \leq -c\frac{\sqrt{\eta}\sigma}{\sqrt{L}}\min\left\{R(\eta L K)^{3/2}, 1, (\eta L K)^{1/2}\right\} $$
In particular, we can choose $c = 0.0005$. 
\end{lemma}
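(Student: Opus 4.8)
The plan is to obtain Lemma~\ref{lem:steplb_1} by iterating the per-round drift estimate of Lemma~\ref{full_lem:expected_step}. Write $\beta := (1-\eta L/2)^K = \alpha_y^K$ and $\gamma := \tfrac12 c_2\sigma\eta^{1/2}L^{-1/2}\min(1,\eta L K)^{3/2}$, and let $\sigma_y$ be as in \cref{var_defs} with $k=K$, so $\sigma_y^2 = 2\eta\sigma^2(1-\beta)/L$. Instantiating Lemma~\ref{full_lem:expected_step} with $k=K$ shows that, \emph{as long as} $\mathbb{E}[x^{(r,0)}]$ lies in the admissible window $[-T,0]$ with $T := \sqrt{c_1}\,\sigma_y/\beta$, we get the affine recursion $\mathbb{E}[x^{(r+1,0)}] \le \beta\,\mathbb{E}[x^{(r,0)}] - \gamma$. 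Granting that the window is never left and starting from $\mathbb{E}[x^{(0,0)}]=0$, unrolling for $R$ rounds yields $\mathbb{E}[x^{(R,0)}] \le -\gamma\sum_{j=0}^{R-1}\beta^j = -\gamma\,\frac{1-\beta^R}{1-\beta}$, and the rest is bookkeeping.

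The step I expect to require the most care is checking that $\mathbb{E}[x^{(r,0)}]$ stays in $[-T,0]$, so that the recursion is legitimately applicable every round. The upper endpoint is never crossed, since the recursion itself forces $\mathbb{E}[x^{(r+1,0)}]\le-\gamma<0$ (and at $r=0$ the mean is $0$). For the lower endpoint the key algebraic fact is that the equilibrium $-\gamma/(1-\beta)$ of the affine recursion lies above $-T$: substituting $\sigma_y^2 = 2\eta\sigma^2(1-\beta)/L$, the inequality $\gamma/(1-\beta)\le T$ is equivalent to $\tfrac12 c_2\,\beta\,\min(1,\eta L K)^{3/2}\le\sqrt{2c_1}\,(1-\beta)^{3/2}$, which holds for the stated constants $c_1=5\cdot10^{-4}$, $c_2=2\cdot10^{-3}$ once one uses $\eta L/2\le\tfrac1{12}$ to get $1-\beta=\Theta(\min(1,\eta L K))$ and $\beta\le1$. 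Because the one-round expected-iterate map $x_0\mapsto\mathbb{E}[x^{(K)}_{\sgd}\mid x^{(0)}_{\sgd}=x_0]$ is monotone nondecreasing — it is the $K$-fold composition of the operator $\phi\mapsto\big(x\mapsto\mathbb{E}_\xi[\phi(x-\eta f'(x;\xi))]\big)$ applied to the identity, and this operator preserves monotonicity since each one-step map $x\mapsto x-\eta f'(x;\xi)$ has derivative $1-\eta\tfrac L2\psi''(x)>0$ for $\eta\le\tfrac1{6L}$ — and because it contracts toward an equilibrium lying inside $[-T,0]$, the mean cannot overshoot past $-T$. The delicacy is that in the critical regime $\eta L K\lesssim1$ the window boundary $-T$ and the drift equilibrium are of the \emph{same} order, so the precise numerical values of $c_1,c_2$ are genuinely used; equivalently, one may invoke the matching lower bound on the iterate bias implicit in the proof of Lemma~\ref{full_lem:expected_step}.

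Finally I would convert $-\gamma\,\frac{1-\beta^R}{1-\beta}$ into the claimed three-term minimum by elementary estimates on $\beta$. If $\eta L K\le1$, then by Bernoulli and $e^{-s}\le1-s/2$ on $[0,\tfrac12]$ one has $\beta\in[\,1-\tfrac12\eta L K,\ e^{-\eta L K/2}\,]$, hence $1-\beta=\Theta(\eta L K)$ and $\sum_{j<R}\beta^j=\Theta(\min\{R,(\eta L K)^{-1}\})$; combined with $\gamma=\Theta(\sigma\eta^{1/2}L^{-1/2}(\eta L K)^{3/2})$ this gives $\mathbb{E}[x^{(R,0)}]\le-\Omega\big(\sigma\eta^{1/2}L^{-1/2}\min\{R(\eta L K)^{3/2},(\eta L K)^{1/2}\}\big)$, which equals the claimed minimum in this range because $(\eta L K)^{1/2}\le1$ there (the $R(\eta L K)^{3/2}$ term being the "growth" phase $R\eta L K\lesssim1$ and the $(\eta L K)^{1/2}$ term the saturated phase). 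If instead $\eta L K>1$, then $\beta\le e^{-1/2}$, so $1-\beta=\Theta(1)$, $\sum_{j<R}\beta^j=\Theta(1)$, and $\gamma=\Theta(\sigma\eta^{1/2}L^{-1/2})$, giving $\mathbb{E}[x^{(R,0)}]\le-\Omega(\sigma\eta^{1/2}L^{-1/2})$, which matches $\min\{R(\eta L K)^{3/2},1,(\eta L K)^{1/2}\}=1$ in this range. Assembling the cases and tracking the absolute constants produces the bound with the universal constant $c$ of the statement.
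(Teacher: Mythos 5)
Your overall strategy --- iterate the per-round drift bound of Lemma~\ref{full_lem:expected_step} and unroll the resulting affine recursion --- is the same as the paper's, and your final conversion of $-\gamma\,\frac{1-\beta^R}{1-\beta}$ into the three-term minimum is correct. The gap is exactly in the step you flag as delicate: the claim that $\mathbb{E}[x^{(r,0)}]$ never leaves the window $[-T,0]$ with $T=\sqrt{c_1}\,\sigma_y/\alpha_y^K$. Your argument rests on the equilibrium $-\gamma/(1-\beta)$ of the affine recursion lying above $-T$, but that recursion is only an \emph{upper} bound on $\mathbb{E}[x^{(r+1,0)}]$; it says nothing about how far \emph{below} $-\gamma/(1-\beta)$ the true mean may fall. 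To exclude overshoot you would need a matching \emph{lower} bound on the per-round drift, and the only one available (via the bias upper bound of Theorem~\ref{thm:2o:bias:ub:complete}, with constant $4$) allows a downward drift per round of order $4\eta^2K^{3/2}L\sigma$, which is $8/c_2=4000$ times $\gamma$. The mean could therefore settle near $-16\,\eta\sqrt{K}\sigma$, far below $-T\approx -\sqrt{c_1}\,\eta\sqrt{K}\sigma\approx-0.022\,\eta\sqrt{K}\sigma$, so with the stated $c_1,c_2$ the window cannot be shown to be preserved (and plausibly is not). Monotonicity of $g(x_0):=\mathbb{E}[x_{\sgd}^{(K)}\mid x^{(0)}_{\sgd}=x_0]$ does not rescue this for a second reason: $x^{(r,0)}$ is a random variable and Lemma~\ref{full_lem:expected_step} constrains only its mean, so $g(-T)\ge-T$ plus monotonicity would not yield $\mathbb{E}[g(X)]\ge-T$ from $\mathbb{E}[X]\ge-T$ without convexity or distributional information.

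The repair, which is what the paper does, is to observe that you do not need the iterate to stay in the window. Argue by induction on $r$ and split into two cases. If $\mathbb{E}[x^{(r,0)}]\in[-T,0]$, apply Lemma~\ref{full_lem:expected_step} as you propose. If $\mathbb{E}[x^{(r,0)}]<-T$, then stochastic dominance against SGD on the quadratic $\frac{L}{4}x^2$ (Lemmas~\ref{mc_dominance} and~\ref{closed}) gives $\mathbb{E}[x^{(r+1,0)}]\le\alpha_y^K\,\mathbb{E}[x^{(r,0)}]\le-\sqrt{c_1}\,\sigma_y\le-\sqrt{c_1}\,q^{1/2}\sqrt{\eta}\sigma/\sqrt{L}$ with $q=\min(1,\eta LK)$, which already meets the target since $\sqrt{c_1}\ge c=c_2/4$ and the claimed minimum is at most $q^{1/2}$. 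Either branch advances the induction, so exiting the window downward is harmless rather than something to be excluded.
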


The proof of \cref{lem:steplb_1} is relegated to \cref{sec:pf_steplb_1}.

Now consider the \fedavg procedure on the 3-dimensional objective $f$ defined in \cref{lb:f}.
Since the trajectories of coordinates of $\x = (x_1, x_2, x_3)$ are completely decoupled, we prove Theorem~\ref{full_lb} by combining Lemma~\ref{lem:steplb_1} with bounds that relate the choice of $\eta$ to the suboptimality in the coordinates $x_2$ and $x_3$. This yields the first term, $O\left(\min\left(\frac{\sigma D}{\sqrt{KR}}, \frac{\sigma^{2/3}H^{1/3}D^{4/3}}{K^{1/3}R^{2/3}}\right)\right)$, in Theorem~\ref{full_lb}. To obtain the final term, we recall that any first order method which uses at most $MKR$ stochastic gradients has a lower bound of $O(\sigma D/\sqrt{MKR})$ in expected function error. It follows immediately that the function error of \fedavg is at least the maximum these two terms, which is on the same order as their sum. 
The details of the proof of \cref{full_lb} are provided in \cref{sec:pf_lb}.

\subsection{Proof of Lemma~\ref{full_lem:expected_step}}\label{sec:pf_exp_drift}

As outlined in the proof overview, we will compare the iterates of SGD on the piecewise quadratic function to the iterates of SGD on quadratic functions. The following lemma gives a closed form for the SGD iterates of a quadratic function. 
\begin{lemma}[Distribution of SGD on Quadratic objectives]\label{closed}
Let $x^{(0)} \cdots x^{(t)}$ be the iterates of SGD on the stochastic function $f(x; \xi) = \frac{L}{2}x^2 + \xi x$ with step size $\eta$ and $\xi \sim \mathcal{N}(0, \sigma^2)$. Then $x^{(t)} \sim (1 - \eta L)^tx^{(0)} + \mathcal{N}\left(0, \frac{(1 - (1 - \eta L)^t)\eta^2\sigma^2}{\eta L}\right)$.
\end{lemma}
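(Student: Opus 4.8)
The plan is to unroll the SGD recursion into an explicit affine function of the noise variables and then read off the law of $x^{(t)}$ directly. Since $\nabla f(x;\xi) = Lx + \xi$, one SGD step reads $x^{(k+1)} = x^{(k)} - \eta\bigl(Lx^{(k)} + \xi^{(k)}\bigr) = (1-\eta L)\,x^{(k)} - \eta\,\xi^{(k)}$. Writing $\alpha := 1-\eta L$ and iterating this from $k=0$ to $t-1$ (a one-line induction), I would obtain the closed form
\[
  x^{(t)} \;=\; \alpha^{t}\,x^{(0)} \;-\; \eta\sum_{j=0}^{t-1}\alpha^{\,t-1-j}\,\xi^{(j)}.
\]

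Next I would identify the law. Conditioning on $x^{(0)}$, the variables $\xi^{(0)},\dots,\xi^{(t-1)}$ are i.i.d.\ $\mathcal{N}(0,\sigma^2)$ and independent of $x^{(0)}$, so the right-hand side above is a fixed affine image of a Gaussian vector and hence $x^{(t)}\mid x^{(0)}$ is Gaussian. Its conditional mean is $\alpha^{t}x^{(0)}$ since each $\xi^{(j)}$ is centered, and by independence of the $\xi^{(j)}$ its conditional variance is $\eta^2\sigma^2\sum_{j=0}^{t-1}\alpha^{2(t-1-j)} = \eta^2\sigma^2\sum_{i=0}^{t-1}\alpha^{2i}$. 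Summing this geometric series and substituting $\alpha = 1-\eta L$ then yields the closed-form variance claimed in the lemma.

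Equivalently, one may skip the explicit unrolling and induct directly on $t$: if $x^{(k)}\mid x^{(0)}\sim\mathcal{N}(m,v)$, then $x^{(k+1)} = \alpha x^{(k)} - \eta\xi^{(k)}$ is $\alpha$ times this Gaussian plus an independent $\mathcal{N}(0,\eta^2\sigma^2)$, hence $\mathcal{N}\bigl(\alpha m,\,\alpha^2 v + \eta^2\sigma^2\bigr)$; the base case is the degenerate $\mathcal{N}(x^{(0)},0)$, and solving the scalar recursions for the mean and variance finishes the proof. There is no genuine obstacle here — this is a routine computation. The only points deserving a moment's care are (i) invoking independence of $\xi^{(k)}$ from $x^{(k)}$ when accumulating the variance, and (ii) keeping the exponents in the geometric sum straight (the contraction enters squared, as $\alpha^{2i}$), since it is exactly this that later separates the variances $\sigma_y^2$ and $\sigma_z^2$ used in the lower-bound argument.
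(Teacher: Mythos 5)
Your approach is exactly the paper's: unroll the affine recursion $x^{(k+1)}=(1-\eta L)x^{(k)}-\eta\xi^{(k)}$, observe that $x^{(t)}$ is an affine image of i.i.d.\ Gaussians, and read off mean and variance. Up to the geometric sum your derivation is correct --- and in fact more careful than the paper's, since you rightly insist that the contraction enters the variance squared. But that is precisely where your last sentence fails: with $\alpha=1-\eta L$ you get
\begin{equation*}
\mathrm{Var}\bigl(x^{(t)}\mid x^{(0)}\bigr)=\eta^2\sigma^2\sum_{i=0}^{t-1}\alpha^{2i}=\frac{\eta^2\sigma^2\bigl(1-(1-\eta L)^{2t}\bigr)}{1-(1-\eta L)^2}=\frac{\eta^2\sigma^2\bigl(1-(1-\eta L)^{2t}\bigr)}{\eta L\,(2-\eta L)},
\end{equation*}
which is \emph{not} the expression $\frac{\eta^2\sigma^2(1-(1-\eta L)^{t})}{\eta L}$ stated in the lemma; the two coincide only when $t=1$ (one is $\frac{(1-\alpha^t)(1+\alpha^t)}{(1-\alpha)(1+\alpha)}$, the other $\frac{1-\alpha^t}{1-\alpha}$). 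So the claim that summing the geometric series ``yields the closed-form variance claimed in the lemma'' is the one step that does not go through.

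The source of the mismatch is the paper's own proof, which passes from the term $(1-\eta L)^{t-i}\eta\,\xi^{(i)}$ to a variance contribution of $(1-\eta L)^{t-i}\eta^2\sigma^2$ --- squaring the $\eta$ but not the contraction factor --- and thereby lands on the unsquared geometric sum $\sum_i\alpha^{t-i}=\frac{1-\alpha^t}{1-\alpha}$. Your computation is the correct one, and if you carry it through honestly you should either state the variance as $\frac{\eta^2\sigma^2(1-(1-\eta L)^{2t})}{\eta L(2-\eta L)}$ or note explicitly that the lemma's formula is only an approximation of it. The two expressions agree to leading order (both are $\approx \eta^2\sigma^2 t$ when $\eta L t\ll 1$ and $\Theta(\eta\sigma^2/L)$ when $\eta L t\gtrsim 1$), so the downstream comparison of $\sigma_y$ and $\sigma_z$ survives with modified constants, but as an exact distributional identity the statement you were asked to prove is not what your (correct) calculation delivers.
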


\begin{proof}[Proof of \cref{closed}]
Let $\xi^{(i)} \sim \mathcal{N}(0, \sigma^2)$, such that
\begin{equation}
    x^{(i + 1)} = x^{(i)} - \eta (L x^{(i)} + \xi^{(i)}) = (1 - \eta L)x^{(i)} - \eta \xi^{(i)}.
\end{equation}
Recursing, we have
\begin{equation}
\begin{split}
    x^{(t)} &= (1 - \eta L)^tx^{(0)} + \sum_{i = 1}^{t} \left[(1 - \eta L)^{t - i}\eta^2\xi^{(i)} \right]\\
    &\sim (1 - \eta L)^tx^{(0)} + \mathcal{N}\left(0, \sum_{i = 1}^{t}(1 - \eta L)^{t - i}
    \eta^2\sigma^2\right)\\
    &\sim (1 - \eta L)^tx^{(0)} + \mathcal{N}\left(0, \frac{(1 - (1 - \eta L)^t)\eta^2\sigma^2}{\eta L}\right).
\end{split}
\end{equation}
\end{proof}

We introduce the following definition to facilitate the proof.
\begin{definition}\label{def:stoc_dom}
A random variable $Y$ \textbf{first-order stochastically dominates} a random variable $X$ if for all values $c \in \reals$,
\begin{equation}
    \Pr[Y \geq c] \geq \Pr[X \geq c].
\end{equation}
\end{definition}

We will use the following lemma.
\begin{lemma}[Markov Chain Stochastic Dominance]\label{mc_dominance}
    Let $X_t$ and $Y_t$ be time-homogeneous discrete-time Markov chains on $\mathbb{R}$ such that for any $z$, the random variable $Y_1 | Y_0 = z$ first-order stochastically dominates $X_1|X_0 = z$. Then for any $c$ and any $t > 0$, $Y_t| Y_0 = c$ first-order stochastically dominates $X_t| X_0 = c$. 
\end{lemma}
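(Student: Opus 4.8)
The plan is to prove the conclusion by building an explicit coupling of the two chains on one probability space along which $Y_t\ge X_t$ holds pointwise, from which first-order stochastic dominance of $Y_t\mid Y_0=c$ over $X_t\mid X_0=c$ is immediate. Write $F^X_z$ and $Q^X_z$ for the CDF and quantile function of $\mathrm{law}(X_1\mid X_0=z)$, and $F^Y_z,Q^Y_z$ similarly; the hypothesis says $F^X_z\ge F^Y_z$ pointwise for every $z$, equivalently $Q^X_z\le Q^Y_z$. Beyond this, the one ingredient I would use is that both kernels are \emph{stochastically monotone}: $z\le z'$ implies $F^X_z\ge F^X_{z'}$, and likewise for $Y$. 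This is a property I would verify for the chains to which the lemma is applied — they are the SGD recursions $w\mapsto w-\eta\nabla f(w;\xi)$ (coordinatewise, on the convex $H$-smooth separable objective $f$ with $\eta\le 1/H$), so each update map is non-decreasing in the current iterate for every fixed noise draw, and this monotonicity passes to the kernels obtained by averaging over the noise; the same holds for the comparison quadratics $f_\ell,f_u$.

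For the coupling I would use the quantile construction iterated in time: set $X_0=Y_0=c$, draw i.i.d.\ uniforms $U_1,U_2,\dots$, and recursively put $X_t:=Q^X_{X_{t-1}}(U_t)$ and $Y_t:=Q^Y_{Y_{t-1}}(U_t)$; these sequences have the correct marginal laws. I would then show $X_t\le Y_t$ almost surely by induction on $t$. The base case is $c\le c$. For the step, on the event $\{X_{t-1}\le Y_{t-1}\}$ stochastic monotonicity of the $X$-kernel gives $Q^X_{X_{t-1}}(U_t)\le Q^X_{Y_{t-1}}(U_t)$, and the one-step hypothesis at the point $Y_{t-1}$ gives $Q^X_{Y_{t-1}}(U_t)\le Q^Y_{Y_{t-1}}(U_t)$; chaining these yields $X_t\le Y_t$. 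Since $X_t\le Y_t$ a.s., $\Pr[Y_t\ge c']\ge\Pr[X_t\ge c']$ for all $c'$, which is exactly the desired dominance.

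The main obstacle is precisely the point in the induction where the one-step comparison must be invoked at $Y_{t-1}$ rather than at $X_{t-1}$: bridging that gap forces the use of stochastic monotonicity of (at least) one of the kernels, so I would be careful to state this property and check it — it is the same convexity-and-smoothness fact already used to establish the lemma's hypothesis, and costs nothing extra. Everything else is bookkeeping. If one prefers to avoid quantile functions, the same induction can be run through two elementary order facts — (i) mixing a stochastically-ordered family of laws against a common mixing measure preserves the order, and (ii) pushing a stochastically-ordered pair of laws through a stochastically monotone kernel preserves the order — applied to $\mathrm{law}(X_t\mid X_0=c)$ and $\mathrm{law}(Y_t\mid Y_0=c)$ and iterated over $t$; I would use whichever is cleaner in the final write-up.
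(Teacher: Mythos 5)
Your coupling argument is correct and takes a genuinely different route from the paper. The paper inducts at the level of laws: with $p_x,p_y$ the time-$(t-1)$ laws, it chains $\Pr[Y_t\geq c]=\Pr[Y_1\geq c\mid Y_0\sim p_y]\geq\Pr[X_1\geq c\mid X_0\sim p_y]\geq\Pr[X_1\geq c\mid X_0\sim p_x]=\Pr[X_t\geq c]$, using the stated one-step dominance for the first inequality and stochastic monotonicity of the $X$-kernel for the second --- i.e.\ exactly the ``order-fact'' alternative you sketch at the end. You instead iterate the quantile coupling through shared uniforms and push through an almost-sure inequality $X_t\leq Y_t$; the two inputs are identical, but the coupling presentation is arguably the more transparent one and generalizes without change to partially ordered state spaces. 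Your most valuable remark is the caveat: as written, the lemma is false without the unstated hypothesis that at least one of the one-step kernels is stochastically monotone (a three-state deterministic counterexample is easy to construct). The paper's proof does rely on this property --- it is the justification of its second inequality, where the text ``$f(z):=\Pr[X_1\leq c\mid X_0=z]$ is increasing in $z$'' should in fact read $\Pr[X_1\geq c\mid X_0=z]$ --- but it is not listed among the lemma's hypotheses. You are right that it must be verified where the lemma is applied, and indeed for the SGD chains in the paper the update $w\mapsto w-\eta\nabla f(w;\xi)$ is nondecreasing in $w$ for each fixed $\xi$ once $f(\cdot;\xi)$ is convex and $H$-smooth with $\eta\leq 1/H$, so the required monotonicity holds and the lemma applies cleanly.
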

\begin{proof}[Proof of \cref{mc_dominance}]
We prove this by induction on $t$. Note that it holds trivially for $t = 0$. 

Let $p_x$ be the distribution of $X_{t - 1} | X_0 = c$ and $p_y$ be the distribution of $Y_{t - 1} | Y_0 = c$, such that by our inductive hypothesis, $p_y$ stochastically dominates $p_x$.

Then for any $c$, we have
\begin{equation}
\begin{split}
    \Pr[Y_t \geq c| Y_{t - 1} \sim p_y] &= \Pr[Y_1 \geq c| Y_{0} \sim p_y]\\
    &\geq \Pr[X_1 \geq c| X_{0} \sim p_y]\\
    &\geq \Pr[X_1 \geq c | X_{0} \sim p_x] \\
    &= \Pr[X_t \geq c | X_{t-1} \sim p_x].    
\end{split}
\end{equation}
Here the equalities follow from the fact that $X_t$ and $Y_t$ are Markov chains. The first inequality follows from our assumption that for any $z$, $Y_1 | Y_0 = z$ first-order stochastically dominates $X_1|X_0 = z$. The second inequality follows from the fact that the function $f(z) := \Pr[X_1 \leq c| X_0 = z]$ is increasing in $z$, so its expectation is at least as large under $z \sim p_y$ as under $z \sim p_x$. 
\end{proof}

Lemma~\ref{full_lem:expected_step}, the more general form of Lemma~\ref{lemma:expected_step},  gives the bias of SGD on the piecewise quadratic function if the expectation of the starting iterate is bounded. The most important part of its proof is the following weaker lemma, which gives the bias is SGD if the first iterate is deterministic and bounded.

Recall the variables introduced in the proof overview \cref{var_defs}, which we restate here for ease of reference:
\begin{equation}
\alpha_y := 1 - \eta L /2, \quad 
\alpha_z := 1 - \eta L, \quad 
\sigma_y^2 := \frac{\eta^2 \sigma^2 (1 - \alpha_y^k)}{1 - \alpha_y}, \quad
\sigma_z^2 := \frac{\eta^2 \sigma^2 (1 - \alpha_z^k)}{1 - \alpha_z}.
\label{eq:key-def}
\end{equation}

\begin{lemma}\label{lemma:masterlb}
If $\eta L \leq 1/6$ and
\begin{equation}
    - \sqrt{c_1} \frac{\sigma_y}{\alpha_y^k} \leq x^{(0)} \leq \sqrt{c_1} \frac{\sigma_y}{\alpha_y^k},
\end{equation}
then
\begin{equation}
    \mathbb{E}[x^{(k)} | x^{(0)}] \leq \max\left\{\left(1 - \eta L\right)^k x^{(0)}, \left(1 - \eta L/2\right)^k x^{(0)}\right\}- c_2 \frac{\sigma \sqrt{\eta}}{\sqrt{L}}\min\left(\eta L k, 1\right)^{3/2},
\end{equation}
where $c_1 = 0.0005$ and $c_2 = 0.002$.
\end{lemma}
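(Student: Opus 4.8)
The plan is to sandwich the piecewise–quadratic SGD iterate $x^{(k)}$ between the iterates of two \emph{quadratic} SGD chains and then reduce the statement to a comparison of two Gaussians. Concretely, starting all three chains at the deterministic point $x^{(0)}$, let $y^{(k)}$ be SGD on $f_\ell(x;\xi)=\frac{L}{4}x^2+\xi x$ (curvature $L/2$) and $z^{(k)}$ be SGD on $f_u(x;\xi)=\frac{L}{2}x^2+\xi x$ (curvature $L$). Because $\psi$ has curvature $L/2$ on $\{x<0\}$, matching $f_\ell$, and curvature $L$ on $\{x\ge 0\}$, matching $f_u$, a one–step synchronous coupling (same noise $\xi$) shows that $x^{(1)}\le y^{(1)}$ and $x^{(1)}\le z^{(1)}$ hold pointwise, with equality on one side of $0$ and a strict gain on the other. \cref{mc_dominance} then upgrades this to: $y^{(k)}$ and $z^{(k)}$ each first–order stochastically dominate $x^{(k)}$. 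By \cref{closed}, $y^{(k)}\sim\mathcal{N}(\mu_y,\sigma_y^2)$ and $z^{(k)}\sim\mathcal{N}(\mu_z,\sigma_z^2)$ with $\mu_y=\alpha_y^k x^{(0)}$, $\mu_z=\alpha_z^k x^{(0)}$, and $\alpha_y,\alpha_z,\sigma_y,\sigma_z$ as in \eqref{eq:key-def}.

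Next I would use the layer–cake identity $\mathbb{E}[x^{(k)}\mid x^{(0)}]=\int_0^\infty\Pr[x^{(k)}\ge c]\,dc-\int_{-\infty}^0\Pr[x^{(k)}\le c]\,dc$, bounding $\Pr[x^{(k)}\ge c]$ by $\Pr[z^{(k)}\ge c]$ and $\Pr[x^{(k)}\le c]$ by $\Pr[y^{(k)}\le c]$ via the two dominations. This gives $\mathbb{E}[x^{(k)}\mid x^{(0)}]\le\mathbb{E}[\max(z^{(k)},0)]+\mathbb{E}[\min(y^{(k)},0)]$. Using the Gaussian formulas $\mathbb{E}[\max(W,0)]=\mu\Phi(\mu/\sigma)+\sigma\phi(\mu/\sigma)$ and $\mathbb{E}[\min(W,0)]=\mu-\mathbb{E}[\max(W,0)]$ (with $\Phi,\phi$ the standard normal cdf and pdf), and subtracting $\max(\mu_y,\mu_z)=\max\{(1-\eta L/2)^k x^{(0)},(1-\eta L)^k x^{(0)}\}$, the claim reduces to the one–variable inequality $\Psi:=\mathbb{E}[\max(W_z,0)]+\mathbb{E}[\min(W_y,0)]-\max(\mu_y,\mu_z)\le -c_2\,\sigma\eta^{1/2}L^{-1/2}\min(\eta L k,1)^{3/2}$ for $W_y\sim\mathcal{N}(\mu_y,\sigma_y^2)$, $W_z\sim\mathcal{N}(\mu_z,\sigma_z^2)$, uniformly over $x^{(0)}$ with $|x^{(0)}|\le\sqrt{c_1}\,\sigma_y/\alpha_y^k$.

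The heart of the argument is a monotonicity observation: a direct computation gives $\partial_\mu\mathbb{E}[\max(W,0)]=\Phi(\mu/\sigma)\ge 0$ and $\partial_\sigma\mathbb{E}[\max(W,0)]=\phi(\mu/\sigma)\ge 0$ (the two $\phi$–terms cancel), so $\mathbb{E}[\max(W,0)]$ is nondecreasing in both the mean and the standard deviation, and symmetrically $\mathbb{E}[\min(W,0)]$ is nondecreasing in the mean and nonincreasing in the standard deviation. Moreover $\sigma_y\ge\sigma_z$: since $\alpha_y=\frac12(1+\alpha_z)$, convexity of $t\mapsto t^k$ gives $\alpha_y^k\le\frac12(1+\alpha_z^k)$, which is exactly $\sigma_y^2\ge\sigma_z^2$. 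Splitting on the sign of $x^{(0)}$: if $x^{(0)}\ge0$ then $\mu_z\le\mu_y$, $\max(\mu_y,\mu_z)=\mu_y$, and $\Psi=\mathbb{E}[\max(W_z,0)]-\mathbb{E}[\max(W_y,0)]$; interpolating through the intermediate law $\mathcal{N}(\mu_z,\sigma_y^2)$, the variance step contributes $-\int_{\sigma_z}^{\sigma_y}\phi(\mu_z/\sigma)\,d\sigma\le-(\sigma_y-\sigma_z)\phi(|\mu_z|/\sigma_z)$ and the mean step contributes at most $0$; if $x^{(0)}<0$ the mirror computation with $\mathbb{E}[\min(\cdot,0)]$ gives $\Psi\le-(\sigma_y-\sigma_z)\phi(|\mu_y|/\sigma_z)$. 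In both cases the constraint $|x^{(0)}|\le\sqrt{c_1}\,\sigma_y/\alpha_y^k$ forces the argument of $\phi$ to be at most $\sqrt{c_1}\,\sigma_y/\sigma_z\le\sqrt{2c_1}$ (since $\sigma_y/\sigma_z<\sqrt2$), so $\phi(\cdot)\ge(2\pi)^{-1/2}e^{-c_1}$ and hence $\Psi\le-(2\pi)^{-1/2}e^{-c_1}(\sigma_y-\sigma_z)$ for every admissible $x^{(0)}$. It then remains to lower–bound the variance gap: $\sigma_y-\sigma_z=\frac{\sigma_y^2-\sigma_z^2}{\sigma_y+\sigma_z}$ with $\sigma_y^2-\sigma_z^2=\frac{\eta\sigma^2}{L}(1-2\alpha_y^k+\alpha_z^k)$ and $\sigma_y+\sigma_z\le 2\sigma_y\le 2\sigma\eta^{1/2}L^{-1/2}\min(\sqrt{\eta L k},\sqrt2)$ (using $\alpha_y^k\ge 1-\tfrac{k\eta L}{2}$); a short Taylor/convexity estimate, done separately for $\eta L k\le 1$ and $\eta L k>1$ and using $k\ge2$ and $\eta L\le\tfrac16$, shows $1-2\alpha_y^k+\alpha_z^k\gtrsim\min((\eta L k)^2,1)$, so $\sigma_y-\sigma_z\gtrsim\sigma\eta^{1/2}L^{-1/2}\min((\eta L k)^{3/2},1)$; combining with the $\Psi$ bound and tracking absolute constants gives the claim with $c_1=0.0005$, $c_2=0.002$.

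I expect the conceptual steps (the two–sided stochastic dominance, the layer–cake reduction, and the mean/variance monotonicity of $\mathbb{E}[\max(W,0)]$) to be routine, and the main obstacle to be the quantitative bookkeeping in the last step: establishing $1-2\alpha_y^k+\alpha_z^k\gtrsim\min((\eta L k)^2,1)$ with an explicit constant uniformly over $k\ge2$ and $\eta L\le\tfrac16$ (the crossover regime $\eta L k\asymp1$ needing the convexity bound together with a careful lower bound on $\alpha_z^{k-2}$), and verifying that the slack permitted by $|x^{(0)}|\le\sqrt{c_1}\,\sigma_y/\alpha_y^k$ is genuinely dominated by the $(\sigma_y-\sigma_z)$ gain — i.e. that the specific constants $c_1=0.0005$ and $c_2=0.002$ go through.
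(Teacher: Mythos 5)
Your proposal is correct and shares the paper's overall skeleton — sandwiching $x^{(k)}$ between the two quadratic chains $y^{(k)}, z^{(k)}$ via \cref{mc_dominance}, reducing via the layer-cake identity to a comparison of $\mathcal{N}(\alpha_z^k x^{(0)},\sigma_z^2)$ and $\mathcal{N}(\alpha_y^k x^{(0)},\sigma_y^2)$, and finally lower-bounding $\sigma_y-\sigma_z$ — but your treatment of the central Gaussian-comparison step is genuinely different from, and cleaner than, the paper's. The paper evaluates the two truncated-Gaussian integrals explicitly (Equation~\ref{eq:bias}) and then controls the resulting $x^{(0)}$-dependent cross-terms through Claims~\ref{claim:bias1} and~\ref{claim:bias2}, each of which contributes an error of order $c_1(\sigma_y-\sigma_z)$ plus terms like $\sqrt{c_1}\,\eta^2\sigma Lk^{3/2}$ that must be subtracted from the main gain $\tfrac{\sigma_y-\sigma_z}{\sqrt{2\pi}}$; the admissibility condition on $|x^{(0)}|$ is what keeps these errors small. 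Your route instead exploits that $g(\mu,\sigma)=\mathbb{E}[\max(\mathcal{N}(\mu,\sigma^2),0)]$ satisfies $\partial_\mu g=\Phi(\mu/\sigma)$ and $\partial_\sigma g=\phi(\mu/\sigma)$, so after splitting on the sign of $x^{(0)}$ the mean-shift leg of the interpolation has the favorable sign and contributes nothing, and the entire deficit comes from the variance leg, bounded below by $(\sigma_y-\sigma_z)\phi(\sqrt{2c_1})$. This eliminates the additive error terms entirely and yields a strictly larger fraction of the $(\sigma_y-\sigma_z)$ gain, so the constants close more comfortably. The one piece you leave as a sketch — $1-2\alpha_y^k+\alpha_z^k\gtrsim\min((\eta Lk)^2,1)$ — is where the paper's Claim~\ref{sigma_diff} does its Taylor work directly on $\sqrt{1-\alpha_y^k}-\sqrt{1-\alpha_z^k}$; your identity $\sigma_y-\sigma_z=(\sigma_y^2-\sigma_z^2)/(\sigma_y+\sigma_z)$ is a legitimate alternative, and the inequality does hold (e.g.\ via $1-2\alpha_y^k+\alpha_z^k\ge(1-\alpha_y^k)^2-k\eta^2L^2/4$ for $\eta Lk\le 1$, using $k\ge 2$, and a separate exponential estimate with $\alpha_z^k\ge e^{-1.2\eta Lk}$ for $\eta Lk\ge 1$, mirroring the paper's handling of that regime), so the quantitative bookkeeping you flag does go through.
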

The proof of Lemma~\ref{lemma:masterlb} is deferred to \cref{sec:proof:lemma:masterlb}.

The following lemma covers the edge cases when $|x^{(0)}|$ is large. 
\begin{lemma}\label{lemma:lblarge}
For all $x^{(0)} \in \reals$, 
\begin{equation}
    \mathbb{E}[x^{(k)}| x^{(0)}] \leq \left(1 - \eta L\right)^k x^{(0)},
\end{equation} and 
\begin{equation}
    \mathbb{E}[x^{(k)}| x^{(0)}] \leq \left(1 - \frac{\eta L}{2}\right)^k x^{(0)}.
\end{equation}

\end{lemma}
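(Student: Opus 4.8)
The plan is to prove both inequalities by a one-line induction on $k$, using only that the noiseless gradient of the piecewise quadratic dominates those of two pure quadratics. Write the SGD update as $x^{(k+1)} = x^{(k)} - \eta\big(\tfrac{L}{2}\psi'(x^{(k)}) + \xi^{(k)}\big)$. Since $\psi'(z) = z$ for $z<0$ and $\psi'(z) = 2z$ for $z \ge 0$, a two-case check gives, for every $z \in \reals$, both $\tfrac{L}{2}\psi'(z) \ge Lz$ (an equality on $z \ge 0$) and $\tfrac{L}{2}\psi'(z) \ge \tfrac{L}{2}z$ (an equality on $z<0$).

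First I would take the conditional expectation of the update over the fresh, mean-zero, independent noise $\xi^{(k)}$, obtaining $\mathbb{E}[x^{(k+1)}\mid x^{(0)}] = \mathbb{E}[x^{(k)}\mid x^{(0)}] - \eta\,\mathbb{E}\big[\tfrac{L}{2}\psi'(x^{(k)})\mid x^{(0)}\big]$. The first pointwise inequality gives $\mathbb{E}\big[\tfrac{L}{2}\psi'(x^{(k)})\mid x^{(0)}\big] \ge L\,\mathbb{E}[x^{(k)}\mid x^{(0)}]$, hence $\mathbb{E}[x^{(k+1)}\mid x^{(0)}] \le (1-\eta L)\,\mathbb{E}[x^{(k)}\mid x^{(0)}]$; the second gives $\mathbb{E}[x^{(k+1)}\mid x^{(0)}] \le (1-\eta L/2)\,\mathbb{E}[x^{(k)}\mid x^{(0)}]$. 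Because the ambient small-step regime ($\eta \le 1/(6L)$, so $1-\eta L \ge 0$ and a fortiori $1-\eta L/2 \ge 0$) makes multiplication by these factors order-preserving, an induction starting from $\mathbb{E}[x^{(0)}\mid x^{(0)}] = x^{(0)}$ propagates each recursion to $\mathbb{E}[x^{(k)}\mid x^{(0)}] \le (1-\eta L)^k x^{(0)}$ and $\mathbb{E}[x^{(k)}\mid x^{(0)}] \le (1-\eta L/2)^k x^{(0)}$ respectively.

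An alternative consistent with the machinery already set up is to compare $\{x^{(k)}\}$ with SGD on the pure quadratics $\tfrac{L}{2}x^2 + \xi x$ and $\tfrac{L}{4}x^2 + \xi x$ started at the same point: coupling the noise, the pointwise gradient inequalities show each quadratic chain first-order stochastically dominates $\{x^{(k)}\}$ in one step, hence at every step by \cref{mc_dominance}, and \cref{closed} evaluates the means of the dominating chains to $(1-\eta L)^k x^{(0)}$ and $(1-\eta L/2)^k x^{(0)}$. This route additionally asks one to check the monotonicity of the one-step map $z \mapsto z - \eta\tfrac{L}{2}\psi'(z)$ implicit in the proof of \cref{mc_dominance}, which again holds precisely because $\eta L \le 1$.

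Main obstacle: there is nothing of real substance here — the only care needed is to pair the right comparison with the right exponent ($Lz$, i.e. the quadratic $\tfrac{L}{2}x^2$, for the $(1-\eta L)^k$ bound; $\tfrac{L}{2}z$, i.e. $\tfrac{L}{4}x^2$, for the $(1-\eta L/2)^k$ bound), and to observe that the induction step uses $1-\eta L \ge 0$, which is supplied by the standing assumption on $\eta$.
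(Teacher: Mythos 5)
Your proposal is correct. Your second, "alternative" route is exactly the paper's proof: it invokes \cref{mc_dominance} to show that $x^{(k)}$ is stochastically dominated by the SGD chains $z^{(k)}$ and $y^{(k)}$ on $f_u(x;\xi)=\tfrac{L}{2}x^2+\xi x$ and $f_\ell(x;\xi)=\tfrac{L}{4}x^2+\xi x$ respectively, and then reads off the means from \cref{closed}. Your primary route, however, is genuinely different and more elementary: instead of comparing distributions, you compare only conditional means, using the pointwise inequalities $\tfrac{L}{2}\psi'(z)\ge Lz$ and $\tfrac{L}{2}\psi'(z)\ge \tfrac{L}{2}z$ (both of which check out in the two cases $z\ge 0$ and $z<0$) together with linearity of expectation and the mean-zero noise, and then close the induction using $1-\eta L\ge 0$. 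What this buys you is independence from the Gaussian structure: you never need the closed-form distribution of \cref{closed}, nor the monotonicity of the one-step transition map that the proof of \cref{mc_dominance} quietly relies on (though both arguments ultimately require $\eta L\le 1$, which the standing assumption $\eta\le \tfrac{1}{6L}$ supplies). What the paper's route buys is economy within its own development: the stochastic-dominance machinery and the closed form are needed anyway for the harder \cref{lemma:masterlb}, so the paper gets \cref{lemma:lblarge} essentially for free from lemmas it must prove regardless. You also correctly pair each comparison function with its exponent ($Lz$ with $(1-\eta L)^k$, $\tfrac{L}{2}z$ with $(1-\eta L/2)^k$), matching the paper's $f_u$/$f_\ell$ assignment.
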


We begin by proving \cref{lemma:lblarge}. 
\begin{proof}[Proof of \cref{lemma:lblarge}]
This follows immediately from Lemma~\ref{mc_dominance}, since $x^{(k)}$ is stochastically dominated by $k$ steps the SGD processes $y^{(k)}$  and $z^{(k)}$ on the functions $f_{\ell}(x, \xi) := \frac{L}{4}x^2 + \xi x$  and $f_{u}(x; \xi) := \frac{L}{2}x^2 + \xi x$ respectively, with $y^{(0)} = z^{(0)} = x^{(0)}$. Indeed by Lemma~\ref{closed}, $$\mathbb{E}[x^{(k)}] \leq \mathbb{E}[z^{(k)}] = \left(1 - \eta L\right)^k x^{(0)},$$ and 
$$\mathbb{E}[x^{(k)}] \leq \mathbb{E}[y^{(k)}] = \left(1 - \frac{\eta L}{2}\right)^k x^{(0)}.$$
\end{proof}

We now prove Lemma~\ref{full_lem:expected_step} from Lemmas~\ref{lemma:masterlb} and \ref{lemma:lblarge}.

\begin{proof}[Proof of Lemma~\ref{full_lem:expected_step}]
We divide the proof into two cases. Let $B := \sqrt{c_1}\frac{\sigma_y}{\alpha_y^k}$ and let $\delta := c_2\sigma \eta^{1/2} L^{-1/2}\min(1, \eta L k)^{3/2}$.

\textbf{Case 1:} $\Pr[-B \leq x^{(0)} \leq B] \geq \frac{1}{2}$.
In this case, using the second statement of \cref{lemma:lblarge} in the first inequality and \cref{lemma:masterlb} in the second inequality, we achieve
\begin{equation}
\begin{split}
\mathbb{E}[x^{(k)}] &= \Pr[|x^{(0)}| \geq B]\mathbb{E}[x^{(k)} \mid |x^{(0)}| \geq B] + \Pr[|x^{(0)}| \leq B]\mathbb{E}\left[x^{(k)} \middle| |x^{(0)}| \leq B \right] \\
&\leq \Pr[|x^{(0)}| \geq B]\mathbb{E}[\left(1 - \eta L/2\right)^k x^{(0)} \mid |x^{(0)}| \geq B] + \Pr[|x^{(0)}| \leq B]\mathbb{E}\left[x^{(k)} \middle| |x^{(0)}| \leq B \right] \\
&\leq \Pr[|x^{(0)}| \geq B]\mathbb{E}[\left(1 - \eta L/2\right)^k x^{(0)} \mid |x^{(0)}| \geq B] + \Pr[|x^{(0)}| \leq B]\mathbb{E}[\left(1 - \eta L\right)^k x^{(0)} - \delta \mid |x^{(0)}| \leq B] \\
&= \left(1 - \eta L/2\right)^k\mathbb{E}[x^{(0)}] - \delta \Pr[|x^{(0)}| \leq B] \\
& \leq \left(1 - \eta L/2\right)^k \mathbb{E}[x^{(0)}] - \frac{\delta}{2}.
\end{split}
\end{equation}
This gives the desired result.

\textbf{Case 2:} $\Pr[-B \leq x^{(0)} \leq B] \leq \frac{1}{2}$.
In this case, using the first statement of Lemmas~\ref{lemma:lblarge} in the first inequality, and the second statement of Lemma~\ref{lemma:lblarge} in the second inequality, we have
\begin{equation}\label{eq:case2}
\begin{split}
    \mathbb{E}[x^{(k)}] &= \Pr[x^{(0)} \leq B]\mathbb{E}[x^{(k)} \mid x^{(0)} \leq B] + \Pr[x^{(0)} > B]\mathbb{E}[x^{(k)} \mid x^{(0)} > B] \\
    &\leq \Pr[x^{(0)} \leq B]\mathbb{E}\left[\left(1 - \frac{\eta L}{2}\right)^k x^{(0)} \mid x^{(0)} \leq B\right] + \Pr[x^{(0)} > B]\mathbb{E}\left[x^{(k)} \mid x^{(0)} > B\right]\\
    &\leq \Pr[x^{(0)} \leq B]\mathbb{E}\left[\left(1 - \frac{\eta L}{2}\right)^k x^{(0)} \mid x^{(0)} \leq B\right] + \Pr[x^{(0)} > B]\mathbb{E}\left[\left(1 - \eta L\right)^k x^{(0)} \mid x^{(0)} > B\right] \\
    &= \left(1 - \frac{\eta L}{2}\right)^k\mathbb{E}[x^{(0)}] + \Pr[x^{(0)} > B]\mathbb{E}\left[\left(\left(1 - \eta L\right)^k - \left(1 - \frac{\eta L}{2}\right)^k\right)x^{(0)} \mid x^{(0)} > B\right]. \\
\end{split}
\end{equation}
Now
\begin{equation}
\begin{split}
    (1 - \eta L/2)^k - (1 - \eta L)^k &\geq \begin{cases}
        1 - \eta L k /2 - (1 - \eta L k + (\eta L k)^2/2) & \eta L k \leq 1/2; \\
        \alpha_y^k\left(1 - (1 - \eta L/2)^k\right) & \eta L k \geq 1/2,
    \end{cases} \\
    &\geq \begin{cases}
        \eta L k/4 & \eta L k \leq 1/2; \\
        \alpha_y^k(1 - e^{-\eta L k/2})& \eta L k \geq 1/2.
    \end{cases} \\
        &\geq \begin{cases}
        \eta L k/4 & \eta L k \leq 1/2; \\
        \frac{\alpha_y^k}{5} & \eta L k \geq 1/2.
    \end{cases} \\
\end{split}
\end{equation}

Plugging this in to the previous equation, it follows that 
\begin{equation}\label{eq:exp_drift}
    \mathbb{E}[x^{(k)}] \leq \left(1 - \frac{\eta L}{2}\right)^k\mathbb{E}[x^{(0)}] + \begin{cases}
        \Pr[x^{(0)} > B]\mathbb{E}[x^{(0)} | x^{(0)} \geq B]\frac{\eta L k}{4} & \eta L k \leq 1/2;\\
        \Pr[x^{(0)} > B]\mathbb{E}[x^{(0)} | x^{(0)} \geq B]\frac{\alpha_y^k}{5} & \eta L k \geq 1/2.
    \end{cases}
\end{equation}

Now we can bound 
\begin{equation}
\begin{split}
\Pr[x^{(0)} > B]\mathbb{E}\left[x^{(0)} \mid x^{(0)} > B\right] &= \mathbb{E}[x^{(0)}] - \Pr[|x^{(0)}| \leq B]\mathbb{E}\left[x^{(0)} \mid |x^{(0)}| \leq B\right] - \Pr[x^{(0)} < -B]\mathbb{E}\left[x^{(0)} \mid x^{(0)} < -B\right] \\
& \geq \mathbb{E}[x^{(0)}] - \frac{1}{2}\left(B\right) - 0 \geq B - \frac{B}{2} \geq \frac{B}{2}.
\end{split}
\end{equation}
Plugging this calculation into the result of Equation~\ref{eq:exp_drift} yields
\begin{equation}
\begin{split}
\mathbb{E}[x^{(k)}] &\leq \left(1 - \eta L/2\right)^k\mathbb{E}[x^{(0)}] - \begin{cases}
    \frac{\sqrt{c_1}}{2}\frac{\sigma_y}{\alpha_y^k}\frac{\eta L k}{4} & \eta L k \leq 1/2 \\
     \frac{\sqrt{c_1}}{2}\frac{\sigma_y}{\alpha_y^k}\frac{\alpha_y^k}{5} & \eta L k \geq 1/2
\end{cases}\\
&\leq \left(1 - \eta L/2\right)^k\mathbb{E}[x^{(0)}] - \begin{cases}
    \frac{\sqrt{c_1}}{2}\frac{\eta \sigma \sqrt{k}}{2}\frac{\eta L k}{4} & \eta L k \leq 1/2 \\
     \frac{\sqrt{c_1}}{2}\sigma_y\frac{1}{5} & \eta L k \geq 1/2
\end{cases}\\
&\leq \left(1 - \eta L/2\right)^k\mathbb{E}[x^{(0)}] - \begin{cases}
    \frac{\sqrt{c_1}}{2}\frac{\eta \sigma \sqrt{k}}{2}\frac{\eta L k}{4} & \eta L k \leq 1/2 \\
     \frac{\sqrt{c_1}}{2}\frac{2\sqrt{\eta}\sigma}{\sqrt{L}}\frac{1}{5} & \eta L k \geq 1/2
\end{cases}\\
&\leq \left(1 - \eta L/2\right)^k\mathbb{E}[x^{(0)}] - \begin{cases}
    \frac{\sqrt{c_1}}{16c_2}\delta & \eta L k \leq 1/2 \\
     \frac{\sqrt{c_1}}{5c_2}\delta & \eta L k \geq 1/2
\end{cases} \\
&\leq \left(1 - \eta L/2\right)^k\mathbb{E}[x^{(0)}] - \frac{\delta}{2}.
\end{split}
\end{equation}
This proves the lemma.
\end{proof}

\subsubsection{Deferred proof of \cref{lemma:masterlb}}\label{sec:proof:lemma:masterlb}
\begin{proof}[Proof of Lemma~\ref{lemma:masterlb}]
Let $f_{\ell}(x, \xi) := \frac{L}{4}x^2 + \xi x$ and $f_{u}(x, \xi) := \frac{L}{2}x^2 + \xi x$. Let $y^{(k)}$ be the iterates of SGD on $f_{\ell}$ and let $z^{(k)}$ be the iterates of SGD on $f_u$, both initialized at $y^{(0)} = z^{(0)} = x^{(0)}$, with $\xi \sim \mathcal{N}(0, \sigma^2)$.

Then by Lemma~\ref{mc_dominance}, 
\begin{equation}
\begin{split}
    \mathbb{E}[x^{(k)}] &= -\int_{c = -\infty}^0{\Pr[x^{(k)} \leq c]} +  \int_{c = 0}^\infty{\Pr[x^{(k)} \geq c]} \leq -\int_{c = -\infty}^0{\Pr[y^{(k)} \leq c]} +  \int_{c = 0}^\infty{\Pr[z^{(k)} \geq c]}\\
\end{split}    
\end{equation}

By Lemma~\ref{closed}, we have $$y^{(k)} \sim \alpha_y^k x^{(0)} + \mathcal{N}\left(0, \frac{(1 - \alpha_y^k)\eta^2\sigma^2}{1 - \alpha_y}\right), \quad \text{and} \quad z^{(k)} \sim \alpha_z^k x^{(0)} + \mathcal{N}\left(0, \frac{(1 - \alpha_z^k)\eta^2\sigma^2}{1 - \alpha_z}\right).$$

Now with $Y \sim \mathcal{N}\left(0, \sigma^2_y\right)$ for $\sigma^2_y = \frac{(1 - \alpha_y^k)\eta^2\sigma^2}{1 - \alpha_y}$, we have
\begin{equation}
\begin{split}
     \Pr[y^{(k)} \leq c] = \Pr\left[Y \leq c - \alpha_y^kx^{(0)}\right],
\end{split}
\end{equation}     
so
\begin{equation}
\begin{split}
\int_{c = -\infty}^0\Pr\left[Y \leq c - \alpha_y^kx^{(0)}\right] &= \mathbb{E}\left[-\alpha_y^kx^{(0)} - Y| Y \leq -\alpha_y^kx^{(0)}\right]\Pr\left[Y \leq - \alpha_y^kx^{(0)}\right]\\
&= \left(-\alpha_y^kx^{(0)} - \mathbb{E}\left[Y| Y \leq -\alpha_y^kx^{(0)}\right]\right)\Pr\left[Y \leq - \alpha_y^kx^{(0)}\right].
\end{split}
\end{equation}

Now for any $a$,
\begin{equation}
    \mathbb{E}[Y |Y \leq a]\Pr[Y \leq a] = \frac{1}{\sigma_y\sqrt{2\pi}}\int_{t = -\infty}^{a}te^{-\frac{t^2}{2\sigma_y^2}}dt = -\frac{\sigma_y^2}{\sigma_y\sqrt{2\pi }} e^{-\frac{t^2}{2\sigma_y^2}} \Bigg|_{t = -\infty}^{t = a} = -\sigma_y\frac{e^{-\frac{a^2}{2\sigma^2_y}}}{\sqrt{2\pi}}.
\end{equation}
Hence we have 
\begin{equation}
\int_{c = -\infty}^0{\Pr[y^{(k)} \leq c]} = -\alpha_y^kx^{(0)}\Pr\left[\mathcal{N}(0, \sigma_y^2) \leq -a_y^kx^{(0)}\right] + \sigma_y\frac{e^{-\frac{(\alpha_y^{k}x^{(0)})^2}{2\sigma_y}}}{\sqrt{2\pi}}.
\end{equation}

Similarly, with $Z \sim \mathcal{N}\left(0, \sigma_z^2\right)$ with $\sigma_z^2 = \frac{(1 - \alpha_z^k)\eta^2\sigma^2}{1 - \alpha_z}$, we have
\begin{equation}
\begin{split}
\int_{c = 0}^\infty{\Pr[z^{(k)} \geq c]} &= \mathbb{E}\left[Z + \alpha_z^k x^{(0)} | Z \geq -\alpha_z^k x^{(0)}\right]\Pr\left[Z \geq -\alpha_z^k x^{(0)}\right]\\
&= \left(\alpha_z^k x^{(0)} + \mathbb{E}\left[Z | Z \geq -\alpha_z^k x^{(0)}\right]\right)\Pr\left[Z \geq -\alpha_z^k x^{(0)}\right] \\
&= \alpha_z^k x^{(0)}\Pr\left[\mathcal{N}(0, \sigma_z^2) \geq -\alpha_z^k x^{(0)}\right] + \sigma_z\frac{e^{-\frac{(\alpha_z^{k}x^{(0)})^2}{2\sigma_z}}}{\sqrt{2\pi}}.
\end{split}
\end{equation}
Summing, we have 
\begin{equation}\label{eq:bias}
\begin{split}
    \mathbb{E}[x^{(k)}]
    &= x^{(0)}\left(\alpha_z^k\Pr\left[\mathcal{N}(0, \sigma_z^2) \geq -\alpha_z^k x^{(0)}\right] + \alpha_y^k\Pr\left[\mathcal{N}(0, \sigma_y^2) \leq -\alpha_y^k x^{(0)}\right]\right)\\
    & \qquad + \sigma_z\frac{e^{-\frac{(\alpha_z^{k}x^{(0)})^2}{2\sigma^2_z}}}{\sqrt{2\pi}} - \sigma_y\frac{e^{-\frac{(\alpha_y^{k}x^{(0)})^2}{2\sigma_y^2}}}{\sqrt{2\pi }}
\end{split}
\end{equation}

We bound the terms in Equation~\ref{eq:bias} with the following three claims. The proofs of \cref{claim:bias1,claim:bias2,sigma_diff} are deferred to the end of the present subsubsection.
\begin{claim}\label{claim:bias1}
\begin{equation}
    \sigma_z\frac{e^{-\frac{(\alpha_z^{k}x^{(0)})^2}{2\sigma^2_z}}}{\sqrt{2\pi}} - \sigma_y\frac{e^{-\frac{(\alpha_y^{k}x^{(0)})^2}{2\sigma_y^2}}}{\sqrt{2\pi }} \leq \frac{\sigma_z - \sigma_y}{\sqrt{2\pi}}\left(e^{-c_1}\right) + \begin{cases}c_1\eta^2 \sigma L k^{3/2} & \eta L k \leq 1/2; \\
    \frac{\eta^{1/2} \sigma c_1}{L^{1/2}\sqrt{2\pi}} & \eta L k \geq 1/2.
    \end{cases}
\end{equation}
\end{claim}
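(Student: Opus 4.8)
\textbf{Proof proposal for Claim~\ref{claim:bias1}.}
The plan is to isolate the ``bias'' contribution $\sigma_z-\sigma_y$ and treat the leftover as a small error. Write $A:=\frac{(\alpha_z^k x^{(0)})^2}{2\sigma_z^2}$ and $B:=\frac{(\alpha_y^k x^{(0)})^2}{2\sigma_y^2}$, so the quantity to bound is $\frac{1}{\sqrt{2\pi}}(\sigma_z e^{-A}-\sigma_y e^{-B})$, and use the decomposition
\[
\sigma_z e^{-A}-\sigma_y e^{-B}=(\sigma_z-\sigma_y)\,e^{-A}+\sigma_y\bigl(e^{-A}-e^{-B}\bigr).
\]
The first summand will produce the advertised $\frac{\sigma_z-\sigma_y}{\sqrt{2\pi}}e^{-c_1}$, and the second will be absorbed into the case-dependent term.

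First I would bound the exponents. From the hypothesis $|x^{(0)}|\le\sqrt{c_1}\,\sigma_y/\alpha_y^k$, together with $\alpha_z\le\alpha_y$ and the identity $\sigma_y^2/\sigma_z^2=2(1-\alpha_y^k)/(1-\alpha_z^k)$ (immediate from \cref{var_defs}), one gets $(\alpha_z^k x^{(0)})^2\le c_1(\alpha_z/\alpha_y)^{2k}\sigma_y^2\le c_1\sigma_y^2$, hence $A\le c_1\frac{1-\alpha_y^k}{1-\alpha_z^k}\le c_1$, and similarly $(\alpha_y^k x^{(0)})^2\le c_1\sigma_y^2$, hence $B\le c_1/2$. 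I also need $\sigma_z\le\sigma_y$; since $\alpha_z=2\alpha_y-1$ this is equivalent to the elementary inequality $(2t-1)^k\ge 2t^k-1$ for $t\in[\tfrac12,1]$, which follows because $t\mapsto (2t-1)^k-2t^k+1$ vanishes at $t=1$ and is nonincreasing on $[\tfrac12,1]$ (this is presumably also what Claim~\ref{sigma_diff} records, so I could just cite it). With $\sigma_z-\sigma_y\le 0$ and $e^{-c_1}\le e^{-A}\le 1$, the first summand is at most $(\sigma_z-\sigma_y)e^{-c_1}$.

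For the second summand I would record two complementary consequences of $x\mapsto e^{-x}$ being $1$-Lipschitz on $[0,\infty)$: $e^{-A}-e^{-B}\le|A-B|$, and (using $e^{-A}-e^{-B}\le e^{-A}(B-A)\le B$ when $B\ge A$, trivial otherwise) $e^{-A}-e^{-B}\le B\le c_1/2$. In the regime $\eta L k\ge 1/2$ I would use the second bound with $\sigma_y^2=\frac{2\eta\sigma^2(1-\alpha_y^k)}{L}\le\frac{2\eta\sigma^2}{L}$, giving $\sigma_y(e^{-A}-e^{-B})\le\tfrac{1}{\sqrt2}\cdot\frac{\eta^{1/2}\sigma c_1}{L^{1/2}}$, below the target. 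In the regime $\eta L k\le 1/2$ I would use the first bound and exploit cancellation of the leading $\Theta(1/(k\eta L))$ terms in $A-B$: writing $\sigma_y^2=\eta^2\sigma^2 g_y$, $\sigma_z^2=\eta^2\sigma^2 g_z$ with $g_y=\sum_{j<k}\alpha_y^j$, $g_z=\sum_{j<k}\alpha_z^j$, one has $A-B=\frac{(x^{(0)})^2}{2\eta^2\sigma^2}\cdot\frac{\alpha_z^{2k}g_y-\alpha_y^{2k}g_z}{g_y g_z}$, and the numerator telescopes as $-g_y(1-\alpha_z^{2k})+(g_y-g_z)+g_z(1-\alpha_y^{2k})$, each piece being $O(k^2\eta L)$ (from $g\le k$, $1-\alpha^{2k}\le 2k(1-\alpha)$, and $0\le g_y-g_z\le\sum_{j<k}j(\alpha_y-\alpha_z)$), while $g_y g_z=\Theta(k^2)$ since each $\alpha^j\ge 1-(k-1)\eta L\ge\tfrac12$. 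Combined with $(x^{(0)})^2\le c_1\sigma_y^2/\alpha_y^{2k}=O(c_1\eta^2\sigma^2 k)$ and $\sigma_y\le\eta\sigma\sqrt k$, this gives $\sigma_y|A-B|=O(c_1\eta^2\sigma L k^{3/2})$. Dividing everything by $\sqrt{2\pi}$ then yields the claim.

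The main obstacle is the last step: the naive estimate $|e^{-A}-e^{-B}|\le c_1$ is a factor $\eta L k$ too weak in the small-$k$ regime, so one genuinely must extract the near-cancellation between $\alpha_z^{2k}g_y$ and $\alpha_y^{2k}g_z$ and then track absolute constants carefully — because the target constant $\sqrt{2\pi}$ is tight enough that one-sided estimates such as $1-(1-\eta L)^k\le k\eta L$ must be paired with matching lower bounds like $\alpha^j\ge 1-(k-1)\eta L\ge\tfrac12$ and $g_z\ge k\bigl(1-\tfrac{(k-1)\eta L}{2}\bigr)$. Everything else (the decomposition, the bounds $A\le c_1$, $B\le c_1/2$, $\sigma_z\le\sigma_y$, and the Lipschitz estimates) is elementary.
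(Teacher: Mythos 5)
Your overall strategy coincides with the paper's: the same decomposition $(\sigma_z-\sigma_y)e^{-A}+\sigma_y(e^{-A}-e^{-B})$, the same use of the hypothesis on $(x^{(0)})^2$ to get $A\le c_1$ and $\sigma_z-\sigma_y\le 0$, the same split on $\eta Lk\lessgtr 1/2$, and essentially the same large-$\eta Lk$ argument (your $e^{-A}-e^{-B}\le B\le c_1/2$ versus the paper's $|A-B|\le c_1/2$). In the small-$\eta Lk$ regime your route differs in detail: the paper replaces $\bigl|\alpha_z^{2k}/\sigma_z^2-\alpha_y^{2k}/\sigma_y^2\bigr|$ by $|\alpha_z^{2k}-\alpha_y^{2k}|/\sigma_y^2$ in a single unjustified line, whereas you keep both denominators and telescope the numerator $\alpha_z^{2k}g_y-\alpha_y^{2k}g_z$ — an honest way to expose the cancellation that the paper elides.

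The gap is quantitative and sits exactly where you flagged the difficulty: your plan of bounding the three telescoped pieces separately in absolute value does not reach the stated constant. The pieces give $g_y(1-\alpha_z^{2k})\le 2k^2\eta L$, $g_y-g_z\le\tfrac14 k^2\eta L$, and $g_z(1-\alpha_y^{2k})\le k^2\eta L$, hence a numerator bound of $3.25\,k^2\eta L$; but the first and third pieces have opposite signs and nearly cancel (the true value of the numerator is about $-\tfrac34 k^2\eta L$). Carrying $3.25\,k^2\eta L$ through with the matching lower bounds you list ($\alpha_y^{2k}\ge\tfrac12$, $g_z\ge\tfrac{3k}{4}$, $(x^{(0)})^2\le c_1\sigma_y^2/\alpha_y^{2k}$, $\sigma_y\le\eta\sigma\sqrt k$) yields $\tfrac{\sigma_y}{\sqrt{2\pi}}|A-B|\le\tfrac{13}{3\sqrt{2\pi}}\,c_1\eta^2\sigma Lk^{3/2}\approx 1.7\,c_1\eta^2\sigma Lk^{3/2}$, which overshoots the claim. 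To close it you must use the sign structure your own telescoping reveals: since $g_z\le g_y$ and $1-\alpha_y^{2k}\le 1-\alpha_z^{2k}$, the first and third pieces sum to something nonpositive, so $-2k^2\eta L+g_z(1-\alpha_y^{2k})\le \alpha_z^{2k}g_y-\alpha_y^{2k}g_z\le g_y-g_z\le\tfrac14k^2\eta L$; combined with a lower bound such as $g_z(1-\alpha_y^{2k})\ge\tfrac{9}{16}k^2\eta L$ this caps the numerator at $\tfrac{23}{16}k^2\eta L$ and brings the final coefficient below $c_1$. (A constant-factor-worse version of the claim would in fact still suffice downstream in Lemma~\ref{lemma:masterlb}, where there is ample slack, but that is not the statement as written.) Everything else in your proposal — the bounds $A\le c_1$, $B\le c_1/2$, the monotonicity argument for $\sigma_z\le\sigma_y$, and the large-step case — checks out.
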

\begin{claim}\label{claim:bias2}
\begin{equation}\label{eq:1}
\begin{split}
    x^{(0)}&\left(\alpha_z^k\Pr\left[\mathcal{N}(0, \sigma_z^2) \geq -\alpha_z^k x^{(0)}\right] + \alpha_y^k\Pr\left[\mathcal{N}(0, \sigma_y^2) \leq -\alpha_y^k x^{(0)}\right]\right)\\  
    &\leq x^{(0)}\alpha_y^k +  c_1\left(\sigma_y - \sigma_x\right) + \begin{cases}
         \sqrt{c_1}\eta^2 \sigma L k^{3/2} & \eta L k \leq 1/2; \\
         \frac{\sqrt{c_1}}{2}\frac{\sqrt{\eta} \sigma }{\sqrt{L}} & \eta L k \geq 1/2.
    \end{cases}
\end{split}
\end{equation}
\end{claim}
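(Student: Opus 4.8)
The plan is to prove Claim~\ref{claim:bias2} by rewriting the left side of \eqref{eq:1} \emph{exactly} in terms of Gaussian CDFs, showing that the ``cross term'' that appears has a definite (favorable) sign, and only at the very end invoking the two regimes $\eta L k\le 1/2$ and $\eta L k\ge 1/2$ from the claim via elementary estimates. Let $\Phi$ be the standard normal CDF. By symmetry, $\Pr[\mathcal N(0,\sigma_z^2)\ge -\alpha_z^k x^{(0)}]=\Phi(\alpha_z^k x^{(0)}/\sigma_z)=:p_z$ and $\Pr[\mathcal N(0,\sigma_y^2)\le -\alpha_y^k x^{(0)}]=1-\Phi(\alpha_y^k x^{(0)}/\sigma_y)=:q_y$; set $p_y:=\Phi(\alpha_y^k x^{(0)}/\sigma_y)=1-q_y$. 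Then a line of algebra gives
\[
x^{(0)}\!\left(\alpha_z^k p_z+\alpha_y^k q_y\right)
= x^{(0)}\alpha_y^k + x^{(0)}\alpha_z^k\!\left(p_z-p_y\right) - x^{(0)}\!\left(\alpha_y^k-\alpha_z^k\right)p_y,
\]
so it suffices to control the last two terms.

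The crux is the monotonicity relation $\alpha_z^k/\sigma_z \le \alpha_y^k/\sigma_y$. Using the closed forms in \eqref{var_defs} (so $\sigma_y^2=\tfrac{2\eta\sigma^2(1-\alpha_y^k)}{L}$, $\sigma_z^2=\tfrac{\eta\sigma^2(1-\alpha_z^k)}{L}$), this is equivalent to $2\alpha_z^{2k}(1-\alpha_y^k)\le \alpha_y^{2k}(1-\alpha_z^k)$. Since $\alpha_z=1-\eta L<(1-\eta L/2)^2=\alpha_y^2$ we have $b:=\alpha_z^k\le a^2$ with $a:=\alpha_y^k\in(0,1]$, and then $2b^2(1-a)\le 2a^2 b\,(1-a)\le a^2(1-b)$, the last step reducing to $b(3-2a)\le 1$, which holds because $b(3-2a)\le a^2(3-2a)=3a^2-2a^3=1-(1-a)^2(1+2a)\le 1$. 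Given this, $\alpha_z^k x^{(0)}/\sigma_z$ and $\alpha_y^k x^{(0)}/\sigma_y$ are ordered like $0$ and $x^{(0)}$, so by monotonicity of $\Phi$ the numbers $x^{(0)}$ and $p_z-p_y$ have opposite signs (or both vanish); hence $x^{(0)}\alpha_z^k(p_z-p_y)\le 0$ and that term simply drops.

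It then remains to bound $-x^{(0)}(\alpha_y^k-\alpha_z^k)p_y$. If $x^{(0)}\ge 0$ this is $\le 0$, and since the right side of \eqref{eq:1} already carries the nonnegative terms $c_1(\sigma_y-\sigma_z)$ (indeed $\sigma_y\ge\sigma_z$, as $1+\alpha_z^k\ge 2\alpha_y^k$ by a one-line monotonicity in $\eta L$) and the case term, we are done. If $x^{(0)}=-s$ with $0<s\le\sqrt{c_1}\,\sigma_y/\alpha_y^k$, I would bound $p_y=\Phi(-\alpha_y^k s/\sigma_y)\le\Phi(0)=\tfrac12$ and $s\alpha_y^k\le\sqrt{c_1}\,\sigma_y$, so $-x^{(0)}(\alpha_y^k-\alpha_z^k)p_y\le \tfrac{\sqrt{c_1}}{2}\sigma_y\bigl(1-(\alpha_z/\alpha_y)^k\bigr)$. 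Finally, with $\beta:=1-\alpha_z/\alpha_y=\tfrac{\eta L/2}{1-\eta L/2}\in[\eta L/2,\eta L]$ (using $\eta L\le 1/6$), $1-(\alpha_z/\alpha_y)^k=1-(1-\beta)^k\le\min(1,k\eta L)$; and $\sigma_y^2=\tfrac{2\eta\sigma^2(1-\alpha_y^k)}{L}$ together with $1-\alpha_y^k\le k\eta L/2$ gives $\sigma_y\le\eta\sigma\sqrt k$, while $1-\alpha_y^k\le 1$ gives $\sigma_y\le\sqrt{2\eta/L}\,\sigma$. Plugging the first pair into the regime $\eta Lk\le1/2$ and the second into $\eta Lk\ge1/2$ reproduces the two case terms of \eqref{eq:1} (with room to spare in the first regime).

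The hard part, and the conceptual heart of the argument, is the monotonicity $\alpha_z^k/\sigma_z\le\alpha_y^k/\sigma_y$: it is exactly what forces the cross term $x^{(0)}\alpha_z^k(p_z-p_y)$ to be non-positive, hence what makes the piecewise-quadratic bias emerge at the claimed rate rather than a larger one. Everything else is bookkeeping; the only other delicate point is carrying the absolute constants through the two regimes so they stay consistent with the values $c_1=0.0005$, $c_2=0.002$ inherited from Lemma~\ref{lemma:masterlb} (and ultimately with Claims~\ref{claim:bias1} and the $\sigma_y-\sigma_z$ estimate, which together assemble the final bound on $\mathbb E[x^{(k)}]$).
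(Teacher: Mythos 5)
Your proof is correct, and it takes a genuinely different route from the paper's. The paper bounds the left side by first replacing the threshold $-\alpha_z^k x^{(0)}$ with $-\alpha_y^k x^{(0)}$ in the $z$-probability, then replacing $\mathcal{N}(0,\sigma_y^2)$ with $\mathcal{N}(0,\sigma_z^2)$ in the $y$-probability at the cost of a density-difference penalty (this is where the $c_1(\sigma_y-\sigma_z)$ term comes from), so that the expression becomes $x^{(0)}$ times a convex combination of $\alpha_y^k$ and $\alpha_z^k$, bounded by $\max(x^{(0)}\alpha_y^k, x^{(0)}\alpha_z^k)$. You instead decompose exactly and kill the cross term $x^{(0)}\alpha_z^k(p_z-p_y)$ by sign, via the monotonicity $\alpha_z^k/\sigma_z\le\alpha_y^k/\sigma_y$, which you verify algebraically from $\alpha_z\le\alpha_y^2$; I checked the chain $2b^2(1-a)\le 2a^2b(1-a)\le a^2(1-b)$ and the identity $1-3a^2+2a^3=(1-a)^2(1+2a)$, and they hold. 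This buys you a proof that never compares the two Gaussian densities and in fact never needs the $c_1(\sigma_y-\sigma_z)$ slack at all (you only use that it is nonnegative), which is cleaner and slightly sharper than the paper's argument. The remaining bookkeeping ($p_y\le\tfrac12$, $s\alpha_y^k\le\sqrt{c_1}\sigma_y$, $1-(\alpha_z/\alpha_y)^k\le\min(1,k\eta L)$, $\sigma_y\le\min(\eta\sigma\sqrt{k},\sqrt{2\eta/L}\,\sigma)$) is all valid. One small caveat on constants: in the regime $\eta Lk\ge 1/2$ you obtain $\tfrac{\sqrt{c_1}}{\sqrt 2}\tfrac{\sqrt\eta\sigma}{\sqrt L}$ rather than the $\tfrac{\sqrt{c_1}}{2}\tfrac{\sqrt\eta\sigma}{\sqrt L}$ written in the claim; note however that the paper's own proof only establishes $2\sqrt{c_1}\tfrac{\sqrt\eta\sigma}{\sqrt L}$ there (and that is the constant actually used downstream in the proof of Lemma~\ref{lemma:masterlb}), so the stated $\tfrac{\sqrt{c_1}}{2}$ appears to be a typo in the claim and your bound is strictly stronger than what the paper needs.
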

\begin{claim}\label{sigma_diff}
\begin{equation}
\sigma_y - \sigma_z \geq \begin{cases}
 \frac{1}{24}\eta^2 L \sigma k^{3/2} & \eta L k \leq 1/2; \\
 \frac{0.12 \eta \sigma}{\sqrt{\eta L}} & \eta L k \geq 1/2.
\end{cases}
\end{equation}
\end{claim}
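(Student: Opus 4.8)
The plan is to reduce Claim~\ref{sigma_diff} to (i) a lower bound on $\sigma_y^2-\sigma_z^2$ and (ii) an upper bound on $\sigma_y+\sigma_z$, via the identity $\sigma_y-\sigma_z=\frac{\sigma_y^2-\sigma_z^2}{\sigma_y+\sigma_z}$ (which also makes $\sigma_y\ge\sigma_z$ transparent). From the definitions in \cref{eq:key-def} a direct computation gives
\begin{equation}
\sigma_y^2-\sigma_z^2=\frac{\eta\sigma^2}{L}\,g(k),\qquad g(k):=1-2\big(1-\tfrac{\eta L}{2}\big)^k+(1-\eta L)^k,
\end{equation}
so everything reduces to understanding the scalar sequence $g(k)$. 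I would first record the elementary facts $g(0)=g(1)=0$ and the telescoped first difference
\begin{equation}
g(k+1)-g(k)=\eta L\Big[\big(1-\tfrac{\eta L}{2}\big)^k-(1-\eta L)^k\Big]\ \ge\ 0,
\end{equation}
so $g$ is nondecreasing.

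For a quantitative bound I would apply the factorization $a^k-b^k=(a-b)\sum_{i=0}^{k-1}a^i b^{k-1-i}$ with $a=1-\tfrac{\eta L}{2}$ and $b=1-\eta L$ (so $a-b=\tfrac{\eta L}{2}$, and each summand is at least $(1-\eta L)^{k-1}$), obtaining $g(k+1)-g(k)\ge\tfrac{(\eta L)^2}{2}\,k\,(1-\eta L)^{k-1}$; summing from $j=1$ to $k-1$ then gives $g(k)\ge\tfrac{(\eta L)^2}{2}(1-\eta L)^{k-2}\binom{k}{2}$. In the regime $\eta Lk\le 1/2$ we have $(1-\eta L)^{k-2}\ge(1-\eta L)^k\ge 1-\eta Lk\ge\tfrac12$, so $g(k)\gtrsim(\eta L)^2k^2$; combined with $\sigma_y+\sigma_z\le 2\sigma_y\le 2\eta\sigma\sqrt{k}$ (from $1-(1-\tfrac{\eta L}{2})^k\le\tfrac{\eta Lk}{2}$, hence $\sigma_y^2\le\eta^2\sigma^2k$) this yields $\sigma_y-\sigma_z\gtrsim\eta^2L\sigma k^{3/2}$, the first case. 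For the regime $\eta Lk\ge 1/2$, monotonicity of $g$ gives $g(k)\ge g(k_0)$ with $k_0:=\lceil\tfrac{1}{2\eta L}\rceil$ (which is $\ge 3$ since $\eta L\le\tfrac16$), and the bound of the previous sentence applied at $k_0$ shows $g(k_0)$ is bounded below by a positive absolute constant; together with the crude uniform bound $\sigma_y+\sigma_z\le 2\sqrt{2\eta/L}\,\sigma$ (since $1-\alpha_y^k<1$) this gives $\sigma_y-\sigma_z\gtrsim\sqrt{\eta/L}\,\sigma=\tfrac{\eta\sigma}{\sqrt{\eta L}}$, the second case.

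The conceptual content is short; the delicate part is chasing the numerical constants so they reach $\tfrac{1}{24}$ and $0.12$ and remain consistent at the threshold $\eta Lk=1/2$. The steps $(1-\eta L)^k\ge\tfrac12$, $\sigma_y\le\eta\sigma\sqrt k$, and especially $\sigma_y+\sigma_z\le 2\sqrt{2\eta/L}\,\sigma$ in the large-$\eta Lk$ case are lossy, so to recover sharp constants one must replace the one-term Bernoulli estimate of $1-(1-a)^k$ by its two-term expansions ($ka-\binom{k}{2}a^2$ from above, $ka-\binom{k}{2}a^2+\binom{k}{3}a^3$ from below) and keep the geometric sums $\sigma_y^2,\sigma_z^2$ in closed form rather than bounding $\sigma_z^2$ by its $k\to\infty$ limit. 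This bookkeeping, rather than any new idea, is the main obstacle.
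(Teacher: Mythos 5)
Your reorganization through the identity $\sigma_y-\sigma_z=\frac{\sigma_y^2-\sigma_z^2}{\sigma_y+\sigma_z}$ and the telescoped sequence $g(k)=1-2\alpha_y^k+\alpha_z^k$ is a genuinely different (and cleaner) route than the paper's: the paper works directly with the difference of square roots, sandwiches $(1-x)^k$ between its two- and three-term binomial expansions, and converts the difference of the radicands into a difference of radicals via the bound $\frac{d}{dx}\sqrt{x}\ge\frac12$ on $[0,1]$. Your version makes nonnegativity and monotonicity in $k$ transparent, and for the regime $\eta Lk\le 1/2$ your deferred tightening does close: summing the exact first differences gives $g(k)=\alpha_z^k-2\alpha_y^k+1\ge \binom{k}{2}\frac{(\eta L)^2}{2}-\binom{k}{3}(\eta L)^3\ge\frac{k(k-1)(\eta L)^2}{6}$, and dividing by $\sigma_y+\sigma_z\le 2\eta\sigma\sqrt{k}$ and using $k-1\ge k/2$ lands exactly on $\frac{1}{24}\eta^2L\sigma k^{3/2}$. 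So the first case is fine.

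The second case is where the "bookkeeping" you defer is not bookkeeping: the constant $0.12$ is not reachable by your chain, nor by any chain, because the claim with that constant is false near the threshold. Your steps as written give $g(k)\ge g(k_0)\gtrsim 10^{-2}$ and hence $\sigma_y-\sigma_z\gtrsim 5\times 10^{-3}\,\sigma\sqrt{\eta/L}$, more than an order of magnitude short; and a direct evaluation at $\eta L=\frac16$, $k=3$ (so $\eta Lk=\frac12$) gives $\sigma_y=\eta\sigma\sqrt{12(1-(11/12)^3)}\approx 1.660\,\eta\sigma$ and $\sigma_z=\eta\sigma\sqrt{6(1-(5/6)^3)}\approx 1.590\,\eta\sigma$, i.e.\ $\sigma_y-\sigma_z\approx 0.029\,\sigma\sqrt{\eta/L}$, whereas the claim asserts $\ge 0.12\,\sigma\sqrt{\eta/L}$. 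The paper's own derivation of $0.12$ passes from $\sqrt{2(0.52)X}-\sqrt{X}$ to $\sqrt{(2(0.52)-1)X}$, i.e.\ uses $\sqrt{a}-\sqrt{b}\ge\sqrt{a-b}$, which holds in the reverse direction for $a\ge b\ge 0$; the legitimate constant from that argument is $(\sqrt{1.04}-1)\sqrt{1-e^{-0.55}}\approx 0.013$. So you should prove the second case with a smaller explicit constant (anything below about $0.028$ is true, and your $\sigma_y^2-\sigma_z^2$ route will deliver one once you keep $g(k_0)$ and $\sigma_y+\sigma_z$ in closed form), and be aware that the constant you obtain has to be propagated into the case $\eta Lk\ge 1/2$ of the proof of \cref{lemma:masterlb}, whose numerical slack is computed assuming $0.12$.
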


Combining the results of Claims~\ref{claim:bias1} and \ref{claim:bias2} with Equation~\ref{eq:bias}, we obtain
\begin{equation}
\begin{split}
    \mathbb{E}[x^{(k)}] &\leq x^{(0)}\alpha_y^k + \left(\sigma_z - \sigma_y\right)\left(\frac{e^{-c_1}}{\sqrt{2\pi}} - c_1\right) + \begin{cases}
        (c_1 + \sqrt{c_1}/2)\eta^2 \sigma L k^{3/2} & \eta L k \leq 1/2; \\
        \frac{\sqrt{\eta} \sigma}{\sqrt{L}}\left(\frac{c_1}{\sqrt{2\pi}} + 2\sqrt{c_1}\right) & \eta L k \geq 1/2.
    \end{cases} \\
\end{split}
\end{equation}
Plugging in Claim~\ref{sigma_diff}, we obtain for $c_1 \leq 0.0005$,
\begin{equation}
\begin{split}
    \mathbb{E}[x^{(k)}] &\leq x^{(0)}\alpha_y^k + \begin{cases}
        -\left(\frac{e^{-c_1}}{\sqrt{2\pi}} - c_1\right)\left(\frac{1}{24}\eta^2L\sigma k^{3/2}\right) + (c_1+ \sqrt{c_1}/2)\eta^2 \sigma L k^{3/2} & \eta L k \leq 1/2; \\
        -\left(\frac{e^{-c_1}}{\sqrt{2\pi}} - c_1\right)\left(\frac{0.12\sqrt{\eta} \sigma}{\sqrt{L}}\right) + \frac{\sqrt{\eta} \sigma}{\sqrt{L}}\left(\frac{c_1}{\sqrt{2\pi}} + 2\sqrt{c_1}\right) & \eta L k \geq 1/2.
    \end{cases} \\
    &= x^{(0)}\alpha_y^k + \begin{cases}
        -\left(\frac{e^{-c_1}}{\sqrt{2\pi}} - c_1 - 24c_1 - 12\sqrt{c_1}\right)\left(\frac{1}{24}\eta^2L\sigma k^{3/2}\right) & \eta L k \leq 1/2; \\
        -\left(\frac{e^{-c_1}}{\sqrt{2\pi}} - c_1 -\frac{c_1}{0.12\sqrt{2\pi}} - \frac{2\sqrt{c_1}}{0.12} \right)\left(\frac{0.12\sqrt{\eta} \sigma}{\sqrt{L}}\right) & \eta L k \geq 1/2.
    \end{cases}\\
    &\leq x^{(0)}\alpha_y^k + \begin{cases}
        -0.117\left(\frac{1}{24}\eta^2L\sigma k^{3/2}\right) & \eta L k \leq 1/2; \\
        -0.023\left(\frac{0.12\sqrt{\eta} \sigma}{\sqrt{L}}\right) & \eta L k \geq 1/2.
    \end{cases}\\
    &\leq x^{(0)}\alpha_y^k - \frac{0.002\sqrt{\eta}\sigma \min(\eta L k, 1)^{3/2}}{\sqrt{L}}.
\end{split}
\end{equation}
This concludes the proof of the lemma aside from the proof of the three claims. To prove these, observe that $\alpha_z \leq \alpha_y$, and $\sigma_z \leq \sigma_y$. 
\end{proof}

\begin{proof}[Deferred Proof of Claim~\ref{claim:bias1}]
\begin{equation}
\begin{split}
    \sigma_z\frac{e^{-\frac{(\alpha_z^{k}x^{(0)})^2}{2\sigma^2_z}}}{\sqrt{2\pi}} - \sigma_y\frac{e^{-\frac{(\alpha_y^{k}x^{(0)})^2}{2\sigma_y^2}}}{\sqrt{2\pi }} &= \frac{\sigma_z - \sigma_y}{\sqrt{2\pi}}\left(e^{-\frac{(\alpha_z^{k}x^{(0)})^2}{2\sigma^2_z}}\right) + \frac{\sigma_y}{\sqrt{2\pi}}\left(e^{-\frac{(\alpha_z^{k}x^{(0)})^2}{2\sigma^2_z}} - e^{-\frac{(\alpha_y^{k}x^{(0)})^2}{2\sigma_y^2}}\right) \\
    &\leq \frac{\sigma_z - \sigma_y}{\sqrt{2\pi}}\left(e^{-c_1/2}\right) + \frac{\sigma_y}{\sqrt{2\pi}}\left(e^{-\frac{(\alpha_z^{k}x^{(0)})^2}{2\sigma^2_z}} - e^{-\frac{(\alpha_y^{k}x^{(0)})^2}{2\sigma_y^2}}\right) \\
    &\leq  \frac{\sigma_z - \sigma_y}{\sqrt{2\pi}}\left(e^{-c_1/2}\right) + \frac{\sigma_y}{\sqrt{2\pi}}\left|\frac{(\alpha_z^{k}x^{(0)})^2}{2\sigma^2_z} - \frac{(\alpha_y^{k}x^{(0)})^2}{2\sigma_y^2} \right|\max\left(e^{-\frac{(\alpha_z^{k}x^{(0)})^2}{2\sigma^2_z}}, e^{-\frac{(\alpha_y^{k}x^{(0)})^2}{2\sigma_y^2}}\right)\\
    &\leq \frac{\sigma_z - \sigma_y}{\sqrt{2\pi}}\left(e^{-c_1/2}\right) + \frac{\sigma_y}{\sqrt{2\pi}}\left|\frac{(\alpha_z^{k}x^{(0)})^2}{2\sigma^2_z} - \frac{(\alpha_y^{k}x^{(0)})^2}{2\sigma_y^2} \right| \\
    &= \frac{\sigma_z - \sigma_y}{\sqrt{2\pi}}\left(e^{-c_1/2}\right) + \frac{\sigma_y(x^{(0)})^2}{2\sqrt{2\pi}}\left|\frac{\alpha_z^{2k}}{\sigma^2_z} - \frac{\alpha_y^{2k}}{\sigma_y^2} \right| \\
    &\leq \frac{\sigma_z - \sigma_y}{\sqrt{2\pi}}\left(e^{-c_1/2}\right) + \frac{\sigma_y(x^{(0)})^2}{2\sqrt{2\pi}}\left|\frac{\alpha_z^{2k}- \alpha_y^{2k}}{\sigma_y^2} \right|.\\
\end{split}
\end{equation}

By the assumption on $(x^{(0)})^2$ in the lemma, we have 
\begin{equation}\label{eq36}
    \frac{\sigma_y(x^{(0)})^2}{2\sqrt{2\pi}}\left|\frac{\alpha_z^{2k}- \alpha_y^{2k}}{\sigma_y^2} \right| \leq \frac{\sigma_y(x^{(0)})^2}{2\sqrt{2\pi}}\frac{\alpha_y^{2k}}{\sigma_y^2} \leq \frac{\sigma_y c_1}{2\sqrt{2\pi}} \leq \frac{\sqrt{\eta} \sigma c_1}{\sqrt{L}\sqrt{2\pi}}.
\end{equation}

Now if $\eta L k \leq 1/2$, %
\begin{equation}\label{eq37}
\begin{split}
\frac{\sigma_y(x^{(0)})^2}{2\sqrt{2\pi}}\left|\frac{\alpha_z^{2k}- \alpha_y^{2k}}{\sigma_y^2} \right| &= \frac{(x^{(0)})^2\alpha_y^{2k}}{2\sigma_y\sqrt{2\pi}}\left|\left(\frac{(1 - \eta L)}{ (1 - \eta L/2)}\right)^{2k} - 1\right| \\
&= \frac{(x^{(0)})^2\alpha_y^{2k}}{2\sigma_y\sqrt{2\pi}}\left|\left(1 - \eta L\right)^{2k} - 1\right| \\
&\leq \frac{(x^{(0)})^2\alpha_y^{2k}}{2\sigma_y\sqrt{2\pi}}\left(2\eta L k\right) \\
&\leq \sigma_y c_1\eta L k, \\
&= \left(\sqrt{\frac{(1 - (1 - \eta L/2)^k)\eta^2 \sigma^2}{\eta L/2}}\right)c_1\eta L k\\
&\leq \left(\eta \sigma \frac{1 - (1 - \eta L k/2)}{\eta L/2}\right)c_1\eta L k \\
&= c_1 \eta^2 \sigma L k^{3/2}.
\end{split}
\end{equation}
Here in the second inequality we used the condition on $(x^{(0)})^2$. This proves the claim.
\end{proof}

\begin{proof}[Deferred Proof of Claim~\ref{claim:bias2}]
Observe that
\begin{equation}
\begin{split}
x^{(0)}\Pr\left[\mathcal{N}(0, \sigma_z^2) \geq -\alpha_z^k x^{(0)}\right] \leq x^{(0)}\Pr\left[\mathcal{N}(0, \sigma_z^2) \geq -\alpha_y^k x^{(0)}\right]
\end{split}
\end{equation}
and 
\begin{equation}\label{ineq:cdf}
\begin{split}
\alpha_y^k x^{(0)}\Pr\left[\mathcal{N}(0, \sigma_y^2) \leq -\alpha_y^k x^{(0)}\right] &=\alpha_y^k x^{(0)}\Pr\left[\mathcal{N}(0, \sigma_z^2) \leq -\alpha_y^k x^{(0)}\right] - \alpha_y^k x^{(0)}\int_{c = -\alpha_y^k x^{(0)}}^0{\mu_{\sigma_z}(c) - \mu_{\sigma_y}(c)}dc \\
&\leq \alpha_y^k x^{(0)}\Pr\left[\mathcal{N}(0, \sigma_z^2) \leq -\alpha_y^k x^{(0)}\right] + \alpha_y^{2k} (x^{(0)})^2 \max_{-\infty \leq c \leq \infty}\left|\mu_{\sigma_z}(c) - \mu_{\sigma_y}(c)\right|,
\end{split}
\end{equation}
as shown in Figure~\ref{fig:cdfs}.
\begin{figure}
    \centering
\includegraphics[width=16cm]{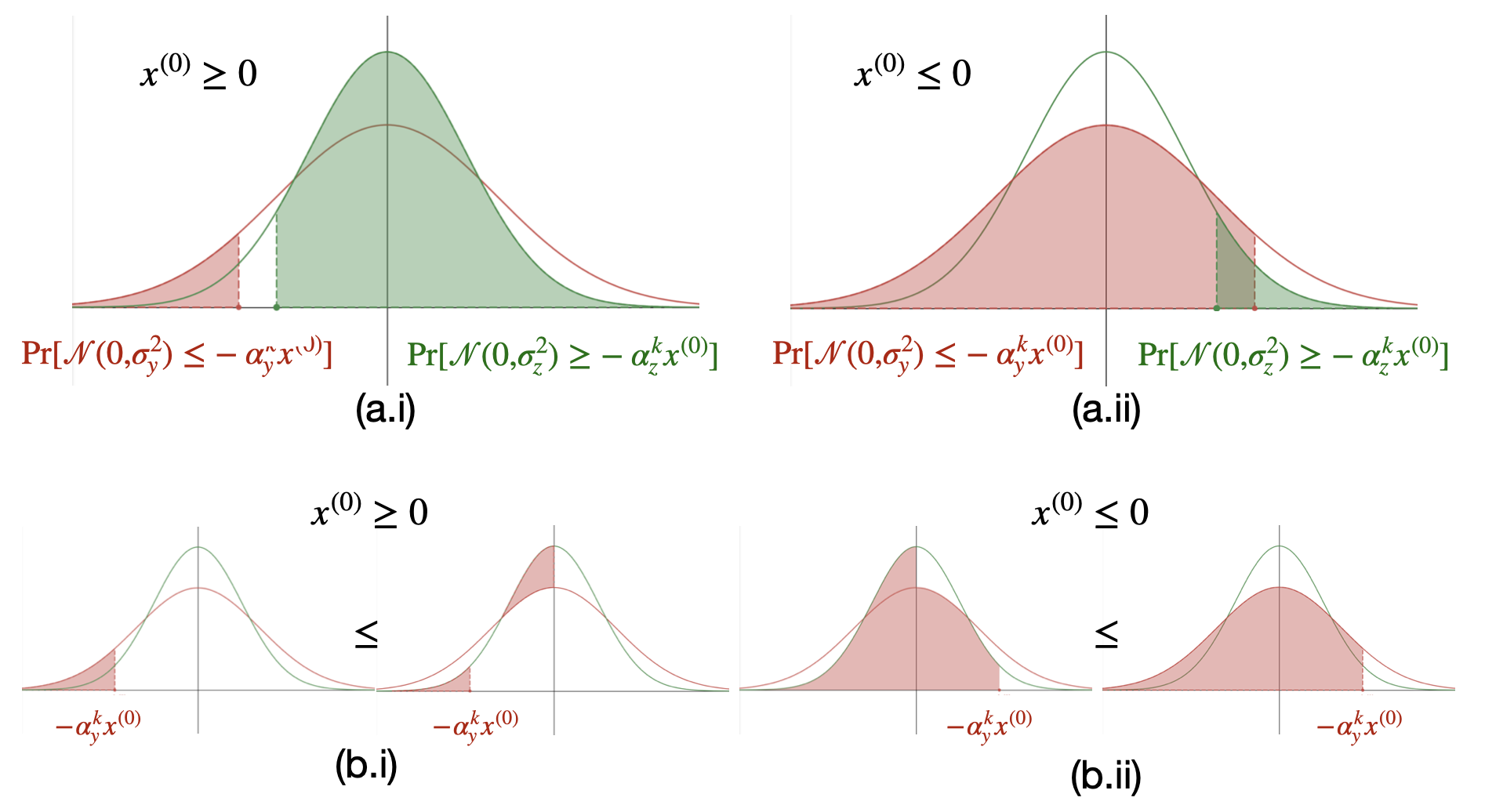}
    \caption{Comparison of CDFs. (a) and (b): the regions in the first line of Equation~\ref{eq:bias} for positive (a.i) or negative $x^{(0)}$ (a.ii). (b) The inequality used in Equation~\ref{ineq:cdf} for both positive (b.i) or negative $x^{(0)}$ (b.ii). In each of (b.i) and (b.ii), the shaded region on the right is larger than the shaded region on the left.}
    \label{fig:cdfs}
\end{figure}

Now
\begin{equation}\label{eq:2}
\begin{split}
\max_{-\infty \leq c \leq \infty} \left|\frac{1}{\sqrt{2\pi \sigma_z^2}}e^{-c^2/(2\sigma_z^2)} - \frac{1}{\sqrt{2\pi \sigma_y^2}}e^{-c^2/(2\sigma_y^2)} \right|
&= \left(\frac{1}{\sqrt{2\pi \sigma_z^2}}- \frac{1}{\sqrt{2\pi \sigma_y^2}} \right) \\
&= \left(\frac{1}{\sqrt{2\pi \sigma_z^2}}\left(1 - \frac{1}{\sqrt{1 + (\sigma_y^2 - \sigma_z^2)/\sigma_z^2}} \right)\right) \\
&\leq \left(\frac{1}{\sqrt{2\pi \sigma_z^2}}\frac{\sigma_y^2 - \sigma_z^2}{2\sigma_z^2}\right)\\
&= \left(\frac{(\sigma_y - \sigma_z)\sigma_y}{2\sigma^2_z\sqrt{2\pi \sigma_z^2}}\right)
\end{split}
\end{equation}

To bound this term, observe that 
\begin{equation}
     \frac{\sigma_y^2}{\sigma_z^2} = \frac{(1 - \alpha_y^k)(1 - \alpha_z)}{(1 - \alpha_z^k)(1 - \alpha_y)} = 2\frac{(1 - \alpha_y^k)}{(1 - \alpha_z^k)} \leq 2,
\end{equation}
so 
\begin{align}
\alpha_y^{2k}(x^{(0)})^2 \frac{\sigma_y(\sigma_y - \sigma_z)}{2\sigma_z^2\sqrt{2\pi\sigma_z^2}} &\leq  \frac{\alpha_y^k(x^{(0)})^2}{\sigma_y^2}\frac{\sigma_y(\sigma_y - \sigma_z)}{\sqrt{2\pi\sigma_z^2}} \tag{$\sigma_y^2 \leq 2\sigma_z^2$}\\
&\leq c_1\frac{\sigma_y(\sigma_y - \sigma_z)}{\sqrt{2\pi\sigma_z^2}} \tag{$(x^{(0)})^2 \leq \frac{\sigma_y^2}{\alpha_y^{2k}}$}\\
&\leq c_1\left(\sigma_y - \sigma_z\right). \tag{$\sigma_y^2 \leq 2\sigma_z^2$}
\end{align}

Plugging Equation~\ref{eq:2} into Equation~\ref{ineq:cdf}, and then plugging in this last equation, we obtain
\begin{equation}\label{eq:3}
\alpha_y^k x^{(0)}\Pr\left[\mathcal{N}(0, \sigma_y^2) \leq -\alpha_y^k x^{(0)}\right]  \leq \alpha_y^k x^{(0)} \Pr\left[\mathcal{N}(0, \sigma_z^2) \leq -\alpha_y^k x^{(0)}\right] + c_1\left(\sigma_y - \sigma_x\right).
\end{equation}

We combine Equations~\ref{eq:1} with \ref{eq:3} to yield
\begin{equation}
\begin{split}
x^{(0)}&\left(\alpha_z^k\Pr\left[\mathcal{N}(0, \sigma_z^2) \geq -\alpha_z^k x^{(0)}\right] + \alpha_y^k\Pr\left[\mathcal{N}(0, \sigma_y^2) \leq -\alpha_y^k x^{(0)}\right]\right)\\
&\leq   x^{(0)}\left(\alpha_z^k\Pr\left[\mathcal{N}(0, \sigma_z^2) \geq -\alpha_y^k x^{(0)}\right] + \alpha_y^k\Pr\left[\mathcal{N}(0, \sigma_z^2) \leq -\alpha_y^k x^{(0)}\right]\right) + c_1\left(\sigma_y - \sigma_x\right)\\
&\leq \max(x^{(0)}\alpha_y^k, x^{(0)}\alpha_z^k) + c_1\left(\sigma_y - \sigma_x\right).
\end{split}
\end{equation}

Now by similar calculations as in Equation~\ref{eq36} and \ref{eq37}, we have
\begin{equation}
\begin{split}
    \max(x^{(0)}\alpha_y^k, x^{(0)}\alpha_z^k) &\leq x^{(0)}\alpha_y^k + (\alpha_y^k - \alpha_z^k)|x^{(0)}|\\
    &\leq x^{(0)}\alpha_y^k + \begin{cases}
        \sqrt{c_1}\sigma_y \frac{\eta L k}{2} & \eta L k \leq 1/2; \\
        \sqrt{c_1}\sigma_y & \eta L k \geq 1/2.
    \end{cases} \\
    & \leq x^{(0)}\alpha_y^k + \begin{cases}
         \frac{\sqrt{c_1}}{2}\eta^2 \sigma L k^{3/2} & \eta L k \leq 1/2; \\
         \sqrt{c_1}\frac{2\sqrt{\eta} \sigma }{\sqrt{L}} & \eta L k \geq 1/2.
    \end{cases} \\
\end{split}
\end{equation}

Plugging this in to the previous equation concludes the proof of the claim.
\end{proof}
\begin{proof}[Deferred Proof of Claim~\ref{sigma_diff}]
If $\eta L k \leq 1/2$, we have the following:
\begin{align}
    \sigma_y - \sigma_z &= \eta \sigma \left(\sqrt{\frac{1 - \alpha_y^k}{1 - \alpha_y}} - \sqrt{\frac{1 - \alpha_z^k}{1 - \alpha_z}} \right)\\
    &= \eta \sigma \left(\sqrt{\frac{1 - (1 - \eta L/2)^k}{\eta L/2}} - \sqrt{\frac{1 - (1 - \eta L)^k}{\eta L}} \right) \\
    &\geq \eta \sigma \left(\sqrt{\frac{\eta Lk/2 - (\eta L)^2k(k - 1)/8}{\eta L/2}} - \sqrt{\frac{\eta Lk - (\eta L)^2k(k - 1)/2 + (\eta L)^3\binom{k}{3}}{\eta L}} \right)\\
\end{align}
because for any integer $r \geq 2$ and $0 \leq x \leq 1$, 

$$1 - rx + \binom{r}{2}x^2 - \binom{r}{3}x^3 \leq (1 - x)^r \leq 1 - rx + \binom{r}{2}x^2.$$

Continuing, we have

\begin{align}
    &= \eta \sigma \left(\sqrt{\frac{\eta Lk/2 - (\eta L)^2k(k - 1)/8}{\eta L/2}} - \sqrt{\frac{\eta Lk - (\eta L)^2k(k - 1)/2 + (\eta L)^3\binom{k}{3}}{\eta L}} \right)\\
    &= \eta \sigma \sqrt{k} \left(\sqrt{1 - \eta L (k-1)/4} - \sqrt{1 - \eta L (k-1)/2 + (\eta L)^2(k -1)(k - 2)/6} \right)\\
    & \geq \frac{\eta \sigma \sqrt{k}}{2}\left(\eta L (k-1)/4 - (\eta L)^2(k - 1)(k - 2)/6\right) \\
    & \geq  \frac{\eta \sigma \sqrt{k}}{2}\left(\eta L (k - 1)(1/4 - 1/12)\right) \tag{$\eta L k \leq 1/2$}\\
    & \geq \frac{\eta^2 L \sigma k^{3/2}}{24} \tag{$k \geq 2$}.
\end{align}
Here the first inequality follows from the fact that the derivative of $\sqrt{x}$ is at least $\frac{1}{2}$ for $0 \leq x \leq 1$.

If $\eta L k \geq 1/2$, we have 
\begin{align}
    \sigma_y - \sigma_z &= \eta \sigma \left(\sqrt{\frac{1 - \alpha_y^k}{1 - \alpha_y}} - \sqrt{\frac{1 - \alpha_z^k}{1 - \alpha_z}} \right)\\
    &= \eta \sigma \left(\sqrt{\frac{1 - (1 - \eta L/2)^k}{\eta L/2}} - \sqrt{\frac{1 - (1 - \eta L)^k}{\eta L}} \right) \\
    &\geq \eta \sigma \left(\sqrt{\frac{1 - e^{-\eta L k/2}}{\eta L/2}} - \sqrt{\frac{1 - e^{-1.1\eta L k/2}}{\eta L}} \right) \tag{$\eta L \leq 1/6$}\\
\end{align}
For $q \geq 1/2$, 
$$1 - e^{-q/2} \geq 0.52(1 - e^{-1.1q}),$$
so letting $q = \eta L k$, we have 
\begin{equation}
\begin{split}
    \sigma_y - \sigma_z &\geq \eta \sigma \left(\sqrt{2(0.52)-1} \sqrt{\frac{1 - e^{-\eta L k}}{\eta L}} \right) \geq \frac{0.12 \eta \sigma}{\sqrt{\eta L}}. 
\end{split}
\end{equation}
\end{proof}

\subsection{Proof of \cref{lem:steplb_1}}\label{sec:pf_steplb_1}
We now use Lemma~\ref{full_lem:expected_step} to prove Lemma~\ref{lem:steplb_1}, which we restate for the reader's convenience. Recall that we use the notation $x^{(r, k)}_m$ to denote the $k$ iterate of \fedavg at the client $m$ in the $r$th round. Recall that we use the notation $x^{(r, 0)}$ to denote the starting iterate at all clients in round $r$.

\begin{lemma}[Same as Lemma~\ref{lem:steplb_1}]\label{lem:steplb}
Suppose we run \fedavg for $R$ rounds with $K$ local steps per round and step size $\eta$ on the function $f(x; \xi) = \frac{L}{2}\psi(x) + \xi x$ for $\xi \sim \mathcal{N}(0, \sigma^2)$. There exists a universal constant $c$ such that for $\eta \leq \frac{1}{6L}$, if $x^{(0, 0)} = 0$, then 
$$\mathbb{E}[x^{(R, 0)}] \leq -c\frac{\sqrt{\eta}\sigma}{\sqrt{L}}\min\left(R(\eta L K)^{3/2}, 1, (\eta L K)^{1/2}\right).$$
In particular, we can choose $c = 0.0005$. 
\end{lemma}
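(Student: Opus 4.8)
The plan is to run an induction over the $R$ communication rounds, tracking only the scalar $m_r := \mathbb{E}[x^{(r,0)}]$. The first reduction is that, since all clients share the objective $f(x;\xi) = \frac{L}{2}\psi(x)+\xi x$, the server averaging does not affect this quantity: by linearity of expectation $\mathbb{E}[x^{(r+1,0)}\mid x^{(r,0)}] = \mathbb{E}_{\sgd}[x^{(K)}\mid x^{(0)}=x^{(r,0)}]$, the expected endpoint of a single $K$-step SGD run from $x^{(r,0)}$. Moreover, both the hypothesis and the conclusion of Lemma~\ref{full_lem:expected_step} involve the (random) starting iterate only through its mean, so $m_{r+1}$ is governed by $m_r$ alone. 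Writing $\gamma := \alpha_y^K = (1-\eta L/2)^K$, $\delta := \frac{1}{2} c_2\,\sigma\eta^{1/2}L^{-1/2}\min(1,\eta LK)^{3/2}$, and $T := \sqrt{c_1}\,\sigma_y/\alpha_y^K$ with $\sigma_y$ as in \eqref{eq:key-def} evaluated at $k=K$, Lemma~\ref{full_lem:expected_step} yields the one-round recursion $m_{r+1} \le \gamma\, m_r - \delta$ valid whenever $-T \le m_r \le 0$, while Lemma~\ref{lemma:lblarge} gives the unconditional bound $m_{r+1} \le \gamma\, m_r$.

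Granting that $m_r \in [-T,0]$ holds for every $r \le R$, the recursion together with $m_0 = 0$ telescopes to $m_R \le -\delta\sum_{j=0}^{R-1}\gamma^j = -\delta\,\frac{1-\gamma^R}{1-\gamma}$, and it remains to extract the three regimes of the stated $\min$ by a short case analysis. When $\eta LK \le 1$ one has $1-\gamma \in [\frac{\eta LK}{4},\frac{\eta LK}{2}]$; then $\frac{1-\gamma^R}{1-\gamma} \ge R\gamma^{R-1}\ge R/2$ as long as $R\eta LK \lesssim 1$, which yields the $R(\eta LK)^{3/2}$ branch, whereas once $\gamma^R \le \frac{1}{2}$ one instead uses $\frac{1-\gamma^R}{1-\gamma}\ge \frac{1}{2(1-\gamma)}\ge \frac{1}{\eta LK}$, which yields the $(\eta LK)^{1/2}$ branch. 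When $\eta LK \ge 1$, $\delta = \Theta(\sigma\eta^{1/2}L^{-1/2})$ and $\frac{1-\gamma^R}{1-\gamma}\ge 1$ directly gives the constant ($1$) branch. Each branch reduces to a numerical inequality satisfied by $c_2 = 0.002$ and $c = 0.0005$.

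The main obstacle is justifying the window condition $m_r \ge -T$ that lets us invoke Lemma~\ref{full_lem:expected_step} at every round; the upper-bound recursion alone forces only $m_r \le -\delta\frac{1-\gamma^r}{1-\gamma}$ and does not by itself prevent $m_r$ from overshooting below $-T$. Two facts drive this: (i) the level $\delta/(1-\gamma)$ toward which the recursion relaxes is comfortably smaller than $T$ --- a numerical inequality of the shape $2c_2 \lesssim \sqrt{c_1}$, using $\sigma_y^2 = (1-\gamma)\eta^2\sigma^2/(\eta L/2)$ and $\alpha_y^K \ge \frac{1}{2}$ --- so the intended trajectory of $m_r$ stays inside the window with slack; and (ii) should $m_r$ nonetheless escape below $-T$, Lemma~\ref{lemma:lblarge} gives $m_{r+1}\le \gamma m_r$ with $\gamma \in [\frac{1}{2},1)$, so $|m_r|$ cannot immediately collapse toward $0$, and in the only regime where such an escape can occur (namely $\eta LK \le 1$ with $R\eta LK$ large, where the $\min$ has already saturated at the $(\eta LK)^{1/2}$ level) a value of magnitude $\Omega(T)$ already dominates the target. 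Folding (i) and (ii) into the induction --- so that at each round $m_r$ either lies in $[-T,0]$ and obeys the clean recursion, or has left the window but is already past the target --- completes the proof of the lemma.
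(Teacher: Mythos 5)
Your proposal follows essentially the same route as the paper's proof of Lemma~\ref{lem:steplb_1}: an induction over rounds on $m_r=\mathbb{E}[x^{(r,0)}]$ that invokes Lemma~\ref{full_lem:expected_step} when $m_r\in[-T,0]$ and falls back on the contraction from Lemma~\ref{lemma:lblarge} when $m_r$ has escaped below $-T$, with the same regime split on $\eta LK$ and $R\eta LK$ to extract the three branches of the $\min$. One inaccuracy in your escape argument: $\gamma=(1-\eta L/2)^K$ is \emph{not} bounded below by $\tfrac12$ when $\eta LK$ is large, so "$|m_r|$ cannot collapse because $\gamma\ge\tfrac12$" is not the right mechanism; the correct (and paper's) mechanism is that your own threshold $T=\sqrt{c_1}\,\sigma_y/\alpha_y^K$ is calibrated so that $\gamma T=\sqrt{c_1}\,\sigma_y\gtrsim \sqrt{c_1}\,q^{1/2}\sqrt{\eta}\sigma/\sqrt{L}$, which dominates the target in \emph{every} regime (not only the $(\eta LK)^{1/2}$-saturated one), since $\sqrt{c_1}\ge c$.
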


\begin{proof}[Proof of Lemma~\ref{lem:steplb}]
Let $c_1$ and $c_2$ be the constants in Lemma~\ref{full_lem:expected_step}. For simplicity, let $q := \min(1, \eta L K)$. 
By Lemma~\ref{full_lem:expected_step}, for all $r \leq R$, if $$-\sqrt{c_1}\frac{\sigma_y}{\alpha_y^K} \leq \mathbb{E}[x^{(r, 0)}_m] \leq 0,$$
then for any client $m \in [M]$,
\begin{equation}
    \mathbb{E}[x^{(r + 1, 0)}] 
    = \mathbb{E}[x^{(r, K)}_m]
    \leq (1 - \eta L/2)^K\mathbb{E}[x^{(r , 0)}_m] - \frac{c_2\sqrt{\eta}\sigma q^{3/2}}{2\sqrt{L}} 
    = (1 - \eta L/2)^K\mathbb{E}[x^{(r , 0)}] - \frac{c_2\sqrt{\eta}\sigma q^{3/2}}{2\sqrt{L}}.
\end{equation}

If $\mathbb{E}[x^{(r, 0)}]  \leq -\sqrt{c_1}\frac{\sigma_y}{\alpha_y^K}$,
then by Lemma~\ref{mc_dominance} and comparison to SGD on the quadratic $\frac{L}{4}x^2$, we have
$$\mathbb{E}[x^{(r + 1, 0)}] \leq (1 - \eta L/2)^K\mathbb{E}[x^{(r , 0)}] \leq -\sqrt{c_1}\sigma_y \leq -\sqrt{c_1}q^{1/2}\frac{\sqrt{\eta}\sigma}{\sqrt{L}}.$$

We will prove the lemma by induction on $r$. Notice that it holds for $r = 0$. Then for all $r \geq 1$, assuming it holds for $r$, we have 

\begin{equation}\label{inductive}
    \mathbb{E}[x^{(r + 1, 0)}] \leq \begin{cases}
        -\sqrt{c_1}q^{1/2}\frac{\sqrt{\eta}\sigma}{\sqrt{L}} & \mathbb{E}[x^{(r, 0)}] \leq -\sqrt{c_1}\frac{\sigma_y}{\alpha_y^K} \\
        -(1 - \eta L/2)^Kc\min\left(r\eta^2K^{3/2}\sigma, \frac{\sqrt{\eta}\sigma q^{1/2}}{\sqrt{L}}\right) - \frac{c_2\sqrt{\eta}\sigma q^{3/2}}{2\sqrt{L}} & -\sqrt{c_1}\frac{\sigma_y}{\alpha_y^K} \leq \mathbb{E}[x^{(r, 0)}] \leq 0 \\
    \end{cases}
\end{equation}

Now for $c = \frac{c_2}{4} = 0.0005$, we have
\begin{equation}
\begin{split}
    -(1 - \eta L/2)^K&c\min\left(r\eta^2K^{3/2}\sigma, \frac{\sqrt{\eta}\sigma q^{1/2}}{\sqrt{L}}\sigma\right) - \frac{c_2\sqrt{\eta}\sigma q^{3/2}}{2\sqrt{L}} \\
    &\leq -(1 - q) c\min\left(r\eta^2K^{3/2}\sigma, \frac{\sqrt{\eta}\sigma q^{1/2}}{\sqrt{L}}\sigma\right) - \frac{c_2\sqrt{\eta}\sigma q^{3/2}}{2\sqrt{L}} \\
    &\leq -c\min\left(\left((1 - q)r + \frac{c_2}{2c}\right)\frac{\sqrt{\eta}\sigma q^{3/2}}{\sqrt{L}}, \frac{\sqrt{\eta}\sigma q^{1/2}}{\sqrt{L}} + \left(\frac{c_2}{2c} - 1\right)\frac{\sqrt{\eta}\sigma q^{3/2}}{\sqrt{L}}\right)\\
    &= -c\min\left(\left((1 - q)r + 2\right)\frac{\sqrt{\eta}\sigma q^{3/2}}{\sqrt{L}}, \frac{\sqrt{\eta}\sigma q^{1/2}}{\sqrt{L}}\right)\\
    &\leq -c\min\left(\left(r + 1\right)\frac{\sqrt{\eta}\sigma q^{3/2}}{\sqrt{L}}, \frac{\sqrt{\eta}\sigma q^{1/2}}{\sqrt{L}}\right).
\end{split}
\end{equation}

The last inequality follows from the fact that
$((1 - q)r + 2)q \geq \min((r + 1)q, 1)$, for any $0 \leq q \leq 1$.

Hence returning to \cref{inductive}, we have 
\begin{equation}\label{inductive2}
\begin{split}
    \mathbb{E}[x^{(r + 1, 0)}] &\leq \begin{cases}
        -\sqrt{c_1}q^{1/2}\frac{\sqrt{\eta}\sigma}{\sqrt{L}} & \mathbb{E}[x^{(r, 0)}] \leq -\sqrt{c_1}\frac{\sigma_y}{\alpha_y^K} \\
        -c\min\left(\left(r + 1\right)\frac{\sqrt{\eta}\sigma q^{3/2}}{\sqrt{L}}, \frac{\sqrt{\eta}\sigma q^{1/2}}{\sqrt{L}}\right) & 0 \geq \mathbb{E}[x^{(r, 0)}] \geq -\sqrt{c_1}\frac{\sigma_y}{\alpha_y^K} \\
    \end{cases}\\
    &\leq -c\min\left(\left(r + 1\right)\frac{\sqrt{\eta}\sigma q^{3/2}}{\sqrt{L}}, \frac{\sqrt{\eta}\sigma q^{1/2}}{\sqrt{L}}\right)
\end{split}
\end{equation}
since $\sqrt{c_1} \geq \frac{c_2}{4}$.
This proves the lemma.

\end{proof}

\subsection{Proof of \cref{lem:hetero_short}: Lower Bound on Bias of \fedavg with Heterogeneous Distribution}\label{sec:hetero}
We restate Lemma~\ref{lem:hetero_short} for the reader's convenience.
\begin{lemma}[Same as Lemma~\ref{lem:hetero_short}]\label{full_lemma:hetero}
Consider \fedavg with $M$ clients with $$f^{(3)}(x; (\xi_1, \xi_2)) = \begin{cases}Hx^2 - x\xi_2 & \xi_1 = 1\\ \frac{H}{2}x^2 - x\xi_2& \xi_1 = 2\\ \end{cases},$$ and for all the odd $m \in [M]$, we have $(\xi_1, \xi_2) = (1, \zeta_*)$ always, while for all the even $m \in [M]$ we have $(\xi_1, \xi_2) = (2, -\zeta_*)$. There exists a universal constant $c_h$ such that for $\eta \leq \frac{1}{H}$, if $x^{(0, 0)} \leq 0$, then \fedavg with $R$ rounds and $K$ steps per round results in 
$$x^{(R, 0)} \leq -\frac{c_h}{H} \min(1, \eta H K, (\eta H K)^2R)\zeta_*.$$ 
In particular, we can choose $c_h = 0.07$. 
\end{lemma}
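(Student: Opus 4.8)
The key observation is that the stochastic gradients here are deterministic within each client type, so FedAvg is a deterministic affine recursion that can be solved in closed form. First I would write down the per-round map. An odd client that starts a round at $a$ uses $f(x)=Hx^2-\zeta_* x$, so a local step is $x\mapsto(1-2\eta H)x+\eta\zeta_*$ and after $K$ steps it sits at $(1-2\eta H)^Ka+\tfrac{\zeta_*}{2H}(1-(1-2\eta H)^K)$; an even client uses $f(x)=\tfrac H2x^2+\zeta_* x$, so a step is $x\mapsto(1-\eta H)x-\eta\zeta_*$ and after $K$ steps it is at $(1-\eta H)^Ka-\tfrac{\zeta_*}{H}(1-(1-\eta H)^K)$. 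Averaging over the $M$ clients (for $M$ even there are $M/2$ of each type, which is the case we use) yields
\[
  x^{(r+1,0)}=\gamma\,x^{(r,0)}+\frac{\zeta_*}{2H}\,\beta,\qquad
  \gamma:=\tfrac12\big[(1-2\eta H)^K+(1-\eta H)^K\big],\quad
  \beta:=(1-\eta H)^K-\tfrac12\big(1+(1-2\eta H)^K\big).
\]

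Second, I would establish $\beta\le0$ and $0\le\gamma<1$. Working in the regime $\eta H\le\tfrac12$ (so $1-2\eta H\ge0$), convexity of $t\mapsto(1-t)^K$ on $[0,1]$ gives $(1-\eta H)^K\le\tfrac12\big(1+(1-2\eta H)^K\big)$, i.e.\ $\beta\le0$, while $\gamma\ge0$ is immediate and $\gamma<1$ holds whenever $\eta H>0$. Unrolling the affine recursion from $x^{(0,0)}\le0$ and using $\gamma^R x^{(0,0)}\le0$,
\[
  x^{(R,0)}=\gamma^R x^{(0,0)}+\frac{\zeta_*\beta}{2H}\sum_{j=0}^{R-1}\gamma^j
  \le\frac{\zeta_*\beta}{2H}\sum_{j=0}^{R-1}\gamma^j
  =-\frac{\zeta_*|\beta|}{2H}\sum_{j=0}^{R-1}\gamma^j .
\]

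Third -- the crux -- I would lower bound $|\beta|\sum_{j=0}^{R-1}\gamma^j$ by a constant times $\min(1,\eta HK,(\eta HK)^2R)$ through elementary estimates in terms of $q:=\eta HK$. For the geometric sum I would show $1-\gamma\asymp\min(1,q)$, hence $\sum_{j=0}^{R-1}\gamma^j=\tfrac{1-\gamma^R}{1-\gamma}\gtrsim\min\!\big(R,\tfrac1{1-\gamma}\big)\asymp\min\!\big(R,\tfrac1{\min(1,q)}\big)$. For $|\beta|$ I would show $|\beta|\asymp\min(q^2,1)$: when $q\lesssim1$ this is a second-order Taylor expansion of $(1-\eta H)^K$ and $(1-2\eta H)^K$ in $q$ (the leading correction in $\beta$ is $-\tfrac{K-1}{2K}q^2$, which for $K\ge2$ stays bounded away from $0$), and when $q\gtrsim1$ both iterated rates are negligible so $\beta\approx-\tfrac12$. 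Multiplying the two estimates and checking the three cases $\{q\lesssim1,\ Rq\lesssim1\}$, $\{q\lesssim1,\ Rq\gtrsim1\}$, $\{q\gtrsim1\}$ gives $\gtrsim q^2R$, $\gtrsim q$, $\gtrsim1$ respectively, i.e.\ $\gtrsim\min(1,q,q^2R)$ in every case; this is precisely where we gain over the cruder estimate in \cite{Woodworth.Patel.ea-NeurIPS20}, which did not extract the $(\eta HK)^2R$ scaling.

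The main obstacle will be Step 3 with explicit constants: pinning down $|\beta|\asymp\min(q^2,1)$ and $1-\gamma\asymp\min(1,q)$ uniformly over all $K\ge2$ (the small-$q$ expansions carry $K$-dependent coefficients like $\binom K2/K^2$ that must be bounded below) while retaining enough slack to reach $c_h=0.07$. A secondary subtlety is the boundary range $\eta H\in(\tfrac12,1]$, where $1-2\eta H<0$ lets $(1-2\eta H)^K$ oscillate near $\pm1$, so $\gamma$ need not be nonnegative and $\beta$ can degenerate; there, however, $\eta HK\ge1$ forces the target to collapse to a $\Theta(1)$ lower bound, which I would handle by a separate coarse argument (or by observing that the surrounding lower-bound construction keeps $\eta H$ bounded away from $1$).
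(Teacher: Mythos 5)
Your proposal is correct in its core and follows essentially the same route as the paper's proof: reduce each round to a deterministic affine map $x^{(r+1,0)} = a\,x^{(r,0)} + b\,\zeta_*$, show the drift coefficient $b$ is negative with $|b| \asymp \tfrac1H\min\bigl((\eta HK)^2,1\bigr)$ via binomial/Taylor bounds in the two regimes $\eta HK \le \tfrac12$ and $\eta HK > \tfrac12$, then unroll the geometric sum $\tfrac{1-a^R}{1-a}$ and combine the cases to extract $\min\bigl(1,\eta HK,(\eta HK)^2R\bigr)$ — exactly the paper's Claim on $b$ plus its bounds $1-a \le \min(1,\eta HK)$ and $1-a^R \ge \min(\tfrac12,\tfrac{\eta HKR}{4})$. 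The one genuine divergence is the contraction rates: you differentiate $Hx^2$ and $\tfrac H2x^2$ literally and get $1-2\eta H$ and $1-\eta H$, whereas the paper (following the Woodworth et al.\ construction) works with $1-\eta H$ and $1-\eta H/2$, both nonnegative for $\eta \le \tfrac1H$, so it never meets the regime you flag. Be aware that under your literal reading the boundary regime is not a removable technicality: for odd $K$ and $\eta H = 1-\epsilon$ one has $(1-2\eta H)^K \to -1$ and $(1-\eta H)^K \to 0$, so $\beta \approx -K\epsilon \to 0$ while $\gamma \to -\tfrac12$ keeps the geometric sum $O(1)$; the per-run displacement is then $O(K\epsilon\,\zeta_*/H)$, which violates the claimed $\Omega(\zeta_*/H)$ bound for small $\epsilon$. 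So no ``separate coarse argument'' exists there — you must either adopt the paper's normalization of the curvatures (which is what its displayed dynamics actually use) or restrict to $\eta H \le \tfrac12$, and you cannot lean on the surrounding construction, since the outer theorem invokes this lemma for all $\eta < \tfrac1H$. Apart from resolving that convention, your Step 3 constants ($|\beta| \gtrsim \tfrac{K-1}{2K}q^2$ uniformly over $K \ge 2$, etc.) go through exactly as in the paper's deferred claim.
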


\begin{proof}[Proof of \cref{full_lemma:hetero}]
As derived in \cite{Woodworth.Patel.ea-NeurIPS20}, we have the following SGD dynamics, where $\mu := H/2$:
For $0 \leq k < K$, we have
\begin{equation}
    x_1^{(r, k + 1)} = x_1^{(r, k)}(1  - \eta H) + \eta\zeta_* = (1  - \eta H)\left(x_1^{(r, 0)} - \zeta_*/H\right) + \zeta_*/H,
\end{equation}
and 
\begin{equation}
    x_2^{(r, k + 1)} = x_2^{(r, k)}(1  - \eta \mu) - \eta\zeta_* = (1  - \eta \mu)\left(x_2^{(r, 0)} - \zeta_*/\mu\right) - \zeta_*/\mu.
\end{equation}

Recursing, we have 
\begin{equation}
    x_1^{(r, K)} = (1  - \eta H)^K\left(x_1^{(r, 0)} - \zeta_*/H\right) + \zeta_*/H,
\qquad
    x_2^{(r, K)} = (1  - \eta \mu)^K\left(x_2^{(r, 0)} - \zeta_*/H\right) - \zeta_*/\mu.
\end{equation}

Since $x_1^{(r + 1, 0)} = x_2^{(r + 1, 0)} = \frac{1}{2}\left(x_1^{(r, K)} + x_2^{(r, K)} \right)$, we have for $i \in \{1, 2\}$,
\begin{equation}
    x_i^{(r + 1, 0)} = a x_i^{(r, 0)} +  b\zeta_*,
\end{equation}
where 
\begin{equation}
    a = \frac{1}{2}\left((1 - \eta H)^K + (1 - \eta \mu)^K\right),
\qquad
    b = \frac{1}{2}\left(1/H - (1 - \eta H)^K - 1/\mu + (1 - \eta \mu)^K\right).
\end{equation}

We defer the proof of the following claim to the end of the present subsection.
\begin{claim}\label{claim:b}
\begin{equation}
b \leq - \begin{cases}
-0.4\eta^2 K^2H & \eta H K \leq 1/2;\\
-0.015/H &\eta H K > 1/2.
\end{cases}
\end{equation}
\end{claim}

Recursing, we have for $x^{(0, 0)} \leq 0$ and $\mu \leq H$,
\begin{equation}
\begin{split}
    x^{(R, 0)} &= a^R x^{(0, 0)} +  \sum_{j = 0}^{R - 1}a^jb\zeta_* 
    = a^R x^{(0, 0)} + \frac{1 - a^R}{1 - a}b\zeta_* 
    \leq \frac{1 - a^R}{1 - a}b\zeta_* 
    \leq \frac{1 - a^R}{1 - (1 - \eta \mu)^K}b\zeta_*,
\end{split}
\end{equation}
where we have used the fact that $b \leq 0$ from the Claim.

Now if $\eta H K \leq 1/2$, we have
\begin{equation}
1 - (1 - \eta H)^K \leq 1 - (1 - \eta H K) = \eta H K.
\end{equation}
Trivially, 
\begin{equation}
1 - (1 - \eta H)^K \leq 1.
\end{equation}
Finally 
\begin{equation}
a \leq (1 - \eta \mu)^K \leq e^{-\eta \mu K},
\end{equation}
so
\begin{equation}
1 - a^R \geq 1 - e^{-\eta \mu KR} \geq \min(1/2, \eta \mu KR/2).
\end{equation}

Putting together these approximations, we have
\begin{equation}
\begin{split}
    x^{(R, 0)} \leq \frac{1 - a^R}{1 - (1 - \eta H)^K}b\zeta_* \leq -\frac{c}{H} \min(1, \eta H K, (\eta H K)^2R)\zeta_*,
\end{split}
\end{equation}
for $c = 0.07$. 
\end{proof}

\subsubsection{Deferred Proof of \cref{claim:b}}
\begin{proof}[Proof of Claim~\ref{claim:b}]
We follow the same steps as the proof of Claim~\ref{sigma_diff}.

If $\eta H K \leq 1/2$, we have the following:
\begin{align}
    b &= 1/H - (1 - \eta H)^K - 1/\mu + (1 - \eta \mu)^K\\
    &\leq \frac{\eta HK - (\eta H)^2K(K- 1)/2 + (\eta H)^3\binom{K}{3}}{H} -  \frac{\eta \mu K- (\eta \mu)^2K(K - 1)/2}{\mu}\\ 
    &= \eta\left(K - \eta H K(K - 1)/2+ (\eta H)^2K(K - 1)(K - 2)/6\right) - \eta \left(K - \eta \mu K(K - 1)/2\right) \\
    &= - \eta^2 (H - \mu)K(K - 1)/2+ (\eta H)^2K(K - 1)(K - 2)/6 \\
    & = -\eta^2 K(K - 1)\left(H - \mu - \eta H^2(K - 2)/6\right),
\end{align}
where we used the fact that for any integer $r \geq 2$ and $0 \leq x \leq 1$, 

$$1 - rx + \binom{r}{2}x^2 - \binom{r}{3}x^3 \leq (1 - x)^r \leq 1 - rx + \binom{r}{2}x^2.$$

Continuing, we have
\begin{equation}
b \leq -\eta^2 K(K - 1)\left(H - \mu - H/12\right) \leq -\eta^2 K^2\left(H - \mu - H/12\right),
\end{equation}

if $\eta H K \leq 1/2$ and $H(11/12) \geq \mu$.

If $\eta H K \geq 1/2$, then
\begin{equation}
    b = 1/H - (1 - \eta H)^K - 1/\mu + (1 - \eta \mu)^K \leq \frac{1 - e^{-1.1\eta H K/2}}{H} -  \frac{1 - e^{-\eta H K/2}}{H/2}.
\end{equation}
For $q \geq 1/2$, 
$$1 - e^{-q/2} \geq 0.52(1 - e^{-1.1q}),$$
so letting $q = \eta H k$, we have 
\begin{equation}
    b \leq \left((1 - 2(0.52))\frac{1 - e^{-\eta H k}}{H} \right) \leq -\frac{0.04(1 - e^{-1/2})}{H}
    \leq -0.015/H.
\end{equation}

\end{proof}

\subsection{Proof of Theorem~\ref{full_lb}: Lower bound of \fedavg convergence}\label{sec:pf_lb}
In this section, we prove our main \cref{full_lb}.
We use the following lemma.

\begin{lemma}\label{f2}
Let $f(x) = \mu x^2$, where $$\mu =  \frac{1}{D^2}\max\left(\min\left(\frac{\sigma D}{\sqrt{K}\sqrt{R}}, \frac{\sigma^{2/3}H^{1/3}D^{4/3}}{K^{1/3}R^{2/3}}\right), \frac{\zeta_*^{2/3}H^{1/3}D^{4/3}}{R^{2/3}}, \frac{HD^2}{KR}\right).$$
Suppose we run gradient descent starting at $x^0 = D/2$ with step size $\eta$. Then if $\eta \leq \frac{1}{\mu KR},$
after $KR$ iterations, we have $$f(x^{(KR)}) > \frac{1}{30}\max\left(\min\left(\frac{\sigma D}{\sqrt{K}\sqrt{R}}, \frac{\sigma^{2/3}H^{1/3}D^{4/3}}{K^{1/3}R^{2/3}}\right), \frac{\zeta_*^{2/3}H^{1/3}D^{4/3}}{R^{2/3}}, \frac{HD^2}{KR}\right).$$
\end{lemma}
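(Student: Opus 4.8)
The lemma concerns plain gradient descent on the one-dimensional quadratic $f(x)=\mu x^2$, whose trajectory is available in closed form, so the plan is to write it down and reduce everything to one scalar estimate. Since $\nabla f(x)=2\mu x$, a single GD step with step size $\eta$ is $x^{(t+1)}=(1-2\eta\mu)\,x^{(t)}$; iterating from $x^{(0)}=D/2$ gives $x^{(KR)}=(1-2\eta\mu)^{KR}\cdot\tfrac{D}{2}$, hence
\begin{equation}
  f\!\left(x^{(KR)}\right)=\mu\,(1-2\eta\mu)^{2KR}\,\frac{D^2}{4}.
\end{equation}
The minimizer of $f$ is $0$ with value $0$, so this is exactly the suboptimality (and $x^{(0)}=D/2$ is within distance $D$ of the optimum, matching the hypothesis of Theorem~\ref{full_lb}); thus the whole lemma reduces to showing $(1-2\eta\mu)^{2KR}>\tfrac{2}{15}$.

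The key step is to turn the step-size constraint into a constant lower bound on this exponentiated contraction. Since $K\ge 2$ we have $KR\ge 2$, so $\eta\le\tfrac{1}{\mu KR}$ forces $0\le 2\eta\mu\le\tfrac{2}{KR}\le 1$, i.e. the iteration contracts monotonically without oscillating; writing $\theta:=2\eta\mu KR$, the hypothesis keeps $\theta$ bounded, and $(1-2\eta\mu)^{2KR}=(1-\theta/(KR))^{2KR}\ge \exp(-2\theta/(1-\theta/KR))\ge c_0$ for an absolute constant $c_0$. Choosing the numerical constants so that $c_0>\tfrac{2}{15}$ gives $f(x^{(KR)})>\tfrac{1}{30}\mu D^2$. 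Finally I would unwind the definition of $\mu$: by construction
\begin{equation}
  \mu D^2=\max\!\left(\min\!\left(\tfrac{\sigma D}{\sqrt{KR}},\ \tfrac{\sigma^{2/3}H^{1/3}D^{4/3}}{K^{1/3}R^{2/3}}\right),\ \tfrac{\zeta_*^{2/3}H^{1/3}D^{4/3}}{R^{2/3}},\ \tfrac{HD^2}{KR}\right),
\end{equation}
so $f(x^{(KR)})>\tfrac{1}{30}\mu D^2$ is verbatim the claimed inequality.

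There is no deep obstacle here: the proof is a closed-form computation plus a scalar estimate. The only point requiring care is numerical. Because $1-2\eta\mu$ is raised to the large power $2KR$, the inequality $(1-2\eta\mu)^{2KR}>\tfrac{2}{15}$ does not follow from $\theta\le 2$ alone — it needs $\theta=2\eta\mu KR$ below a small absolute constant — so the step-size threshold should effectively read $\eta\le\tfrac{c}{\mu KR}$ for a suitable small $c$ (for instance $c=\tfrac15$ makes the bound close for every $KR\ge2$, including the edge case $KR=2$). Since this lemma is applied inside a case split on $\eta$ in the proof of Theorem~\ref{full_lb}, absorbing such a constant is harmless and leaves the order of the final lower bound unchanged; the residual work is simply pinning down these constants.
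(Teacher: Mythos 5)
Your proof is correct and follows essentially the same route as the paper: write the GD iterate in closed form, $x^{(KR)}=(1-2\eta\mu)^{KR}\,D/2$, so that the claim reduces to lower-bounding $(1-2\eta\mu)^{2KR}$ by $2/15$ using the step-size constraint. Your numerical caveat is in fact a genuine catch rather than a loose end: the paper's own proof invokes $(1-2\eta\mu)^{2KR}\ge e^{-2\eta\mu KR}$, which has the inequality backwards (since $1-x\le e^{-x}$), and at the boundary $\eta=\tfrac{1}{\mu KR}$ one only gets $(1-2/(KR))^{2KR}\approx e^{-4}$, whence $f(x^{(KR)})\approx e^{-4}\mu D^2/4<\mu D^2/30$; so either the constant $1/30$ or the step-size threshold must be adjusted exactly as you propose, which is harmless for the downstream use in Theorem~\ref{full_lb}.
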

\begin{proof}[Proof of \cref{f2}]
For any iteration $t$, we have
\begin{equation}
    x^{(t)} = \left(1 - 2\eta \mu\right)^tx^{(0)},
\end{equation}
such that 
\begin{equation}
\begin{split}
    f(x^{(KR)}) &= \left(1 - 2\eta \mu\right)^{2KR}f(x^0)\\
    &= \left(1 - 2\eta \mu\right)^{2KR} \frac{D^2\mu}{4}\\
    & \geq e^{-2\eta \mu K R}\frac{D^2\mu}{4}\\
    &> \frac{D^2}{30}\frac{1}{D^2}\max\left(\min\left(\frac{\sigma D}{\sqrt{K}\sqrt{R}}, \frac{\sigma^{2/3}H^{1/3}D^{4/3}}{K^{1/3}R^{2/3}}\right), \frac{\zeta_*^{2/3}H^{1/3}D^{4/3}}{R^{2/3}}, \frac{HD^2}{KR}\right)\\
    &=  \frac{1}{30}\max\left(\min\left(\frac{\sigma D}{\sqrt{K}\sqrt{R}}, \frac{\sigma^{2/3}H^{1/3}D^{4/3}}{K^{1/3}R^{2/3}}\right), \frac{\zeta_*^{2/3}H^{1/3}D^{4/3}}{R^{2/3}}, \frac{HD^2}{KR}\right).
\end{split}
\end{equation}
\end{proof}

\begin{proof}[Proof of Theorem~\ref{full_lb}]
We will use the following stochastic functions for our lower bound. Note that in the homogeneous case this reduces to the functions introduced in equation~\ref{lb:f}.
$$f(\x; (\xi_1, \xi_2, \xi_3)) = f^{(1)}(x_1; \xi_1) + f^{(2)}(x_2) + f^{(3)}(x_3; (\xi_2, \xi_3)),$$
where $$f^{(1)}(x; \xi) = \frac{L}{2}\psi(x) + x\xi, \quad f^{(2)}(x) = \mu x^2, \quad f^{(3)}(x; (\xi_2, \xi_3)) = \begin{cases}Hx^2 - x\xi_3 & \xi_2 = 1\\ \frac{H}{2}x^2 - x\xi_3& \xi_2 = 2\\ \end{cases},$$ and 
$$\xi_1 \sim \mathcal{N}(0, \sigma^2),\quad (\xi_2, \xi_3) = \begin{cases}
(1, 0) & \text{homogeneous case,}\\
(1, \zeta_*) & \text{for odd $m$,}\\
(2,-\zeta_*) & \text{for even $m$.}\\
\end{cases}$$
with $L := \frac{H}{12}$, and $$\mu := \frac{1}{D^2}\max\left(\min\left(\frac{\sigma D}{\sqrt{K}\sqrt{R}}, \frac{\sigma^{2/3}H^{1/3}D^{4/3}}{K^{1/3}R^{2/3}}\right), \frac{\zeta_*^{2/3}H^{1/3}D^{4/3}}{R^{2/3}}, \frac{HD^2}{KR}\right).$$

Suppose we run \fedavg starting at $\left(0, D/2, D/2, 0\right)$. Suppose the number of machines $M$ is even. We make the following three observations.

\begin{enumerate}
    \item If $\eta > \frac{1}{H}$, then $x_3$ diverges, and hence it suffices to consider the case when $\eta \leq \frac{2}{H}$.
    \item By Lemma~\ref{f2}, if $\eta \leq \frac{1}{\mu K R}$, we have $$f^{(2)}(x) > \frac{1}{30}\max\left(\min\left(\frac{\sigma D}{\sqrt{K}\sqrt{R}}, \frac{\sigma^{2/3}H^{1/3}D^{4/3}}{K^{1/3}R^{2/3}}\right), \frac{\zeta_*^{2/3}H^{1/3}D^{4/3}}{R^{2/3}}, \frac{HD^2}{KR}\right).$$ Furthermore, if $\frac{HD^2}{KR} \geq \max\left(\min\left(\frac{\sigma D}{\sqrt{K}\sqrt{R}}, \frac{\sigma^{2/3}H^{1/3}D^{4/3}}{K^{1/3}R^{2/3}}\right), \frac{\zeta_*^{2/3}H^{1/3}D^{4/3}}{R^{2/3}}\right)$, then $\frac{1}{\mu K R} \geq \frac{1}{H}$, and so $\eta \leq \frac{1}{H}$ implies that $$f^{(2)}(x) \geq \frac{1}{30}\max\left(\min\left(\frac{\sigma D}{\sqrt{K}\sqrt{R}}, \frac{\sigma^{2/3}H^{1/3}D^{4/3}}{K^{1/3}R^{2/3}}\right), \frac{\zeta_*^{2/3}H^{1/3}D^{4/3}}{R^{2/3}}, \frac{HD^2}{KR}\right).$$
    \item By Lemma~\ref{lem:steplb_1}, for some constant $c$, for $\eta \leq \frac{1}{6L} = \frac{2}{H}$, we have \begin{equation}
    \begin{split}
        \mathbb{E}[f^{(1)}(x^{(R, 0)}_1; \xi_1)] &\geq \frac{L}{4}\mathbb{E}[(x_1^{(R, 0)})^2] \\
        &\geq \frac{L}{4}\left(\mathbb{E}[x_1^{(R, 0)}]\right)^2 \\
        &\geq \frac{L}{4}\left(c\frac{\sqrt{\eta}\sigma}{\sqrt{L}}\min\left(R(\eta L K)^{3/2}, 1, (\eta L K)^{1/2}\right)\right)^2 \\
        &= \frac{c^2 \eta \sigma^2}{4}\min\left(R^2(\eta L K)^{3}, 1, \eta L K\right).
    \end{split}
    \end{equation}
    \item By Lemma~\ref{full_lemma:hetero}, if $\eta < 1/H$, for some constant $c_h$, we have 
    \begin{equation}
    \mathbb{E}_m\mathbb{E}_{\xi_2 \sim \mathcal{D}_m}[f^{(3)}(x^{(R, 0)}_3; \xi_2)] \geq \frac{3H}{4}\mathbb{E}[(x_3^{(R, 0)})^2]
    \geq \frac{3H}{4}\left(\mathbb{E}[x_3^{(R, 0)}]\right)^2
    \geq \frac{3c_h}{4H} \min(1, (\eta H K)^2, (\eta H K)^4R^2)\zeta_*^2.
    \end{equation}

\end{enumerate}

By items (1) and (2), its suffices to consider the case when $\eta > \frac{1}{\mu KR}$ and when $$\frac{HD^2}{KR} < \max\left(\min\left(\frac{\sigma D}{\sqrt{K}\sqrt{R}}, \frac{\sigma^{2/3}H^{1/3}D^{4/3}}{K^{1/3}R^{2/3}}\right), \frac{\zeta_*^{2/3}H^{1/3}D^{4/3}}{R^{2/3}}\right).$$  Otherwise, we immediately recover the theorem. 

We consider two cases depending of the relative order of $\min\left(\frac{\sigma D}{\sqrt{K}\sqrt{R}}, \frac{\sigma^{2/3}H^{1/3}D^{4/3}}{K^{1/3}R^{2/3}}\right)$ and $\frac{\zeta_*^{2/3}H^{1/3}D^{4/3}}{R^{2/3}}$. 

\textbf{Case 1: $\min\left(\frac{\sigma D}{\sqrt{K}\sqrt{R}}, \frac{\sigma^{2/3}H^{1/3}D^{4/3}}{K^{1/3}R^{2/3}}\right) \geq \frac{\zeta_*^{2/3}H^{1/3}D^{4/3}}{R^{2/3}}$.}

For $$\eta \geq \frac{1}{\mu K R} = \frac{D^2}{KR\min\left(\frac{\sigma D}{\sqrt{K}\sqrt{R}}, \frac{\sigma^{2/3}H^{1/3}D^{2/3}}{K^{1/3}R^{2/3}}\right)},$$ by the third item above, we have for some universal constant $C$:

\begin{equation}
\begin{split}
   \mathbb{E}\left[f^{(1)}(x^{(R, 0)}_1; \xi_1)\right] &\geq \frac{c^2D^2\sigma^2}{4KR\min\left(\frac{\sigma D}{\sqrt{K}\sqrt{R}}, \frac{\sigma^{2/3}H^{1/3}D^{4/3}}{K^{1/3}R^{2/3}}\right)}\min\left(R^2\left(\eta L K\right)^{3}, 1, \eta L K\right)\\
   &\geq \frac{c^2D^2\sigma^2}{4KR\min\left(\frac{\sigma D}{\sqrt{K}\sqrt{R}}, \frac{\sigma^{2/3}H^{1/3}D^{4/3}}{K^{1/3}R^{2/3}}\right)}\min\left(1, \frac{D^2L}{R\min\left(\frac{\sigma D}{\sqrt{K}\sqrt{R}}, \frac{\sigma^{2/3}H^{1/3}D^{4/3}}{K^{1/3}R^{2/3}}\right)}\right)\\
   &= \min\left(\max\left(\frac{c^2D\sigma}{4\sqrt{KR}}, \frac{c^2D^{2/3}\sigma^{4/3}H^{1/3}}{4K^{2/3}R^{1/3}}\right), \max\left(\frac{c^2D^2L}{4R}, \frac{c^2D^{4/3}\sigma^{2/3} L}{4K^{1/3}R^{2/3}H^{2/3}}\right)\right)\\
   &\geq C\min\left(\max\left(\frac{D\sigma}{\sqrt{KR}}, \frac{D^{4/3}\sigma^{4/3}H^{1/3}}{K^{2/3}R^{1/3}}\right), \max\left(\frac{D^2L}{R}, \frac{D^{4/3}\sigma^{2/3} H^{1/3}}{K^{1/3}R^{2/3}}\right)\right) \\
   &\geq C\min\left(\frac{D\sigma}{\sqrt{KR}}, \frac{D^{4/3}\sigma^{2/3} H^{1/3}}{K^{1/3}R^{2/3}}\right), 
\end{split}
\end{equation}

where we have used the fact that $$R\eta L K \geq R\eta \mu K \geq 1$$ to get rid of the first of the three terms in the minimum.

\textbf{Case 2: $\min\left(\frac{\sigma D}{\sqrt{K}\sqrt{R}}, \frac{\sigma^{2/3}H^{1/3}D^{4/3}}{K^{1/3}R^{2/3}}\right) < \frac{\zeta_*^{2/3}H^{1/3}D^{4/3}}{R^{2/3}}$.}

For $$\eta \geq \frac{1}{\mu K R} = \frac{D^2}{KR\frac{H^{1/3}\zeta_*^{2/3}D^{4/3}}{R^{2/3}}},$$ by the fourth item above, we have
\begin{equation}
\begin{split}
\mathbb{E}_m\mathbb{E}_{\xi_2 \sim \mathcal{D}_m}[f^{(3)}(x^{(R, 0)}_3; \xi_2)] &\geq \frac{3c_h}{4H} \min(1, (\eta H K)^2, (\eta H K)^4R^2)\zeta_*^2\\
 &=  \frac{3c_h\zeta_*^2}{4H} \min(1, (\eta H K)^2)\\
&\geq \frac{3c_h\zeta_*^2}{4H} \min\left(1, \left(\frac{D^2 HK}{KR\frac{\zeta_*^{2/3}H^{1/3}D^{4/3}}{R^{2/3}}}\right)^2\right)\\
&= \frac{3c_h\zeta_*^2}{4H} \min\left(1, \frac{D^{4/3} H^{4/3}}{R^{2/3}\zeta_*^{4/3}}\right)\\
&= \frac{3c_h}{4}\min\left(\frac{\zeta_*^2}{H}, \frac{\zeta_*^{2/3}H^{1/3}D^{4/3}}{R^{2/3}}\right).
\end{split}
\end{equation}
where we have used in the first equality the fact that $$R\eta H K \geq R\eta \mu K \geq 1$$ to get rid of the last of the three terms in the minimum.

Combining these two cases proves Theorem~\ref{full_lb}.

\end{proof}
\ifdefined\conf\section{PROOF OF THEOREMS~\ref{thm:3o_convex} AND \ref{thm:non_convex}: UPPER BOUNDS OF FEDAVG UNDER 3RD-ORDER SMOOTHNESS}\else \section{Proof of \cref{thm:3o_convex,thm:non_convex}: Upper Bounds of \fedavg Under 3rd-order Smoothness}\fi

In this section, we state and prove the formal theorems on the upper bounds of \fedavg under third-order smoothness, including both convex and non-convex cases.
\begin{theorem}[Upper bound for \fedavg under third-order smoothness, complete version of \cref{thm:3o_convex}]
    \label{full_thm:ub:3o}
Suppose $f(\x; \xi)$ satisfies \cref{asm:main,asm:third_order}. Then for step size $\eta = \min\left\{\frac{1}{H}, \frac{\sqrt{BM}}{\sigma\sqrt{HRK}}, \frac{B^{1/5}}{K^{3/5}R^{1/5}Q^{2/5}\sigma^{4/5}}\right\}$ , \fedavg satisfies
\begin{align}
  \mathbb{E}\left[\left\|\nabla F(\hat{\x})\right\|^2\right]\leq 
  O\left(\frac{HB}{KR} 
  +  \frac{\sigma \sqrt{BH}}{\sqrt{MKR}} 
  +  \frac{B^{\frac{4}{5}}\sigma^{\frac{4}{5}}Q^{\frac{2}{5}}}{K^{\frac{2}{5}}R^{\frac{4}{5}}}\right).
\end{align}
where $\hat{\x} := \frac{1}{M}\sum_m{\x^{(r, k)}}$ for a uniformly random choice of $k \in [K]$, and $r \in [R]$, and $B = F(\x^{(0, 0)}) - \min_{\x} F(\x)$.
\end{theorem}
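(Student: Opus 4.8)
The natural object to track is the averaged iterate $\bar\x^{(r,k)} := \frac1M\sum_{m=1}^M \x_m^{(r,k)}$, which is exactly $\hat\x$ at the random index and which evolves as $\bar\x^{(r,k+1)} = \bar\x^{(r,k)} - \frac{\eta}{M}\sum_m \nabla f(\x_m^{(r,k)};\xi_m^{(r,k)})$ with $\bar\x^{(r,K)} = \x^{(r+1,0)} = \bar\x^{(r+1,0)}$, so the sequence is continuous across rounds. Applying the descent lemma for the $H$-smooth $F$, taking conditional expectations, and using $\|\tfrac1M\sum_m\nabla f(\x_m^{(r,k)};\xi_m^{(r,k)})\|^2 \le 2\|\nabla F(\bar\x^{(r,k)})+\Delta^{(r,k)}\|^2 + \sigma^2/M$ together with Young's inequality, I would first establish the one-step inequality, valid for $\eta \le c/H$,
\begin{equation}
\mathbb{E}\big[F(\bar\x^{(r,k+1)})\,\big|\,\mathcal F_{r,k}\big] \le F(\bar\x^{(r,k)}) - \tfrac14\eta\|\nabla F(\bar\x^{(r,k)})\|^2 + \eta\,\|\Delta^{(r,k)}\|^2 + \tfrac{\eta^2 H\sigma^2}{M},
\end{equation}
where $\Delta^{(r,k)} := \frac1M\sum_m\big(\nabla F(\x_m^{(r,k)}) - \nabla F(\bar\x^{(r,k)})\big)$ is the gradient drift. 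Everything reduces to controlling $\|\Delta^{(r,k)}\|$.

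This is where third-order smoothness enters, and it is the crux of the improvement. Since $f(\cdot;\xi)$ is $Q$-third-order smooth, so is $F$ (by Jensen), hence $\|\nabla F(\x_m)-\nabla F(\bar\x)-\nabla^2F(\bar\x)(\x_m-\bar\x)\|\le\frac Q2\|\x_m-\bar\x\|^2$; averaging over $m$ and using that the linear term cancels because $\frac1M\sum_m(\x_m-\bar\x)=0$ gives
\[
\|\Delta^{(r,k)}\| \;\le\; \frac Q2\cdot\frac1M\sum_m\|\x_m^{(r,k)}-\bar\x^{(r,k)}\|^2 \;=:\; \frac Q2\,\Xi^{(r,k)},
\]
with $\Xi^{(r,k)}$ the consensus error. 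This replaces the standard estimate $\|\Delta^{(r,k)}\|\le H\sqrt{\Xi^{(r,k)}}$ — which is only square-root in $\Xi$ and carries $H$ rather than $Q$, and is precisely what produces the $R^{-2/3}$ rate — by a quadratic-in-$\Xi$ bound with the factor $Q$, which is what buys $R^{-4/5}$.

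Substituting into the one-step inequality and telescoping over all $RK$ steps, using $F(\bar\x^{(0,0)})-\inf F = B$ and nonnegativity of $F-\inf F$, yields $\mathbb{E}\|\nabla F(\hat\x)\|^2 \le \frac{O(B)}{\eta RK} + \frac{O(\eta H\sigma^2)}{M} + \frac{O(Q^2)}{RK}\sum_{r,k}\mathbb{E}[(\Xi^{(r,k)})^2]$. The remaining piece is a standard-but-delicate bound on the consensus error: since $\Xi^{(r,0)}=0$ and each client runs $K$ SGD steps, one shows $\mathbb{E}[(\Xi^{(r,k)})^2] \lesssim \eta^4K^2\sigma^4$ up to lower-order terms in $\mathbb{E}\|\nabla F(\bar\x^{(r,j)})\|^2$ that get absorbed into the left side; this is exactly where Assumption~\ref{asm:third_order}(b) (the $\sigma^4$-bound on the \emph{fourth} moment) is needed, $\Xi$ being quadratic in the stochastic gradients. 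Hence $\mathbb{E}\|\nabla F(\hat\x)\|^2 \lesssim \frac{B}{\eta RK} + \frac{\eta H\sigma^2}{M} + \eta^4 K^2 Q^2\sigma^4$. Optimizing over $\eta\le 1/H$: the three pairwise balance points are $\eta=\tfrac1H$, $\eta=\tfrac{\sqrt{BM}}{\sigma\sqrt{HRK}}$, and $\eta=\big(\tfrac{B}{RK^3\sigma^4Q^2}\big)^{1/5}=\tfrac{B^{1/5}}{R^{1/5}K^{3/5}\sigma^{4/5}Q^{2/5}}$, whose minimum is exactly the step size in the statement, and plugging back gives the three claimed terms $\tfrac{HB}{KR}$, $\tfrac{\sigma\sqrt{BH}}{\sqrt{MKR}}$, $\tfrac{B^{4/5}\sigma^{4/5}Q^{2/5}}{K^{2/5}R^{4/5}}$.

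The main obstacle is the last technical step: controlling $\Xi^{(r,k)}$ and its second moment \emph{uniformly} for any $\eta$ — in particular when $\eta HK\gtrsim 1$, where the consensus error saturates and the naive recursive bound would diverge — and tracking all gradient-dependent lower-order terms so they can be absorbed. This is where convexity is used beyond the descent lemma: the inequality $\|\nabla F(\x)\|^2\le 2H(F(\x)-\inf F)$ combined with the one-step inequality keeps $\mathbb{E}[F(\bar\x^{(r,k)})]$, and thus $\mathbb{E}\|\nabla F(\bar\x^{(r,k)})\|^2$, bounded along the whole run, which lets the consensus-error recursion close using only the bounded-\emph{variance} assumption rather than the bounded-gradient assumption required for the non-convex Theorem~\ref{thm:non_convex}.
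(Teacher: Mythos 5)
Your proposal follows essentially the same route as the paper: track the averaged iterate, apply the $H$-smoothness descent lemma, use the third-order Taylor expansion with the cancelling linear term to bound the gradient drift by $\frac{Q}{2}\cdot\frac1M\sum_m\|\x_m-\bar\x\|^2$, bound the second moment of this consensus error by $O(\eta^4 k^2\sigma^4)$ using convexity and the fourth-moment noise assumption, telescope, and optimize $\eta$. The only (minor) difference is in closing the consensus-error bound: you plan to absorb gradient-dependent lower-order terms into the left-hand side, whereas the paper avoids this entirely by reducing to the fourth central moment of a single client's trajectory (Lemma~\ref{lemma:varterm}) and invoking a black-box bound $\mathbb{E}\|\x_m^{(r,k)}-\mathbb{E}\x_m^{(r,k)}\|^4\le 200k^2\eta^4\sigma^4$ from \citet{Yuan.Ma-NeurIPS20}, which holds for all $\eta\le 1/H$ with no absorption argument.
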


In the non-convex case, we prove our results under the following assumption, which is slightly weaker than Assumption~\ref{asm:bounded_grad} in the main body.
\begin{assumption}[Universal gradient bound of expected objective $F$]\label{asm:bounded_grad_2}
For any $\x$, $$\|\nabla F(\x)\| \leq G.$$
\end{assumption}

\begin{theorem}[Upper bound for \fedavg for non-convex objectives under third-order smoothness, complete version of \cref{thm:non_convex}]\label{full_thm:3o_non_convex}
Suppose $F(\x)$ is $H$-smooth and $f(\x; \xi)$ satisfies \cref{asm:third_order,asm:bounded_grad_2}. Then for step size $\min\left\{\frac{1}{H}, \frac{\sqrt{BM}}{\sigma\sqrt{HKR}}, \frac{B^{1/5}}{KR^{1/5}Q^{2/5}(\sigma + G)^{4/5}}\right\}$, we have
\begin{equation}
  \mathbb{E}\left[\left\|\nabla F(\hat{\x})\right\|^2\right]\leq
  \bigo \left(
  \frac{HB}{KR}
  + 
  \frac{\sigma\sqrt{BH}}{\sqrt{MKR}}
  +
  \frac{B^{\frac{4}{5}}(G + \sigma)^{\frac{4}{5}}Q^{\frac{2}{5}}}{R^{\frac{4}{5}}}
  \right),
\end{equation}
where $\hat{\x} := \frac{1}{M}\sum_m{\x_m^{(r, k)}}$ for a  random choice of $k \in [K]$, and $r \in [R]$, and $B := F(\x^{(0, 0)}) - \inf_{\x} F(\x)$.
\
\end{theorem}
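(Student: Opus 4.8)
The plan is to run a one-round descent-lemma argument on the server-averaged iterates $\bar\x^{(r,0)} := \frac1M\sum_m\x_m^{(r,0)}$, using $H$-smoothness of $F$, and to treat the \emph{iterate bias} as the only nonstandard source of error. Fix a round $r$ and condition on $\mathcal{F}_r$ (everything up to the start of round $r$); let $\z^{(0)},\dots,\z^{(K)}$ be the noiseless gradient-descent trajectory with step size $\eta$ started at $\z^{(0)}=\bar\x^{(r,0)}$. Since all clients draw from the common distribution $\mathcal{D}$, one has $\mathbb{E}[\bar\x^{(r+1,0)}\mid\mathcal{F}_r]=\mathbb{E}[\x_{\sgd}^{(K)}\mid\x_{\sgd}^{(0)}=\bar\x^{(r,0)}]=\z^{(K)}+\mathbf{b}_r$, where $\mathbf{b}_r$ is exactly the round-$K$ iterate bias from $\bar\x^{(r,0)}$. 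I would also carry along the reduction $\mathbb{E}\|\nabla F(\hat\x)\|^2 \lesssim \frac1{MKR}\sum_{r,k,m}\mathbb{E}\|\nabla F(\x_m^{(r,k)})\|^2 + H^2\cdot\overline{\mathrm{cons}}$, where $\overline{\mathrm{cons}}$ is the time-averaged within-round consensus error, controlled by the gradient bound in Assumption~\ref{asm:bounded_grad_2} and the fourth-moment bound of the noise in Assumption~\ref{asm:third_order}.

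The core step is the per-round inequality. Combining the $H$-smoothness estimate $\mathbb{E}[F(X)]\le F(\mathbb{E}X)+\frac H2\mathbb{E}\|X-\mathbb{E}X\|^2$, the exact GD descent identity $F(\z^{(K)})\le F(\z^{(0)})-\frac\eta2\sum_{k<K}\|\nabla F(\z^{(k)})\|^2$ (valid for $\eta\le\frac1H$), $H$-smoothness of $F$ around $\z^{(K)}$, and Young's inequality on $\langle\nabla F(\z^{(K)}),\mathbf{b}_r\rangle$, I obtain
\begin{align*}
\mathbb{E}[F(\bar\x^{(r+1,0)})\mid\mathcal{F}_r]
\le F(\bar\x^{(r,0)})-\frac\eta4\sum_{k=0}^{K-1}\|\nabla F(\z^{(k)})\|^2
+O\!\left(\frac{\|\mathbf{b}_r\|^2}{\eta}\right)
+\frac H2\,\mathbb{E}\!\left[\big\|\bar\x^{(r+1,0)}-\bar\x^{(r,0)}\big\|^2\,\middle|\,\mathcal{F}_r\right].
\end{align*}
For the last term I split $\bar\x^{(r+1,0)}-\bar\x^{(r,0)}=-\eta\sum_{k,m}\nabla f(\x_m^{(r,k)};\xi_m^{(r,k)})/M$ into its conditional mean (which is $\z^{(K)}-\z^{(0)}+\mathbf{b}_r$, handled above) and a martingale-difference remainder whose conditional second moment is $\le\eta^2K\sigma^2/M$ — here the independence across the $M$ clients and the martingale structure across local steps give the $1/M$ speed-up with \emph{no} exponential-in-$K$ amplification — plus a within-round drift contribution controlled by Assumption~\ref{asm:bounded_grad_2} and the restriction $\eta HK=O(1)$. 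The crucial input is $\|\mathbf{b}_r\|=O(\eta^3K^2Q(\sigma+G)^2)$, the non-convex analogue of Theorem~\ref{thm:3o:bias:ub:complete}; I would prove it by re-running the Jacobian/Hessian recursions of Lemma~\ref{lem:bias:3o:ub} for $\u^{(k)}(\x)=\mathbb{E}[\x_{\sgd}^{(k)}\mid\x^{(0)}=\x]$ with $\|\I-\eta\nabla^2 f(\cdot;\xi)\|\le 1+\eta H$ in place of the convex bound $\le1$ (costing only an $e^{O(\eta HK)}=O(1)$ factor under $\eta HK=O(1)$), using the fourth-moment hypothesis in Assumption~\ref{asm:third_order}(b) together with the gradient bound to control the Taylor remainders, which is precisely where the $(G+\sigma)$ (rather than $\sigma$) enters.

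Finally, I would telescope the per-round inequality over $r=0,\dots,R-1$, relate the shadow-GD gradients $\sum_k\|\nabla F(\z^{(k)})\|^2$ back to the local-iterate gradients $\|\nabla F(\x_m^{(r,k)})\|^2$ via the consensus error (again under $\eta HK=O(1)$), divide by $\eta KR$, and arrive at a bound of the schematic form $\frac{B}{\eta KR}+\frac{H\eta\sigma^2}{M}+\big(\text{bias/drift terms of order }\eta^{\,p}Q^2(\sigma+G)^4 K^{O(1)},\ p\ge4\big)$. Substituting $\eta=\min\{\frac1H,\frac{\sqrt{BM}}{\sigma\sqrt{HKR}},\frac{B^{1/5}}{KR^{1/5}Q^{2/5}(\sigma+G)^{4/5}}\}$ equilibrates these to the three claimed terms $\frac{HB}{KR}$, $\frac{\sigma\sqrt{BH}}{\sqrt{MKR}}$, and $\frac{B^{4/5}(G+\sigma)^{4/5}Q^{2/5}}{R^{4/5}}$. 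The main obstacle is the middle paragraph: establishing the sharp $O(\eta^3K^2Q(\sigma+G)^2)$ bias bound without convexity, i.e. taming the $(1+\eta H)^{O(K)}$ amplification that convexity otherwise suppresses, and doing the bookkeeping so that the within-round drift and second-moment terms are dominated rather than fatal — this is also the only place the fourth-moment hypothesis in Assumption~\ref{asm:third_order} is genuinely needed.
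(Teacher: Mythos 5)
Your route is genuinely different from the paper's, and it has a gap exactly at the step you yourself flag as ``the main obstacle'': the non-convex analogue of the bias bound, $\|\mathbf{b}_r\| = O(\eta^3 K^2 Q(\sigma+G)^2)$. Re-running the recursions of Lemma~\ref{lem:bias:3o:ub} with $\|\I-\eta\nabla^2 f\|\le 1+\eta H$ in place of the convex bound $\le 1$ inflates parts (c)--(e) by factors of $(1+\eta H)^{\Theta(k)}$, so the resulting bias bound carries an $e^{\Theta(\eta HK)}$ amplification. You dismiss this via ``the restriction $\eta HK=O(1)$,'' but no such restriction is available: the theorem's step size is the minimum of three quantities, and when $\eta=\frac1H$ or $\eta=\frac{\sqrt{BM}}{\sigma\sqrt{HKR}}$ is the binding one, $\eta HK$ can be as large as $K$, making $e^{\Theta(\eta HK)}$ fatal. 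You also cannot repair this by imposing $\eta\le\frac{1}{HK}$, since that would degrade the leading term $\frac{B}{\eta KR}$ from $\frac{HB}{KR}$ to $\frac{HB}{R}$. The same unjustified assumption $\eta HK=O(1)$ reappears when you relate $\sum_k\|\nabla F(\z^{(k)})\|^2$ back to the gradients at the actual iterates.

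The paper avoids this entirely by never forming a $K$-step round-level bias. It applies the descent lemma at \emph{every local step} to the shadow average $\overline{\x}^{(r,k)}=\frac1M\sum_m\x_m^{(r,k)}$ (which is exactly the $\hat\x$ of the theorem, so no back-translation to local iterates is needed). Third-order smoothness enters only through the exact cancellation $\frac1M\sum_m(\x_m^{(r,k)}-\overline{\x}^{(r,k)})=\mathbf{0}$ in the Taylor expansion of $\frac1M\sum_m\nabla F(\x_m^{(r,k)})$ around $\overline{\x}^{(r,k)}$, leaving an error $\frac{Q}{2}\cdot\frac1M\sum_m\|\x_m^{(r,k)}-\overline{\x}^{(r,k)}\|^2$ (Lemma~\ref{lem:general_step}); the fourth moment of this consensus term is then bounded by $8\eta^4(G+\sigma)^4k^4$ via a direct Jensen-type induction (Lemma~\ref{lem:ncvx_variance}) that uses only the gradient and noise bounds --- no smoothness recursion, hence no exponential factor, for any $\eta\le\frac1H$. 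If you want to salvage your round-level decomposition, you would need to prove the non-convex bias lemma by a mechanism that, like the paper's, exploits this averaging cancellation rather than a Jacobian contraction argument; as written, the key lemma your proof rests on is unproven and your proposed derivation of it does not go through in a regime the theorem must cover.
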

\begin{remark}
Assumption~\ref{asm:bounded_grad} implies Assumption~\ref{asm:bounded_grad_2}. Note also that we must have $G \geq \sigma$ in Assumption~\ref{asm:bounded_grad}, since the second moment always exceeds the variance. It follows that the this Theorem~\ref{full_thm:3o_non_convex} implies \cref{thm:non_convex} in the main body.
\end{remark}

We prove both theorems using the following lemma.
Define the shadow iterate $\overline{\x}^{(r, k)} := \frac{1}{M}\sum_{i = 1}^M\x_m^{(r, k)}$. The following claim bounds the expected difference $F(\overline{\x}^{(r, k + 1)}) - F(\overline{\x}^{(r, k)})$. In what follows, all expectations are conditional on $\overline{\x}^{(r, 0)}$.
\begin{lemma}
    \label{lem:general_step}
For $\eta \leq \frac{1}{2H}$,
\begin{equation}
     \mathbb{E}\left[F(\overline{\x}^{(r, k + 1)})\right] \leq \mathbb{E}\left[F(\overline{\x}^{(r, k)})\right] - \frac{\eta}{2}\mathbb{E}\left[\left\|\nabla F(\overline{\x}^{(r, k)})\right\|^2\right] + \frac{\eta Q^2}{4}\mathbb{E}\left[\left(\frac{1}{M}\sum_m \left\|\x_m^{(r, k)} - \overline{\x}^{(r, k)}\right\|^2\right)^2\right] + \frac{H\eta^2 \sigma^2}{M}.
\end{equation}
\end{lemma}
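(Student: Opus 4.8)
The plan is to apply the $H$-smoothness descent inequality to the shadow sequence $\overline{\x}^{(r,k)}$ and then use \cref{asm:third_order} to control the consensus error at second order rather than first. Write $\overline{\g}^{(r,k)} := \frac1M\sum_m \nabla f(\x_m^{(r,k)};\xi_m^{(r,k)})$, so that $\overline{\x}^{(r,k+1)} = \overline{\x}^{(r,k)} - \eta\,\overline{\g}^{(r,k)}$, and let $\mathcal{F}_{r,k}$ collect all randomness up to local step $k$ of round $r$. Since $F$ is $H$-smooth,
\begin{equation*}
F(\overline{\x}^{(r,k+1)}) \le F(\overline{\x}^{(r,k)}) - \eta\big\langle \nabla F(\overline{\x}^{(r,k)}),\, \overline{\g}^{(r,k)}\big\rangle + \tfrac{H\eta^2}{2}\big\|\overline{\g}^{(r,k)}\big\|^2 .
\end{equation*}
Taking $\mathbb{E}[\,\cdot \mid \mathcal{F}_{r,k}]$ and using that the $M$ stochastic gradients are conditionally unbiased and mutually independent with variance at most $\sigma^2$ (\cref{asm:main}(c)), the inner product term becomes $-\eta\langle \nabla F(\overline{\x}^{(r,k)}),\, \frac1M\sum_m \nabla F(\x_m^{(r,k)})\rangle$, while $\mathbb{E}[\|\overline{\g}^{(r,k)}\|^2\mid\mathcal{F}_{r,k}] \le \|\frac1M\sum_m \nabla F(\x_m^{(r,k)})\|^2 + \sigma^2/M$.

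Next I would invoke $\eta \le \frac1{2H}$ to replace $\frac{H\eta^2}{2}$ by $\frac{\eta}{4}$ (so the variance contributes at most $\frac{H\eta^2\sigma^2}{2M} \le \frac{H\eta^2\sigma^2}{M}$), and introduce $\mathbf{e} := \frac1M\sum_m\big(\nabla F(\x_m^{(r,k)}) - \nabla F(\overline{\x}^{(r,k)})\big)$. Substituting $\frac1M\sum_m \nabla F(\x_m^{(r,k)}) = \nabla F(\overline{\x}^{(r,k)}) + \mathbf{e}$ and applying Young's inequality twice, the combination $-\eta\langle \nabla F(\overline{\x}^{(r,k)}), \nabla F(\overline{\x}^{(r,k)}) + \mathbf{e}\rangle + \frac{\eta}{4}\|\nabla F(\overline{\x}^{(r,k)}) + \mathbf{e}\|^2$ collapses to at most $-\frac{\eta}{2}\|\nabla F(\overline{\x}^{(r,k)})\|^2 + \frac{\eta}{2}\|\mathbf{e}\|^2$. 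It therefore remains to bound $\|\mathbf{e}\|$.

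This is the only place where third-order smoothness enters. A first-order Taylor expansion of $\nabla F$ around $\overline{\x}^{(r,k)}$ gives $\nabla F(\x_m^{(r,k)}) = \nabla F(\overline{\x}^{(r,k)}) + \nabla^2 F(\overline{\x}^{(r,k)})(\x_m^{(r,k)} - \overline{\x}^{(r,k)}) + \mathbf{r}_m$ with $\|\mathbf{r}_m\| \le \frac{Q}{2}\|\x_m^{(r,k)} - \overline{\x}^{(r,k)}\|^2$ by the $Q$-Lipschitzness of the Hessian (\cref{asm:third_order}(a)). Averaging over $m$ kills the linear term, since $\sum_m(\x_m^{(r,k)} - \overline{\x}^{(r,k)}) = \mathbf{0}$ by definition of $\overline{\x}^{(r,k)}$; hence $\|\mathbf{e}\| \le \frac{Q}{2M}\sum_m\|\x_m^{(r,k)} - \overline{\x}^{(r,k)}\|^2$, and so $\frac{\eta}{2}\|\mathbf{e}\|^2 \le \frac{\eta Q^2}{8}\big(\frac1M\sum_m\|\x_m^{(r,k)} - \overline{\x}^{(r,k)}\|^2\big)^2 \le \frac{\eta Q^2}{4}\big(\frac1M\sum_m\|\x_m^{(r,k)} - \overline{\x}^{(r,k)}\|^2\big)^2$. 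Taking a further (full) expectation and collecting all terms yields the claimed inequality. The main obstacle — and the essential gain over a standard second-order analysis, which would be left with a term proportional to $\sum_m\|\x_m^{(r,k)} - \overline{\x}^{(r,k)}\|^2$ — is exactly this cancellation of the Hessian contribution, which upgrades the consensus penalty to the much smaller quartic quantity and is ultimately what produces the improved $R^{-4/5}$ dependence.
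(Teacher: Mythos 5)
Your proposal is correct and follows essentially the same route as the paper's proof: the $H$-smoothness descent step on the shadow iterate, absorbing the $\|\frac1M\sum_m\nabla F(\x_m^{(r,k)})\|^2$ term using $\eta\le\frac1{2H}$, and bounding the gradient-averaging error via a Taylor expansion in which the Hessian term cancels by the definition of the mean, yielding the quartic consensus penalty. The only difference is cosmetic (you use Young's inequality where the paper uses the polarization-type inequality $\langle\a,\b\rangle\ge\frac12\|\a\|^2+\frac12\|\b\|^2-\|\a-\b\|^2$), and your constants are if anything slightly tighter.
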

\begin{proof}[Proof of \cref{lem:general_step}]
By $H$-smoothness, we have (recall $\g_m^{(r,k)}$ stands for the stochastic gradient of the $m$-th client taken at the $k$-th local step of the $r$-th round)
\begin{equation}
\begin{split}
    \mathbb{E}\left[F(\overline{\x}^{(r, k + 1)})\right] &= \mathbb{E}\left[F\left(\overline{\x}^{(r, k)} - 
    \eta\frac{1}{M}\sum_m\g_m^{(r, k)}\right) \right]\\
    &\leq \mathbb{E}\left[F(\overline{\x}^{(r, k)})\right] - \eta \mathbb{E}\left[\langle{\nabla F(\overline{\x}^{(r, k)}), \frac{1}{M}\sum_m\g_m^{(r, k))}}\rangle\right] + \frac{H\eta^2}{M^2}\mathbb{E}\left[\left\|\sum_m\g_m^{(r, k))}\right\|^2\right] \\
    &\leq \mathbb{E}\left[F(\overline{\x}^{(r, k)})\right] - \eta \mathbb{E}\left[\langle{\nabla F(\overline{\x}^{(r, k)}), \frac{1}{M}\sum_m\nabla F(\x_m^{(r, k)})}\rangle\right] + \frac{H\eta^2}{M^2}\mathbb{E}\left[\left\|\sum_m\nabla F(\x_m^{(r, k)})\right\|^2\right] + \frac{H\eta^2 \sigma^2}{M}.\\
\end{split}
\end{equation}
 Observe that for any real vectors, $\a$ and $\b$, we have $\langle{\a, \b}\rangle \geq \frac{1}{2}\|\a\|^2 + \frac{1}{2}\|\b\|^2 - \|\a - \b\|^2$.
Letting $\a := \nabla F(\overline{\x}^{(r, k)})$, and $\b := \frac{1}{M}\sum_m\nabla F(\x_m^{(r, k)})$, we obtain
\begin{equation}\label{eq:smooth_step}
\begin{split}
    \mathbb{E}\left[F(\overline{\x}^{(r, k + 1)})\right] &\leq \mathbb{E}\left[F(\overline{\x}^{(r, k)})\right] - \frac{\eta}{2}\mathbb{E}\left[\left\|\nabla F(\overline{\x}^{(r, k)})\right\|^2\right] - \frac{\eta}{2}\mathbb{E}\left[\left\|\frac{1}{M}\sum_m\nabla F(\x_m^{(r, k)})\right\|^2\right]\\
    &\quad + \eta\mathbb{E}\left[\left\|\nabla F(\overline{\x}^{(r, k)}) - \frac{1}{M}\sum_m\nabla F(\x_m^{(r, k)})\right\|^2\right] + \frac{H\eta^2}{M^2}\mathbb{E}\left[\left\|\sum_m\nabla F(\x_m^{(r, k)})\right\|^2\right] + \frac{H\eta^2 \sigma^2}{M}\\
    &\leq \mathbb{E}\left[F(\overline{\x}^{(r, k)})\right] - \frac{\eta}{2}\mathbb{E}\left[\left\|\nabla F(\overline{\x}^{(r, k)})\right\|^2\right] + \eta\mathbb{E}\left[\left\|\nabla F(\overline{\x}^{(r, k)}) - \frac{1}{M}\sum_m\nabla F(\x_m^{(r, k)})\right\|^2\right]  + \frac{H\eta^2 \sigma^2}{M},
\end{split}
\end{equation}
where the last inequality follows because $\eta \leq \frac{1}{H}$.

We will use third order smoothness to bound $\mathbb{E}\left[\left\|\nabla F(\overline{\x}^{(r, k)}) - \frac{1}{M}\sum_m\nabla F(\x_m^{(r, k)})\right\|^2\right]$.
By definition of $Q$-third order smoothness, we have
\begin{equation}
    \nabla F(\x_m^{(r, k)}) = \nabla F(\overline{\x}^{(r, k)}) + \nabla^2 F(\overline{\x}^{(r, k)})(\x_m^{(r, k)} - \overline{\x}^{(r, k)}) + \e_m,
\end{equation}
where $\|\e_m\|_2 \leq \frac{Q}{2}\left\|\x_m^{(r, k)} - \overline{\x}^{(r, k)}\right\|^2$. It follows that 
\begin{equation}
    \left\|\nabla F(\overline{\x}^{(r, k)}) - \frac{1}{M}\sum_m\nabla F(\x_m^{(r, k)})\right\|^2 = \left\|\frac{1}{M}\sum_m \e_m\right\|^2 \leq \left(\frac{1}{M}\sum_m \|\e_m\|_2\right)^2 \leq \frac{Q^2}{4}\left(\frac{1}{M}\sum_m \left\|\x_m^{(r, k)} - \overline{\x}^{(r, k)}\right\|^2\right)^2.
\end{equation}
Plugging this into equation~\ref{eq:smooth_step}, we obtain the claim:
\begin{equation}
     \mathbb{E}\left[F(\overline{\x}^{(r, k + 1)})\right] \leq \mathbb{E}\left[F(\overline{\x}^{(r, k)})\right] - \frac{\eta}{2}\mathbb{E}\left[\left\|\nabla F(\overline{\x}^{(r, k)})\right\|^2\right] + \frac{\eta Q^2}{4}\mathbb{E}\left[\left(\frac{1}{M}\sum_m \left\|\x_m^{(r, k)} - \overline{\x}^{(r, k)}\right\|^2\right)^2\right] + \frac{H\eta^2 \sigma^2}{M}.
\end{equation}
\end{proof}

The following lemma bounds the term $\varterm$ when $f$ is convex. 
\begin{lemma}\label{lemma:varterm}
With $\y := \mathbb{E}[\x^{(r, k)}_m]$, we have
\begin{equation}
    \varterm \leq \mathbb{E}[\|\x^{(r, k)}_m - \y\|^4].
\end{equation}
\end{lemma}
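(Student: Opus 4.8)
The plan is to reduce everything to an elementary inequality about i.i.d.\ random vectors. Conditional on the round-$r$ starting point $\overline{\x}^{(r,0)}=\x^{(r,0)}$, the client iterates $\x_1^{(r,k)},\dots,\x_M^{(r,k)}$ are i.i.d.\ (each client runs its own independent SGD chain from the common initialization), so they all share the mean $\y=\mathbb{E}[\x_m^{(r,k)}]$ appearing in the statement, and $\overline{\x}^{(r,k)}=\frac1M\sum_m\x_m^{(r,k)}$ is simply their empirical mean. Since the empirical mean minimizes the average squared distance to the points, we immediately get the pointwise bound $\frac1M\sum_m\|\x_m^{(r,k)}-\overline{\x}^{(r,k)}\|^2\le\frac1M\sum_m\|\x_m^{(r,k)}-\y\|^2$; equivalently this is the bias--variance identity $\frac1M\sum_m\|\mathbf{v}_m-\bar{\mathbf{v}}\|^2=\frac1M\sum_m\|\mathbf{v}_m\|^2-\|\bar{\mathbf{v}}\|^2\le\frac1M\sum_m\|\mathbf{v}_m\|^2$ for the centered vectors $\mathbf{v}_m:=\x_m^{(r,k)}-\y$ and their average $\bar{\mathbf{v}}$.

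Given this, the key steps are: square both sides; apply Jensen's inequality to the convex map $t\mapsto t^2$ to get $\bigl(\frac1M\sum_m\|\x_m^{(r,k)}-\y\|^2\bigr)^2\le\frac1M\sum_m\|\x_m^{(r,k)}-\y\|^4$; and finally take expectations, using that the $\x_m^{(r,k)}$ are identically distributed so that $\frac1M\sum_m\mathbb{E}[\|\x_m^{(r,k)}-\y\|^4]=\mathbb{E}[\|\x_m^{(r,k)}-\y\|^4]$. Chaining these inequalities yields $\varterm\le\mathbb{E}[\|\x_m^{(r,k)}-\y\|^4]$, which is exactly the lemma.

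I expect no substantive obstacle: the argument is two lines once the setup is in place. The two points requiring care are (i) being consistent that all expectations are conditioned on $\overline{\x}^{(r,0)}$, so that $\y$ is the same for every $m$ and the client chains are genuinely independent and identically distributed; and (ii) noting that convexity of $f$ is not actually used anywhere in this bound — it is invoked only downstream, to control $\mathbb{E}[\|\x_m^{(r,k)}-\y\|^4]$ itself through the fourth-moment SGD analysis for convex objectives — so the inequality as stated holds for arbitrary $f$.
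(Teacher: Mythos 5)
Your proof is correct, and it takes a slightly different route from the paper's. Both arguments share the first step (the empirical mean $\overline{\x}^{(r,k)}$ minimizes the average squared distance, so one may replace it by $\y$), but they diverge in how they control the squared average $\bigl(\frac{1}{M}\sum_m\|\x_m^{(r,k)}-\y\|^2\bigr)^2$. The paper expands this square exactly, invokes the (conditional) independence of the client chains to factor the $M(M-1)$ cross terms into $\bigl(\mathbb{E}\|\x_m^{(r,k)}-\y\|^2\bigr)^2$, and only then applies Jensen to those cross terms; you instead apply Jensen (the power-mean inequality $(\frac1M\sum_m a_m)^2\le\frac1M\sum_m a_m^2$) directly to the empirical average before taking expectations. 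Your route is marginally more elementary and, as you note, dispenses with independence altogether, needing only that the clients are identically distributed so that the averaged fourth moments collapse to a single $\mathbb{E}\|\x_m^{(r,k)}-\y\|^4$; the paper's route yields the sharper intermediate bound $\frac{1}{M}\mathbb{E}\|\x_m^{(r,k)}-\y\|^4+\frac{M-1}{M}\bigl(\mathbb{E}\|\x_m^{(r,k)}-\y\|^2\bigr)^2$, which it then discards, so nothing is lost by your shortcut for the lemma as stated. Your side remarks are also accurate: convexity plays no role here (the lemma is reused verbatim in the non-convex analysis), and the identical-distribution requirement is the same implicit assumption the paper makes in writing $\y=\mathbb{E}[\x_m^{(r,k)}]$ for a generic $m$.
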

\begin{proof}[Proof of \cref{lemma:varterm}]
By definition of the mean, we have 
\begin{equation}
    \frac{1}{M}\sum_m \left\|\x_m^{(r, k)} - \overline{\x}^{(r, k)}\right\|^2 \leq \frac{1}{M}\sum_m \left\|\x_m^{(r, k)} - \y\right\|^2
\end{equation}
By the independence of the $\x_m^{(r, k)}$, we have
\begin{equation}
    \varterm = \frac{1}{M}\mathbb{E}\left[\|\x_m^{(r, k)} - \y\|^4\right] + \frac{M(M - 1)}{M^2}\left(\mathbb{E}\left[\|\x_m^{(r, k)} - \y\|^2\right]\right)^2.
\end{equation}
By Jensen's inequality, we can move the square in the second term inside the expectation, so this is less than $\mathbb{E}[\|\x^{(r, k)}_m - \y\|^4]$.
\end{proof}

In the convex case, we bound this term using a result from \cite{Yuan.Ma-NeurIPS20}.
\begin{lemma}[Proposition D.6 in \cite{Yuan.Ma-NeurIPS20}]
    \label{lem:highervariance}
Under Assumptions~\ref{asm:main} and~\ref{asm:third_order},
\begin{equation}
 \mathbb{E}\left[\left\|\x_m^{(r, k)} - \y\right\|^4\right] \leq 200 k^2\eta^4\sigma^4,
\end{equation}
\end{lemma}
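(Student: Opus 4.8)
The plan is to bound the fourth central moment $\mathbb{E}\|\x_m^{(r,k)} - \y\|^4$ directly by tracking how this quantity evolves over the $k$ local SGD steps within a single round. Fix a client $m$ and a round $r$; for brevity write $\x^{(j)} := \x_m^{(r,j)}$, $\g^{(j)} := \g_m^{(r,j)} = \nabla f(\x^{(j)};\xi^{(j)})$, and let $\y := \mathbb{E}[\x^{(k)}]$ and more generally $\y^{(j)} := \mathbb{E}[\x^{(j)}]$. Note $\y^{(j)}$ is exactly the GD-on-$F$-style mean iterate, and $\z^{(j+1)} = \z^{(j)} - \eta \nabla F(\z^{(j)})$ is \emph{not} quite $\y^{(j)}$, but by the iterate bias bounds (Theorem~\ref{thm:2o:bias:ub:complete}) they are close; however, for a fourth-moment bound we do not even need this — we only need that the \emph{centered} process has controlled moments. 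The key decomposition is $\x^{(j+1)} - \y^{(j+1)} = (\x^{(j)} - \y^{(j)}) - \eta(\g^{(j)} - \mathbb{E}[\g^{(j)}])$. Writing $\delta^{(j)} := \x^{(j)} - \y^{(j)}$ and $\n^{(j)} := \g^{(j)} - \mathbb{E}[\g^{(j)}]$, we get the recursion $\delta^{(j+1)} = \delta^{(j)} - \eta \n^{(j)}$, where $\mathbb{E}[\n^{(j)} \mid \mathcal{F}_j] $ has a bias term because $\mathbb{E}[\g^{(j)}\mid\mathcal F_j] = \nabla F(\x^{(j)})$ but $\mathbb{E}[\g^{(j)}]$ subtracts an unconditional mean; the clean way is to split $\n^{(j)} = (\g^{(j)} - \nabla F(\x^{(j)})) + (\nabla F(\x^{(j)}) - \mathbb{E}[\nabla F(\x^{(j)})])$, where the first part is a true martingale-difference noise with $\mathbb{E}\|\cdot\|^2 \le \sigma^2$ and the second part is a Lipschitz ($H$-Lipschitz) function of $\delta^{(j)}$ with the same unconditional mean, hence bounded in $L^2$ and $L^4$ by $H\|\delta^{(j)}\|$-type quantities.

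The main computation is to expand $\mathbb{E}\|\delta^{(j+1)}\|^4 = \mathbb{E}\|\delta^{(j)} - \eta\n^{(j)}\|^4$ and control the cross terms. Expanding, $\|\delta^{(j)} - \eta\n^{(j)}\|^4 = \|\delta^{(j)}\|^4 - 4\eta\|\delta^{(j)}\|^2\langle\delta^{(j)},\n^{(j)}\rangle + O(\eta^2)$ terms involving $\|\delta^{(j)}\|^2\|\n^{(j)}\|^2$, $\langle\delta^{(j)},\n^{(j)}\rangle^2$, $\eta^3$, and $\eta^4$ terms. Taking conditional expectations: the $O(\eta)$ linear-in-$\n$ term, when one uses the martingale-difference part of $\n^{(j)}$, vanishes; the contribution from the Lipschitz "mean-drift" part of $\n^{(j)}$ is $O(\eta H \|\delta^{(j)}\|^4)$ by Cauchy–Schwarz, which is absorbable since $\eta H \le 1/2$. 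The $O(\eta^2)$ terms contribute $O(\eta^2)(\sigma^2 + H^2\mathbb{E}\|\delta^{(j)}\|^2)\mathbb{E}\|\delta^{(j)}\|^2$; here one invokes the already-standard second-moment bound $\mathbb{E}\|\delta^{(j)}\|^2 \le 2\eta^2 j \sigma^2$ (used in the proof of Lemma~\ref{lem:bias:2o:ub:1}) to turn this into $O(\eta^4 j \sigma^4)$ plus a term $O(\eta^4 H^2 j \sigma^2 \mathbb{E}\|\delta^{(j)}\|^2) = O(\eta^6 H^2 j^2 \sigma^4)$, again small. The $\eta^4 \mathbb{E}\|\n^{(j)}\|^4$ term needs a fourth-moment bound on $\n^{(j)}$; by convexity and $H$-smoothness one can bound $\mathbb{E}\|\g^{(j)} - \nabla F(\x^{(j)})\|^4$ — this is precisely where one would want a fourth-moment assumption, but since we are only assuming bounded variance (Assumption~\ref{asm:main}), the standard trick is that for a single local round the noise enters only through $\eta^4$, and one controls $\mathbb{E}\|\n^{(j)}\|^4$ crudely via $\mathbb{E}\|\delta^{(j)}\|^4$ and the $\sigma^2$ bound on the relevant sub-Gaussian-like increments — alternatively, I would just cite the fact (from the proof of the analogous $L^2$ estimate, standard convex SGD analysis as in \citet{Khaled.Mishchenko.ea-AISTATS20}) that the accumulated fourth moment over $k$ steps of unit-contractive updates with per-step noise variance $\sigma^2$ is $O(k^2\eta^4\sigma^4)$.

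Assembling the recursion, I expect to arrive at an inequality of the shape $\mathbb{E}\|\delta^{(j+1)}\|^4 \le (1 + c\eta H)\mathbb{E}\|\delta^{(j)}\|^4 + c'\eta^4 j \sigma^4$ for absolute constants $c,c'$, valid for $\eta \le \frac{1}{2H}$. Summing/telescoping from $j=0$ (where $\delta^{(0)} = 0$) to $j = k-1$, using $(1+c\eta H)^k \le e^{c\eta H k} = O(1)$ whenever $\eta H k \lesssim 1$ — and noting that for the regime $\eta H k \gtrsim 1$ we can just use the crude bound $\mathbb{E}\|\delta^{(j)}\|^2 \le 2\eta^2 j\sigma^2$ squared via Jensen, or re-derive, to still land at $O(k^2\eta^4\sigma^4)$ — gives $\mathbb{E}\|\delta^{(k)}\|^4 \le \sum_{j<k} O(\eta^4 j \sigma^4) = O(k^2\eta^4\sigma^4)$, and tracking constants carefully yields the claimed bound $200\, k^2\eta^4\sigma^4$. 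The main obstacle will be the constant-tracking in the fourth-moment expansion (there are many cross terms, and getting the absolute constant down to $200$ requires care in how generously one applies Cauchy–Schwarz and Young's inequality); a secondary subtlety is handling the $\mathbb{E}\|\n^{(j)}\|^4$ term cleanly under only a second-moment assumption on the noise, which is why I would lean on the cited Proposition D.6 of \citet{Yuan.Ma-NeurIPS20} rather than re-deriving it — indeed the lemma statement already does exactly this.
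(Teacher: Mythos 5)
First, note that the paper never proves \cref{lem:highervariance}: it is imported wholesale as Proposition D.6 of \citet{Yuan.Ma-NeurIPS20}, so your closing move of ``leaning on the citation'' is literally what the paper does. A smaller point before the main one: the difficulty you flag about bounding the fourth moment of the noise ``under only a second-moment assumption'' is not real here, because \cref{asm:third_order}(b) --- which is among the hypotheses of the lemma --- already assumes $\mathbb{E}_{\xi}\|\nabla f(\x,\xi)-\nabla F(\x)\|^4\le\sigma^4$. So the quartic noise term is handled by hypothesis, not by a trick.

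The genuine gap is in how you propose to close the recursion. Centering at the mean trajectory $\y^{(j)}=\mathbb{E}[\x^{(j)}]$ forces you to control the drift term $\nabla F(\x^{(j)})-\mathbb{E}[\nabla F(\x^{(j)})]$ only through its magnitude $\lesssim H\|\delta^{(j)}\|$, which is why you land on $\mathbb{E}\|\delta^{(j+1)}\|^4\le(1+c\eta H)\mathbb{E}\|\delta^{(j)}\|^4+c'\eta^4 j\sigma^4$. Telescoping this gives $O(k^2\eta^4\sigma^4)$ only when $\eta Hk=O(1)$; for $\eta$ up to $1/H$ and large $k$ (the regime in which \cref{full_thm:ub:3o} actually invokes the lemma) the factor $(1+c\eta H)^k$ is exponentially large. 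Your fallback for that regime --- squaring the second-moment bound ``via Jensen'' --- goes the wrong way: Jensen gives $(\mathbb{E}\|\delta\|^2)^2\le\mathbb{E}\|\delta\|^4$, a lower bound on the quantity you need to upper bound. The standard repair, and the one consistent with how this paper handles the second moment in \cref{lem:bias:2o:ub:2}, is to compare $\x^{(j)}$ to the \emph{deterministic GD trajectory} started at the round's initial point and exploit co-coercivity: for convex $H$-smooth $F$ and $\eta\le 2/H$, $\|(\x-\z)-\eta(\nabla F(\x)-\nabla F(\z))\|\le\|\x-\z\|$, so the leading coefficient in the fourth-moment recursion is exactly $1$ and no exponential factor appears; one then recenters at the mean at the end, paying only an absolute constant (and note the mean does \emph{not} minimize the fourth moment, so this recentering genuinely costs a constant factor rather than being free). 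Without this non-expansiveness ingredient, your sketch does not establish the lemma for general $k$.
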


In the non-convex, we bound the term in Lemma~\ref{lemma:varterm} in the following lemma.
\begin{lemma}\label{lem:ncvx_variance}
Under Assumptions~\ref{asm:third_order} and \ref{asm:bounded_grad_2}, with $\y := \mathbb{E}[\x^{(r, k)}_m]$, we have
\begin{equation}
   \mathbb{E}[\|\x^{(r, k)}_m - \y\|^4] \leq 8\eta^4(G + \sigma)^4k^4.
\end{equation}
\end{lemma}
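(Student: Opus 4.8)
The plan is to control the fourth central moment of the local iterate $\x_m^{(r,k)}$ by tracking how far one SGD trajectory drifts from its own mean over the $k$ local steps within a single round. Write $\x_m^{(r,j)}$ for the $j$-th local iterate at client $m$ and recall the update $\x_m^{(r,j+1)} = \x_m^{(r,j)} - \eta \g_m^{(r,j)}$ with $\g_m^{(r,j)} = \nabla f(\x_m^{(r,j)};\xi_m^{(r,j)})$. Since all clients start the round at the common point $\overline{\x}^{(r,0)}$, we can telescope to get $\x_m^{(r,k)} - \overline{\x}^{(r,0)} = -\eta\sum_{j=0}^{k-1}\g_m^{(r,j)}$. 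The key point is that under Assumption~\ref{asm:bounded_grad_2} we have $\|\nabla F(\x)\|\le G$ for all $\x$, and under Assumption~\ref{asm:third_order}(b) the stochastic gradient has $\sigma^4$-bounded fourth central moment, so each $\g_m^{(r,j)}$ satisfies $\mathbb{E}\|\g_m^{(r,j)}\|^4 \le (G+\sigma)^4$ by convexity of $t\mapsto t^4$ and the triangle inequality applied to $\g = \nabla F + (\g - \nabla F)$ (with the cross terms handled by, e.g., the bound $(a+b)^4 \le 8a^4 + 8b^4$, or Minkowski's inequality in $L^4$ which gives $\|\g\|_{L^4}\le G+\sigma$ directly).

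First I would bound the deviation from the mean. Let $\y := \mathbb{E}[\x_m^{(r,k)}]$ (conditional on $\overline{\x}^{(r,0)}$). Then $\x_m^{(r,k)} - \y = (\x_m^{(r,k)} - \overline{\x}^{(r,0)}) - \mathbb{E}[\x_m^{(r,k)} - \overline{\x}^{(r,0)}] = -\eta\sum_{j=0}^{k-1}\big(\g_m^{(r,j)} - \mathbb{E}\g_m^{(r,j)}\big)$. Each summand has $L^4$ norm at most $2(G+\sigma)$ (centering at most doubles the $L^4$ norm, by Minkowski). By Minkowski's inequality in $L^4$ applied to the sum of $k$ terms, $\|\x_m^{(r,k)}-\y\|_{L^4} \le \eta \sum_{j=0}^{k-1} \|\g_m^{(r,j)} - \mathbb{E}\g_m^{(r,j)}\|_{L^4} \le \eta k \cdot 2(G+\sigma)$, hence $\mathbb{E}\|\x_m^{(r,k)}-\y\|^4 \le 16\eta^4(G+\sigma)^4 k^4$. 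To get the stated constant $8$ rather than $16$, I would instead avoid doubling the $L^4$ norm upon centering: bound $\|\x_m^{(r,k)} - \overline{\x}^{(r,0)}\|_{L^4} \le \eta k(G+\sigma)$ directly (Minkowski, no centering), and then use the fact that among all constants $c$, $\y$ minimizes $\mathbb{E}\|\x_m^{(r,k)} - c\|^2$; to control the fourth moment one can use $\mathbb{E}\|\x - \y\|^4 \le 8\,\mathbb{E}\|\x - \overline{\x}^{(r,0)}\|^4 + 8\|\y - \overline{\x}^{(r,0)}\|^4 \le 8\,\mathbb{E}\|\x - \overline{\x}^{(r,0)}\|^4 + 8(\mathbb{E}\|\x - \overline{\x}^{(r,0)}\|)^4 \le 16\eta^4(G+\sigma)^4k^4$ — which still gives $16$. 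The clean route to the constant $8$ is: center each gradient term and apply Minkowski only to the $k$ centered summands while noting each centered summand $\g_m^{(r,j)} - \mathbb{E}\g_m^{(r,j)}$ has $L^4$-norm bounded by $(G+\sigma)$ up to a more careful split $\g - \mathbb{E}\g = (\g - \nabla F) - (\mathbb{E}\g - \nabla F)$ where the first part has $L^4$-norm $\le\sigma$ and the second is deterministic of norm $\le \sigma$; I would chase the exact constants in the appendix.

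The main obstacle — really the only subtlety — is bookkeeping the $L^4$ (rather than $L^2$) triangle inequalities and the centering, since $L^4$ summation of dependent random vectors (the $\g_m^{(r,j)}$ for different $j$ are not independent, as $\x_m^{(r,j)}$ depends on earlier noise) must be handled via Minkowski's inequality rather than via an orthogonality/variance decomposition. The crucial observation that makes this work is that Minkowski's inequality needs no independence: $\big\|\sum_j Z_j\big\|_{L^4} \le \sum_j \|Z_j\|_{L^4}$ unconditionally, and each $\|Z_j\|_{L^4}$ is bounded uniformly by the gradient moment assumption regardless of where in the trajectory we are. This is why the bound scales as $k^4$ (from $(\eta k \cdot \text{const})^4$) rather than $k^2$: in the non-convex case we lack the contraction that, in the convex analysis of Lemma~\ref{lem:highervariance}, reduces the growth to $k^2$. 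Once the $L^4$ bound on $\x_m^{(r,k)} - \y$ is in hand, plugging into Lemma~\ref{lemma:varterm} immediately yields the stated bound on $\varterm$ and closes the lemma.
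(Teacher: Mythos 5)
Your proposal is correct in substance but follows a genuinely different route from the paper. The paper proves $\mathbb{E}\bigl[\|\x^{(r,k)}_m-\x^{(r,0)}_m\|^4\bigr]\le 8\eta^4(G+\sigma)^4k^4$ by induction on $k$, using a probabilistic Jensen split (a random variable equal to $\frac{k+1}{k}(\x^{(r,k)}_m-\x^{(r,0)}_m)$ with probability $\frac{k}{k+1}$ and to $(k+1)\eta\g^{(r,k)}_m$ with probability $\frac{1}{k+1}$) to absorb the cross terms; it then passes to the centered moment by asserting that the mean minimizes the fourth moment. Your $L^4$-Minkowski telescoping is cleaner and in fact sharper on the un-centered quantity: it gives $\mathbb{E}\bigl[\|\x^{(r,k)}_m-\x^{(r,0)}_m\|^4\bigr]\le\eta^4k^4(G+\sigma)^4$ with constant $1$, and you correctly identify that no independence is needed. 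Moreover, your honesty about the centering step is warranted, because the paper's justification there is actually false: the mean minimizes the second moment, not the fourth, so $\mathbb{E}\|\x-\y\|^4\le\mathbb{E}\|\x-\x^{(r,0)}_m\|^4$ does not follow as claimed. Your rigorous accounting (Minkowski plus Jensen for the centering) yields $16\eta^4(G+\sigma)^4k^4$, which misses the stated constant $8$ by a factor of $2$; this is immaterial since the lemma only feeds an $\bigo(\cdot)$ bound in \cref{full_thm:3o_non_convex}. One caveat: your final suggested repair of the constant does not work — in the decomposition $\g-\mathbb{E}\g=(\g-\nabla F(\x^{(r,j)}_m))-(\mathbb{E}\g-\nabla F(\x^{(r,j)}_m))$, the second term is \emph{not} deterministic (it depends on the random iterate $\x^{(r,j)}_m$) and is bounded in norm only by $2G$, not by $\sigma$, so this route still gives a constant of $16$ in the worst case. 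Since the constant is absorbed into the big-$O$, your argument proves everything the downstream theorem needs.
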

\begin{proof}[Proof of \cref{lem:ncvx_variance}]
First note that $\mathbb{E}[\|\x^{(r, k)}_m - \z\|^4]$ is minimized over all $\z$ by the expectation $\y$, hence we have 
\begin{equation}
    \mathbb{E}[\|\x^{(r, k)}_m - \y\|^4] \leq \mathbb{E}[\|\x^{(r, k)}_m - \x^{(r, 0)}_m\|^4]
\end{equation}
We prove this by induction on $k$ with the following inductive hypothesis:
\begin{equation}
    \mathbb{E}[\|\x^{(r, k)}_m - \x^{(r, 0)}_m\|^4] \leq 8\eta^4(G + \sigma)^4k^4.
\end{equation}
Clearly this holds in the base case for $k = 0$. Suppose it holds for $k$ and we want to prove it for $k = 1$. Then we can expand
\begin{equation}
\begin{split}
   \mathbb{E}[\|\x^{(r, k + 1)}_m - \x^{(r, 0)}_m\|^4] &= \mathbb{E}[\|\x^{(r, k)}_m - \eta\g^{(r, k)}_m - \x^{(r, 0)}_m\|^4]\\
   &\leq \left(\frac{k + 1}{k}\right)^3\mathbb{E}[\|\x^{(r, k)}_m - \x^{(r, 0)}_m\|^4] + (k + 1)^3\mathbb{E}[\|\eta\g^{(r, k)}_m\|^4]\\
   &= \left(\frac{k + 1}{k}\right)^3\mathbb{E}[\|\x^{(r, k)}_m - \x^{(r, 0)}_m\|^4] + (k + 1)^3\mathbb{E}[\|\eta \nabla F(\x^{(r, k)}) + \eta(\g^{(r, k)}_m - \nabla F(\x^{(r, k)})\|^4]\\
   &\leq \left(\frac{k + 1}{k}\right)^3\mathbb{E}[\|\x^{(r, k)}_m - \x^{(r, 0)}_m\|^4] + 8(k + 1)^3\left(\eta^4\|\nabla F(\x^{(r, k)})\|^4 + \eta^4\mathbb{E}[\g^{(r, k)}_m - \nabla F(\x^{(r, k)})]\right)\\
   &\leq \left(\frac{k + 1}{k}\right)^3\mathbb{E}[\|\x^{(r, k)}_m - \x^{(r, 0)}_m\|^4] + 8(k + 1)^3\left(\eta^4G^4 + \eta^4\sigma^4\right)\\
   &\leq  \left(\frac{k + 1}{k}\right)^3\mathbb{E}[\|\x^{(r, k)}_m - \x^{(r, 0)}_m\|^4] + 8(k + 1)^3\eta^4(G + \sigma)^4\\
   &\leq  8\left(\frac{k + 1}{k}\right)^3\eta^4(G + \sigma)^4k^4 + 8(k + 1)^3\eta^4(G + \sigma)^4\\
   &= 8\eta^4(G + \sigma)^4(k + 1)^4.
\end{split}
\end{equation}
where the first inequality following from Jenson's inequality applied to the random variable $X$, where
\begin{equation}
    X = \begin{cases} \frac{k + 1}{k}(\x^{(r, k)}_m - \x^{(r, 0)}_m) & \text{ with probability } \frac{k}{k + 1}\\
    (k+1)\eta\g^{(r, k)}_m & \text{ with probability } \frac{1}{k + 1}.
    \end{cases}
\end{equation}
\end{proof}
\paragraph{Finishing the Proof of \cref{full_thm:ub:3o}.}
For the convex case, we now put Lemma~\ref{lem:general_step} together with the moment bounds in Lemmas~\ref{lemma:varterm} and \ref{lem:highervariance}. Telescoping,  we achieve the following for the convex case:
\begin{lemma}
Under Assumptions~\ref{asm:main} and \ref{asm:third_order}, for $\eta \leq \frac{1}{H}$, we have
\begin{equation}
    \frac{1}{KR} \sum_{r = 1}^R \sum_{k = 1}^K \mathbb{E}\left[\left\|\nabla F(\overline{\x}^{(r, k)})\right\|^2\right] \leq \frac{2(F(\x^{(0, 0)}) - F(\x^{\star}))}{\eta K R} + 50Q^2\eta^4\sigma^4K^2 + \frac{H\eta \sigma^2}{M}.
\end{equation}
\end{lemma}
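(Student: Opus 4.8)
The plan is to turn the per-step estimate of \cref{lem:general_step} into a telescoping bound over all $KR$ local steps, controlling the two error terms separately: the third-order ``consensus'' term via the fourth-moment bound, and the minibatch-noise term directly.

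First I would start from \cref{lem:general_step}: for $\eta \le \tfrac{1}{2H}$,
\[
\mathbb{E}\!\left[F(\overline{\x}^{(r,k+1)})\right] \le \mathbb{E}\!\left[F(\overline{\x}^{(r,k)})\right] - \tfrac{\eta}{2}\,\mathbb{E}\!\left[\|\nabla F(\overline{\x}^{(r,k)})\|^2\right] + \tfrac{\eta Q^2}{4}\,\mathbb{E}\!\left[\left(\tfrac1M\sum_m\|\x_m^{(r,k)}-\overline{\x}^{(r,k)}\|^2\right)^2\right] + \tfrac{H\eta^2\sigma^2}{M}.
\]
Then I would bound the middle term by applying \cref{lemma:varterm} (which uses client independence and Jensen to bound it by $\mathbb{E}[\|\x_m^{(r,k)}-\y\|^4]$ with $\y=\mathbb{E}[\x_m^{(r,k)}]$) followed by \cref{lem:highervariance}, which under Assumptions~\ref{asm:main} and \ref{asm:third_order} gives $\mathbb{E}[\|\x_m^{(r,k)}-\y\|^4]\le 200k^2\eta^4\sigma^4$. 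Substituting yields the clean one-step inequality
\[
\mathbb{E}\!\left[F(\overline{\x}^{(r,k+1)})\right] \le \mathbb{E}\!\left[F(\overline{\x}^{(r,k)})\right] - \tfrac{\eta}{2}\,\mathbb{E}\!\left[\|\nabla F(\overline{\x}^{(r,k)})\|^2\right] + 50\,Q^2\eta^5 k^2\sigma^4 + \tfrac{H\eta^2\sigma^2}{M}.
\]

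Next I would rearrange to isolate $\tfrac{\eta}{2}\mathbb{E}[\|\nabla F(\overline{\x}^{(r,k)})\|^2]$ and sum over $k=0,\dots,K-1$ within each round and then over $r=0,\dots,R-1$. The function-value terms telescope because the last shadow iterate of round $r$ equals the first iterate of round $r+1$, i.e.\ $\overline{\x}^{(r,K)}=\x^{(r+1,0)}=\overline{\x}^{(r+1,0)}$, so the telescoped sum collapses to $F(\x^{(0,0)})-\mathbb{E}[F(\x^{(R,0)})]$, which is at most $F(\x^{(0,0)})-F(\x^{\star})$ since $\x^{\star}$ minimizes $F$. Using $\sum_{r}\sum_{k=0}^{K-1}k^2 \le RK^3$ (indeed $\sum_{k=0}^{K-1}k^2 \le K^3/3$, which sharpens the constant) and $\sum_r\sum_{k=0}^{K-1}1 = RK$ for the two error sums, and dividing through by $\tfrac{\eta KR}{2}$, gives
\[
\tfrac{1}{KR}\sum_{r=1}^{R}\sum_{k=1}^{K}\mathbb{E}\!\left[\|\nabla F(\overline{\x}^{(r,k)})\|^2\right] \le \tfrac{2(F(\x^{(0,0)})-F(\x^{\star}))}{\eta KR} + 50\,Q^2\eta^4\sigma^4 K^2 + \tfrac{H\eta\sigma^2}{M},
\]
after absorbing the numerical slack into the stated constants.

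There is no deep analytic obstacle here: the heavy lifting is already contained in \cref{lem:general_step} (the smoothness-based descent step) and, especially, in the fourth-moment/consensus estimate of \cref{lem:highervariance}. The only points requiring care are bookkeeping ones: checking that the expectations in \cref{lem:general_step}, which are conditional on $\overline{\x}^{(r,0)}$, combine correctly under the tower rule when the rounds are chained; verifying the round-boundary identity $\overline{\x}^{(r,K)}=\overline{\x}^{(r+1,0)}$ that legitimizes the telescoping; and tracking the $\eta\le\tfrac{1}{2H}$ requirement inherited from \cref{lem:general_step} together with the constant accounting in the two error sums.
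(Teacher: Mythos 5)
Your proposal is correct and follows exactly the route the paper takes (which states only "combine Lemma~\ref{lem:general_step} with Lemmas~\ref{lemma:varterm} and \ref{lem:highervariance} and telescope"); you supply the bookkeeping the paper omits, and the round-boundary identity $\overline{\x}^{(r,K)}=\overline{\x}^{(r+1,0)}$ and the $\sum_k k^2\le K^3/3$ estimate are the right details. The only quibble is that a literal telescope of Lemma~\ref{lem:general_step} yields $\tfrac{2H\eta\sigma^2}{M}$ rather than $\tfrac{H\eta\sigma^2}{M}$ for the last term (a constant slack already present in the paper's statement and harmless under the $O(\cdot)$ of the final theorem).
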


Choosing \begin{equation}
    \eta = \min\left\{\frac{1}{H}, \frac{\sqrt{BM}}{\sigma\sqrt{HKR}}, \frac{B^{1/5}}{K^{3/5}R^{1/5}Q^{2/5}\sigma^{4/5}}\right\},
\end{equation}
we achieve from this lemma the convergence bound in Theorem~\ref{full_thm:ub:3o}.

\paragraph{Finishing the Proof of \cref{full_thm:3o_non_convex}.}
For the non-convex case, we now put Lemma~\ref{lem:general_step} together with the moment bounds in Lemmas~\ref{lemma:varterm} and \ref{lem:ncvx_variance}. Telescoping,  we achieve the following for the non-convex case:
\begin{lemma}
Under \cref{asm:third_order,asm:bounded_grad_2}, for $\eta \leq \frac{1}{H}$, we have
\begin{equation}
    \frac{1}{KR} \sum_{r = 1}^R \sum_{k = 1}^K \mathbb{E}\left[\left\|\nabla F(\overline{\x}^{(r, k)})\right\|^2\right] \leq \frac{2(F(\x^{(0, 0)}) - F(\x^{\star}))}{\eta K R} + 8Q^2\eta^4(G + \sigma)^4K^4 + \frac{H\eta \sigma^2}{M}.
\end{equation}
\end{lemma}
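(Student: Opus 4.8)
The plan is to obtain this bound by telescoping the one-step potential decrease of Lemma~\ref{lem:general_step} across the $K$ local steps and $R$ rounds, after substituting the fourth-moment estimates already proved. Concretely, for $\eta \le \frac{1}{2H}$, Lemma~\ref{lem:general_step} gives, conditionally on $\overline{\x}^{(r,0)}$,
\[
\mathbb{E}\big[F(\overline{\x}^{(r,k+1)})\big] \le \mathbb{E}\big[F(\overline{\x}^{(r,k)})\big] - \frac{\eta}{2}\mathbb{E}\big[\|\nabla F(\overline{\x}^{(r,k)})\|^2\big] + \frac{\eta Q^2}{4}\,\mathbb{E}\Big[\Big(\frac{1}{M}\textstyle\sum_m \|\x_m^{(r,k)}-\overline{\x}^{(r,k)}\|^2\Big)^2\Big] + \frac{H\eta^2\sigma^2}{M}.
\]
The first thing I would do is control the client-drift term: Lemma~\ref{lemma:varterm} bounds $\mathbb{E}\big[\big(\frac{1}{M}\sum_m \|\x_m^{(r,k)}-\overline{\x}^{(r,k)}\|^2\big)^2\big]$ by the single-client fourth central moment $\mathbb{E}\big[\|\x_m^{(r,k)}-\mathbb{E}[\x_m^{(r,k)}]\|^4\big]$, and Lemma~\ref{lem:ncvx_variance} (which uses only Assumptions~\ref{asm:third_order} and \ref{asm:bounded_grad_2}) bounds that by $8\eta^4(G+\sigma)^4 k^4 \le 8\eta^4(G+\sigma)^4 K^4$. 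Hence the drift term contributes at most an absolute constant times $\eta^5 Q^2 (G+\sigma)^4 K^4$ at every local step.

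Next I would telescope. Summing the displayed inequality over $k = 0,\dots,K-1$ within round $r$, and using that the server-averaging step of Algorithm~\ref{alg:fedavg} identifies $\overline{\x}^{(r,K)}$ with $\x^{(r+1,0)} = \overline{\x}^{(r+1,0)}$, I get a bound on $\mathbb{E}[F(\overline{\x}^{(r+1,0)}) \mid \overline{\x}^{(r,0)}]$ in terms of $F(\overline{\x}^{(r,0)})$ minus $\frac{\eta}{2}\sum_{k=0}^{K-1}\mathbb{E}[\|\nabla F(\overline{\x}^{(r,k)})\|^2 \mid \overline{\x}^{(r,0)}]$ plus the two error contributions, each multiplied by $K$. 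Taking total expectations and summing over $r = 0,\dots,R-1$, the potential terms telescope, leaving $F(\x^{(0,0)}) - \mathbb{E}[F(\overline{\x}^{(R,0)})] \le F(\x^{(0,0)}) - \inf_{\x} F(\x) =: B$. Rearranging and dividing through by $\frac{\eta}{2}KR$ then yields, after re-indexing the double sum and absorbing the absolute numerical constants,
\[
\frac{1}{KR}\sum_{r=1}^{R}\sum_{k=1}^{K}\mathbb{E}\big[\|\nabla F(\overline{\x}^{(r,k)})\|^2\big] \le \frac{2B}{\eta KR} + 8\eta^4 Q^2 (G+\sigma)^4 K^4 + \frac{H\eta\sigma^2}{M},
\]
which is the claim. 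The companion upper bound, Theorem~\ref{full_thm:3o_non_convex}, then follows by inserting $\eta = \min\{\frac{1}{H},\, \frac{\sqrt{BM}}{\sigma\sqrt{HKR}},\, \frac{B^{1/5}}{KR^{1/5}Q^{2/5}(\sigma+G)^{4/5}}\}$ and noting that $\hat{\x}$ is a uniformly random $\overline{\x}^{(r,k)}$, so $\mathbb{E}[\|\nabla F(\hat{\x})\|^2]$ is exactly the left-hand side.

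The step I expect to be the crux is the bookkeeping of the conditional expectations across rounds: it is precisely the server-averaging identity $\overline{\x}^{(r,K)} = \overline{\x}^{(r+1,0)}$ that lets the tower property chain the per-round inequalities into a single telescoping sum, and one must check that the per-step drift bound $8\eta^4(G+\sigma)^4 k^4$, summed over the $K$ steps of a round and then over the $R$ rounds, produces the stated third term after dividing by $KR$ — i.e.\ the $k^4 \le K^4$ replacement only costs the factor of $K$ that is already cancelled by the averaging. The substantive technical inputs are already in hand: Lemma~\ref{lem:general_step}, whose proof converts client drift into a $Q^2 \cdot (\text{drift})^2$ remainder via the third-order Taylor expansion of $\nabla F$, and Lemma~\ref{lem:ncvx_variance}, whose induction yields the $k^4$ moment growth under the bounded-gradient assumption. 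So nothing harder than routine telescoping remains.
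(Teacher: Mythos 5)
Your proposal is correct and follows exactly the paper's own (very terse) argument: combine Lemma~\ref{lem:general_step} with the moment bounds of Lemmas~\ref{lemma:varterm} and \ref{lem:ncvx_variance}, then telescope over the $KR$ steps and divide by $\frac{\eta}{2}KR$; the constants check out, since the per-step drift contribution $\frac{\eta Q^2}{4}\cdot 8\eta^4(G+\sigma)^4K^4$ becomes $4Q^2\eta^4(G+\sigma)^4K^4 \le 8Q^2\eta^4(G+\sigma)^4K^4$ after rearranging. Nothing further is needed.
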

Choosing \begin{equation}
    \eta = \min\left\{\frac{1}{H}, \frac{\sqrt{BM}}{\sigma\sqrt{HKR}}, \frac{B^{1/5}}{KR^{1/5}Q^{2/5}(\sigma + G)^{4/5}}\right\},
\end{equation}
we achieve from this lemma the convergence bound in Theorem~\ref{full_thm:3o_non_convex}.

\subsection{Upper Bounds of \fedavg Under Second-Order Smoothness}
\label{sec:proof:fedavg:2o:ub}
For completeness, we also prove the following theorem, which can be obtained from \citet{yu2019parallel}.
\begin{theorem}[Upper bound for \fedavg for non-convex objectives under second-order smoothness]\label{thm:2o_non_convex}
Suppose $F(\x)$ is $H$-smooth and $f(\x; \xi)$ satisfies \cref{asm:bounded_grad_2}, and the variance of the gradients is bounded $\sigma^2$. Then for step size $\eta = \min\left\{\frac{1}{H}, \frac{\sqrt{BM}}{\sigma\sqrt{HKR}}, \frac{B^{\frac{1}{3}}}{KR^{\frac{1}{3}}H^\frac{2}{3}(G + \sigma)^{\frac{2}{3}}}\right\}$,, we have
\begin{equation}
  \mathbb{E}\left[\left\|\nabla F(\hat{\x})\right\|^2\right]\leq
  \bigo \left(
  \frac{HB}{KR}
  + 
  \frac{\sigma\sqrt{BH}}{\sqrt{MKR}}
  +
  \frac{B^{\frac{2}{3}}(G + \sigma)^{\frac{2}{3}}H^{\frac{2}{3}}}{R^{\frac{2}{3}}}
  \right),
\end{equation}
where $\hat{\x} := \frac{1}{M}\sum_m{\x_m^{(r, k)}}$ for a  random choice of $k \in [K]$, and $r \in [R]$, and $B := F(\x^{(0, 0)}) - \inf_{\x} F(\x)$.
\
\end{theorem}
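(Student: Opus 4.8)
The plan is to reuse the one-step descent argument behind \cref{lem:general_step}, but since we no longer have third-order smoothness the client-drift term must be controlled by the cruder second-order estimate; after that the proof is a step-size optimization identical in spirit to the one finishing \cref{full_thm:3o_non_convex}. Writing $\overline{\x}^{(r,k)} := \frac1M\sum_m \x_m^{(r,k)}$ for the shadow iterate and conditioning throughout on the $\x_m^{(r,k)}$, I would start from $H$-smoothness of $F$ applied to $\overline{\x}^{(r,k+1)} - \overline{\x}^{(r,k)} = -\eta\frac1M\sum_m \g_m^{(r,k)}$, take expectations (using independence of the noise across the $M$ clients to obtain the extra $\frac{H\eta^2\sigma^2}{M}$), and then invoke the identity $\langle \a, \b\rangle \ge \frac12\|\a\|^2 + \frac12\|\b\|^2 - \|\a-\b\|^2$ with $\a = \nabla F(\overline{\x}^{(r,k)})$ and $\b = \frac1M\sum_m \nabla F(\x_m^{(r,k)})$. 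Together with $\eta \le \frac1H$ this yields
\[
  \mathbb{E}\!\left[F(\overline{\x}^{(r,k+1)})\right] \le \mathbb{E}\!\left[F(\overline{\x}^{(r,k)})\right] - \frac{\eta}{2}\mathbb{E}\!\left[\|\nabla F(\overline{\x}^{(r,k)})\|^2\right] + \eta\,\mathbb{E}\!\left[\Big\|\nabla F(\overline{\x}^{(r,k)}) - \tfrac1M\textstyle\sum_m \nabla F(\x_m^{(r,k)})\Big\|^2\right] + \frac{H\eta^2\sigma^2}{M},
\]
the exact analogue of \cref{lem:general_step} with the $Q^2$-term replaced by a generic drift term.

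Next I would bound the drift term using only $H$-smoothness: by Jensen, $\big\|\nabla F(\overline{\x}^{(r,k)}) - \frac1M\sum_m \nabla F(\x_m^{(r,k)})\big\|^2 \le \frac1M\sum_m \|\nabla F(\overline{\x}^{(r,k)}) - \nabla F(\x_m^{(r,k)})\|^2 \le \frac{H^2}{M}\sum_m \|\x_m^{(r,k)} - \overline{\x}^{(r,k)}\|^2 \le \frac{H^2}{M}\sum_m \|\x_m^{(r,k)} - \x^{(r,0)}\|^2$. To bound the consensus error $\mathbb{E}\|\x_m^{(r,k)} - \x^{(r,0)}\|^2$, I would write $\x_m^{(r,k)} - \x^{(r,0)} = -\eta\sum_{j=0}^{k-1}\g_m^{(r,j)}$, apply Cauchy--Schwarz to get a factor $k$, and use that each stochastic gradient splits into mean plus fluctuation with $\mathbb{E}\|\g_m^{(r,j)}\|^2 \le \|\nabla F(\x_m^{(r,j)})\|^2 + \sigma^2 \le G^2 + \sigma^2$ under \cref{asm:bounded_grad_2}; this gives $\mathbb{E}\|\x_m^{(r,k)} - \x^{(r,0)}\|^2 \le \eta^2 k^2 (G^2+\sigma^2) \le \eta^2 k^2 (G+\sigma)^2$, so the drift term is $O\!\left(H^2\eta^2 k^2(G+\sigma)^2\right)$.

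Finally I would telescope the one-step inequality over $k \in \{0,\dots,K-1\}$ and $r \in \{0,\dots,R-1\}$, divide by $\frac{\eta KR}{2}$, and use $\frac1K\sum_{k<K}k^2 = O(K^2)$ to obtain
\[
  \frac{1}{KR}\sum_{r,k}\mathbb{E}\!\left[\|\nabla F(\overline{\x}^{(r,k)})\|^2\right] \le \frac{2B}{\eta KR} + O\!\left(H^2\eta^2 K^2(G+\sigma)^2\right) + \frac{2H\eta\sigma^2}{M},
\]
which, since $\hat{\x}$ is a uniformly random $\overline{\x}^{(r,k)}$, bounds $\mathbb{E}\|\nabla F(\hat{\x})\|^2$. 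Plugging in $\eta = \min\{\tfrac1H, \tfrac{\sqrt{BM}}{\sigma\sqrt{HKR}}, \tfrac{B^{1/3}}{KR^{1/3}H^{2/3}(G+\sigma)^{2/3}}\}$, the term $\tfrac{2B}{\eta KR}$ equals, on the three branches of the minimum, $\tfrac{2HB}{KR}$, $\tfrac{2\sigma\sqrt{BH}}{\sqrt{MKR}}$, and $\tfrac{2B^{2/3}H^{2/3}(G+\sigma)^{2/3}}{R^{2/3}}$ respectively; meanwhile $\eta \le \tfrac{\sqrt{BM}}{\sigma\sqrt{HKR}}$ forces $\tfrac{2H\eta\sigma^2}{M} \le \tfrac{2\sigma\sqrt{BH}}{\sqrt{MKR}}$ and $\eta \le \tfrac{B^{1/3}}{KR^{1/3}H^{2/3}(G+\sigma)^{2/3}}$ forces $H^2\eta^2 K^2(G+\sigma)^2 \le \tfrac{B^{2/3}H^{2/3}(G+\sigma)^{2/3}}{R^{2/3}}$, so both error terms are absorbed into the first one and \cref{thm:2o_non_convex} follows. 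I do not expect a serious obstacle here: this statement is essentially the analysis of \citet{yu2019parallel} with the step size tuned, and the only delicate points are (i) that \cref{asm:bounded_grad_2} bounds $\|\nabla F(\x)\|$ rather than a moment of the stochastic gradient, so the consensus error must be split into a $G^2$ deterministic part and a $\sigma^2$ stochastic part, and (ii) verifying that the resulting $\eta^2 k^2$ growth of the consensus error --- in contrast to the $\eta^4 k^4$-scaled fourth moment available in \cref{full_thm:3o_non_convex} --- is exactly what degrades the last term from $R^{-4/5}$ to $R^{-2/3}$, which is the whole point of comparing the two theorems.
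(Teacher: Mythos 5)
Your proposal is correct and follows essentially the same route as the paper's proof in Appendix~\ref{sec:proof:fedavg:2o:ub}: the same one-step descent inequality (Lemma~\ref{lem:general_step} up to \cref{eq:smooth_step}), the same $H^2\cdot\mathbb{E}\|\x_m^{(r,k)}-\x^{(r,0)}\|^2$ bound on the drift term, the same telescoping, and the same step-size optimization. The only cosmetic difference is that you bound the consensus error directly by Cauchy--Schwarz plus the bias--variance split of $\mathbb{E}\|\g_m^{(r,j)}\|^2$, whereas the paper proves the (same, up to a constant) bound $2\eta^2(G+\sigma)^2k^2$ by induction.
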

\begin{proof}[Proof of \cref{sec:proof:fedavg:2o:ub}]
The proof is similar to the $Q$-third order smooth case. Following the proof of Lemma~\ref{lem:general_step} up to \cref{eq:smooth_step}, we obtain from $H$-smoothness:
\begin{equation}
    \mathbb{E}\left[F(\overline{\x}^{(r, k + 1)})\right] \leq \mathbb{E}\left[F(\overline{\x}^{(r, k)})\right] - \frac{\eta}{2}\mathbb{E}\left[\left\|\nabla F(\overline{\x}^{(r, k)})\right\|^2\right] + \eta\mathbb{E}\left[\left\|\nabla F(\overline{\x}^{(r, k)}) - \frac{1}{M}\sum_m\nabla F(\x_m^{(r, k)})\right\|^2\right]  + \frac{H\eta^2 \sigma^2}{M},
\end{equation}

Now by Jensen's inequality and $H$-smoothness, we have 
\begin{equation}
\begin{split}
    \mathbb{E}\left[\left\|\nabla F(\overline{\x}^{(r, k)}) - \frac{1}{M}\sum_m\nabla F(\x_m^{(r, k)})\right\|^2\right] &\leq \frac{1}{M}\sum_m\mathbb{E}\left[\left\|\nabla F(\overline{\x}^{(r, k)}) - \nabla F(\x_m^{(r, k)})\right\|^2\right] \\
    &\leq \frac{H^2}{M}\sum_m\mathbb{E}\left[\left\|\overline{\x}^{(r, k)} - \x_m^{(r, k)}\right\|^2\right]\\
    &\leq H^2\mathbb{E}\left[\left\|\x^{(r, 0)} - \x_1^{(r, k)}\right\|^2\right].
\end{split}
\end{equation}
The following claim shows bounds this expectation.
\begin{claim}
For any $k$,
\begin{equation}
   \mathbb{E}[\|\x^{(r, k)}_1 - \x^{(r, 0)}\|^2] \leq 2\eta^2(G + \sigma)^2k^2.
\end{equation}
\end{claim}
The proof of this claim is by induction, and it is nearly identical to the proof of Lemma~\ref{lem:ncvx_variance}.

Plugging in this claim, we have 
\begin{equation}
    \mathbb{E}\left[F(\overline{\x}^{(r, k + 1)})\right] \leq \mathbb{E}\left[F(\overline{\x}^{(r, k)})\right] - \frac{\eta}{2}\mathbb{E}\left[\left\|\nabla F(\overline{\x}^{(r, k)})\right\|^2\right] + 2\eta^3(G + \sigma)^2k^2  + \frac{H\eta^2 \sigma^2}{M},
\end{equation}
Telescoping, we obtain
\begin{equation}
    \frac{1}{KR} \sum_{r = 1}^R \sum_{k = 1}^K \mathbb{E}\left[\left\|\nabla F(\overline{\x}^{(r, k)})\right\|^2\right] \leq \frac{2(F(\x^{(0, 0)}) - F(\x^{\star}))}{\eta K R} + 2H^2\eta^2(G + \sigma)^2K^2 + \frac{H\eta \sigma^2}{M}.
\end{equation}
Choosing $\eta = \min\left\{\frac{1}{H}, \frac{\sqrt{BM}}{\sigma\sqrt{HKR}}, \frac{B^{\frac{1}{3}}}{KR^{\frac{1}{3}}H^\frac{2}{3}(G + \sigma)^{\frac{2}{3}}}\right\}$, we obtain
\begin{equation}
    \frac{1}{KR} \sum_{r = 1}^R \sum_{k = 1}^K \mathbb{E}\left[\left\|\nabla F(\overline{\x}^{(r, k)})\right\|^2\right] \leq \bigo \left(
  \frac{HB}{KR}
  + 
  \frac{\sigma\sqrt{BH}}{\sqrt{MKR}}
  +
  \frac{B^{\frac{2}{3}}(G + \sigma)^{\frac{2}{3}}H^{\frac{2}{3}}}{R^{\frac{2}{3}}}
  \right).
\end{equation}
This proves the theorem.
\end{proof}

\end{appendices}
\end{document}